\renewcommand{\citet}{\citep}
\title{Almost Minimax Optimal Best Arm Identification in Piecewise Stationary Linear Bandits}
\author{%
  Yunlong Hou \\
  Department of Mathematics\\
  National University of Singapore\\
  \texttt{yhou@u.nus.edu} \\
  \And
  Vincent Y. F. Tan \\
  Department of Mathematics \\
  Department of Electrical and Computer Engineering\\
  National University of Singapore \\
  \texttt{vtan@nus.edu.sg} \\
  \AND
  Zixin Zhong \\
  Data Science and Analytics Thrust \\
  Hong Kong University of Science and Technology (Guangzhou) \\
  \texttt{zixinzhong@hkust-gz.edu.cn} \\
}
\numberwithin{equation}{section}
\newtheorem{theorem}{Theorem}[section]
\newtheorem{lemma}[theorem]{Lemma}
\newtheorem{definition}[theorem]{Definition}
\newtheorem{remark}[theorem]{Remark}
\def\UrlSpecials{\do\~{\kern -.15em\lower .7ex\hbox{~}\kern .04em}} \catcode`~=13
\newcommand{\calC}{\mathcal{C}}
\newcommand{\calE}{\mathcal{E}}
\newcommand{\calF}{\mathcal{F}}
\newcommand{\calH}{\mathcal{H}}
\newcommand{\calI}{\mathcal{I}}
\newcommand{\calT}{\mathcal{T}}
\newcommand{\calX}{\mathcal{X}}
\newcommand{\be}{\mathbf{e}}
\newcommand{\bp}{\mathbf{p}}
\newcommand{\bq}{\mathbf{q}}
\newcommand{\rmd}{\mathrm{d}}
\newcommand{\rmH}{\mathrm{H}}
\newcommand{\bbE}{\mathbb{E}}
\newcommand{\bbI}{\mathbb{I}}
\newcommand{\bbN}{\mathbb{N}}
\newcommand{\bbP}{\mathbb{P}}
\newcommand{\bbR}{\mathbb{R}}
\DeclareMathAlphabet{\mathbsf}{OT1}{cmss}{bx}{n}
\DeclareMathAlphabet{\mathssf}{OT1}{cmss}{m}{sl}
\DeclareSymbolFont{bsfletters}{OT1}{cmss}{bx}{n}  
\DeclareSymbolFont{ssfletters}{OT1}{cmss}{m}{n}
\DeclareMathSymbol{\bsfGamma}{0}{bsfletters}{'000}
\DeclareMathSymbol{\ssfGamma}{0}{ssfletters}{'000}
\DeclareMathSymbol{\bsfDelta}{0}{bsfletters}{'001}
\DeclareMathSymbol{\ssfDelta}{0}{ssfletters}{'001}
\DeclareMathSymbol{\bsfTheta}{0}{bsfletters}{'002}
\DeclareMathSymbol{\ssfTheta}{0}{ssfletters}{'002}
\DeclareMathSymbol{\bsfLambda}{0}{bsfletters}{'003}
\DeclareMathSymbol{\ssfLambda}{0}{ssfletters}{'003}
\DeclareMathSymbol{\bsfXi}{0}{bsfletters}{'004}
\DeclareMathSymbol{\ssfXi}{0}{ssfletters}{'004}
\DeclareMathSymbol{\bsfPi}{0}{bsfletters}{'005}
\DeclareMathSymbol{\ssfPi}{0}{ssfletters}{'005}
\DeclareMathSymbol{\bsfSigma}{0}{bsfletters}{'006}
\DeclareMathSymbol{\ssfSigma}{0}{ssfletters}{'006}
\DeclareMathSymbol{\bsfUpsilon}{0}{bsfletters}{'007}
\DeclareMathSymbol{\ssfUpsilon}{0}{ssfletters}{'007}
\DeclareMathSymbol{\bsfPhi}{0}{bsfletters}{'010}
\DeclareMathSymbol{\ssfPhi}{0}{ssfletters}{'010}
\DeclareMathSymbol{\bsfPsi}{0}{bsfletters}{'011}
\DeclareMathSymbol{\ssfPsi}{0}{ssfletters}{'011}
\DeclareMathSymbol{\bsfOmega}{0}{bsfletters}{'012}
\DeclareMathSymbol{\ssfOmega}{0}{ssfletters}{'012}
\newcommand{\hatc}{\hat{c}}
\newcommand{\hati}{\hat{i}}
\newcommand{\hatj}{\hat{j}}
\newcommand{\tilj}{\tilde{j}}
\newcommand{\hatL}{\hat{L}}
\newcommand{\tilL}{\tilde{L}}
\newcommand{\tilO}{\tilde{O}}
\newcommand{\hatp}{\hat{p}}
\newcommand{\hatbp}{\hat{\bp}}
\newcommand{\tils}{\tilde{s}}
\newcommand{\tilT}{\tilde{T}}
\newcommand{\hatx}{\hat{x}}
\newcommand{\tilx}{\tilde{x}}
\newcommand{\barc}{\bar{c}}
\newcommand{\barp}{\bar{p}}
\newcommand{\barv}{\bar{v}}
\newcommand{\barH}{\bar{H}}
\newcommand{\barT}{\bar{T}}
\newcommand{\bDelta}{\bm{\Delta}}
\newcommand{\tbeta}{\tilde{\beta}}
\newcommand{\ttheta}{\tilde{\theta}}
\newcommand{\trho}{\tilde{\rho}}
\newcommand{\htheta}{\hat{\theta}}
\DeclareMathOperator*{\argmax}{arg\,max}
\DeclareMathOperator*{\argmin}{arg\,min}
\DeclareMathOperator{\var}{\mathrm{Var}}
\newtheorem{example}{Example} 
\newcommand{\rmalg}{{\rm alg}}
\newcommand{\PSBAI}{{\sc PS$\varepsilon$BAI}}
\newcommand{\algLCD}{{\sc LCD}}
\newcommand{\algLCA}{{\sc LCA}}
\newcommand{\sts}{T} 
\newcommand{\hDelta}{\hat{\Delta}}
\newcommand{\hTheta}{\hat{\Theta}}
\newcommand{\FA}{{\rm FA}}
\newcommand{\FAE}{{\rm FAE}}
\newcommand{\MI}{{\rm MI}}
\newcommand{\Good}{{\rm Good}}
\newcommand{\Lmax}{L_{\max}}
\newcommand{\Lmin}{L_{\min}}
\newcommand{\CI}{{\rm CI}}
\newcommand{\istar}{{i^*}}
\newcommand{\VE}{{\rm VE}}
\newcommand{\DE}{{\rm DE}}
\newcommand{\RE}{{\rm RE}}
\newcommand{\clip}{{\rm clip}}
\newcommand{\clipt}{{{\rm clip}_2}}
\newcommand{\algDBAI}{{\sc D$\varepsilon$BAI}}
\newcommand{\algDBAIbeta}{{\sc{D$\varepsilon$BAI}}$_\beta$}
\newcommand{\Flag}{\mathrm{Flag}}
\newcommand{\Output}{\mathrm{Output}}
\newcommand{\Hold}{\mathrm{Hold}}
\newcommand{\NAN}{\mathrm{NAN}}
\newcommand{\ddotp}{\ddot{p}}
\newcommand{\ddotbeta}{\ddot{\beta}}
\newcommand{\ddotF}{\ddot{F}}
\newcommand{\ddotx}{\ddot{x}}
\newcommand{\ddotrho}{\ddot{\rho}}
\newcommand{\circp}{ \overset{\circ}{p}}
\newcommand{\circbeta}{\overset{\circ}{\beta}}
\newcommand{\circx}{ \overset{\circ}{x}}
\newcommand{\circrho}{ \overset{\circ}{\rho}}
\newcommand{\circphi}{ \overset{\circ}{\phi}}
\newcommand{\xNa}{\dot{x}}
\newcommand{\tE}{\tilde{E}}
\newcommand{\algNa}{{\sc N$\varepsilon$BAI}}
\newcommand{\algParallel}{{\sc PS$\varepsilon$BAI}$^+$}
\newcommand{\xOutPSBAI}{{x}_\varepsilon}
\newcommand{\xOutNa}{\dot{x}_\varepsilon}
\newcommand{\xOutParallel}{\mathring{x}_\varepsilon}
\newcommand{\TGNa}{T_{\mathrm{V}}^{\mathrm{N}}}
\newcommand{\TPSNa}{T_{\mathrm{D}}^{\mathrm{N}}}
\newcommand{\CAid}{\mathrm{CA}_{\mathrm{id}}}
\newcommand{\CDsample}{\mathrm{CD}_{\mathrm{sample}}}
\newcommand{\TVE}{T_{\mathrm{V}}}
\newcommand{\TDE}{T_{\mathrm{D}}}
\newcommand{\TRE}{T_{\mathrm{R}}}
\newcommand{\colorDif}{brown}
\newcommand{\setColorDif}{\color{\colorDif}}
\newtheorem{assumption}{Assumption}
\newcounter{dynamics}
\newenvironment{dynamics}[1][htb]{%
  \let\c@algorithm\c@dynamics
  \renewcommand{\ALG@name}{Dynamics}
  \begin{algorithm}[#1]%
  }{\end{algorithm}
}
\newcommand{\alglinelabel}{%
  \addtocounter{ALC@line}{-1}
  \refstepcounter{ALC@line}
  \label
}
\newcommand\plabel[1]{\phantomsection\label{#1}}
\newcommand{\bern}{\mathrm{Bern}}
\newcommand{\rmKL}{\mathrm{KL}}
\newcommand{\thickhline}{%
	\noalign {\ifnum 0=`}\fi \hrule height 1.5pt
	\futurelet \reserved@a \@xhline
}
\begin{document}
	
	 \maketitle

	\begin{abstract}

        We propose a {\em novel} piecewise stationary linear bandit (PSLB) model, where the environment randomly samples a context from an unknown probability distribution at each changepoint, and the quality of an arm is measured by its return averaged over all contexts. The contexts and their distribution, as well as the changepoints are unknown to the agent.
        We design  {\em Piecewise-Stationary $\varepsilon$-Best Arm Identification$^+$} (\algParallel), an algorithm  that is guaranteed to identify an $\varepsilon$-optimal arm with probability $\ge 1-\delta$ and with a minimal number of samples.
        \algParallel{} consists of two subroutines, \PSBAI{} and {\sc Na\"ive $\varepsilon$-BAI} (\algNa), which are executed in parallel. \PSBAI{} actively detects changepoints and aligns contexts to facilitate the arm identification process.
        When  \PSBAI{} and \algNa{} are utilized judiciously in parallel,  \algParallel{} is shown to have a finite expected sample complexity.
        By proving a  lower bound, we show the expected sample complexity of \algParallel{} is optimal up to a logarithmic factor.
        %
        We compare \algParallel{} to baseline algorithms using numerical experiments which demonstrate its   efficiency.
        Both our analytical and numerical results corroborate that the efficacy   of \algParallel{} is due to   the delicate change detection and context alignment procedures embedded in \PSBAI.
        \end{abstract}
	
		\vspace{-.15in}
	\section{Introduction}

In stochastic {\em multi-armed bandits} (MABs), an agent interacts with the environment at each time step. The agent pulls an arm and observes the corresponding return provided by the environment. The classical MAB framework assumes a {\em stationary} environment where the expected return of each arm remains unchanged over time. 
However, we usually face ever-changing environments in real life.
For instance, 
in  investment option selection and portfolio management, fund managers want to select a subset of good candidate portfolios. However, the market may be bullish, bearish, or in some other state. The transition between these  states can be well-modelled as being stochastic. We wish to select portfolios that yield the best long-term option under such a piecewise stationary environment. 
Further  examples such as one based on agriculture in the face of stochastically changing weather patterns  are discussed in detail in  Appendix~\ref{app:more_motiv_exps}.
These motivate us to
formulate and investigate a {\em piecewise stationary linear bandit} (PSLB) model.

Our PSLB model is equipped with an  {\em arm set} $\calX$, a {\em context set} $\Theta$ and a deterministic  but unknown sequence of {\em changepoints} $\calC$.
At each changepoint, the environment samples a context $\theta \in \Theta$ from an unknown probability distribution $P_\theta$, and the returns of arms may change when the context changes.
The return of each arm under each context is determined by its feature $x \in \calX$ and the context $\theta$. In particular,  the expected return of an arm is the weighted sum $\mu_x= \bbE_{P_\theta}[ x^\top \theta ]$.
While the sequence of changepoints, as well as the distribution and latent vectors of contexts are not known, the agent samples an arm and observes the corresponding return at each time step so that it can identify the best arm $\argmax_{x} \mu_x$ up to some tolerance $\varepsilon$ with probability $\ge 1-\delta$ and with as few samples as possible. 
The agent's behavior does not affect the sequence of contexts that is  drawn from~$P_{\theta}$. 
 
\noindent \underline{\bf Main Contributions.}
We are the {\em first} to study the fixed-confidence {\em best arm identification} (BAI) problem in {\em piecewise stationary bandits} (PSB).
Given $\delta>0$, we say the arm with the highest expected return $\mu^*$ is the {\em best}, and an arm is $\varepsilon$-optimal if its expected return is at least $\mu^*-\varepsilon$.
We seek to design 
an $(\varepsilon,\delta)$-PAC algorithm
which can identify an $\varepsilon$-optimal arm with probability $\ge 1-\delta$ in as few time steps as possible, i.e., with minimal sample complexity.  

Our {\bf first} contribution concerns the  formulation of a {\em novel} PSLB model, where we measure the quality of an arm $x$ according to its {\em expected return} $\mu_x =\bbE_{\theta\sim P_\theta}[ x^\top \theta ]$ for the following reasons.
Consider that an arm is measured by its average return across time, which is a generalization of the definition in {\em stationary bandits} (SB).
%
A notable feature of PSB models is that the context changes as time evolves, and hence the arm's average return across time also changes, in general.
Hence, we aim to identify an arm whose {\em average return across contexts} is high, and benefits the agent for interacting with the environment {\em in the long run} after the arm identification task.
%
%
We are thus inspired to introduce the distribution of contexts under the PSLB model, define the expected return $\mu_x$ for each arm, and use this ensemble (non-time varying) statistic as the benchmark for what we seek to learn.
The BAI task using this statistic is meaningful but challenging, as the agent needs to reliably estimate the context vectors, changepoints, and context distribution.
%

{\bf Secondly,} we propose {\sc Piecewise-Stationary $\varepsilon$-BAI$^+$} (\algParallel), an algorithm designed to tackle the BAI problem in our PSLB model.
We prove that it is   $(\varepsilon,\delta)$-PAC and  bound its  sample complexity.
%
\algParallel{} samples arms according to a suitably defined G-optimal allocation, and runs two algorithms: {\sc Na\"ive $\varepsilon$-BAI} (\algNa)  and \PSBAI{}  as subroutines in parallel to achieve efficiency and attain a bound on the sample complexity in expectation.
\vspace*{.2em}
\newline 
$\bullet$
Being a baseline but na\"ive algorithm, the complexity of \algNa{} grows linearly with the maximum length between two changepoints $\Lmax$, motivating us to design a more efficient algorithm, \PSBAI, to reduce the impact of $\Lmax$.
\vspace*{.2em}
\newline 
$\bullet$
\PSBAI{} is equipped with two delicately designed subroutines {\sc Linear-Change Detection} (\algLCD) and {\sc Linear-Context Alignment} (\algLCA) to actively 
detect changepoints and align contexts with those observed in the previous time steps respectively. 
Concretely, 
{\bf in terms of the  design},
\PSBAI{} determines whether samples from two intervals are under the identical context 
via a sliding window mechanism, and detects changepoints and aligns contexts accordingly;
this facilitates the estimation of context vectors and their distribution.
%
Combining these elements into the design of \PSBAI{} and analyzing them requires some   care. 
{\bf On the  theoretical side}, we prove \PSBAI{}  identifies an $\varepsilon$-optimal arm faster than \algNa{} with high probability.
The success of \PSBAI{} relies on the \algLCD{} and \algLCA{} subroutines, while a minor drawback is that they require a non-vanishing failure probability budget  which does not allow us to bound  the sample complexity of \PSBAI{} in expectation.
To achieve a complete  theoretical understanding, 
we delicately design the \algParallel{} algorithm whose efficiency is inherited via the  \algLCD{} and \algLCA{} procedures   in \PSBAI{} as well as the effective utilization of running \PSBAI{} and \algNa{} in parallel. 

{\bf Thirdly,} we derive a lower bound on the complexity of any $(\varepsilon,\delta)$-PAC algorithm in PSLB models. To derive this bound, we first lower bound the complexity of an algorithm when the contextual information (and changepoints) are available, and then quantify the number of arm samples (and realized contexts) required to reliably infer an $\varepsilon$-best arm.
We compare the upper bound of \algParallel{} and this generic lower bound in several instances. The matching (up to  logarithmic terms) of bounds illustrate that our \algParallel{} algorithm is almost asymptotically optimal.

{\bf Lastly,} we demonstrate the efficiency of \algParallel{} with numerical experiments.
The first half of our experiment shows that 
\algParallel{} is $(\varepsilon,\delta)$-PAC and with significantly lower sample  complexity compared to \algNa{},
corroborating our theoretical findings.
In the second half, we compare \algParallel{} to \algNa, and two other benchmarks {\sc Distribution $\varepsilon$-BAI} (\algDBAI) and \algDBAIbeta.
While contexts and changepoints are not available to \algParallel{} and \algNa, they are observed by   \algDBAI{} and \algDBAIbeta. Nevertheless, \algParallel{} is still competitive compared to \algDBAI{} and \algDBAIbeta{} in our empirical experiments. 
Hence, both experiments   justify the necessity of the change detection and context alignment procedures 
for boosting the learning of contexts and their distributions, as well as the identification of the best arm.
We also show empirically that misspecifications to the knowledge of $L_{\min}$ and $L_{\max}$ do not affect the performance of our algorithm significantly. 

\underline{\bf Related work.}
{\em Best arm identification} (BAI) and {\em regret minimization} (RM) are two fundamental problems in multi-armed bandits.
In stationary linear bandits, \citet{soare2014best,yang2022minimax} focus on BAI and \citet{abbasi2011improved,abeille2017linear} aim to solve the RM problem.
An efficient algorithm can choose the G-optimal allocation or $\mathcal{XY}$-allocation rule to quickly identify a good arm \citep{soare2014best,fiez2019transductive}.
Besides, \citet{jourdan2023an,degenne2019pure} focus on $\varepsilon$-optimal arm identification, which is a generalization of the standard BAI problem.

The BAI and RM problems are also studied in thr {\em non-stationary bandits} (NSB), where in contrast to the SB model, the context varies with time~\citep{Besbes2014Stochastic,garivier2011on,cao2019nearly}. NSB can be largely divided into two classes: the drifting bandit (DB) model, where the context changes at each step \citep{Besbes2014Stochastic}, and PSB, where the context changes less frequently~\citep{cao2019nearly}.
\citet{liu2023definition} provides an extensive discussion on the definition of NSB models.
%

On one hand, the RM problem have been  investigated extensively in DB models~\citep{cheung2019learning,shapiro2013thompson}.
On the other hand, concerning the BAI task in DB models, \citet{xiong2023b} investigated BAI with a {\em fixed-horizon}, where the best arm has the highest average return over this horizon; \citet{allesiardo2017non} assumes the best arm remains unchanged after certain time step and explores the {\em fixed-confidence} setting. 
Besides, when the contextual information in NSB models is available, NSB models are known as {\em contextual bandit} (CB) models \citep{kato2021role,russac2021b,zanette2021design}.
\citet{kato2021role} showed that the contextual information accelerates the best arm identification process.
 More discussions on DB and CB models are presented in Appendix~\ref{app:extra_related_work}.
%

Moreover, the context can drift dramatically in a DB model while it remains unvarying in a SB model.
Straddling between the DB and SB models,
PSB models assume there is an interim stationary interval between each two consecutive changepoints, where the context remains unchanged.
%
The context changes can be characterized in different ways and affect the 
performance of proposed algorithms in PSB models.
For instance, a changepoint signals the return of at least one arm shift as in \citet{cao2019nearly}, and indicates the best arm changes as in \citet{abbasi2023new}.


In PSB models, a large body of works focus on  RM.
While \citet{cao2019nearly,Yu2009side,liu2018change,besson2022efficient} equip their algorithms with changepoint detection techniques to handle the context changes,
\citet{auer2019adaptively} actively checks the quality of each arm.
However,  there is no existing work on the fixed-confidence BAI problem in a PSB model.
To fill this gap in the literature, 
%
we design  \algParallel{}   for $\varepsilon$-optimal arm identification in our proposed PSLB model.
We show \algParallel{} is almost asymptotically optimal by comparing its complexity to a generic lower bound of all algorithms, and validate its efficiency through numerical experiments.
%

    {
\setlength{\abovedisplayskip}{4.1pt}
\setlength{\belowdisplayskip}{4.1pt}
\vspace{-.1in}
	\section{Problem Setup}
 \label{sec:prob_setup}\vspace{-.4em}
    For $m\in\bbN$, let $[m]:=\{1,2,\ldots,m\}$. 
    For a finite set $S$, let $\bDelta_S$ denote the $|S|$-dim probability simplex on $S$.
    Let $A(\bq):=\sum_{x\in\calX} q_x x x^\top $ be the matrix induced by $\bq\in\bDelta_{\calX}$ with $\calX\subset \bbR^d$.
    An instance of {\em piecewise stationary linear bandit} is a tuple $\Lambda = (\calX,\Theta,P_\theta,\calC)$. Specifically, $x\in\bbR^d$ is an arm (vector) and
    the arm set $\calX\subset\bbR^d$ is composed of $|\calX|=K$ arms that spans $\bbR^d$.
    The latent vector matrix $\Theta = \left(\theta_1^*,\ldots,\theta_N^*\right)\in\bbR^{d\times N}$ contains $N$ latent column vectors where the $j^{\mathrm{th}}$ column $\theta_j^*$ is associated with context $j \in[N]$. 
    For the sake of normalization, we assume $|x^\top\theta_j^*|\leq 1$ for all  $x\in \calX,j\in[N]$.
    Let $P_\theta$ denote the distribution (probability mass function) of the latent vectors and  $p_j=P_\theta[\theta_j^*  ]$. We represent the probabilities of latent vectors as $\bp= ( p_1,\ldots,p_N) \in \Delta_N$.
    The fixed but unknown sequence of changepoints $\calC:=(c_1,c_2,\ldots)$ is an sequence of increasing positive integers $1=c_1<c_2<\ldots$, characterizing all the changepoints (time steps). 

    The {\em return} of arm $x$ under latent context $j$ is a random variable 
    $Y=x^\top\theta_j^*+\eta$, where $\eta$ is a zero-mean random variable (noise) supported on $[-1,1]$,
    and the {\em expected return} of arm $x$ is $\mu_x:=\bbE_{\theta\sim P_\theta}[x^\top\theta] =\sum_{j=1}^NP_\theta[\theta_j^*]x^\top\theta_j^*$. 
    The   {\em best arm}, which we assume is unique, is denoted as $x^*:=\argmax_{x\in\calX} \mu_x$ with mean $\mu^*:=\mu_{x^*}$. 
    Given a slackness parameter $\varepsilon>0$, we define the set of {\em $\varepsilon$-best arms} $\calX_\varepsilon:=\{x\in\calX:\mu_x\geq \mu^*-\varepsilon\}$.
    For each pair of arms $(x,\tilx)\in\calX^2$, define the {\em contextual mean gap} between $x$ and $\tilx$ under latent context $j$ as $\Delta_j(x,\tilx):=(x-\tilx)^\top\theta_j^*$ and the {\em mean gap} between $x$ and $\tilx$ as $\Delta(x,\tilx):=\mu_{x}-\mu_{\tilx}$. 

    Given $l\in\bbN$, the interval $(c_l,\ldots,c_{l+1}-1)$ is known as the $l^{\mathrm{th}}$ {\em stationary segment} and its length is  $L_l:=c_{l+1}-c_l$. We assume $L_{\min}\le L_l\le L_{\max}$.
    Let $l_t:=\max\{l:c_l\leq t\}$ denote the number of stationary segments up to time step $t$.
    At time step $t\in[T]$ (see Dynamics~\ref{dyn:pslb}),
\vspace*{.1em}\newline
{\bf (i)} If $t\in\calC$, the environment samples a latent vector $\theta_{j_t}^*$ according to $P_\theta$, that is, it generates a (latent) {\em context sample} with $P_\theta$; otherwise the latent vector remains unchanged, i.e., $\theta_{j_t}^*=\theta_{j_{t-1}}^*$. The contexts $\{\theta_{j_{c_l}}^*\}_{l\in\mathbb{N}}$ are sampled i.i.d.\ from $P_\theta$ at each changepoint $\{c_l\}_{l \in \mathbb{N}}$.
%
\vspace*{.1em}\newline
{\bf (ii)} The agent pulls an arm $x_t$ and observes the stochastic return $Y_{t,x_t}=x_t^\top \theta_{j_t}^*+\eta_t$, where $\eta_t$ is drawn independently from a distribution supported on $[-1,1]$.

    The agent uses an {\em online} algorithm $\pi:=\{(\pi_t,\tau_t,r_t)\}_{t\in\bbN}$ to actively interact with the instance $\Lambda$ and only has access to the arm set $\calX$, number of latent vectors $N$, the  bounds on the length of each segment $L_{\min}$ and $L_{\max}$, the slackness parameter $\varepsilon$, and the confidence parameter $\delta$. 
    \vspace*{0.1em}\newline
    $\bullet$
    sampling rule $\pi_t:\calH_t^\pi:=\left((x_s^\pi,Y_{s,x_s^\pi})\right)_{s\in[t-1]}\to\calX$ samples an arm $x_t^\pi$ based on the observation history $\calH_t^\pi$ and observe the corresponding random return $Y_{t,x_t^\pi}$;
    \vspace*{0.1em}\newline
    $\bullet$
    stopping rule $\tau_t:\calH_{t+1}^\pi\to\{\mathrm{STOP},\mathrm{CONTINUE}\}$ decides whether to stop or continue to execute given the observation history $\calH_{t+1}^\pi$. The stopping time under algorithm $\pi$ is denoted as $\tau^\pi$;
    \vspace*{0.1em}\newline
    $\bullet$
    recommendation rule $r_\tau:\calH_{\tau+1}^\pi\to\calX$ recommends an arm $\hatx^\pi$ based on $\calH_{\tau+1}^\pi$ upon  termination.
    \vspace*{0.1em} \newline
    The stopping time $\tau^\pi$ is  the {\em sample complexity} of the algorithm $\pi$ under instance $\Lambda$. 
    The {\em expected sample complexity} is $\bbE[\tau^\pi]$, where the expectation is taken w.r.t.\ the random returns, the realization of the contexts governed by the latent vector distribution $P_\theta$, and the randomness of the algorithm~$\pi$.
    
    An algorithm $\pi$ is {\em $(\varepsilon,\delta)$-PAC} (Probably Approximately-Correct) if 
    \begin{equation}
    \bbP[ \hatx^\pi\in \calX_\varepsilon ] \ge 1-\delta .
    \end{equation}
     Our overarching goal in this paper is to devise an $(\varepsilon,\delta)$-PAC algorithm that minimizes  $\tau^\pi$  with high probability (w.h.p.) and in expectation.

  \vspace{-.1in}
\section{Algorithms}\vspace{-.1in}

\label{sec:alg}

\subsection{A Na\"ive Baseline: \algNa}     \vspace{-.1in}
We first devise the {\sc \underline{N}a\"ive $\varepsilon$-\underline{B}est \underline{A}rm \underline{I}dentification} (or \algNa) algorithm (presented in Algorithm~\ref{alg:naive}).
In the design of \algNa, only the choice of confidence radius $\trho_t$ takes the potential context changes into consideration.
Even though \algNa{} does not attempt to detect potential changes in the context, it can identify an $\varepsilon$-optimal arm w.h.p.
and is with a finite expected sample complexity. 

\begin{restatable}{proposition}{propNaiveUpperExp}\label{prop:upper_naive_upper_exp}
Let $\Delta_{\min}=\min_{x\ne x^*} \Delta(x^* , x)$, 
\begin{align}
\TGNa  = \frac{    d   }{\left( \varepsilon+\Delta_{\min} \right)^2}
    \ln \frac{ 1 }{ \delta} \qquad \mbox{and}\qquad  \TPSNa =\frac{  L_{\max}     }{\left( \varepsilon+\Delta_{\min} \right)^2}
    \ln \frac{ 1 }{ \delta}.    
\end{align}
The \algNa{} algorithm is $(\varepsilon,\delta)$-PAC
and its expected sample complexity  is $\tilde{O} (\TGNa+\TPSNa)$.

\end{restatable}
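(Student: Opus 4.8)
The plan is to reduce the PSLB instance to a standard linear-bandit estimation problem for the \emph{effective parameter} $\bar{\theta}:=\sum_{j=1}^N p_j\theta_j^*=\bbE_{\theta\sim P_\theta}[\theta]$, exploiting that $\mu_x=x^\top\bar{\theta}$ for every $x\in\calX$. First I would decompose the observed return as $Y_{t,x_t}=x_t^\top\bar{\theta}+\xi_t+\eta_t$, where $\xi_t:=x_t^\top(\theta_{j_t}^*-\bar{\theta})$ is the \emph{context fluctuation}. This fluctuation is zero-mean over the draw of the context and bounded by $2$, but — crucially — it is \emph{constant across each stationary segment}, so it is not a per-step martingale difference. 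Since \algNa{} samples along a G-optimal allocation $\bq^*$, the least-squares estimate $\hat\theta_t$ obeys $x^\top(\hat\theta_t-\bar{\theta})=x^\top A_t^{-1}\sum_{s\le t}x_s(\xi_s+\eta_s)$ with $A_t=\sum_{s\le t}x_sx_s^\top$, and Kiefer--Wolfowitz gives $\|x\|_{A_t^{-1}}^2\le d/t$ uniformly in $x$.

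Second, I would construct the good event $\calG$ on which $|\hat\mu_x-\mu_x|\le\trho_t$ for all $x\in\calX$ and all $t$, with $\bbP[\calG]\ge 1-\delta$, by splitting the confidence radius into two contributions. For the independent bounded noise $\eta_s$ a standard self-normalized (Azuma/Freedman-type) bound yields a term of order $\sqrt{d\ln(1/\delta)/t}$. For the context part I would regroup the sum by segment, $\sum_{s\le t}x_s\xi_s=\sum_{l\le l_t}B_l(\theta_{j_{c_l}}^*-\bar{\theta})$ with $B_l:=\sum_{c_l\le s<c_{l+1}}x_sx_s^\top$; the vectors $\theta_{j_{c_l}}^*-\bar{\theta}$ are i.i.d., zero-mean, and bounded across segments, and each $B_l$ aggregates at most $\Lmax$ samples. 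A segment-level vector Bernstein/Azuma inequality then produces a term of order $\sqrt{\Lmax\ln(1/\delta)/t}$, the extra $\Lmax$ reflecting that an entire segment carries only one independent context observation. Combining, $\trho_t\asymp\sqrt{(d+\Lmax)\ln(1/\delta)/t}$, up to the $\ln t$ factors needed to make the bound anytime-valid.

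Third, the $(\varepsilon,\delta)$-PAC guarantee follows from a routine elimination/stopping argument on $\calG$: the recommended arm $\hat x$ is certified once its confidence interval separates it from every competitor up to $\varepsilon$, and the $2\trho_\tau$ slack together with $|\hat\mu_x-\mu_x|\le\trho_\tau$ forces $\mu_{\hat x}\ge\mu^*-\varepsilon$. For the sample complexity, on $\calG$ the stopping condition is met once $\trho_t$ falls below a constant multiple of $\varepsilon+\Delta_{\min}$ (the scale $\varepsilon+\Delta_{\min}$ arising because the hardest certification is to rule out that the nearest competitor, at gap $\Delta_{\min}$, is $\varepsilon$-better), i.e.\ at $t_0=\tilde{O}\big((d+\Lmax)\ln(1/\delta)/(\varepsilon+\Delta_{\min})^2\big)=\tilde{O}(\TGNa+\TPSNa)$. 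To upgrade this high-probability bound to one on $\bbE[\tau]$ I would use the anytime validity of $\calG$ and write $\bbE[\tau]=\sum_{t\ge 1}\bbP[\tau\ge t]\le t_0+\sum_{t>t_0}\bbP[\tau\ge t]$; failing to stop after $t_0$ forces a concentration failure at scale $t$, whose probability decays fast enough in $t$ that the tail sum is $O(1)$ and is absorbed into the $\tilde{O}(\cdot)$.

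The main obstacle is the segment-correlated context noise $\xi_t$: frozen within each stationary segment, it makes the naive per-step concentration overcount the effective sample size, and the correct $\Lmax$ dependence only surfaces after regrouping by segment. Because \algNa{} samples along a \emph{non-adaptive} G-optimal schedule, the per-segment design matrices $B_l$ are determined by the (fixed) changepoints and are hence independent of the i.i.d.\ context draws, so $\sum_l B_l(\theta_{j_{c_l}}^*-\bar{\theta})$ is a genuine sum of independent bounded vectors amenable to vector Bernstein. The delicate points are then (i) bounding the per-segment mass by $\Lmax$ without knowledge of the actual segment lengths $\Lmin\le L_l\le\Lmax$, and (ii) keeping the deviation bound uniform over $t$ so that the single event $\calG$ simultaneously underwrites the PAC guarantee and the expectation bound. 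I expect the remaining ingredients — the Kiefer--Wolfowitz variance bound, the elimination argument, and the inversion of $\trho_t$ — to be routine.
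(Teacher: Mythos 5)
Your proposal is correct and follows essentially the same route as the paper: decompose the error into a per-step noise term controlled via the G-optimal design (giving $\sqrt{d\ln(1/\delta)/t}$) and a context term that must be regrouped by stationary segment (giving $\sqrt{L_{\max}\ln(1/\delta)/t}$), make the bound anytime-valid, invert $\trho_t$ at scale $\varepsilon+\Delta_{\min}$, and integrate the tail for the expectation. The only cosmetic difference is that the paper handles the segment term by a scalar Hoeffding bound on $\frac{1}{t}\sum_{l} L_l\, x^\top(\theta_{j_{c_l}}^*-\bbE_{\theta\sim P_\theta}[\theta])$ for each arm $x$ separately, rather than your vector Bernstein on $\sum_l B_l(\theta_{j_{c_l}}^*-\bar\theta)$; both are valid since the sampling schedule is non-adaptive.
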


The upper bound in Proposition~\ref{prop:upper_naive_upper_exp} (also see  Appendix~\ref{app:naive_analysis}) consists of two main terms.
(i) As \algNa{} samples arms according to the G-optimal allocation (see Appendix~\ref{app:alg_detail}), the amount of samples needed to estimate the average of latent vectors $\sum_{s=1}^t \theta^*_{j_s}/t$ contributes to $\TGNa$.
(ii) $\TPSNa$ quantifies how fast $\sum_{s=1}^t \theta^*_{j_s}/t$  converges to the expectation of context vectors $\sum_{j=1}^t p_j \theta_j^*$.

The sample complexity of \algNa{}
grows linearly with $\Lmax$, but we surmise that the sample complexity of a  close-to-optimal algorithm should have a {\em reduced dependence  on $\Lmax$}.

\vspace{-.1in}
    \subsection{\underline{P}iecewise-\underline{S}tationary $\underline{\varepsilon}$-\underline{B}est \underline{A}rm \underline{I}dentification} \vspace{-.1in} 

    The algorithm {\sc \underline{P}iecewise-\underline{S}tationary $\underline{\varepsilon}$-\underline{B}est \underline{A}rm \underline{I}dentification} (or \PSBAI) is presented in Algorithm~\ref{alg:PSBAI}.
    By using a sliding window mechanism,
    \PSBAI\  actively detects the changepoints and aligns the current latent context with contexts observed in the previous time steps via 
      {\sc \underline{L}inear-\underline{C}hange \underline{D}etection} (or \algLCD) and {\sc \underline{L}inear-\underline{C}ontext \underline{A}lignment} (or \algLCA), which are presented in Algorithms~\ref{alg:LCD} and \ref{alg:CA} (see App.~\ref{app:psbai_subroutine}), respectively.
    \PSBAI\ consists of three phases: 
    \vspace*{.2em}\newline
    (i) {\em Exploration phase} (\underline{Exp}): 
    Estimate latent vectors and their distribution $P_\theta$ (Lines~\ref{line:algPSBAI_EXP_update_t} to~\ref{line:algPSBAI_EXP_update_collect} and~\ref{line:algPSBAI_EXP_update_para}); 
    \vspace*{.2em}\newline
    (ii) {\em Change Detection phase} (\underline{CD}): Detect changepoints (Lines~\ref{line:algPSBAI_CD_start} to~\ref{line:algPSBAI_CD_call_CD});
    \vspace*{.2em}\newline
    (iii) {\em Context Alignment phase} (\underline{CA}): Evaluate the current context and align it with the contexts observed in previous time steps (Lines~\ref{line:algPSBAI_CA_start} to~\ref{line:algPSBAI_CA_abandon}).

      At time  $t$, we estimate $\Theta$ and $\bp$ with $\hat{\Theta}_t= (\hat{\theta}_{t,1},\ldots,\hat{\theta}_{t,N})$ and $\hatbp_t=(\hatp_{t,1},\ldots,\hatp_{t,N})^\top$, respectively.\footnote{
    The empirical latent vector-probability pairs $[(\htheta_{t,j},\hatp_{t,j})]_{j=1}^N$ can only approximate the unknown pairs $[(\theta_{\sigma(j)}^*,p_{\sigma(j)})]_{j=1}^N$ up to a permutation $\sigma: [N]\to[N]$, which is determined by the occurrence order of latent vectors. Thus we assume the latent vectors appear in order of increasing indices.
    }
    We denote the empirical   mean gap between $x$ and $\tilx$ under context $j$ as $\hDelta_{t,j}(x,\tilx):=(x-\tilx)^\top\htheta_{t,j}$. 
    
     \PSBAI{}  first computes the G-optimal allocation~\citep{soare2014best}  $\lambda^*$  on the arm set $\calX$  and its maximum  possible stopping time $\tau^*$ (Line~\ref{line:algPSBAI_initialize}).
     It initializes $\CDsample$ and $\CAid$. 
    $\CDsample$ collects  samples
    to detect changepoints and 
     $\CAid$  maintains a dictionary of
    $\{\mathrm{latent\ context\ index:\ identification}$
    $\mathrm{samples}\}$ pairs
    (Line~\ref{line:algPSBAI_set_CD_CA});%
    \footnote{
    A sample in $\CDsample$ is a CD sample;
    A dictionary has a pairing structure $\{\mbox{key:value}\}$ and $\mbox{dictionary}[\mbox{key}]=\mbox{value}$.
    $[a_i]_{i=1}^{n}$ denotes a sequence of elements $a_1,\ldots, a_n$.
    }
     $\CAid[j]$ is the sequence of CD samples used to {\em identify} latent context $j$.
        It also initializes $\calT_{t,j} $, the    collection of time indices in $ [t]$ in the \underline{Exp phases} under estimated context $j$. Define
    \begin{align}
        \label{equ:alg_est_time_collect}
        \calT_t=\bigcup_{j\in[N]}\calT_{t,j}, \qquad \sts_t = |\calT_t|, \qquad  T_{t,j}= |\calT_{t,j}|,\qquad
        \forall j\in[N].
        \hphantom{aaa}\vspace{-0.5em}
    \end{align}


    \begin{algorithm}[H]
		\caption{\sc \underline{P}iecewise-\underline{S}tationary $\underline{\varepsilon}$-\underline{B}est \underline{A}rm \underline{I}dentification (\PSBAI)}\label{alg:PSBAI}
        \begin{multicols}{2}
		\begin{algorithmic}[1]
			\STATE \textbf{Input:} arm set $\calX$, size of the set of latent vectors  $N$, bounds on the segment lengths $L_{\min}$ and $L_{\max}$, slackness parameter $\varepsilon$, confidence parameter $\delta$, sampling parameter $\gamma$ and window size $w$, threshold $b$. 
			\STATE \alglinelabel{line:algPSBAI_initialize} \textbf{Initialize}: Compute the G-optimal allocation $\lambda^*$ and $\tau^*\!=\! \frac{38400\ln(80) NL_{\max}}{\varepsilon^2}\ln\frac{N^2KL_{\max}}{\delta\varepsilon^2 \vphantom{\underline{\delta}} }$. 
            \STATE \alglinelabel{line:algPSBAI_set_CD_CA} Set $\CDsample = [\;]$, $\CAid = \{\;\}$. Set $t_{\mathrm{CD}}=+\infty $. 
            \STATE \alglinelabel{line:algPSBAI_set_collect} Set $ \calT_{t,j} = \emptyset$ and initialize $\calT_t$, $T_{t,j}$, $\sts_t$ with~\eqref{equ:alg_est_time_collect} for all $t\le \tau^*$, $j\in[N]$. 
            \STATE \alglinelabel{line:algPSBAI_warm_up_sample} Sample $\frac{w}{2}$ arms $\{x_s\}_{s=1}^{\frac{w}{2}}\sim \lambda^*$ and observe the associated returns $\{Y_{s,x_s}\}_{s=1}^{\frac{w}{2}}$, $t =\frac{w}{2},t_{\mathrm{CA}}=\frac{w}{2}$.  
             \STATE $\CAid=\{1:[(x_s,Y_{s,x_s})]_{s=1}^{\frac{w}{2}}\}$, $\hatj_t = 1$. \alglinelabel{line:algPSBAI_warm_up_CAid}
            %
            \WHILE{$t\leq \tau^*$  \alglinelabel{line:algPSBAI_max_execute} }
                \STATE $t=t+1$  \alglinelabel{line:algPSBAI_EXP_update_t}
    		\STATE \alglinelabel{line:algPSBAI_EXP_sample} Sample an arm $x_t\sim \lambda^*$ and observe return $Y_{t,x_t}$. 
            \IF{$\mod(t-t_{\mathrm{CA}},\gamma)\neq0 $}
    			\STATE \alglinelabel{line:algPSBAI_EXP_update_collect} Update $\hatj_t=\hatj_{t-1}$, $ \calT_{t,\hatj_t} = \calT_{t-1,\hatj_{t}}\cup \{t\},\calT_{t,j}=\calT_{t-1,j}$ for $j\neq\hatj_t$.  
            \ELSE  \alglinelabel{line:algPSBAI_CD_start}  
                \STATE $\CDsample =\CDsample+[(x_t,Y_{t,x_t})]$.  \alglinelabel{line:algPSBAI_CD_append}
                \STATE Update $\hatj_t=\hatj_{t-1}$, $\calT_{t,j}=\calT_{t-1,j}$ for all $j\in[N]$.
                \IF{$|\CDsample|\geq w $ \alglinelabel{line:algPSBAI_CD_sufficient}}
                    \IF{\algLCD$(\calX, w,b,\CDsample[-w:\;])$ \alglinelabel{line:algPSBAI_CD_call_CD}} 
                        \STATE $\CDsample = [\;]$.\alglinelabel{line:algPSBAI_CA_start}
                        \STATE $t=t+\frac{w}{2},t_{\mathrm{CA}}=t,t_{\mathrm{CD}}=+\infty$.\alglinelabel{line:algPSBAI_CA_revert}
                        \STATE \alglinelabel{line:algPSBAI_CA_call_CA} {$\hatj_{t},\CAid =$ \algLCA$(\calX, w,b,\CAid)$.
                         }
                        \STATE {\bf if} $\hatj_{t}=N+1$ {\bf then} {\bf break}. \alglinelabel{line:algPSBAI_CA_N_1}
                        \STATE \alglinelabel{line:algPSBAI_CA_abandon}  Revert $ \calT_{t,j} = \calT_{t-\frac{w(\gamma+1)}{2},j}$ for all $j\in[N]$. 
                    \ENDIF
                \ENDIF
            \ENDIF
            \STATE Update the estimates using~ \eqref{equ:alg_est_time_collect}, \eqref{equ:alg_est} and~\eqref{equ:cb_rho}.  \alglinelabel{line:algPSBAI_EXP_update_para}
            \IF{Condition \eqref{equ:stp_rule} is met and $t_{\mathrm{CD}}=+\infty$ \alglinelabel{line:algPSBAI_STOP_cond}}
                \STATE Record $\hatx_\varepsilon=\argmax_{x\in\calX}x^\top \hTheta_t\hatbp_t$. \alglinelabel{line:algPSBAI_STOP_record_arm}
                \STATE $t_{\mathrm{CD}}=|\CDsample|$. \alglinelabel{line:algPSBAI_STOP_record_tCD}
            \ELSIF{$t_{\mathrm{CD}}=|\CDsample|-\frac{w}{2} $ \alglinelabel{line:algPSBAI_STOP_extra}}
			\STATE Recommend arm $\hatx_\varepsilon$.  \alglinelabel{line:algPSBAI_STOP_recommend}  
            \STATE {\bf break} \alglinelabel{line:algPSBAI_STOP_break}  
            \ENDIF ~\label{line:algPSBAI_STOP_finish}
			\ENDWHILE
		\end{algorithmic}
  \end{multicols}

  \vspace*{-.5em}
	\end{algorithm}
    
\vspace{-0.5em}
    It then collects $\frac{w}{2}$ samples and stores them in $\CAid$, which is then used to identify the first latent context (Lines~\ref{line:algPSBAI_warm_up_sample} to~\ref{line:algPSBAI_warm_up_CAid}).%
    
    In the \underline{Exp phase}, \PSBAI\ {\bf firstly} samples an arm $x_t$ with $\lambda^*$ and observes the return $Y_{t,x_t}= x_t^\top \theta_{j_t}^*+\eta_t$ (Line~\ref{line:algPSBAI_EXP_sample}). 
    It {\bf then} updates the estimated context index and time collectors 
    (Line~\ref{line:algPSBAI_EXP_update_collect}).
    It also updates
    the estimates of value and probability of each context $j$ (Line~\ref{line:algPSBAI_EXP_update_para}) with 
    \begin{align} \label{equ:alg_est}
        \hat{\theta}_{t,j}=\frac{1}{T_{t,j}}
        \sum_{s\in \calT_{t,j}}A(\lambda^*)^{-1} x_s Y_{s,x_s}
        \qquad\mbox{and}\qquad
        \hatp_{t,j} = \frac{T_{t,j}}{\sts_t},
    \end{align}
     where $\htheta_{t,j}=\bf{0}$ if $T_{t,j}=0$.
     We define $\alpha_t$, $ \xi_t$, $ \beta_{t,j}$, and $\ \hDelta_{t,j}^\clipt(x,\tilx)$ 
    in Appendix~\ref{app:psbai_conf_radius}.
     For each pairs of arms  $(x,\tilx$), the confidence radius of $\Delta(x,\tilx)$ at time step $t$ is
    \begin{equation}\label{equ:cb_rho}
        \rho_t(x,\tilx):=
        2(\alpha_t
        +
        \xi_t)
        +
        \sum_{j=1}^N \beta_{t,j}|\hDelta_{t,j}^\clipt(x,\tilx)+\zeta_t(x,\tilx)|;
    \end{equation}
    \PSBAI\ actively enters the \underline{CD phase} every $\gamma$ time steps (Line~\ref{line:algPSBAI_CD_start}). It {\bf firstly} adds a CD sample to $\CDsample$ (Line~\ref{line:algPSBAI_CD_append}). %
      {\bf Next}, if there are sufficient CD samples (Line~\ref{line:algPSBAI_CD_sufficient}), the \algLCD{} subroutine (presented in Algorithm~\ref{alg:LCD}) is called and utilizes the most recent $w$ CD samples to check whether a changepoint just occurred (Line~\ref{line:algPSBAI_CD_call_CD}).  
    %
    \PSBAI{} steps into the \underline{CA phase} if a changepoint is detected, and skips the \underline{CA phase} otherwise, which is illustrated by Figures~\ref{fig:changedetection} and \ref{fig:stationary} respectively.

\begin{figure}[ht]
  \subfigure[]{
\label{fig:stationary}
\includegraphics[width = .5\textwidth]{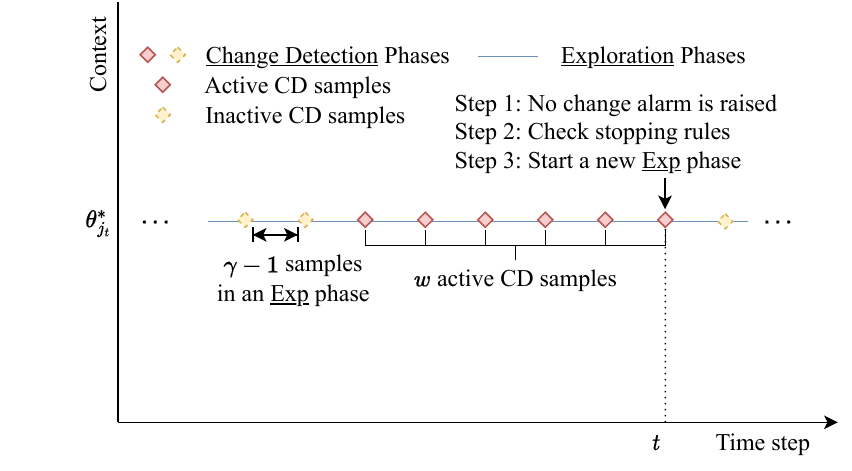}  
}
\hspace{-.5em}
\subfigure[]{
\label{fig:changedetection}
\includegraphics[width = .5\textwidth]{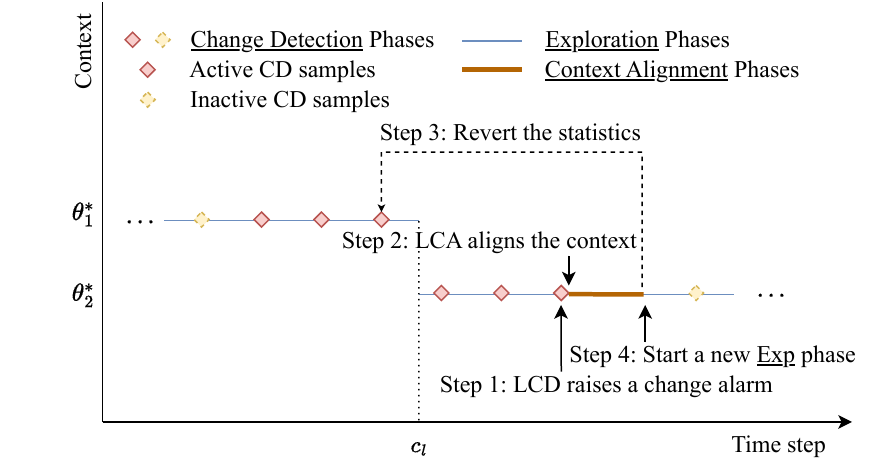} 
}\vspace{-.05in}
  \caption{(a) No change alarm is raised during a stationary segment.
        The active CD samples are the input to the \algLCD\ subroutine at current time step $t$.
        (b) A changepoint is detected by \algLCD, followed by a \underline{CA phase}  and a statistics reversion step.}
  \label{fig:stationary_changeDetection} 
\end{figure}

    In the \underline{CA phase}, \PSBAI\ {\bf starts} by resetting $\CDsample$ (Line~\ref{line:algPSBAI_CA_start}),  updating the count of time steps and recording the ending time of this CA phase (Line~\ref{line:algPSBAI_CA_revert}).
    {\bf Thereafter}, the CA subroutine (presented in Algorithm~\ref{alg:CA}) is invoked, which estimates the current latent context index $\hatj_t$ and updates $\CAid$ (Line~\ref{line:algPSBAI_CA_call_CA}).
    If $\hatj_t=N+1$, i.e., \PSBAI{} identifies $N+1$ latent contexts, which is incorrect under instance $\Lambda$, it terminates and fails to identify an $\varepsilon$-optimal arm (Line~\ref{line:algPSBAI_CA_N_1}). 
    {\bf Lastly},  all empirical statistics are reverted to those from $({w(\gamma+1)}/{2})$ time steps ago, i.e., the most recent $({w\gamma}/{2})$ samples in the \underline{Exp phases} are abandoned (Line~\ref{line:algPSBAI_CA_abandon}).

    The stopping rule is described in Lines~\ref{line:algPSBAI_STOP_cond} to~\ref{line:algPSBAI_STOP_finish}.
    {\bf (I)} If the following condition is satisfied (Line~\ref{line:algPSBAI_STOP_cond}):
%
    \begin{align}\label{equ:stp_rule}
    \begin{aligned}
        &\min_{x:x\neq x_t^*}\hDelta_t(x_t^*,x) - \rho_t(x_t^*,x) \geq -\varepsilon
           \quad\mbox{and } \quad\sts_t\geq \frac{2\Lmax }{9}  \ln\Big(\frac{2}{\delta_{d,\sts_t}}\Big)
    \end{aligned}
    \end{align}
    where the empirical mean gap $\hDelta_t(x_t^*,x):=(x_t^*-x)^\top\hTheta_t \hatbp_t $ and $x_t^*:=\argmax_{x\in\calX}x^\top \hTheta_t \hatbp_t$,
    \PSBAI{} records the arm with the highest empirical mean as $\hatx_\varepsilon$ and the number of CD samples $t_{\mathrm{CD}}$ (Lines~\ref{line:algPSBAI_STOP_record_arm} and~\ref{line:algPSBAI_STOP_record_tCD}) but does not terminate immediately. 
    Besides, a mild forced arm pull procedure is  
    in the second line of \eqref{equ:stp_rule}, which is inspired by Lemma~\ref{lem:cb_vector} and to ensure the performance of \PSBAI{}.
    {\bf (II)} \PSBAI{} will execute for another $({w\gamma}/{2})$ time steps in which ${w}/{2}$ CD samples are collected; 
    if no changepoint is detected with these $w/2$ CD samples,
    the recorded arm $\hatx_\varepsilon$ is recommended and \PSBAI{} terminates (Lines~\ref{line:algPSBAI_STOP_extra} to~\ref{line:algPSBAI_STOP_recommend}).
    Part {\bf (II)} of the stopping rule assures \PSBAI{} does not terminate when a changepoint has occurred but has not been detected, as \PSBAI{} may fail to identify an $\varepsilon$-optimal arm otherwise.

    We remark that even though \PSBAI{} uses the knowledge of $L_{\max}$, our experiments show that the performance of \PSBAI{} is robust to small misspecifications in $L_{\max}$ (see Appendix~\ref{app:misspe}).
    Furthermore, the computational complexity of \PSBAI{} is computed in detail in Appendix~\ref{sec:comp_complexity}. The derived computational complexity indicates the proposed algorithm depends in a natural manner on the problem parameters such as $d, K, N$, and $\gamma$.
    Lastly, thanks to the   \algLCD{} and \algLCA{} subroutines, a slightly modified variant of \PSBAI{} can also solve the ``$\varepsilon$-Best Arm Tuple identification problem'', which aims to identify an $\varepsilon$-best arm {\em under  each context}; see   Appendix~\ref{app:BATIP} for details.
    \vspace{-.1in}
    \subsubsection{Theoretical guarantee of \PSBAI}

    To facilitate the analysis of \PSBAI{}, we propose the following assumptions. Note that our \PSBAI{} algorithm may still succeed to identify an $\varepsilon$-optimal arm w.h.p.\ when the assumptions do not hold.
    

    \begin{assumption}[Distinguishability Condition]\label{ass:distin}
        The agent can choose $w$, $\gamma$ and $b$ such that 
        (1) $2b\le \Delta_c$ where $\Delta_c:=\min_{\theta_j^*\ne\theta_{\tilj}^*}\max_{x\in\calX}|x^\top(\theta_j^*-\theta_{\tilj}^*)|$ 
        is the minimum gap between two contexts; and 
        (2) $3w\gamma\le \Lmin$. A possible choice is 
        \begin{equation}\label{equ:assumption}
            b\!=\!\frac{8d}{3w}\ln\frac{2}{\delta_{\FAE}}
            \!+\!\sqrt{
            \Big(\frac{8d}{3w}\ln\frac{2}{\delta_{\FAE}}\Big)^2
            \!+\! \frac{24d}{w}\ln\frac{2}{\delta_{\FAE}}}  \quad\mbox{where}\quad \delta_{\FAE}=\frac{\gamma\delta}{4 (\tau^*)^2 K}.
        \end{equation}
        %
    \end{assumption}
    This assumption guarantees (i) \PSBAI{} will not abandon all samples during the reversion procedure (Line~\ref{line:algPSBAI_CA_abandon} of Algorithm~\ref{alg:PSBAI}); (ii) each two latent vectors can be distinguished if the window size $w$ is sufficiently large (e.g., ${L_{\min}}/{6}$). 
    We clarify that this assumption is only for the rigor of theoretical guarantees and it holds provided that each stationary segment is sufficiently long; this is a feature of PSB models and similar assumptions are also present in existing works for their analyses \citep{liu2018change,cao2019nearly,besson2022efficient}.
    We demonstrate the robustness of \PSBAI{} to these parameters using experiments in Section~\ref{sec:exp}.

    \begin{restatable}{theorem}{thmUpperBd}\label{thm:upbd}
    Define the context distribution estimation (DE) hardness parameter 
    \begin{equation}
    \rmH_{\DE}(x_\varepsilon,x):=\frac{\Lmax}{\left(\Delta(x^*,x)+\varepsilon\right)^2}\barH(x_\varepsilon,x)     
    \end{equation}
    where 
    $
        \barH(x_\varepsilon,x):=\big(\sum_{ j=1}^N 
         \sqrt{\min\left\{ 16p_j,1/4\right\}}
        |\Delta_j(x_\varepsilon,x)+\varepsilon|\big)^2.
    $
    Under Assumption~\ref{ass:distin},
    with probability at least $1-\delta$, \PSBAI{} identifies
    an $\varepsilon$-optimal arm
    and its sample complexity
    is 
    \begin{align}
        \tilO
        \Bigg(
        \max_{\substack{x_\varepsilon\in\calX_\varepsilon\\x\neq x_\varepsilon,x^*}}
        &
        \underbrace{\frac{d}{\left(\Delta(x^*,x) + \varepsilon\right)^2}\ln\frac{1}{\delta}}_{\TVE(x)}
        +\underbrace{\rmH_{\DE}(x_\varepsilon,x)\ln\frac{1}{\delta}}_{\TDE(x_\varepsilon,x)}
        +
        \underbrace{\frac{NL_{\max}}{\Delta(x^*,x) + \varepsilon}\ln\frac{1}{\delta}}_{\TRE(x)}
        \Bigg).
        \label{equ:upbd}
    \end{align}
    \end{restatable}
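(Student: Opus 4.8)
The plan is to isolate a high-probability event $\Good$ on which the detection, alignment, and confidence machinery all behave as designed, show $\bbP(\Good)\ge 1-\delta$, and then argue both the $(\varepsilon,\delta)$-PAC property and the sample-complexity bound deterministically on $\Good$. I would take $\Good$ to be the intersection of four sub-events: (a) \algLCD{} raises no false alarm inside any stationary segment; (b) every true changepoint is flagged by \algLCD{} within one window; (c) \algLCA{} always returns the correct (permuted) context label, so the estimated index $\hatj_t$ tracks the realized latent context; and (d) for all $t\le\tau^*$, all pairs $(x,\tilx)$, and all $j$, the terms $\alpha_t$, $\xi_t$, $\beta_{t,j}$ simultaneously dominate the respective estimation errors, so that $|\hDelta_t(x,\tilx)-\Delta(x,\tilx)|\le\rho_t(x,\tilx)$. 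Events (a)--(c) are driven by Assumption~\ref{ass:distin}: the choice of $b$ in \eqref{equ:assumption} with $2b\le\Delta_c$ makes the per-window error probability of \algLCD{} at most $\delta_{\FAE}$, while $3w\gamma\le\Lmin$ ensures at most one changepoint lies in any window and that the reversion in Line~\ref{line:algPSBAI_CA_abandon} never empties a segment. A union bound over the $O(\tau^*)$ windows and over the regression/count concentration inequalities behind (d) then gives $\bbP(\Good)\ge 1-\delta$.

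On $\Good$, $\rho_t$ is a genuine confidence radius, so whenever the first line of \eqref{equ:stp_rule} holds we get $\mu_{x_t^*}\ge\mu^*-\varepsilon$, i.e.\ the recorded $\hatx_\varepsilon$ is $\varepsilon$-optimal. The one remaining risk is recording during a changepoint that has not yet been detected; Part~(II) of the stopping rule forces another $w/2$ CD samples before recommending, and by sub-event (b) any such changepoint is flagged within that window, so \PSBAI{} only terminates on valid estimates. This yields the PAC claim.

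For the sample complexity on $\Good$, I would bound the number $\sts_t$ of \underline{Exp}-phase samples needed to trigger \eqref{equ:stp_rule} and then inflate to the total stopping time. Writing
\begin{equation}
\hDelta_t(x,\tilx)-\Delta(x,\tilx)=\sum_{j}\hatp_{t,j}\big(\hDelta_{t,j}(x,\tilx)-\Delta_j(x,\tilx)\big)+\sum_{j}(\hatp_{t,j}-p_j)\Delta_j(x,\tilx),
\end{equation}
the first (value-estimation) sum is controlled by $\alpha_t\sim\sqrt{d\ln(1/\delta)/\sts_t}$ under the G-optimal allocation, which drops below $\Delta(x^*,x)+\varepsilon$ once $\sts_t\gtrsim\TVE(x)$; the second (probability-estimation) sum is controlled by $\sum_j\beta_{t,j}|\hDelta_{t,j}^{\clipt}(x,\tilx)+\zeta_t(x,\tilx)|$ with $\beta_{t,j}\sim\sqrt{p_j\Lmax\ln(1/\delta)/\sts_t}$, since exactly one i.i.d.\ context draw occurs per segment of length $\le\Lmax$. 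Requiring this to fall below $\Delta(x^*,x)+\varepsilon$ and collecting the per-$j$ contributions, with the clipping $\min\{16p_j,1/4\}$ arising from the binomial deviation of $\hatp_{t,j}$, yields exactly the $\barH(x_\varepsilon,x)$ form and hence $\sts_t\gtrsim\TDE(x_\varepsilon,x)$. The bias term $\xi_t$, which decays like $N\Lmax\ln(1/\delta)/\sts_t$ and accounts for rarely observed contexts, together with the forced-exploration second line of \eqref{equ:stp_rule}, contributes the lower-order $\TRE(x)$. Taking the maximum over the arms that must be separated from $x^*$, multiplying by the sampling period $\gamma$, and adding the $O(w\gamma)$ samples lost at each of the $O(\tau/\Lmin)$ detections (absorbed into $\tilO$) converts the bound on $\sts_t$ into the total stopping time in \eqref{equ:upbd}.

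The crux lies in sub-event (c) coupled with the probability-estimation analysis: I must prove that the sliding-window detector and aligner, under Assumption~\ref{ass:distin}, produce labels that are globally consistent across the horizon, so that each $\hatp_{t,j}$ is an honest count of genuine i.i.d.\ draws, and then show that the interaction of the clipped gaps $\hDelta_{t,j}^{\clipt}$, the correction $\zeta_t$, and the binomial concentration of $\hatp_{t,j}$ collapses precisely to the $\sqrt{\min\{16p_j,1/4\}}$ weights of $\barH$. Managing the reversion step in Line~\ref{line:algPSBAI_CA_abandon}---so that discarded samples neither bias the surviving per-context estimators nor inflate $\sts_t$ beyond the stated order---is the most delicate piece of bookkeeping, and is where I expect the main difficulty to reside.
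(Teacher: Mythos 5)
Your proposal is correct and follows essentially the same route as the paper: the paper's proof defines the same good event (no false alarms, no failed alarms, no misalignment, plus validity of the confidence radii $\alpha_t,\beta_{t,j},\xi_t$), establishes the PAC guarantee from the stopping rule on that event, and obtains the sample complexity by bounding the Exp-phase count $\sts_t$ via the same VE/DE/RE split (your two-term decomposition is just the paper's three-term one with DE and RE merged, and you recover the split when treating $\beta_{t,j}$ and $\xi_t$ separately), before inflating $\sts_\tau$ to $\tau$ by a constant factor. The only cosmetic difference is that the overhead from CD sampling is a factor $\gamma/(\gamma-1)$ (one in every $\gamma$ pulls is diverted), not a multiplication by $\gamma$, but this does not affect the order of the bound.
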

    The upper bound comprises   three terms which serve   distinct purposes: 
    \vspace*{.1em}\newline
    (i) {\em Latent vector estimation} (\VE): 
    $\tilO\left(\TVE(x)\right)$ quantifies the bulk of samples needed to obtain a good estimate of latent context vectors such that the returns of $x_\varepsilon$ and $x$ can be distinguished, where $x_\varepsilon$ is an $\varepsilon$-best arm and $x\notin \calX_\varepsilon$ is a suboptimal arm.
    $\TVE(x)$ recovers the sample complexity in the stationary linear bandits in~\citet{soare2014best}, indicating that \PSBAI{} estimates latent vectors efficiently.
    \vspace*{.1em}\newline
    (ii) {\em Context distribution estimation} (\DE): 
     $\tilO\left(\TDE(x_\varepsilon,x)\right)$  characterizes the 
    bulk of samples needed to learn the distribution of latent context vectors.
     %
    \vspace*{.1em}\newline
    (iii) {\em Residual estimation} (\RE): 
    $\tilO\left(\TRE(x)\right)$ counts the remaining samples needed for \VE{} and \DE, in addition to $\tilO\left(\TVE + \TDE\right)$.
    \vspace*{.2em}\newline
    Besides, the $\max$ operator is applied to exclude all suboptimal arms.
    We also see that $\TVE(x)$ and $\TDE(x_\varepsilon,x)$ are similar to $\TGNa$ and $\TPSNa$ in Proposition~\ref{prop:upper_naive_upper_exp}  respectively.

    {\bf Firstly,}  the bound in \eqref{equ:upbd} implies that, in an instance with smaller {\em relaxed mean gap} $\Delta(x^*,x)+\varepsilon$, \PSBAI{} terminates after a larger number of time steps; in other words, it is more difficult to identify an $\varepsilon$-optimal arm. 
    In difficult instances with small $\Delta(x^*,x)+\varepsilon$, the different orders of this term in $\TVE(x)$, $\TDE(x_\varepsilon,x)$ and $\TRE(x)$ indicate that,  $\TRE(x)$ is  small compared to $\TVE(x)$ and $\TDE(x_\varepsilon,x)$.

    {\bf Secondly,}
    \DE{} solely utilizes context samples generated with $P_\theta$ and they are generated {\em only} at changepoints in $\calC$, while
    all the observations in \underline{Exp phases} facilitate \VE. From this perspective, there are less samples that can be used for \DE{} than for \VE{} as \PSBAI{} processes, and hence $\TDE(x_\varepsilon,x)$ is supposed to be with larger order than $\TVE(x)$.

    {\bf Moreover,}
    for the purpose of \DE, \PSBAI{} needs to observe context samples at
    $\tilO\big(\frac{\barH(x_\varepsilon,x) }{\left(\Delta(x^*,x)+\varepsilon\right)^2} \ln \frac{1}{\delta} \big)
    = \tilO\big( \frac{\rmH_{\DE}(x_\varepsilon,x) }{\Lmax} \ln \frac{1}{\delta}\big)$
    changepoints where $\Lmax$ is the maximum length of a stationary segment, leading us to $\TDE(x_\varepsilon,x)$.
    Close examination of   the definition of $\barH(x_\varepsilon,x)$ reveals that 
    {\em both} the vectors and their   probabilities influence the number of samples needed for \DE.
    The comparison between $\TDE(x_\varepsilon,x)$ and $\TPSNa$ in Proposition~\ref{prop:upper_naive_upper_exp}
    clearly indicates that
    \PSBAI{} mitigates the influence of $\Lmax$ by detecting changepoints and aligning the detected context with observed ones, while \algNa{} does not do so.

\subsection{\algParallel$\,=\,\,$\PSBAI $\,\,\cup\,$ \algNa} 
\label{sec:alg_parallel}

We have provided a {\em high-probability} result for \PSBAI{} in Theorem~\ref{thm:upbd}. The design of \PSBAI{} (Line~\ref{line:algPSBAI_max_execute} 
of Algorithm~\ref{alg:PSBAI}) indicates that \PSBAI{} will not recommend any arm if it does not terminate at time $\tau^*$.
This result is   nontrivial, as
the high-probability result in Theorem~\ref{thm:upbd} depends on the success of change detection (Algorithm~\ref{alg:LCD}) and context alignment (Algorithm~\ref{alg:CA}), which requires a non-vanishing failure probability  (e.g., ${\delta}/{2}$). Thus, we cannot derive an  upper bound on the expected sample complexity of \PSBAI. 
We devise a solution  
by designing the {\underline{P}iecewise-\underline{S}tationary \underline{$\varepsilon$}-\underline{B}est \underline{A}rm \underline{I}dentification$^+$ (\algParallel)}  
algorithm with a simple but effective trick.

The \algParallel{} algorithm samples {\em one} arm with the G-optimal allocation $\lambda^*$ at each time step, with which Algorithms~\ref{alg:PSBAI} and \ref{alg:naive} are executed in parallel (detailed in Algorithm~\ref{alg:parallel}).
This is feasible since \PSBAI{} and \algNa{} algorithms have the same   sampling rule. 

\begin{restatable}{theorem}{thmParallelUpperExp}\label{thm:parallel_upper_exp}
The \algParallel{} algorithm is $(\varepsilon,\delta)$-PAC
and its expected sample complexity is
\begin{align}
        \tilO
        &
        \bigg(
        \min \bigg\{
    \max_{x_\varepsilon\in\calX_\varepsilon ,x\neq x_\varepsilon,x^*}
        \TVE(x) +\TDE(x_\varepsilon, x) +\TRE(x) 
        ,\,
        %
        %
        \TGNa + \TPSNa
        \bigg\}
        \bigg).
\end{align}
\end{restatable}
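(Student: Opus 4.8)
The plan is to exploit the fact that \PSBAI{} and \algNa{} share the same G-optimal sampling rule $\lambda^*$, so that \algParallel{} can feed a single stream of arm pulls to both subroutines and halt the instant either one stops; its stopping time is therefore $\tau=\min\{\tau^{\mathrm{PS}},\tau^{\mathrm{Na}}\}$ and its recommendation is that of whichever subroutine halts first. I would run each subroutine at confidence level $\delta/2$. Write $C_{\mathrm{PS}}$ for the high-probability sample-complexity bound of Theorem~\ref{thm:upbd} (the bracketed maximum over $x_\varepsilon,x$) and $C_{\mathrm{Na}}=\TGNa+\TPSNa$ for the bound of Proposition~\ref{prop:upper_naive_upper_exp}.

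For the $(\varepsilon,\delta)$-PAC guarantee, let $\calG_{\mathrm{PS}}$ be the success event of Theorem~\ref{thm:upbd} at level $\delta/2$ (on which \PSBAI{} halts within $\tilO(C_{\mathrm{PS}})$ steps and recommends an arm in $\calX_\varepsilon$) and let $\calG_{\mathrm{Na}}$ be the corresponding success event of Proposition~\ref{prop:upper_naive_upper_exp} at level $\delta/2$; by construction $\bbP[\calG_{\mathrm{PS}}^c]\le\delta/2$ and $\bbP[\calG_{\mathrm{Na}}^c]\le\delta/2$. Since the arm returned by \algParallel{} is the recommendation of whichever subroutine stops first, a non-$\varepsilon$-optimal arm can be returned only when that subroutine lies on its own failure event, and a union bound gives $\bbP[\hatx_\varepsilon\notin\calX_\varepsilon]\le\bbP[\calG_{\mathrm{PS}}^c]+\bbP[\calG_{\mathrm{Na}}^c]\le\delta$.

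For the expected complexity I would prove two one-sided bounds and then use that a single nonnegative number dominated by both is dominated by their minimum. The first bound is immediate: $\tau\le\tau^{\mathrm{Na}}$ surely, so $\bbE[\tau]\le\bbE[\tau^{\mathrm{Na}}]=\tilO(C_{\mathrm{Na}})$ by Proposition~\ref{prop:upper_naive_upper_exp}. For the second, decompose $\tau\le\tau^{\mathrm{PS}}\bone_{\calG_{\mathrm{PS}}}+\tau^{\mathrm{Na}}\bone_{\calG_{\mathrm{PS}}^c}$: on $\calG_{\mathrm{PS}}$ Theorem~\ref{thm:upbd} bounds $\tau^{\mathrm{PS}}=\tilO(C_{\mathrm{PS}})$, so the first term contributes $\tilO(C_{\mathrm{PS}})$, leaving the correction $\bbE[\tau^{\mathrm{Na}}\bone_{\calG_{\mathrm{PS}}^c}]$.

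The main obstacle is controlling this correction: because both subroutines consume the same samples, $\tau^{\mathrm{Na}}$ and the failure event $\calG_{\mathrm{PS}}^c$ are dependent, so one cannot simply factor the expectation into $\bbE[\tau^{\mathrm{Na}}]\,\bbP[\calG_{\mathrm{PS}}^c]$. I would handle it by upgrading Proposition~\ref{prop:upper_naive_upper_exp} to a tail statement — a high-probability runtime bound $\tau^{\mathrm{Na}}=\tilO(C_{\mathrm{Na}})$ holding on $\calG_{\mathrm{Na}}$ together with a geometrically decaying overshoot beyond it, which is the mechanism already underlying the expectation bound in the Proposition. Splitting $\bbE[\tau^{\mathrm{Na}}\bone_{\calG_{\mathrm{PS}}^c}]$ across $\calG_{\mathrm{Na}}$ and $\calG_{\mathrm{Na}}^c$ then dominates it by $\tilO(C_{\mathrm{Na}})\,\bbP[\calG_{\mathrm{PS}}^c]+\bbE[\tau^{\mathrm{Na}}\bone_{\calG_{\mathrm{Na}}^c}]$, i.e.\ a correction of order $\delta\,C_{\mathrm{Na}}$ plus the controlled overshoot; both carry an extra factor of $\delta$ and are absorbed into the leading terms, giving $\bbE[\tau]=\tilO(C_{\mathrm{PS}})$. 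Combining the two one-sided bounds yields $\bbE[\tau]=\tilO(\min\{C_{\mathrm{PS}},C_{\mathrm{Na}}\})$, which is the assertion once $C_{\mathrm{PS}}$ and $C_{\mathrm{Na}}$ are written out via $\TVE,\TDE,\TRE,\TGNa,\TPSNa$.
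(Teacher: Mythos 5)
Your overall architecture --- feed one G-optimal sample stream to both subroutines, stop at $\tau=\min\{\tau_1,\tau_2\}$, and prove two one-sided bounds --- is the same as the paper's, and your correctness argument (a union bound over the two subroutines' failure events, with a $\delta/2$ split) is a sound, if slightly different, route to the $(\varepsilon,\delta)$-PAC claim. The first one-sided bound $\bbE[\tau]\le\bbE[\tau_2]=\tilO(\TGNa+\TPSNa)$ also matches the paper. The gap is in the second one-sided bound. Writing $C_{\mathrm{PS}}=\max_{x_\varepsilon,x}\TVE(x)+\TDE(x_\varepsilon,x)+\TRE(x)$ and $C_{\mathrm{Na}}=\TGNa+\TPSNa$, you bound the correction $\bbE[\tau_2\bone_{\calG_{\mathrm{PS}}^c}]$ by $\tilO(C_{\mathrm{Na}})\cdot\bbP[\calG_{\mathrm{PS}}^c]+\bbE[\tau_2\bone_{\calG_{\mathrm{Na}}^c}]$ and assert that the first piece, of order $\delta\,C_{\mathrm{Na}}$, ``is absorbed into the leading terms.'' It is not: $\delta C_{\mathrm{Na}}\approx\delta\,\Lmax(\Delta_{\min}+\varepsilon)^{-2}\ln(1/\delta)$, whereas the largest $\Lmax$-dependent pieces of $C_{\mathrm{PS}}$ are $\max_x\TRE(x)=N\Lmax(\Delta_{\min}+\varepsilon)^{-1}\ln(1/\delta)$ and $\TDE$, which can be as small as $\tilO(N\Lmax\ln(1/\delta))$ (this is exactly what happens in Example~\ref{exp:instance_ExpIns}/Corollary~\ref{cor:tight_ExpIns}). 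Hence $\delta C_{\mathrm{Na}}/C_{\mathrm{PS}}$ scales like $\delta/(N(\Delta_{\min}+\varepsilon))$ and is unbounded as $\Delta_{\min}+\varepsilon\to0$ with $\delta$ fixed --- precisely the regime where $C_{\mathrm{PS}}\ll C_{\mathrm{Na}}$ and the theorem's $\min$ is supposed to bite. Your argument only yields $\tilO(\min\{C_{\mathrm{Na}},\,C_{\mathrm{PS}}+\delta C_{\mathrm{Na}}\})$, which is strictly weaker than the stated bound.

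The paper avoids charging $\delta$ units of probability with $C_{\mathrm{Na}}$ samples each by never invoking a single ``\PSBAI{} succeeds'' event of mass $1-\delta$. It instead writes $\bbE[\tau]\le 8\tilT+\sum_{t=8\tilT}^{\tau^*-1}\bbP[\tau_1>t]+\sum_{t\ge\tau^*}\bbP[\tau_2\ge t]$, exploiting (i) the hard cap $\tau^*$ built into \PSBAI's while-loop, so that beyond $\tau^*$ only the polynomial tail of \algNa{} (starting from $T_0\le\tau^*$) matters and integrates to $O(\delta/\tau^*)$; and (ii) a genuinely pointwise tail bound $\bbP[\tau_1>t]\le\frac{\delta}{2\tau^*}+O(\delta/t^2)$ for $t\ge 8\tilT$. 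Item (ii) in turn relies on two calibrations your sketch does not use: the change-detection/alignment failure probability is tuned to $\frac{\delta}{2\tau^*}$ (Lemma~\ref{lem:Good}), so that summing it over at most $\tau^*$ time steps costs only $\delta/2$ in aggregate, and the per-step confidence-interval failure probabilities $\delta_{v,\sts_t},\delta_{d,\sts_t},\delta_{m,\sts_t}$ decay like $\sts_t^{-3}$, making $\sum_t\bbP[\CI_t^c\mid\Good]$ summable. With these, every correction is an additive $O(1)$ rather than $\delta$ times a complexity. To close your proof you would need to replace the event split over $\calG_{\mathrm{PS}}$ by this layer-cake/tail-summation argument, i.e.\ establish a summable tail for $\tau_1$ on $[8\tilT,\tau^*]$ rather than a single high-probability runtime bound.
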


\algParallel{} inherits the superiority of \PSBAI{} to adapt to the piecewise stationary environment, and employs the stopping rule of \algNa{} to maintain a finite expected sample complexity.
As a result, the expected complexity of \algParallel{} in Theorem~\ref{thm:parallel_upper_exp} is of the same  order as the high-probability one of \PSBAI{} in Theorem~\ref{thm:upbd} and is not larger than the complexity of \algNa{} in Proposition~\ref{prop:upper_naive_upper_exp}. 
  We show how our results particularize to the {\em stationary} linear bandits BAI problem, as well as additional discussions on the upper bound, in Appendix~\ref{app:Add_Discussions}.
    \vspace*{-0.5em}

    \section{Lower Bound on the Sample Complexity}
    \label{sec:low_bd}

Given  $\Lambda = (\calX,\Theta,P_\theta,\calC)$, define the alternative instance $\Lambda^\prime = (\calX,\Theta^\prime,P_{\theta^\prime},\calC)$ w.r.t.\ $\Lambda$, where     $\Theta^\prime=\left(\theta_1^\prime,\dots,\theta_n^\prime\right)\in \mathbb{R}^{d\times N}$,
    $P_{\theta^\prime}[\theta_j^\prime] =P_\theta[\theta_j^*]$, and 
 there exists $ x\in\calX\setminus \calX_\varepsilon$, such that $x_\varepsilon^\top\bbE_{\theta^\prime\sim P_{\theta^\prime}} [\theta^\prime]< x^\top\bbE_{\theta^\prime\sim P_{\theta^\prime}} [\theta^\prime]-\epsilon$ for all $x_\varepsilon\in\calX_\varepsilon$.  Let $\mathrm{Alt}_\Theta(\Lambda)$ be set of all alternative instances (w.r.t.~$\Lambda$).

 \begin{restatable}{theorem}{thmLwBdOverallPiecewise}
 \label{thm:lwbd_overall_piecewise}
    For all $(\varepsilon,\delta)$-PAC algorithm $\pi$, there exists an instance $\Lambda = (\calX,\Theta,P_\theta,\calC)$ such that
    \begin{align*} 
        \bbE[\tau_\pi] \ge 
        \max\left\{
        T_\varepsilon(\Lambda)\ln\frac{1}{2.4\delta},\
        c_{N_{\calC}}
        \right\},
    \end{align*}
    where
            \begin{align}
                T_\varepsilon(\Lambda)^{-1} & :=
                \max_{\{v_j\}_{j=1}^N}\!
                \min_{\Lambda^\prime\in\mathrm{Alt}_\Theta(\Lambda)}
                \sum_{j,x}
                p_j
                v_{j,x}
                \frac{(x^\top(\theta_j^*\!-\!\theta_j^\prime))^2}{2} 
    %
    ,\quad  \mbox{and}\\
          N_\calC &:=  \max_{x\neq x^*}
            \frac{\sum_{j }  p_j(\Delta_j(x^*,x)\!+\!\varepsilon)^2}{(\Delta(x^*,x)+\varepsilon)^2}
            \ln\frac{1}{4\delta}.
    \end{align}
\end{restatable}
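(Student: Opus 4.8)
The plan is to establish the two lower bounds $\bbE[\tau_\pi]\ge T_\varepsilon(\Lambda)\ln\frac{1}{2.4\delta}$ and $\bbE[\tau_\pi]\ge c_{N_\calC}$ separately, against two different families of alternative instances, and then combine them: since each holds for the (worst-case) instance $\Lambda$ we exhibit, their maximum does too. Both bounds rest on a transportation (change-of-measure) lemma of Garivier--Kaufmann type: for any $(\varepsilon,\delta)$-PAC algorithm and any alternative whose correct answer differs from that of $\Lambda$, the accumulated KL divergence of the observations is at least $\mathrm{kl}(\delta,1-\delta)\ge\ln\frac{1}{2.4\delta}$. To keep the arguments clean I would first pass to genie-aided settings --- revealing either the realized contexts or the latent vectors to the agent can only decrease the sample complexity, so any lower bound proved with this extra information remains valid --- and I would choose the noise law on $[-1,1]$ in the constructed $\Lambda$ so that shifting its mean by $a$ costs KL at most $a^2/2$.

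For the first term I test $\Lambda$ against $\ALT_\Theta(\Lambda)$, which fixes $P_\theta$ and perturbs the latent vectors, so the realized-context process has identical law under $\Lambda$ and $\Lambda'$ and the only KL contribution comes from the returns. Writing $N_{j,x}$ for the number of pulls of arm $x$ while the context is $j$, the chain rule gives $\sum_{j,x}\bbE[N_{j,x}]\,\rmKL(\nu_{j,x},\nu'_{j,x})\ge\ln\frac{1}{2.4\delta}$ for every $\Lambda'\in\ALT_\Theta(\Lambda)$. I would then use the i.i.d.\ structure of the contexts (a Wald-type identity) to write $\bbE[N_{j,x}]=p_j v_{j,x}\bbE[\tau_\pi]$, with $v_{j,x}$ the algorithm's arm-allocation within context $j$, and bound the per-pull divergence by $\rmKL(\nu_{j,x},\nu'_{j,x})\le (x^\top(\theta_j^*-\theta_j'))^2/2$. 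Substituting this upper bound and taking the minimum over $\Lambda'$ yields $\bbE[\tau_\pi]\cdot\min_{\Lambda'}\sum_{j,x}p_j v_{j,x}(x^\top(\theta_j^*-\theta_j'))^2/2\ge\ln\frac{1}{2.4\delta}$; since the algorithm's allocation can at best achieve the $\max_v$ of this inner minimum, the denominator is at most $T_\varepsilon(\Lambda)^{-1}$, giving $\bbE[\tau_\pi]\ge T_\varepsilon(\Lambda)\ln\frac{1}{2.4\delta}$.

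For the second term I instead reveal the latent vectors $\{\theta_j^*\}$ but keep $P_\theta$ hidden, and test $\Lambda$ against a family that perturbs $P_\theta$ (keeping the vectors) so that some $x\neq x^*$ becomes $\varepsilon$-better than $x^*$. Given the vectors, the returns carry no information about $P_\theta$, so the sufficient statistic is the realized-context sequence $(j_{c_1},\dots,j_{c_{l_\tau}})$, i.e.\ $l_\tau$ i.i.d.\ draws from the categorical law $P_\theta$, and the accumulated KL equals $\bbE[l_\tau]\,\rmKL(P_\theta\|P_{\theta'})$. For a fixed suboptimal $x$ I would take the tilted alternative $p_j'\propto p_j(1-g(j)\,\bbE[g]/\bbE[g^2])$ with $g(j):=\Delta_j(x^*,x)+\varepsilon$, which zeroes the relaxed mean gap $\bbE_{P_{\theta'}}[g]$ and satisfies $\rmKL(P_\theta\|P_{\theta'})\le (\Delta(x^*,x)+\varepsilon)^2/(2\sum_j p_j(\Delta_j(x^*,x)+\varepsilon)^2)$. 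The change-of-measure bound then forces $\bbE[l_\tau]\ge N_\calC$ after maximizing over $x$; since the $m$-th realized context can only appear at the deterministic changepoint time $c_m$, so that $l_\tau\ge N_\calC$ is equivalent to $\tau_\pi\ge c_{N_\calC}$, this should convert into $\bbE[\tau_\pi]\ge c_{N_\calC}$.

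The step I expect to be the main obstacle is precisely this last conversion: change-of-measure naturally produces a bound on the expected \emph{number} of observed contexts, $\bbE[l_\tau]\ge N_\calC$, whereas the claim bounds $\bbE[\tau_\pi]$ by the deterministic and possibly much larger (up to $N_\calC\Lmax$) time $c_{N_\calC}$. I would bridge this by exploiting that contexts arrive only at the deterministic changepoints --- one physically cannot observe $N_\calC$ realizations before $c_{N_\calC}$ --- together with a high-probability Le Cam two-point version of the indistinguishability argument showing a PAC algorithm must reach the $N_\calC$-th changepoint with probability close to one, so that $\bbE[\tau_\pi]\ge c_{N_\calC}\,\bbP(\tau_\pi\ge c_{N_\calC})$ is essentially $c_{N_\calC}$. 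Two further technical points needing care are (i) justifying $\bbE[N_{j,x}]=p_j v_{j,x}\bbE[\tau_\pi]$ rigorously despite the stopping time's dependence on the observed contexts and the within-segment constancy of the context, and (ii) realizing the clean $a^2/2$ KL bound for noise supported on $[-1,1]$, which is what dictates the choice of noise law in the constructed instance $\Lambda$.
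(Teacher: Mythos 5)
Your proposal is correct and follows essentially the same route as the paper: the paper also splits the bound into the same two genie-aided sub-problems (revealing the context index $j_t$ for the $T_\varepsilon(\Lambda)$ term, and revealing the latent vectors while hiding $P_\theta$ for the $c_{N_\calC}$ term), uses a clipped-Gaussian noise law to obtain the $(x^\top(\theta_j^*-\theta_j'))^2/2$ per-pull divergence, applies the Garivier--Kaufmann transportation lemma together with a Wald identity $\bbE[N_j(\tau)]=p_j\,\bbE[\tau]$, and for the distribution term constructs a tilted alternative $q_j=p_j/(1+\lambda(\Delta_j(x^*,x)+\varepsilon))$ via KKT conditions that is the exact counterpart of your first-order tilting. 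The technical caveats you flag (positivity/existence of the tilted alternative, and converting $\bbE[l_\tau]\ge N_\calC$ into $\bbE[\tau_\pi]\ge c_{N_\calC}$) are also present in, and handled only loosely by, the paper, which relies on the ``there exists an instance'' quantifier to sidestep them.
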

\vspace*{-0.5em}
Recall that $c_{N_\calC}$ is the $N_\calC$-th changepoint in the changepoint sequence $\calC$, which is lower bounded by $N_\calC L_{\min}$ and is $N_\calC L_{\max}$ in the worst case. To derive the lower bound in Theorem~\ref{thm:lwbd_overall_piecewise}, we investigate two 
environments different from the one defined in Section~\ref{sec:prob_setup} (and as in Dynamics~\ref{dyn:pslb}):
\vspace*{.2em}\newline
$\bullet$ Dynamics~\ref{dyn:clb}: the agent observes the index of current context $j_t$ (i.e., contextual linear bandits);
\vspace*{.1em}\newline 
$\bullet$ Dynamics~\ref{dyn:easy}: the agent observes the changepoints in $\calC$ and context vector $\theta_{j_t}^*$'s, and hence she 
solely needs to estimate the distribution of contexts.
\vspace*{.1em}\newline
We bound the sample complexity of an $(\varepsilon,\delta)$-BAI algorithm in Dynamics~\ref{dyn:clb} and \ref{dyn:easy} respectively, which when combined, yield the lower bound in Theorem~\ref{thm:lwbd_overall_piecewise}; this is  detailed in Appendix \ref{app:lower_bd}.

Note that  $T_\varepsilon(\Lambda)^{-1} $ in the lower bound generalizes \citet{kato2021role} to the setting of  linear bandits.
In addition, Theorem~\ref{thm:lwbd_overall_piecewise} can be reduced to a bound  in stationary linear bandits with one latent vector \citet{Jedra2020optimal}
(see the discussion leading to~\eqref{equ:lwbd_termA}).

\section{On the Asymptotic Optimality of \algParallel}


To illustrate the efficiency of our \algParallel{} algorithm, we compare the upper bound on its expected sample complexity in Theorem~\ref{thm:parallel_upper_exp} and the generic lower bound in Theorem~\ref{thm:lwbd_overall_piecewise} under specific instances below and 
in Appendix~\ref{sec:tight_ill}.
We also gain further insight into our \algParallel{} algorithm. 

\begin{restatable}{example}{exampleExpIn}\label{exp:instance_ExpIns}
Instance $\Lambda = (\calX,\Theta,P_\theta,\calC)$ is with
(i) $2d-1$ arms: $x_{(1)}=\be_1,x_{(i)}=\be_i,x_{(d+i-1)} =\be_1\cos\phi+\be_i\sin\phi\hphantom{a}$ for all $i\in\{2,\ldots,d\}$ where $\phi\in[0,\pi/4)$,
(ii) $2d-2$ contexts: $\theta_{j\pm}^*=\be_1\cos\phi\pm\be_{j+1}\sin\phi
     \hphantom{a} $ for all $ j\in[d-1] ,$ 
(iii) Context distribution: $p_j={1}/{N}$ for all $ j\in[N]$. 
\end{restatable}
Under the instance defined in Example~\ref{exp:instance_ExpIns}, $x_{(i)}$ for all  $i\neq 1$ is inferior to $x_{(1)}$ under all contexts and $x_{(i+d)}$ for all $i \in[d-1]$ is marginally better than $x_{(1)}$ by $1-\cos\phi$ only under context $\theta_{i+}^*$ and $\Delta(x_{(1)},x_{(i+d)})=\cos\phi-\cos^2\phi$.
We expect \algParallel{} to discover this feature of the instances and quickly identify an $\varepsilon$-optimal arm with a course estimation of the context distribution.

    \begin{restatable}{corollary}{corTightExpIns}\label{cor:tight_ExpIns}
    For the instance defined in Example~\ref{exp:instance_ExpIns}, 
    we have $\rmH_{\DE} (x_\varepsilon,x)=\tilO(N\Lmax) $ for all $(x_\varepsilon,x)\in \calX_\varepsilon\times (\calX\setminus \calX_\varepsilon)$.
    In addition, if $\varepsilon<( \cos\phi)(1-\cos\phi)$, we have
    \begin{align}
        \!\!\frac{\bbE[\tau ]^*}{\ln( {1}/{\delta})}  \in \
        \tilde{\Theta}
        \bigg( (1\!+\!f(\phi) )\cdot
        \frac{  d   }{ (\Delta_{x_{(1)},x_{(d+1)}} \!+\! \varepsilon )^2}
        \bigg), \label{eqn:tight_exp_sc}
    \end{align}
    where   $\bbE[\tau]^*$ is the minimal expected sample complexity over all $(\varepsilon,\delta)$-PAC algorithms and $f:\bbR\to\bbR$ satisfies $f(\phi)\to0$ as $\phi\to0^+$.
The upper bound in~\eqref{eqn:tight_exp_sc} is achieved by  \algParallel{}.
\end{restatable}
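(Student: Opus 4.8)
The plan is to fix all instance-specific quantities, then read off the upper bound from Theorem~\ref{thm:parallel_upper_exp} and the lower bound from Theorem~\ref{thm:lwbd_overall_piecewise}, and finally match them. First I would compute the ensemble latent vector: since the contexts occur in $\pm$ pairs $\theta_{j\pm}^*=\be_1\cos\phi\pm\be_{j+1}\sin\phi$ with equal mass $1/N$, the orthogonal parts cancel and $\bbE_{\theta\sim P_\theta}[\theta]=\be_1\cos\phi$. Hence $\mu_{x_{(1)}}=\cos\phi$, $\mu_{x_{(i)}}=0$ for $i\in\{2,\dots,d\}$, and $\mu_{x_{(d+i-1)}}=\cos^2\phi$, so $x^*=x_{(1)}$ with $\Delta(x^*,x_{(d+i-1)})=\cos\phi(1-\cos\phi)=:\Delta$. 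Under the hypothesis $\varepsilon<\cos\phi(1-\cos\phi)$ every non-optimal arm has mean gap exceeding $\varepsilon$, so $\calX_\varepsilon=\{x_{(1)}\}$ and the pair appearing in every subsequent maximum is forced to be $x_\varepsilon=x^*=x_{(1)}$. For the first claim I would evaluate $\Delta_j(x_{(1)},x)$ explicitly: for a diagonal competitor $x=\be_1\cos\phi+\be_m\sin\phi$ the contextual gap equals $\Delta$ for all but the two contexts $\theta_{(m-1)\pm}^*$, so $\sum_j|\Delta_j(x_{(1)},x)+\varepsilon|=\Theta(N(\Delta+\varepsilon))$; combined with $\sqrt{\min\{16p_j,1/4\}}\le 4/\sqrt N$ this gives $\barH(x_\varepsilon,x)=\Theta(N(\Delta+\varepsilon)^2)$ and therefore $\rmH_{\DE}(x_\varepsilon,x)=\Theta(N\Lmax)=\tilO(N\Lmax)$, with the same bound holding (with $\Delta$ replaced by $\cos\phi$) for the axis competitors $\be_i$.

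For the upper bound I would substitute these quantities into Theorem~\ref{thm:parallel_upper_exp}. The inner maximum over suboptimal $x$ is attained at a diagonal arm $x_{(d+1)}$, since it has the smallest relaxed gap $\Delta+\varepsilon$, so $\TVE(x_{(d+1)})=\frac{d}{(\Delta+\varepsilon)^2}\ln\frac1\delta$ is the dominant term. Using the first claim and $N=2d-2$, I would show $\TDE/\TVE=\Theta(\Lmax(\Delta+\varepsilon)^2)$ and $\TRE/\TVE=\Theta(\Lmax(\Delta+\varepsilon))$; because $\Delta+\varepsilon\to0$ as $\phi\to0^+$, both ratios vanish, so the bracket equals $(1+f(\phi))\TVE$ with $f(\phi):=\TDE/\TVE+\TRE/\TVE\to 0$. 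The minimum in Theorem~\ref{thm:parallel_upper_exp} is therefore at most the \PSBAI{} branch, giving the upper estimate $\tilO\big((1+f(\phi))\frac{d}{(\Delta+\varepsilon)^2}\ln\frac1\delta\big)$ of~\eqref{eqn:tight_exp_sc}, achieved by \algParallel.

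For the lower bound I would use $\bbE[\tau]^*\ge T_\varepsilon(\Lambda)\ln\frac1{2.4\delta}$ from Theorem~\ref{thm:lwbd_overall_piecewise} and evaluate $T_\varepsilon(\Lambda)$. Dualising the inner minimisation, the cheapest alternative promoting a fixed suboptimal $x$ costs $\frac{(\Delta(x^*,x)+\varepsilon)^2}{2\sum_j p_j\|x-x_{(1)}\|^2_{A(v_j)^{-1}}}$, so $T_\varepsilon(\Lambda)^{-1}=\max_{\{v_j\}}\min_x(\cdot)$. The structural point is that the $d-1$ diagonal competitors point, up to $O(1-\cos\phi)$ corrections, along the orthogonal directions $\be_2,\dots,\be_d$; since every $A(v_j)$ has unit trace (all arms are unit vectors), the Cauchy--Schwarz/Kiefer--Wolfowitz bound $\sum_{m\ge2}[A(v_j)^{-1}]_{mm}\ge(d-1)^2$ forces, for \emph{every} allocation, some diagonal direction with $\sum_j p_j\|x-x_{(1)}\|^2_{A(v_j)^{-1}}=\tilde{\Omega}(\sin^2\phi\,d)$. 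Hence $T_\varepsilon(\Lambda)^{-1}=O\big(\tfrac{(\Delta+\varepsilon)^2}{\sin^2\phi\,d}\big)$, and since $\sin^2\phi=\Theta(\Delta)$ is a constant in the dimension, $T_\varepsilon(\Lambda)=\tilde{\Omega}\big(\frac{d}{(\Delta+\varepsilon)^2}\big)$. This matches the dominant $\TVE$ up to a $\phi$-dependent constant and the $\ln(1/\delta)$ factor, which together with the upper bound yields the two-sided $\tilde{\Theta}$ in~\eqref{eqn:tight_exp_sc}.

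The main obstacle is the lower-bound evaluation of $T_\varepsilon(\Lambda)$: unlike the upper bound, which follows by plugging values into an existing theorem, it requires tightly bounding the $\max_{v}\min_{x}$ transductive-design game on this specific geometry and verifying that the dimension factor $d$ genuinely appears, i.e.\ that no single allocation can cheaply defend all $d-1$ near-orthogonal diagonal flips simultaneously. Care is also needed with the $\be_1$-direction contribution and the cross terms in $\|x-x_{(1)}\|^2_{A(v_j)^{-1}}$, and with bookkeeping the $\phi$-dependent constants so that the upper and lower bounds are seen to agree up to the stated $(1+f(\phi))$ factor (whose vanishing as $\phi\to 0^+$ reflects that the \DE{} and \RE{} costs become negligible relative to \VE) and logarithmic terms.
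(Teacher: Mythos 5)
Your setup (ensemble vector $\be_1\cos\phi$, identification of $x_{(1)}$ as the unique $\varepsilon$-best arm when $\varepsilon<\cos\phi(1-\cos\phi)$), your computation of $\rmH_{\DE}(x_\varepsilon,x)=\tilO(N\Lmax)$, and your upper-bound step (plugging into Theorem~\ref{thm:parallel_upper_exp} and showing $\TDE/\TVE,\TRE/\TVE\to0$ as $\phi\to0^+$) all match the paper's proof in substance; the only cosmetic difference is that the paper attaches the $(1+f(\phi))$ factor to the lower bound obtained from \eqref{equ:lwbd_vi_equal}, whereas you attach it to the upper-bound decomposition. You also skip the paper's verification that the changepoint term $c_{N_\calC}$ is only $O(\Lmax)$ here (via $\bbE[l_\tau]=O(1)$ from \eqref{equ:beta}), but since that term only strengthens the lower bound, the omission is harmless for a two-sided $\tilde\Theta$ claim.

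The genuine gap is in your lower bound. Your trace argument is sound as far as it goes: with all arms of unit norm, $\tr A(v_j)=1$, and via $u^\top A^{-1}u\ge(\be_m^\top u)^2/A_{mm}=\sin^2\phi/A_{mm}$ together with Cauchy--Schwarz you get $\max_m\|x_{(1)}-x_{(d+m-1)}\|^2_{A^{-1}}\ge(d-1)\sin^2\phi$ for \emph{every} design, hence $T_\varepsilon(\Lambda)=\Omega\bigl(\sin^2\phi\cdot d/(\Delta_{\min}+\varepsilon)^2\bigr)$. But your final step, ``since $\sin^2\phi=\Theta(\Delta)$ is a constant in the dimension, $T_\varepsilon(\Lambda)=\tilde\Omega(d/(\Delta+\varepsilon)^2)$,'' is a non sequitur: the factor $\sin^2\phi=\Theta(\Delta_{\min})=\Theta(\Delta_{\min}+\varepsilon)$ vanishes precisely in the regime $\phi\to0^+$ that the corollary addresses, so it cannot be absorbed into a constant. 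What your argument actually delivers is $T_\varepsilon(\Lambda)=\tilde\Omega\bigl(d/(\Delta_{\min}+\varepsilon)\bigr)$ — one full power of $(\Delta_{\min}+\varepsilon)$ short of the target. This is not a bookkeeping issue you can fix by tightening the cross terms: the direction $x_{(1)}-x_{(d+m-1)}$ genuinely has squared Euclidean norm $2(1-\cos\phi)=\Theta(\Delta_{\min})$, so any lower bound of the transductive-design form $\|x^*-x\|^2_{A^{-1}}/(\Delta+\varepsilon)^2$ built from per-direction Cauchy--Schwarz against coordinate vectors will carry this vanishing factor. To close the gap you would have to either show that the max-min game value is nevertheless $\tilde\Omega(d/(\Delta_{\min}+\varepsilon)^2)$ by a different mechanism, or follow the paper's route, which first reduces the unrestricted game to the single-design stationary bound \eqref{equ:lwbd_vi_equal} via the convexity lemma (Lemma~\ref{lem:spd}) and then solves that design problem for this geometry to extract the stated $(1+f(\phi))\,d/(\Delta_{\min}+\varepsilon)^2$ form; as written, your argument does not establish the lower half of \eqref{eqn:tight_exp_sc}.
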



The order of $\rmH_{\DE}$ in Corollary~\ref{cor:tight_ExpIns}
indicates that $\TDE(x_{(1)},x_{ (d+1) }) = \tilO(N\Lmax\ln(1/\delta))$ is  not dominating the sample complexity of \algParallel{}, 
suggesting that a coarse estimation of the context distribution is sufficient when $\phi$ is small.
In other words, \algParallel{} exploits the feature of instances and utilize samples mostly for estimate context vectors, which is again expected.

Corollary~\ref{cor:tight_ExpIns} implies that under such instances, 
the upper and lower bounds on the sample complexity of \algParallel{} match up to logarithmic factors, that is, the performance \algParallel{} is near optimal.
Besides, the bound of \algNa{} in Proposition~\ref{prop:upper_naive_upper_exp} is with an extra additive term $\Lmax$ compared to the lower bound in Corollary~\ref{cor:tight_ExpIns},  illustrating that \algNa{} is suboptimal and again emphasizing the significance of detecting changes and aligning contexts for \algParallel{} to reduce the impact of $\Lmax$.

    \vspace{-.1in}
    \section{Numerical Experiments}
    \label{sec:exp}\vspace{-.1in}
    We now evaluate the empirical performance of \algParallel{}.
    %
    We utilize the instance defined in Example~\ref{exp:instance_ExpIns} with $d=2$, $\phi= \pi/8$,
We generate a changepoint sequence $\calC$ such that
$c_{l+1} = c_l+L_l$ with $\Lmin =3\times 10^4$, $\Lmax=5\times 10^4$, $\bbP[ L_l =\Lmin ] = 0.8$, $\bbP[ L_l =\Lmax ] = 0.2$,
and fix it throughout the whole set of experiments.
%
    We set the confidence parameter $\delta=0.05$ and vary the slackness parameter $\varepsilon$ from $0.04$ to $0.6$ (i.e., $\varepsilon=0.03\times 1.35^{k}$ for $k\in[12]$).
    We set $\gamma=6$, the window size $w= {L_{\min}}/{(3\gamma)}$ and compute $b$ via \eqref{equ:assumption} in Assumption~\ref{ass:distin}.%
    \footnote{We clarify that \eqref{equ:assumption} in Assumption~\ref{ass:distin} is only for our theoretical guarantees. In practice, our algorithm has shown robustness w.r.t.\ the parameters and we can safely neglect the constants in the formula.}
    For each choice of algorithm and instance, we run $20$ independent trials.
    All the  code to reproduce our experiments can be found   at \url{https://github.com/Y-Hou/BAI-in-PSLB.git}.

    We {\bf first} compare \algParallel{} and \algNa.
    Both algorithms succeed to identify an $\varepsilon$-optimal arm, while empirically, the  complexity of \algParallel{} is
      $\le1\%$ of that of \algNa.
    The empirical averages and standard deviations of the sample complexities of both algorithms  are presented in Figure~\ref{fig:experiment_1}.

\begin{figure}[t]
  \subfigure[Sample complexity of \algParallel{} vs. \algNa.]{
\label{fig:experiment_1}
\includegraphics[width = .48\textwidth]{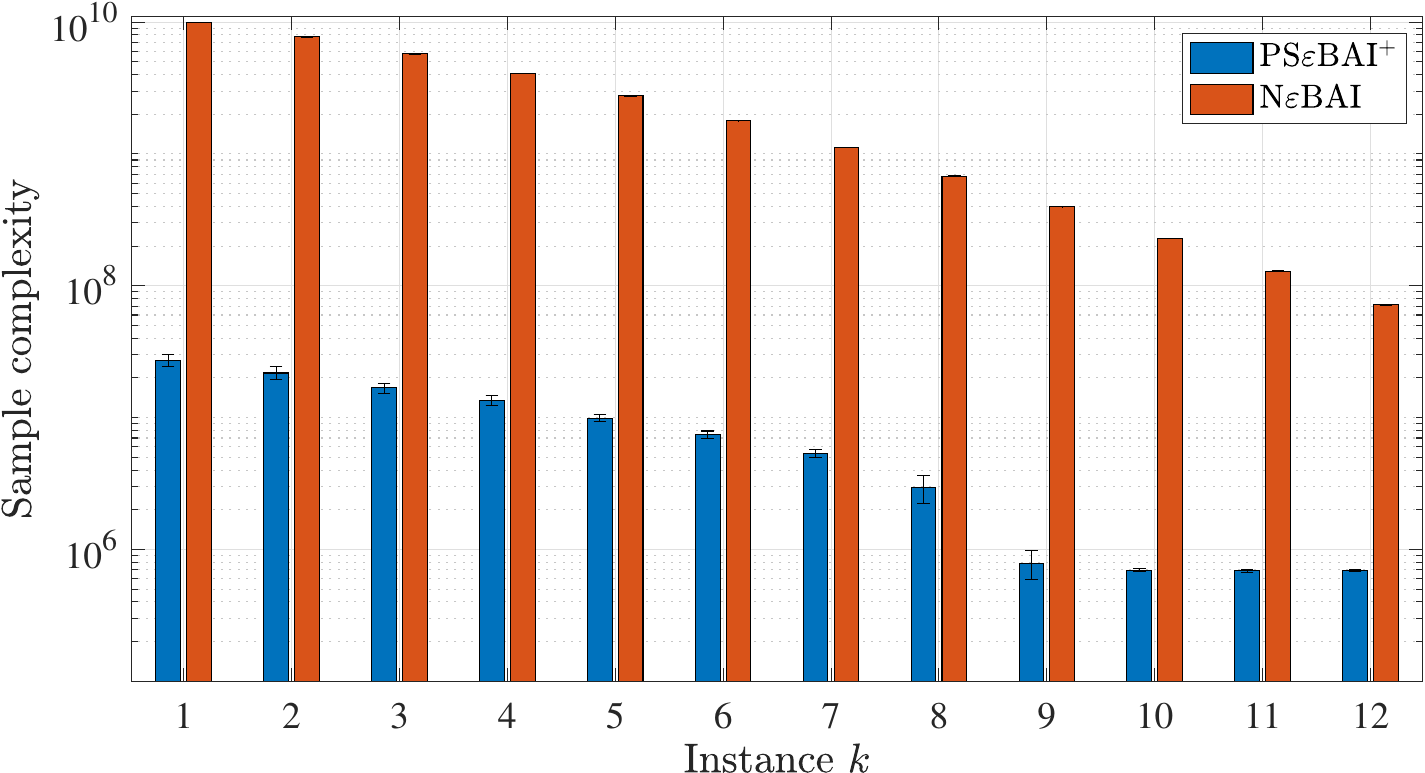}  
}
\hspace{-.5em}
\subfigure[Number of context samples needed for BAI.]{
\label{fig:experiment_2_ContextSample}
\includegraphics[width = .48\textwidth]{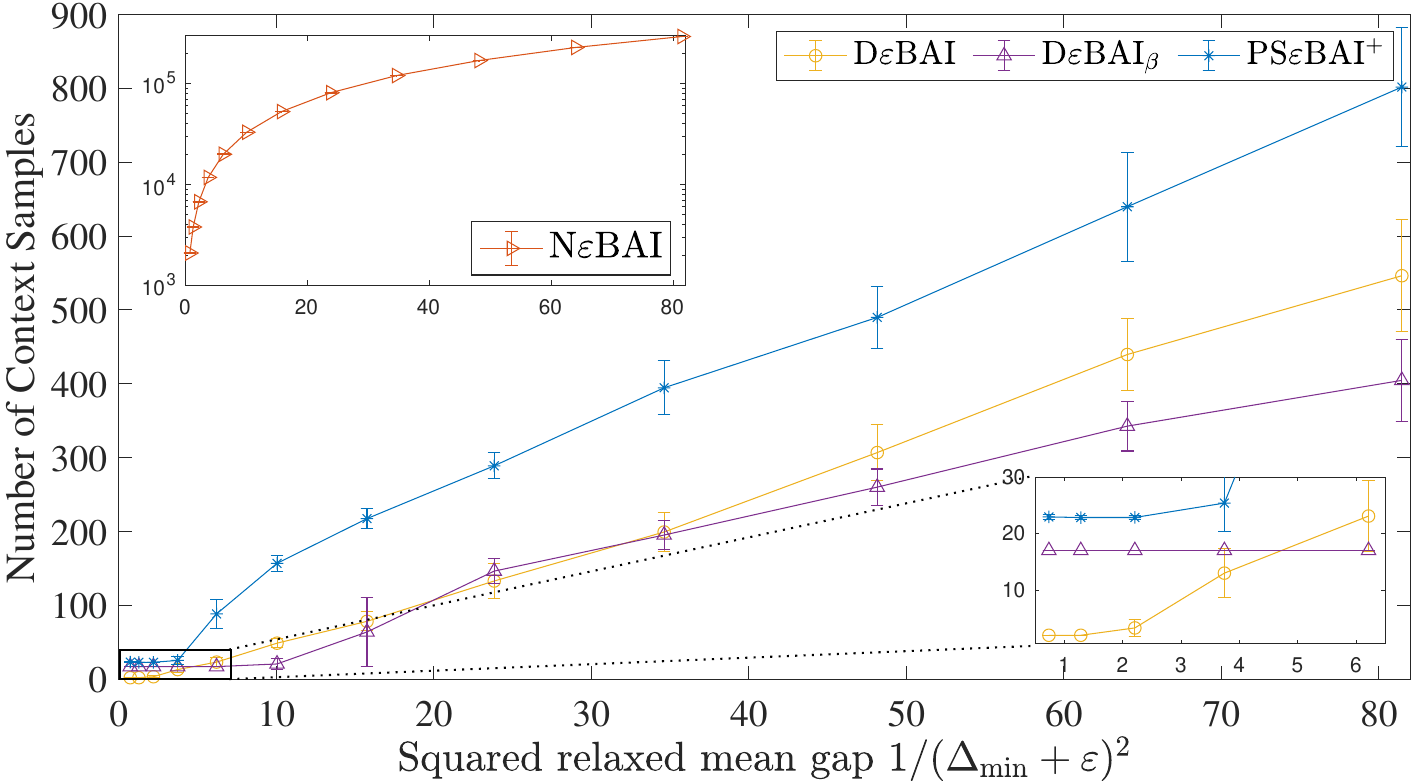} 
}\vspace{-.05in}
    \caption{Experimental results}
  \label{fig:experiment}
\end{figure}

    Figure~\ref{fig:experiment_1} illustrates that empirically, the termination and arm recommendation of \algParallel{} are determined by the execution of \PSBAI{} as a subroutine, suggesting that in Theorem~\ref{thm:parallel_upper_exp}, the first term resulting from \PSBAI{} actually determines the complexity of \algParallel.

    {\bf Next,} we test the efficacy of \algParallel{} to learn and exploit the latent vectors and the distribution of contexts.
    %
    Specifically, we run \algParallel{} and \algNa{} under Dynamics~\ref{dyn:pslb} where neither the index nor the vector of current context is visible to the agent.
    We also run benchmark algorithms:
    \algDBAI{} and its variant \algDBAIbeta{} under Dynamics~\ref{dyn:easy}  where context vectors and changepoints are all observed;  
    these two algorithms are detailed and analyzed in Appendix~\ref{app:dbai}.

    As the changepoint sequence $\calC$ is fixed in a given instance and $t/\Lmax\leq l_t\leq t/L_{\min}$ for all $t\in\bbN$, we regard the number of context samples $l_\tau$ as a proxy of the sample complexity $\tau$. 
    We present the number of context samples need by \algParallel, \algNa, \algDBAI{} and \algDBAIbeta{} for arm identification w.r.t. $1/(\Delta_{\min}+\varepsilon)^2$ in Figure~\ref{fig:experiment_2_ContextSample}.

Figure~\ref{fig:experiment_2_ContextSample} contains three messages. First,  the complexity of \algParallel{} scales as $1/(\Delta_{\min}+\varepsilon)^2$, corroborating Theorem~\ref{thm:parallel_upper_exp}.
%
Second, although \algParallel{} has access to neither context vectors nor changepoints, it needs roughly the same number of context samples as \algDBAI{} and \algDBAIbeta, suggesting that it is competitive compared to these algorithms that have oracle information about the environment. 
%
Third, \algDBAIbeta{} uses the confidence radius in \eqref{equ:cb_rho} and terminates with fewer context samples compared to \algDBAI{}, implying that  the   confidence radius is well-designed.

{\bf Furthermore,} when $\varepsilon$ decreases from $0.03\times 1.35^{12}$ to $0.03\times 1.35^{9}$, the complexity of \PSBAI{} almost remains unchanged while that of \algNa{} increases rapidly as presented in instances $9$ to $12$ in Figure~\ref{fig:experiment_1}.
Meanwhile, the number of context samples need by two algorithms are shown to be with the same pattern in Figure~\ref{fig:experiment_2_ContextSample}.
This contrast indicates that the cost of distribution estimation ($\TDE$ in~\eqref{equ:upbd}) for \algParallel{} has been significantly minimized compared to \algNa{}.

{\bf To summarize,} we  emphasize that
the empirical superiority of \algParallel{} over \algNa{} implies that {\em the efficacy of \algParallel{} is inherited from \PSBAI}. 
Our experiments show that actively exploiting the context information, via  changepoint detection and context alignment (as in \algParallel{} and \PSBAI) facilitates identifying the $\varepsilon$-optimal arm efficiently.

Similar to many existing algorithms in piecewise-stationary bandits~\citep{liu2018change,cao2019nearly,besson2022efficient}, our algorithm requires Assumption~\ref{ass:distin}  and the knowledge of $L_{\max}$. These may not be available in practice. 
Thus, we conduct more experiments in Appendix~\ref{app:misspe} to exhibit the robustness of \algParallel{}.
Specifically, in Appendix~\ref{app:misspe}, we conduct experiments for the case in which $L_{\max}$ is {\em misspecified}. 
In Appendix~\ref{app:robustness_w_b}, we alter the change detection frequency $\gamma$ so that $w$ and $b$  change accordingly. 
In both sets of experiments, the overall sample complexity of \algParallel{} does not vary significantly and retains its superiority over \algNa{}. We conclude that \algParallel{} is robust to slight misspecifications in 
these parameters, as long as Assumption \ref{ass:distin} is not severely violated.
Please refer to Appendix~\ref{app:experiment} for further details and experiments.

\vspace{-.1in}

\section{Conclusion and Future Work}
\label{sec:conclu_future}
 
We  proposed a novel PSLB  model and designed 
the \algParallel{} algorithm  to identify an $\varepsilon$-optimal arm with probability $\ge 1-\delta$. The efficacy of \algParallel{}  has been demonstrated both  empirically and theoretically. We argued that this is due  to the embedded change detection and context alignment procedures. There are several directions for further exploration.

{\bf Firstly,} our \algParallel{} algorithm   provides a  fairly general {\em framework} for algorithm design. For instance, in addition to utilization the G-optimal allocation to sample arms as in \algParallel, the $\mathcal{XY}$-allocation and adaptive $\mathcal{XY}$-allocation~\citep{soare2014best} can also be considered.
In other words, our \algParallel{} algorithm can be generalized to form an entire class of  algorithms for BAI in PSLB models.
In addition, deriving instance-dependent guarantees  is also of great interest.

{\bf Secondly,}  most of the literature on  piecewise-stationary bandits \citep{liu2018change,cao2019nearly,besson2022efficient} make assumptions  to provide theoretical guarantees. It would be interesting to remove or reduce these assumptions under our $\varepsilon$-BAI problem setup, and yet still be able to provide similar theoretical guarantees.

{\bf Finally,} we believe that it is possible to adapt our \algParallel{} algorithm to the fixed-budget setting, i.e., to identify an $\varepsilon$-optimal arm with high probability in a fixed time horizon in PSLB models.

\newpage

\paragraph{Acknowledgements:} 
This work is funded by the Singapore Ministry of Education AcRF Tier 2 grant (A-8000423-00-00) and Tier 1 grants (A-8000189-01-00 and A-8000980-00-00).

    \bibliographystyle{unsrt}
    \bibliography{nonstationary}

\begin{thebibliography}{10}

\bibitem{soare2014best}
Marta Soare, Alessandro Lazaric, and R{\'e}mi Munos.
\newblock Best-arm identification in linear bandits.
\newblock In {\em Proceedings of the 27th Advances in Neural Information
  Processing Systems}, volume~27, pages 828--836, 2014.

\bibitem{yang2022minimax}
Junwen Yang and Vincent Tan.
\newblock Minimax optimal fixed-budget best arm identification in linear
  bandits.
\newblock {\em Proceedings of the 36th Advances in Neural Information
  Processing Systems}, 35:12253--12266, 2022.

\bibitem{abbasi2011improved}
Yasin Abbasi-yadkori, Dávid Pál, and Csaba Szepesvári.
\newblock Improved algorithms for linear stochastic bandits.
\newblock In {\em Proceedings of the 24th Advances in Neural Information
  Processing Systems}, volume~24. Curran Associates, Inc., 2011.

\bibitem{abeille2017linear}
Marc Abeille and Alessandro Lazaric.
\newblock Linear thompson sampling revisited.
\newblock In {\em Proceedings of the 20th International Conference on
  Artificial Intelligence and Statistics}, pages 176--184. PMLR, 2017.

\bibitem{fiez2019transductive}
Tanner Fiez, Lalit Jain, Kevin~G Jamieson, and Lillian Ratliff.
\newblock Sequential experimental design for transductive linear bandits.
\newblock In H.~Wallach, H.~Larochelle, A.~Beygelzimer, F.~d\textquotesingle
  Alch\'{e}-Buc, E.~Fox, and R.~Garnett, editors, {\em Proceedings of the 32nd
  Advances in Neural Information Processing Systems}, volume~32. Curran
  Associates, Inc., 2019.

\bibitem{jourdan2023an}
Marc Jourdan, Rémy Degenne, and Emilie Kaufmann.
\newblock An \textepsilon-best-arm identification algorithm for
  fixed-confidence and beyond, November 2023.
\newblock arXiv:2305.16041.

\bibitem{degenne2019pure}
R{\'e}my Degenne and Wouter~M Koolen.
\newblock Pure exploration with multiple correct answers.
\newblock {\em Proceedings of the 32nd Advances in Neural Information
  Processing Systems}, 32, 2019.

\bibitem{Besbes2014Stochastic}
Omar Besbes, Yonatan Gur, and Assaf Zeevi.
\newblock Stochastic multi-armed-bandit problem with non-stationary rewards.
\newblock In Z.~Ghahramani, M.~Welling, C.~Cortes, N.~Lawrence, and K.Q.
  Weinberger, editors, {\em Proceedings of the 27th Advances in Neural
  Information Processing Systems}, volume~27. Curran Associates, Inc., 2014.

\bibitem{garivier2011on}
Aur{\'e}lien Garivier and Eric Moulines.
\newblock On upper-confidence bound policies for switching bandit problems.
\newblock In Jyrki Kivinen, Csaba Szepesv{\'a}ri, Esko Ukkonen, and Thomas
  Zeugmann, editors, {\em The 22nd International conference on Algorithmic
  learning theory}, pages 174--188, Berlin, Heidelberg, 2011. Springer Berlin
  Heidelberg.

\bibitem{cao2019nearly}
Yang Cao, Zheng Wen, Branislav Kveton, and Yao Xie.
\newblock Nearly optimal adaptive procedure with change detection for
  piecewise-stationary bandit.
\newblock In {\em Proceedings of the 22nd International Conference on
  Artificial Intelligence and Statistics}, pages 418--427. PMLR, 2019.

\bibitem{liu2023definition}
Yueyang Liu, Xu~Kuang, and Benjamin~Van Roy.
\newblock A definition of non-stationary bandits, Feb 2023.
\newblock arXiv:2302.12202.

\bibitem{cheung2019learning}
Wang~Chi Cheung, David Simchi-Levi, and Ruihao Zhu.
\newblock Learning to optimize under non-stationarity.
\newblock In {\em The 22nd International Conference on Artificial Intelligence
  and Statistics}, pages 1079--1087. PMLR, 2019.

\bibitem{shapiro2013thompson}
Jonathan Shapiro, Carlos~M Carvalho, and Pradeep Ravikumar.
\newblock Thompson sampling in switching environments with bayesian online
  change point detection.
\newblock In {\em Proceedings of the 16th International Conference on
  Artificial Intelligence and Statistics}, pages 442--450. Microtome
  Publishing, 2013.

\bibitem{xiong2023b}
Zhihan Xiong, Romain Camilleri, Maryam Fazel, Lalit Jain, and Kevin Jamieson.
\newblock {A/B} testing and best-arm identification for linear bandits with
  robustness to non-stationarity.
\newblock In {\em Proceedings of The 27th International Conference on
  Artificial Intelligence and Statistics, PMLR 238}, pages 1585--1593. PMLR,
  2023.

\bibitem{allesiardo2017non}
Robin Allesiardo, Rapha{\"e}l F{\'e}raud, and Odalric-Ambrym Maillard.
\newblock The non-stationary stochastic multi-armed bandit problem.
\newblock {\em International Journal of Data Science and Analytics},
  3:267--283, 2017.

\bibitem{kato2021role}
Masahiro Kato and Kaito Ariu.
\newblock The role of contextual information in best arm identification, 2021.
\newblock arXiv:2106.14077.

\bibitem{russac2021b}
Yoan Russac, Christina Katsimerou, Dennis Bohle, Olivier Capp{\'e},
  Aur{\'e}lien Garivier, and Wouter~M Koolen.
\newblock {A/B/n} testing with control in the presence of subpopulations.
\newblock {\em Proceedings of the 34th Advances in Neural Information
  Processing Systems}, 34:25100--25110, 2021.

\bibitem{zanette2021design}
Andrea Zanette, Kefan Dong, Jonathan~N Lee, and Emma Brunskill.
\newblock Design of experiments for stochastic contextual linear bandits.
\newblock {\em Proceedings of the 34th Advances in Neural Information
  Processing Systems}, 34:22720--22731, 2021.

\bibitem{abbasi2023new}
Yasin Abbasi-Yadkori, Andr{\'a}s Gy{\"o}rgy, and Nevena Lazi{\'c}.
\newblock A new look at dynamic regret for non-stationary stochastic bandits.
\newblock {\em Journal of Machine Learning Research}, 24(288):1--37, 2023.

\bibitem{Yu2009side}
Jia~Yuan Yu and Shie Mannor.
\newblock Piecewise-stationary bandit problems with side observations.
\newblock In {\em Proceedings of the 26th International Conference on Machine
  Learning}, ICML '09, page 1177–1184. Association for Computing Machinery,
  2009.

\bibitem{liu2018change}
Fang Liu, Joohyun Lee, and Ness Shroff.
\newblock A change-detection based framework for piecewise-stationary
  multi-armed bandit problem.
\newblock In {\em Proceedings of the AAAI Conference on Artificial
  Intelligence}, volume~32, 2018.

\bibitem{besson2022efficient}
Lilian Besson, Emilie Kaufmann, Odalric-Ambrym Maillard, and Julien Seznec.
\newblock Efficient change-point detection for tackling piecewise-stationary
  bandits.
\newblock {\em The Journal of Machine Learning Research}, 23(1):3337--3376,
  2022.

\bibitem{auer2019adaptively}
Peter Auer, Pratik Gajane, and Ronald Ortner.
\newblock Adaptively tracking the best bandit arm with an unknown number of
  distribution changes.
\newblock In {\em Proceedings of the 32nd Conference on Learning Theory}, pages
  138--158. PMLR, 2019.

\bibitem{Jedra2020optimal}
Yassir Jedra and Alexandre Proutiere.
\newblock Optimal best-arm identification in linear bandits.
\newblock In H.~Larochelle, M.~Ranzato, R.~Hadsell, M.F. Balcan, and H.~Lin,
  editors, {\em Proceedings of the 34th Advances in Neural Information
  Processing Systems}, volume~33, pages 10007--10017. Curran Associates, Inc.,
  2020.

\bibitem{zhao2020simple}
Peng Zhao, Lijun Zhang, Yuan Jiang, and Zhi-Hua Zhou.
\newblock A simple approach for non-stationary linear bandits.
\newblock In {\em The 23rd International Conference on Artificial Intelligence
  and Statistics}, pages 746--755. PMLR, 2020.

\bibitem{kato2023asymptotically}
Masahiro Kato, Masaaki Imaizumi, Takuya Ishihara, and Toru Kitagawa.
\newblock Asymptotically optimal fixed-budget best arm identification with
  variance-dependent bounds, 2023.
\newblock arXiv:2302.02988.

\bibitem{qin2021adaptivity}
Chao Qin and Daniel Russo.
\newblock On adaptivity and confounding in contextual bandit experiments.
\newblock In {\em NeurIPS 2021 Workshop on Distribution Shifts: Connecting
  Methods and Applications}, 2021.

\bibitem{abbasi2018best}
Yasin Abbasi-Yadkori, Peter Bartlett, Victor Gabillon, Alan Malek, and Michal
  Valko.
\newblock Best of both worlds: Stochastic \& adversarial best-arm
  identification.
\newblock In {\em Proceedings of the 31st Conference on Learning Theory}, pages
  918--949. PMLR, 2018.

\bibitem{lattimore2020bandit}
Tor Lattimore and Csaba Szepesv{\'a}ri.
\newblock {\em {Bandit Algorithms}}.
\newblock Cambridge University Press, 2020.

\bibitem{jun2016top}
Kwang-Sung Jun, Kevin~G Jamieson, Robert~D Nowak, and Xiaojin Zhu.
\newblock Top arm identification in multi-armed bandits with batch arm pulls.
\newblock In {\em Proceedings of the 19th International Conference on
  Artificial Intelligence and Statistics}, pages 139--148, 2016.

\bibitem{kaufmann2016complexity}
Emilie Kaufmann, Olivier Capp{\'e}, and Aur{\'e}lien Garivier.
\newblock On the complexity of best arm identification in multi-armed bandit
  models.
\newblock {\em Journal of Machine Learning Research}, 17:1--42, 2016.

\bibitem{jourdan2022choosing}
Marc Jourdan and R{\'e}my Degenne.
\newblock Choosing answers in epsilon-best-answer identification for linear
  bandits.
\newblock In Kamalika Chaudhuri, Stefanie Jegelka, Le~Song, Csaba Szepesvari,
  Gang Niu, and Sivan Sabato, editors, {\em Proceedings of the 39th
  International Conference on Machine Learning}, volume 162 of {\em Proceedings
  of Machine Learning Research}, pages 10384--10430. PMLR, 17--23 Jul 2022.

\bibitem{michael2016minimum}
Michael~J. Todd.
\newblock {\em {Minimum-Volume Ellipsoids: Theory and Algorithms}}.
\newblock SIAM, 2016.

\end{thebibliography}

	\newpage
	\onecolumn
     \begin{center}
        {\Large{\bf Appendices}} 
     \end{center}

\noindent
The contents of the appendices are organized as follows:
\begin{itemize}
\item In Appendix~\ref{app:more_motiv_exps}, we provide further motivating examples for our problem.
\item In Appendix~\ref{sec:limitations}, we discuss the limitations of our method.
\item In Appendix~\ref{app:extra_related_work}, we review more related works on drifting and contextual bandits.
\item In Appendix~\ref{app:alg_detail}, we provide more details about our algorithms
        \begin{itemize}[label=--]
            \item  Appendix~\ref{app:alg_naive}: pseudo-code of \algNa{} in Algorithm~\ref{alg:naive}.
            \item Appendix~\ref{app:psbai_extra_detail}: more details of \PSBAI{} including the precise definition of the confidence radius $\rho_t$ and details of \algLCD{} and \algLCA{} subroutines.
            \item Appendix~\ref{app:alg_box_parallel}: pseudo-code of \algParallel{} in Algorithm~\ref{alg:parallel}.
            \item Appendix~\ref{sec:comp_complexity}: the computational complexity of \PSBAI{} in Algorithm~\ref{alg:PSBAI}.
        \end{itemize}
    \item In Appendix~\ref{sec:auxiliary}: we provide a useful lemma for estimating the expected return of any arm in linear bandits when the sampling rule is according to the G-optimal allocation.
    \item In Appendix~\ref{app:naive_analysis}: we proof the upper bound on the expected sample complexity of \algNa.
    \item In Appendices~\ref{app:psbai_pf_outline} to \ref{sec:tech}: we detail the analysis of \PSBAI, i.e., we provide the proof of  the upper bound on its complexity in Theorem~\ref{thm:upbd}.
    \begin{itemize}[label=--]
        \item Appendix~\ref{app:psbai_pf_outline}: outline of the proof.
        \item Appendix~\ref{app:CDandCA}: analysis of the Change Detection (CD) and Context Alignment (CA) procedures.
        \item Appendix~\ref{app:prf_rho}: analysis of the estimation error by decomposing it into three terms: Vector-Estimation Error (VE), Distribution-Estimation Error (DE), and Residual Estimation Error (RE).
        \item Appendix~\ref{app:upbd_proof}: proof of Theorem~\ref{thm:upbd} based on the analysis above.
        \item Appendix~\ref{sec:tech}: proof of technical lemmas that are utilized in the analysis of \PSBAI.
    \end{itemize}
    \item In Appendix~\ref{pf:thm_parallel_upper_exp}: we prove the upper bound on the expected sample complexity of \algParallel{} based on the analysis of \algNa{} and \PSBAI.
    \item In Appendix~\ref{app:lower_bd}: we derive the lower bound on the expected sample complexity of any algorithm, i.e., proof Theorem~\ref{thm:lwbd_overall_piecewise}.
    \item In Appendix~\ref{sec:tight_ill}: we provide more examples to compare the derived upper and lower bounds on the expected sample complexity, and illustrate the efficacy of our \algParallel{} algorithm.
    \item In Appendix~\ref{app:experiment}: we provide more details of numerical experiments.
    \item In Appendix~\ref{app:Add_Discussions}: we provide more discussions on 
    \begin{itemize}[label=--]
        \item the related methods for BAI in nonstationary bandits,
        \item the instance-dependent upper bound,
        \item the connection between the piecewise-stationary linear bandits model to the stationary linear bandits model,
        \item  the special case where $N=1$.
    \end{itemize}
    \item In Appendix~\ref{app:BATIP}: we provide analytical results on the ``Best Arm Tuple Identification Problem''.
\end{itemize}

	\appendix

 \section{Further Motivating Examples}\label{app:more_motiv_exps}
 We elaborate on the some concrete real-life examples that motivate our problem setup of identifying the ensemble best arm in piecewise-stationary linear bandits.

In scenarios such as investment option selection and portfolio management also mentioned by \citet{cao2019nearly,Yu2009side}, there is a multitude of options for fund managers to choose from and typically, they want to find, in the initial pure exploration process,  a small subset of candidate portfolios (or even the  ``best'' portfolio) based on various economic indicators and the market performance of individual stocks before further exploitation. 
In a bearish market, more portfolios tend to incur losses; while in a bullish market, more portfolios tend to generate gains. The transition between these two contexts can be effected by stochastic factors, e.g., the weather, or the outbreak of a pandemic, making the market conditions (contexts) stochastic.
In the face of these uncertainties (in the contexts and rewards), we wish to design and analyze algorithms that   selected portfolio to yield the best long-term  option under such a piecewise-stationary environment.

Crop rotation is another example. Since crop yields can be influenced by various factors, such as weather conditions (analogous to our stochastically generated contexts), selecting the most suitable crop to grow and harvest from is crucial. 
Given several candidate crops, crops of similar types (e.g., potatoes and sweet potatoes) are correlated as they tend to favor similar conditions, thus, they can be modelled by bandits with a  linear reward structure. 
Contextual factors, like weather conditions, are well-modelled as being stochastic. A fixed weather condition will last a period of time and it  will not change suddenly. Domain knowledge from historical data/records provides us with prior knowledge on $L_{\min}$ and $L_{\max}$. These observations dovetail with our model. 
Our objective is to choose the crop that offers the near-highest yield potential over a long time period (an ensemble $\varepsilon$-best arm) and is adaptable to local environment factors, such as  weather patterns.

\section{Limitations}\label{sec:limitations}
Similar to existing works on piecewise stationary bandits~\citep{liu2018change,cao2019nearly,besson2022efficient}, we introduced some assumptions to provide theoretical guarantees for \algParallel{} although \algParallel{} has shown to be robust even in the absence of these assumptions.
Since an algorithm may not need to differentiate two contexts with close latent vectors for identifying an $\varepsilon$-optimal arm, we surmise it is possible to weaken these assumptions. 
For this purpose, we will consider clustering the contexts into a few classes based on the distances between their latent vectors and design an algorithm that only aims to detect the change of context class, instead of the change of context.

\section{More Related Work}
\label{app:extra_related_work}

In {\bf  drifting bandits} (DB), the regrets of algorithms are affected by the level of nonstationarity of the environment, which can be measured by various quantities, such as the total variation of the context sequence and the number of time steps when the return of at least one arm changes~\citep{auer2019adaptively,zhao2020simple}.

In {\bf contextual bandits} (CB), where the contextual information is visible to the agent,
\citet{kato2021role,kato2023asymptotically,russac2021b,qin2021adaptivity} aim to identify the best arm with the assumption that
the context changes at every time step according to a fixed distribution, and the return of an arm is averaged across all contexts. 
We see that the context distribution is involved to measure the quality of arms. However, while the agent in CB models can observe the context information, the agent has no access to the contexts but still aims to identify an $\varepsilon$-optimal arm in our piecewise stationary linear bandit (PSLB) model. 

{\bf In adversarial bandits}, existing works pertaining to the BAI problem only explored the fixed-horizon setting~\citep{abbasi2018best}. Due to the difference between the fixed-confidence and fixed-horizon setting and the difference between adversarial and piecewise stationary bandits, their results cannot be trivially extended to solve the fixed-confidence BAI problem in piecewise stationary bandits.

\section{More Details of Algorithms \algNa, \PSBAI{} and \algParallel}
\label{app:alg_detail}
All proposed algorithms make use of the well known G-optimal allocation (design) \citep{soare2014best,lattimore2020bandit}, which is widely used in the linear bandits literature. The G-optimal allocation minimizes the maximal mean-squared prediction error in all directions \citep{soare2014best}. Given an arm set $\calX$, the G-optimal allocation $\lambda^*$ is a distribution over the arm set, which is the minimizer of $g(\lambda)=\max_{x\in\calX} \|x\|_{A(\lambda)^{-1}}^2$, where $A(\lambda)=\sum_{x\in\calX}\lambda(x)x x^\top$ and $\lambda\in\Delta_\calX$. Interested readers may refer to Chapter $21$ in \citet{lattimore2020bandit}.
\subsection{The {\sc Na\"ive $\varepsilon$-Best Arm Identification (N$\varepsilon$BAI)} Algorithm}
\label{app:alg_naive}

As presented in Algorithm~\ref{alg:naive}, \algNa{} samples an arm with the G-optimal allocation at each time steps; its stopping rule is grounded on the property of G-optimal allocation (see Lemma~\ref{lem:cb_vector}) and affected by the maximum length of a single stationary segment $\Lmax$. 
\begin{algorithm}[H] 
    \caption{\sc Na\"ive $\varepsilon$-Best Arm Identification (N$\varepsilon$BAI) }\label{alg:naive}
    \begin{algorithmic}[1]
        \STATE \textbf{Input:} the arm set $\calX$, the phase length bounds $L_{\min},L_{\max}$, the slackness parameter $\varepsilon$, the confidence parameter $\delta$.
        \STATE \textbf{Initialize}: Compute the G-optimal allocation $\lambda^*$. 
        \STATE Compute
            $C_3 = \sum_{n=1}^{\infty} n^{-3}$ and
            $t^*=3d \ln ( {6d KC_3 }/{\delta})$.
        %
        \STATE Sample $t^* $ arms $\{x_s\}_{s=1}^{t^*}\sim \lambda^*$ and observe the associated returns $\{Y_{s,x_s}\}_{s=1}^{t^*}$. Let $t =t^*$.
        \STATE Compute
            \begin{align}
                \label{equ:naive_update}
                \begin{array}{l} 
                \displaystyle \ttheta_t = \frac{1}{t}
                \sum_{s=1}^t A(\lambda^*)^{-1} x_s Y_{s,x_s},\quad
                \xNa_t = \argmax_{x\in \calX} x^\top \ttheta_t,\quad
                \\ 
                \displaystyle \trho_t= \sqrt{ \frac{8L_{\max}}{t} \ln \frac{4K C_3 t^3}{\delta} } + 5\sqrt{ \frac{d}{t}\ln\frac{4K C_3 t^3 }{\delta} }.
                \end{array}
            \end{align}
        \WHILE{ $\xNa_t^\top \ttheta_t - \trho_t + \varepsilon < \max_{x\ne \xNa_t} x^\top\ttheta_t + \trho_t$ }
            \STATE Sample an arm $x_t\sim \lambda^*$ and observe return $Y_{t,x_t}$ and let $t=t+1$.
            \STATE Update $\ttheta_t $, $\xNa_t $ and $\trho_t$ with \eqref{equ:naive_update}. 
        \ENDWHILE
        \STATE Recommend arm $\xOutNa =\xNa_t $.
    \end{algorithmic}
\end{algorithm}

\subsection{Details about \PSBAI}

\label{app:psbai_extra_detail}

\subsubsection{Confidence radius utilized in \PSBAI}
\label{app:psbai_conf_radius}
For each pair of arms  $(x,\tilx$), the confidence radius of $\Delta(x,\tilx)$ at time step $t$ is
    \begin{equation}
    \label{equ:cb_rho_app}
        \rho_t(x,\tilx):=
        2(\alpha_t
        +
        \xi_t)
        +
        \sum_{j=1}^N \beta_{t,j}|\hDelta_{t,j}^\clipt(x,\tilx)+\zeta_t(x,\tilx)|,
    \end{equation}
    %
    where 
    $a^\clipt:=\min\{\max\{a,-2\},2\}$ denotes the value of $a$ that is clipped to the interval $[-2,2]$ and
    \begin{align}
        & \alpha_t:=5\sqrt{\frac{d}{\sts_t}\ln\frac{2}{\delta_{v,\sts_t}}}
        ,\,\,
        \xi_t:=25\sqrt{2}
                    \frac{NL_{\max}}{\sts_t}\ln\frac{2}{\delta_{m,\sts_t}}
        ,
        \\& \beta_{t,j}:=
			 \min\bigg\{\frac{5}{2}
         \sqrt{
        \frac{2\phi_{t,j}\Lmax}{\sts_t}\ln\frac{2}{\delta_{d,\sts_t}}},\, 1\bigg\},
        \\& 
        \phi_{t,j}:
			= \min\left\{ 4\max\left\{\hatp_{t,j},\frac{25}{4}\frac{L_{\max}}{\sts_t}\ln\frac{2}{\delta_{d,\sts_t}}\right\},\, \frac{1}{4}\right\},
        \\&
        \delta_{v,\sts_t}=\frac{\delta}{15K\sts_t^3},\,
        \delta_{m,\sts_t}=\frac{\delta}{15KN\sts_t^3},\,
        \delta_{d,\sts_t}=\frac{\delta}{15N\sts_t^3}.
    \end{align}
    $\zeta_t(x,\tilx)$ minimizes the last summation. 
    In the final theoretical upper bound on the sample complexity, we take $\zeta_t(x,\tilx)=\varepsilon$ for simplicity.
    In the experiment, we utilize 
    \begin{equation}
        \zeta_t(x,\tilx)=\argmin_{\zeta_t(x,\tilx)\in\bbR}\sum_{j=1}^N \beta_{t,j}(\hat{\Delta}_{t,j}(x,\tilx)+\zeta_t(x,\tilx))^2=-\frac{\sum_{j=1}^N \beta_{t,j}\hat{\Delta}_{t,j}(x,\tilx)}{\sum_{j=1}^N \beta_{t,j}}
    \end{equation} for a simple and effective analytic expression in the experiment.

    \subsubsection{Subroutines of \PSBAI}
\label{app:psbai_subroutine}

     \begin{algorithm}[ht]
    		\caption{\sc Linear-Change Detection (\algLCD)}\label{alg:LCD}
    		\begin{algorithmic}[1]
    			\STATE \textbf{Input:} arm set $\calX$, detection window $w$, threshold $b$ and $w$ arms with the associated observations $[(\tilx_1,Y_{s,\tilx_1}),\dots,(\tilx_w,Y_{w,\tilx_w})]$. 
    			\STATE Compute 
                $\tilde{\theta}_1 = \frac{2}{w}\sum_{s=1}^{\frac{w}{2}}A(\lambda^*)^{-1}\tilx_s Y_{s,\tilx_s}$ 
    			and 
    			$\tilde{\theta}_2 = \frac{2}{w}\sum_{s=\frac{w}{2}+1}^{w}A(\lambda^*)^{-1}\tilx_s Y_{s,\tilx_s}$. \alglinelabel{line:algLCD_compute}   \plabel{pline:algLCD_compute}   
    			\IF{$\exists x\in \mathcal{X},\,s.t. |x^\top(\tilde{\theta}_2-\tilde{\theta}_1)|>b$ \alglinelabel{line:algLCD_check}} \plabel{pline:algLCD_check}  
    			\STATE {\bf Return} {True}
                \alglinelabel{line:algLCD_true} \plabel{pline:algLCD_true}  
    			\ELSE
    			\STATE {\bf Return} {False} 
                    \alglinelabel{line:algLCD_false}
                    \plabel{pline:algLCD_false}
    			\ENDIF
    		\end{algorithmic}
	\end{algorithm}
 
    In the \algLCD{} subroutine (Algorithm~\ref{alg:LCD}), two estimates $\ttheta_1$ and $\ttheta_2 $ of the latent vectors are independently computed from the first and second halves of the input CD samples (Line~\ref{pline:algLCD_compute}). 
    \algLCD{} only raises an alarm of changepoint (Line~\ref{pline:algLCD_true}) when the difference between $\ttheta_1$ and $\ttheta_2$ is sufficiently large (Line~\ref{pline:algLCD_check}) and indicates a changepoint occurs w.h.p.

    \begin{algorithm}[ht]
		\caption{\sc Linear-Context Alignment (\algLCA)}\label{alg:CA}
		\begin{algorithmic}[1]
			\STATE \textbf{Input:} arm set $\calX$, detection window $w$,  threshold $b$
            and context ids $\CAid$.
            \STATE 
            \alglinelabel{line:algCA_sample}
            \plabel{pline:algCA_sample}
            Sample $\frac{w}{2}$ arms $\{\tilx_s\}_{s=1}^{\frac{w}{2}}\sim \lambda^*$ and observe the associated returns $\{Y_{s,\tilx_s}\}_{s=1}^{\frac{w}{2}}$
            , where 
            \\ \hspace{2em}  
            $\{(x_s,Y_{s,x_s})\}_{s=t-\frac{w}{2}+1}^{t}=\{\tilx_s,Y_{s,\tilx_s}\}_{s=1}^{\frac{w}{2}} $.
			\FOR {$j\in \CAid$ \alglinelabel{line:algCA_scan}
            \plabel{pline:algCA_scan}}
			\IF {not \algLCD$(\calX,  w,b,[(\tilx_s,Y_{s,\tilx_s})]_{s=1}^{\frac{w}{2}}+\CAid[j])$ \alglinelabel{line:algCA_call_LCD}}
            \plabel{pline:algCA_call_LCD}
            \STATE $\CAid[j] = [(\tilx_s,Y_{s,\tilx_s})]_{s=1}^{\frac{w}{2}}$. \alglinelabel{line:algCA_update_dict}
            \plabel{pline:algCA_update_dict}
			\STATE {\bf Return} $j,\CAid$. \alglinelabel{line:algCA_return_align}
            \plabel{pline:algCA_return_align}
			\ENDIF
			\ENDFOR
            \STATE ${\rm index} = {\rm len}(\CAid)+1$. \alglinelabel{line:algCA_new_context}\plabel{pline:algCA_new_context}
			\STATE $\CAid[{\rm index}] = [(\tilx_s,Y_{s,\tilx_s})]_{s=1}^{\frac{w}{2}}$.
			\STATE {\bf Return} ${\rm index},\CAid$. \alglinelabel{line:algCA_return_new}
            \plabel{pline:algCA_return_new}
		\end{algorithmic}
	\end{algorithm}

   The \algLCA{} subroutine (Algorithm~\ref{alg:CA}) estimates the latent vector of current context and updates the dictionary $\CAid$.
    Specifically, it {\bf firstly}
    samples ${w}/{2}$ samples with $\lambda^*$ (Line~\ref{pline:algCA_sample}).
It {\bf then} checks whether the current latent context has been visited in previous time steps by scanning through $\CAid$ (Line~\ref{pline:algCA_scan}). 
    In order to learn if the current context can be aligned with context $j$,
    the \algLCD{} subroutine is called with the $w/2$ recent samples and $\CAid[j]$ as input:
    \vspace*{.2em}\newline
    {\bf (i)} 
    If the current context is aligned with $\CAid[j]$ (Line~\ref{pline:algCA_call_LCD}),
    the current latent context is $j$ w.h.p.
    Thus the \algLCA{} subroutine updates $\CAid[j]$
    and returns the index $j$ (Lines~\ref{pline:algCA_update_dict} to~\ref{pline:algCA_return_align}), which would be $\hatj_t$ in Line~\ref{line:algPSBAI_CA_call_CA} of Algorithm~\ref{alg:PSBAI}.
    %
    Note that in Lines $10$ and $11$ of Algorithm~\ref{alg:PSBAI}, all the collected samples will be assigned to context $\hatj_t$ (until the next changing alarm) and will be used to estimate $\theta_{\hatj_t}^*$ and all $p_{j}$'s.
    Therefore, {\em aligning} contexts allows \PSBAI{} to make good use of observation history.
    \vspace*{.2em}\newline
    {\bf (ii)} If the current context is not aligned with any observed context, i.e., it has not been visited, $\CAid$ gets extended with a new index-samples pair and is returned along with the new index (Lines~\ref{pline:algCA_new_context} to~\ref{pline:algCA_return_new}).

\newpage

\subsection{The {\sc Piecewise-Stationary $\varepsilon$-Best Arm Identification$^+$ (\algParallel)} Algorithm}
\label{app:alg_box_parallel}

To help understand \algParallel{} algorithm, we highlight the {\setColorDif differences} between \PSBAI{} and \algParallel, and the {\setColorDif differences} between subroutines {\sc  Linear-Context Alignment  (\algLCA)} and {\sc Linear-Context Alignment$^{+}$ (\algLCA$^+$)}.

\begin{algorithm}[H]
\caption{\sc Piecewise-Stationary $\varepsilon$-Best Arm Identification$^+$ (\algParallel)}\label{alg:parallel}
\begin{algorithmic}[1]
    \STATE \textbf{Input:} arm set $\calX$, number of latent vectors  $N$, bounds on the segment lengths $L_{\min}$ and $L_{\max}$, slackness parameter $\varepsilon$, confidence parameter $\delta$, sampling parameter $\gamma$, window size $w$ and threshold $b$.
    \STATE \textbf{Initialize}: Compute the G-optimal allocation $\lambda^*$ and $\tau^*\!=\! \frac{38400\ln(80) NL_{\max}}{\varepsilon^2}\ln\frac{N^2KL_{\max}}{\delta\varepsilon^2 \vphantom{\underline{\delta}} }$.
    \STATE Set $\CDsample = [\;]$, $\CAid = \{\;\}$, $t_{\mathrm{CD}}=+\infty $. 
    Set $ \calT_{t,j} = \emptyset$ and initialize $\calT_t$, $T_{t,j}$, $\sts_t$ with~\eqref{equ:alg_est} for all $t\le \tau^*$, $j\in[N]$. 
    {\setColorDif 
    \STATE Compute
            $C_3 = \sum_{n=1}^{\infty} n^{-3}$ and
            $t^*= 3d \ln ({6d KC_3 }/{\delta}) $.
            }
    %
    \STATE Sample ${w}/{2}$ arms $\{x_s\}_{s=1}^{{w}/{2}}\sim \lambda^*$ and observe the associated returns $\{Y_{s,x_s}\}_{s=1}^{ {w}/{2}}$.  
    \STATE Set $t =\frac{w}{2}$, $t_{\mathrm{CA}}={w}/{2}$, $\CAid=\{1:[(x_s,Y_{s,x_s})]_{s=1}^{{w}/{2}}\}$, $\hatj_t = 1$.
    {\setColorDif
    \STATE Set $\xNa_t =0$, $\trho_t=2\varepsilon$ and $\xOutNa = -\infty$. Compute  $\ttheta_t  $ with \eqref{equ:naive_update}.
    }
    %
    \WHILE{{\setColorDif True}}
        \STATE $t=t+1$
        \STATE Sample an arm $x_t\sim \lambda^*$ and observe return $Y_{t,x_t}$.
        %
    {\setColorDif
    \STATE Compute $\ttheta_t,\ \xNa_t,\ \trho_t= \text{EU}(t,\ttheta_{t-1},t^*,C_3)$.
    \IF{$\xNa_t^\top \ttheta_t - \trho_t + \varepsilon \ge  \max_{x\ne \xNa_t} x^\top\ttheta_t + \trho_t$}
    \STATE $\xOutNa = \xNa_t$.
    \STATE {\bf Break} 
    \ENDIF
    
    \IF{$t> \tau^*$}    
        \STATE {\bf Continue}
    \ENDIF
    }
    \IF{$\mod(t-t_{\mathrm{CA}},\gamma)\neq0 $}
        \STATE Update $\hatj_t=\hatj_{t-1}$, $ \calT_{t,\hatj_t} = \calT_{t-1,\hatj_{t}}\cup \{t\},\calT_{t,j}=\calT_{t-1,j}$ for $j\neq\hatj_t$.
    \ELSE
        \STATE $\CDsample =\CDsample+[(x_t,Y_{t,x_t})]$.
        \IF{$|\CDsample|\geq w $}
            \IF{\algLCD$(\calX, w,b,\CDsample[-w:\;])$} 
                \STATE $\CDsample = [\;]$.
                \STATE $t=t+\frac{w}{2},t_{\mathrm{CA}}=t,t_{\mathrm{CD}}=+\infty$.
                \STATE {\setColorDif  $\hatj_{t},\CAid =$ \algLCA$^+(\calX, w,b,\CAid)$}
                \STATE {\bf if} $\hatj_{t}=N+1$ {\bf then } {\bf break}.
                \STATE Revert $ \calT_{t,j} = \calT_{t-\frac{w(\gamma+1)}{2},j}$ for all $j\in[N]$.
            \ENDIF
        \ENDIF
    \ENDIF
    {\setColorDif
    \IF{$\xOutNa\ne -\infty $}
        \STATE {\bf Break}
    \ENDIF
    }
    \STATE Update the empirical estimates by~\eqref{equ:alg_est} and \eqref{equ:cb_rho}.
    \IF{condition \eqref{equ:stp_rule} is met and $t_{\mathrm{CD}}=+\infty$}
        \STATE Record $\hatx_\varepsilon=\argmax_{x\in\calX}x^\top \hTheta_t\hatbp_t$.
        \STATE $t_{\mathrm{CD}}=|\CDsample|$.
    \ELSIF{$t_{\mathrm{CD}}=|\CDsample|-\frac{w}{2} $}
    \STATE $\xOutNa= \hatx_\varepsilon$
    \STATE {\bf Break}
    \ENDIF
    \ENDWHILE
    {\setColorDif
    \STATE Recommend $\xOutParallel = \xOutNa$.}
\end{algorithmic}
\end{algorithm}

\newpage
 
\begin{algorithm}[H]
    \caption{\sc Estimate Update (EU)}\label{alg:parallel_naive_update}
    \begin{algorithmic}[1]
        \STATE \textbf{Input:} time step $t$, vector $\ttheta_{t-1}$, threshold $t^*$ and constant $C_3$.
        \STATE Compute
            $\displaystyle
                \ttheta_t = \frac{1}{t} \left[ (t-1)\ttheta_{t-1} +
                A(\lambda^*)^{-1} x_t Y_{s,x_t}
                \right],\;
                \xNa_t = 0,\;
                \trho_t= 2\varepsilon.
            $
        \IF{$t\ge t^*$}
            \STATE Compute
                $\displaystyle
                    \xNa_t = \argmax_{x\in \calX} x^\top \ttheta_t,\;
                    \trho_t= \sqrt{ \frac{8L_{\max}}{t} \ln \frac{4K C_3 t^3}{\delta} } + 5\sqrt{ \frac{d}{t}\ln\frac{4K C_3 t^3 }{\delta} }.
                $
        \ENDIF
        \STATE {\bf Return} $\ttheta_t,\ \xNa_t,\ \trho_t$.
    \end{algorithmic}
\end{algorithm}

\begin{algorithm}[H]
    \caption{\sc \setColorDif Linear-Context Alignment$^{+}$ (\algLCA$^+$)}\label{alg:CA_parallel}
    \begin{algorithmic}[1]
        \STATE \textbf{Input:} arm set $\calX$,  detection window $w$, threshold $b$,
        context ids $\CAid$,
        {\setColorDif
        time step $t$, vector $\ttheta_t$, threshold $t^*$ and constant $C_3$%
        }%
        .
        {\setColorDif
        \STATE Set $s= 0$, $\xOutNa=-\infty$.
        \FOR{$s \le w/2$}
            \STATE $s=s+1$, $t=t+1$.
            \STATE Sample an arm $\tilx_s\sim \lambda^*$ and observe return $Y_{s,\tilx_s}$.
            \STATE Set $(x_t,Y_{t, x_t} ) = ( \tilx_s, Y_{s,\tilx_s} )$.
            \STATE Compute $\ttheta_t,\ \xNa_t,\ \trho_t= \text{EU}(t,\ttheta_{t-1},t^*,C_3)$.
            \IF{$\xNa_t^\top \ttheta_t - \trho_t + \varepsilon \ge  \max_{x\ne \xNa_t} x^\top\ttheta_t + \trho_t$}
                \STATE $\xOutNa = \xNa_t$, ${\rm index} = {\rm len}(\CAid)+1$.
                \STATE {\bf Return} ${\rm index}$, $\CAid$, $\xOutNa$, 
                $\ttheta_t$.
            \ENDIF
        \ENDFOR
        }
        \FOR {$j\in \CAid$}
        \IF {not \algLCD$(w,b,[(\tilx_s,Y_{s,\tilx_s})]_{s=1}^{\frac{w}{2}},\CAid[j])$}
        \STATE $\CAid[j] = [(\tilx_s,Y_{s,\tilx_s})]_{s=1}^{\frac{w}{2}}$.
        \STATE {\bf Return} $j,\CAid$.
        \ENDIF
        \ENDFOR
        \STATE ${\rm index} = {\rm len}(\CAid)+1$.
        \STATE $\CAid[{\rm index}] = [(\tilx_s,Y_{s,\tilx_s})]_{s=1}^{\frac{w}{2}}$.
        \STATE {\bf Return} ${\rm index}$, $ \CAid$,
        {\setColorDif $\xOutNa$, $\ttheta_t$}.
    \end{algorithmic}
\end{algorithm}

\subsection{Computational Complexity of \PSBAI{}}\label{sec:comp_complexity}

 We provide analysis for the computational complexity of \PSBAI{} in this subsection.

 We consider the number of these operations: arithmetic (addition and multiplication) operations, logic operations and comparison operations and we also regard $\ln(\cdot)$, $\sqrt{\cdot}$ and sampling from a distribution as one step of operation.

We decompose the main loop of Algorithm~\ref{alg:PSBAI} as follows, where the lines with $O(1)$ operations are omitted:

\begin{itemize}
    \item Exploration phase (Lines 8 to 11): $O(1)$.
    \item Change Detection phase (Lines 12 to 16): 
    \begin{itemize}[label=--]
        \item The $LCD$ subroutine in Line 16 needs $O(wd^2+Kd)$ operations.
    \end{itemize}
    \item Context Alignment phase (Lines 17 to 21): 
    \begin{itemize}[label=--]
        \item The $LCA$ subroutine in Line 19 needs $O((wd^2+Kd)N)$ operations, as the $LCD$ subroutine will be invoked $N$ times in the worst case in Line 4 of $LCA$.
        \item The reversion procedure in Line 21 needs $O(\gamma w d)$ operations.
    \end{itemize}
    \item The updating procedure and the stopping rule checking (Lines 25 to 32):
    \begin{itemize}[label=--]
        \item Updating $\hat{\theta}_{t,j}$ needs $O(d^2)$ operations, as we need to incorporate the latest sample into the estimate.
        \item Updating $\hat{p}_{t,j}, j\in[N]$ requires $N$ operations.
        \item Updating the confidence radius and the empirical best arm need $O(KN)$ operations.
        \item The stopping condition in Equation (3.4) requires $O(K)$ operations.
    \end{itemize}
\end{itemize}

We remark that some intermediate results can be stored to avoid repeated computations and we believe the algorithm is efficient overall. In particular, since the reward structure is linear, there are $K$ arms, and $N$ contexts, $O(d^2)$, $O(K)$, and $O(N)$ operations probably cannot be be avoided.

From the above analysis, in the decreasing order of the number of operations:
\begin{itemize}
    \item The Change Detection phase requires the most budget as it will be invoked every $\gamma$ time steps.
    \item The Context Alignment phase requires the second many operations. While it requires many operations when it is implemented, it will not be called during a stationary segment.
    \item The updating procedure uses small portion of the operations, as $w$ is usually much greater than $K$ and $N$.
    \item The exploration phase demands constant operations in each loop.
\end{itemize}


	\section{Auxiliary results}\label{sec:auxiliary}
    \begin{lemma}\label{lem:cb_vector}
        Let $X_s$ be the arm drawn with the G-optimal allocation $\lambda^*$ at time step $s$ and $Y_{s,x_s}=x_s^\top\theta_{j_s}^*+\eta_s$ be the corresponding return with $\bbE \eta_s =0$, $|x_s^\top\theta_{j_s}^*|\leq 1$ and $ |\eta_s|\leq 1$. 
        Then we have
        \begin{align}
            \bbP\left[\frac{1}{n}\left|
						\sum_{s=1}^{n}x^\top A(\lambda^*)^{-1}x_s Y_{s,x_s}
						-
                        x^\top\theta_{j_s}^*
                    \right|
					\geq
					\epsilon 
			\right]
            \leq
            \delta
        \end{align} 
        for all 
        $\epsilon
        \geq
        \frac{d}{n}\ln\frac{2}{\delta}
            +\sqrt{
            \left(\frac{d}{n}\ln\frac{2}{\delta}\right)^2
            + \frac{4d}{n}\ln\frac{2}{\delta}   }$.
        In addition, if $n\geq\frac{d}{4}\ln\frac{2}{\delta}$, we have
        \begin{align}
            \bbP\left[\frac{1}{n}\left|
						\sum_{s=1}^{n}x^\top A(\lambda^*)^{-1}x_s Y_{s,x_s}
						-
                        x^\top\theta_{j_s}^*
                    \right|
					\geq
					5\sqrt{
                    \frac{d}{n}\ln\frac{2}{\delta}
                    }~
			\right]
            \leq
            \delta
        \end{align}
    \end{lemma}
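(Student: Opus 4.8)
The plan is to recognize the centered summand as a sum of bounded, conditionally mean-zero increments, apply a Bernstein/Freedman-type concentration inequality, and then invert the resulting quadratic to recover the stated threshold. First I would set $Z_s := x^\top A(\lambda^*)^{-1} x_s Y_{s,x_s} - x^\top\theta_{j_s}^*$, so the quantity of interest is $\frac{1}{n}|\sum_{s=1}^n Z_s|$, and establish that each $Z_s$ is centered. The key identity is $\bbE[x_s x_s^\top] = \sum_{x'\in\calX}\lambda^*(x')\,x' x'^\top = A(\lambda^*)$; combined with $\bbE[\eta_s]=0$ and the independence of $\eta_s$ from $x_s$, this gives $\bbE[A(\lambda^*)^{-1} x_s Y_{s,x_s}\mid j_s] = \theta_{j_s}^*$, hence $\bbE[Z_s\mid\calF_{s-1}]=0$, where $\calF_{s-1}$ is the natural filtration that reveals the context $j_s$ before the $s$-th pull. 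Since the context sequence is unaffected by the agent's actions, I may condition on the realized contexts, under which the $Z_s$ become independent, and apply a standard Bernstein inequality for bounded independent variables (a martingale Freedman bound works identically).

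Next I would bound the two ingredients that Bernstein requires. For the range, Cauchy--Schwarz in the $A(\lambda^*)^{-1}$-inner product gives $|x^\top A(\lambda^*)^{-1} x_s| \le \|x\|_{A(\lambda^*)^{-1}}\,\|x_s\|_{A(\lambda^*)^{-1}} \le d$, where the final step invokes the Kiefer--Wolfowitz characterization $\max_{x'\in\calX}\|x'\|_{A(\lambda^*)^{-1}}^2 = g(\lambda^*) = d$ of the G-optimal allocation; together with $|Y_{s,x_s}|\le 2$ and $|x^\top\theta_{j_s}^*|\le 1$ this yields $|Z_s|\le 2d+1\le 3d$. For the conditional variance I would bound the second moment directly: conditioning on $x_s$, we have $\bbE_{\eta_s}[Y_{s,x_s}^2] = (x_s^\top\theta_{j_s}^*)^2 + \bbE[\eta_s^2]\le 2$, while $\bbE_{x_s}[(x^\top A(\lambda^*)^{-1} x_s)^2] = x^\top A(\lambda^*)^{-1}\bbE[x_s x_s^\top]A(\lambda^*)^{-1}x = x^\top A(\lambda^*)^{-1}x \le d$, so that $\bbE[Z_s^2\mid\calF_{s-1}]\le 2d$.

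Applying Bernstein with range $M = 3d$ and total variance proxy $v = 2dn$ gives, for $S_n := \sum_{s=1}^n Z_s$,
\[
\bbP\big[\,|S_n|\ge t\,\big] \le 2\exp\Big(-\frac{t^2/2}{2dn + dt}\Big).
\]
Setting the right-hand side equal to $\delta$ yields the quadratic $t^2 - 2d\ln\frac{2}{\delta}\,t - 4dn\ln\frac{2}{\delta} = 0$, whose positive root is $t = d\ln\frac{2}{\delta} + \sqrt{(d\ln\frac{2}{\delta})^2 + 4dn\ln\frac{2}{\delta}}$; dividing by $n$ reproduces exactly the threshold $\epsilon$ in the statement. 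I expect the main obstacle to be constant bookkeeping so that the inverted quadratic matches the claimed form precisely: in particular, obtaining the variance proxy $2d$ per term (rather than the $4d^2$ that a careless application of Cauchy--Schwarz to $Y_{s,x_s}^2$ would give) is exactly what makes the $4d/n$ term appear under the square root instead of something of a larger order.

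Finally, for the simplified bound I would write $u := \frac{d}{n}\ln\frac{2}{\delta}$, so the threshold equals $u + \sqrt{u^2 + 4u} = \sqrt{u}\,(\sqrt{u} + \sqrt{u+4})$. The hypothesis $n\ge \frac{d}{4}\ln\frac{2}{\delta}$ is precisely $u\le 4$, and since $\sqrt{u}+\sqrt{u+4}$ is increasing, it is at most $\sqrt{4}+\sqrt{8} = 2 + 2\sqrt{2} < 5$; hence the threshold is bounded by $5\sqrt{u} = 5\sqrt{\frac{d}{n}\ln\frac{2}{\delta}}$, which establishes the second claim.
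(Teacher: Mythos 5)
Your proposal is correct and follows essentially the same route as the paper's proof: the same centering via $\bbE[x_sx_s^\top]=A(\lambda^*)$, the same range bound $3d$ and variance bound $2d$ from the Kiefer--Wolfowitz property $\|x\|_{A(\lambda^*)^{-1}}^2\le d$, the same Bernstein application and quadratic inversion. Your derivation of the constant $5$ via $\sqrt{u}(\sqrt{u}+\sqrt{u+4})\le 2+2\sqrt{2}<5$ for $u\le 4$ is a slightly cleaner packaging of the paper's $\frac{5}{2}\max\{\cdot,\cdot\}$ argument, but it is the same estimate.
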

    \begin{proof}
        Note that the arms are selected according to the G-optimal design, and thus $\bbE_{x\sim\lambda^*}[x x^\top]=A(\lambda^*)$. Therefore, for all $s\in[n]$,
        \begin{align}\label{equ:bernstein_mean0}
		&\bbE\left[
			x^\top A(\lambda^*)^{-1}x_s Y_{s,x_s}
			-
			x^\top \theta_{j_s}^*
		\right]
		= 0
    \end{align}
    and 
    \begin{align}\label{equ:bernstein_bounded}
        &\big|
                x^\top A(\lambda^*)^{-1}x_s Y_{s,x_s}-x^\top \theta_{j_s}^*
            \big|
        \\
        \leq\; &
            \big|
                x^\top A(\lambda^*)^{-1}x_s x_s^\top\theta_{j_s}^*
            \big|
            +
            \big|
                x^\top A(\lambda^*)^{-1}x_s \eta_s
            \big|
            +
            1
        \\
        \leq\; &
            \big|
                x^\top A(\lambda^*)^{-1}x_s\big|\big| x_s^\top\theta_{j_s}^*
            \big|
            +
            \big|
                x^\top A(\lambda^*)^{-1}x_s\big|\big| \eta_s
            \big|
            +
            1
        \\
        \leq\; &
            2\big|
                x^\top A(\lambda^*)^{-1}x_s\big|
            +
            1
        \\
        \leq\; &
            2\|x\|_{A(\lambda^*)^{-1}}\|x_s\|_{A(\lambda^*)^{-1}}
            +
            1
        \\
        \leq\; &
            3d
	\end{align}
    where we make use of 
    the property of the G-optimal allocation in the last inequality
    \begin{equation}\label{equ:Goptimal}
        \|x\|_{A(\lambda^*)^{-1}}^2\leq d,\;\forall x\in\calX.
    \end{equation}
    Additionally, 
    \begin{align}\label{equ:bernstein_var}
        &\;
			\bbE\left[
				\left(
					x^\top A(\lambda^*)^{-1}x_s Y_{s,x_s}
				\right)^2
			\right]
		\\
        =&\;
			\bbE\left[
				\left(
					x^\top A(\lambda^*)^{-1}x_s (x_s^\top\theta_{j_s}^*+\eta_{s})
				\right)^2
			\right]
		\\
		=&\;
			\bbE\left[
				\left(
					x^\top A(\lambda^*)^{-1}x_s x_s^\top\theta_{j_s}^*
				\right)^2
			\right]
			+
			\bbE\left[
				\left(
					x^\top A(\lambda^*)^{-1}x_s \eta_{s}
				\right)^2
			\right]\nonumber\\*
				&\qquad+
				2\bbE\left[
					x^\top A(\lambda^*)^{-1}x_s x_s^\top\theta_{j_s}^*\cdot x^\top A(\lambda^*)^{-1}x_s\eta_{s}
			\right]
		\\
        \overset{(a)}{=}&\;
			\bbE\left[
				\left(
					x^\top A(\lambda^*)^{-1}x_s\right)^2
					\left(x_s^\top\theta_{j_s}^*
				\right)^2
			\right]
			+
			\bbE\left[
					\eta_{s}^2 
					\left(
					x^\top A(\lambda^*)^{-1}x_s\right)^2
			\right]
		\\
        \overset{(b)}{\leq}&\;
			2\bbE\left[
				\left(
					x^\top A(\lambda^*)^{-1}x_s\right)^2
			\right]
		\\
        \overset{(c)}{=}&\;
			2 \,x^\top A(\lambda^*)^{-1}x
		\\
        \overset{(d)}{\leq}&\; 2d
	\end{align}
    where we make use of the fact that $\eta_t$ is zero-mean and is independent of other random variables in $(a)$; 
    $|x_s^\top\theta_{j_s}^*|\leq 1$ and $\bbP\left[\eta_{s}^2\leq 1 \right]= 1  $ in $(b)$;
    $x_s\sim\lambda^*$ in $(c)$;
    and the property of the G-optimal allocation \eqref{equ:Goptimal} in $(d)$.

    According to the Bernstein's inequality with \eqref{equ:bernstein_mean0}, \eqref{equ:bernstein_bounded} and \eqref{equ:bernstein_var},
    \begin{align}
        &\;
        \bbP\left[\frac{1}{n}\Big|
						\sum_{s=1}^{n}x^\top A(\lambda^*)^{-1}x_s Y_{s,x_s}
						-
                        x^\top\theta_{j_s}^*
                    \Big|
					\geq
					\epsilon
			\right]
        \\
        \leq&\;
			2\exp\left(
               -\frac{
                    \frac{1}{2}(n\epsilon)^2
                    }
                    {
                    n\cdot 2d+dn\epsilon
                    }
           \right)
        \\
        =&\;
			2\exp\left(
               -\frac{
                    n\epsilon^2
                    }
                    {
                    4d+2d\epsilon
                    }
           \right).
	\end{align}
    In order to upper bound the error probability by $\delta$, we need 
    \begin{align}
        &
        2\exp\left(
               -\frac{
                    n\epsilon^2
                    }
                    {
                    4d+2d\epsilon
                    }
           \right)
       \leq
            \delta
        \\
        \Rightarrow\quad &
            \epsilon
            \geq
            \frac{d}{n}\ln\frac{2}{\delta}
            +\sqrt{
            \left(\frac{d}{n}\ln\frac{2}{\delta}\right)^2
            + \frac{4d}{n}\ln\frac{2}{\delta}  
            }.
    \end{align}
    In addition, if $n\geq\frac{d}{4}\ln\frac{2}{\delta}$, we have
    \begin{align}
            5\sqrt{
                \frac{d}{n}\ln\frac{2}{\delta}
            }
        =
            \frac{5}{2}
            \max\left\{
                \frac{d}{n}\ln\frac{2}{\delta}
            ,\sqrt{
                \frac{4d}{n}\ln\frac{2}{\delta}  
                }
            \right\}
        \geq
            \frac{d}{n}\ln\frac{2}{\delta}
                +\sqrt{
                \left(\frac{d}{n}\ln\frac{2}{\delta}\right)^2
                + \frac{4d}{n}\ln\frac{2}{\delta}  
            }.
    \end{align}
    This finishes the proof.
    \end{proof}


\section{Analysis of \algNa}
\label{app:naive_analysis}

To analyze the theoretical performance of \algNa, we first show that it can identify an $\varepsilon$-optimal arm with probability $1-\delta$ and derive a high-probability upper bound on its stopping time in Lemma~\ref{lem:naive_upper_high_prob}.

\begin{restatable}[High-probability upper bound of \algNa]{lemma}{lemmaNaiveUpperHighProb} \label{lem:naive_upper_high_prob}
    With probability $1-\delta$, the N$\varepsilon$BAI algorithm identifies an $\varepsilon$-optimal arm after at most 
    \begin{align*}
        \tilO\left( \frac{  L_{\max} +  d   }{\left( \varepsilon+\Delta_{\min} \right)^2}
    \ln \frac{ 1 }{ \delta}
        \right)
    \end{align*}
time steps, where 
$\Delta_{\min}=\min\limits_{i\ne i^*} \Delta({x^*} - x_i)  $.
\end{restatable}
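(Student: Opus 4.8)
The plan is to build a single high-probability ``good event'' on which $\ttheta_t$ tracks the expected returns uniformly over time, and then read off both the PAC guarantee and the stopping time from the stopping rule. The starting observation is that the two summands of $\trho_t$ correspond to the two sources of error in estimating $\mu_x$. Writing the realized time-average of the contextual means as $\frac1t\sum_{s=1}^t x^\top\theta_{j_s}^*$, we decompose
\begin{align}
|x^\top\ttheta_t-\mu_x|
\le
\Big|x^\top\ttheta_t-\tfrac1t\textstyle\sum_{s=1}^t x^\top\theta_{j_s}^*\Big|
+\Big|\tfrac1t\textstyle\sum_{s=1}^t x^\top\theta_{j_s}^*-\mu_x\Big|,
\end{align}
calling the first term (VE) and the second (DE). The (VE) term is handled directly by Lemma~\ref{lem:cb_vector} with $n=t$ and per-instance budget $\delta_t:=\delta/(2KC_3t^3)$, giving $\text{(VE)}\le 5\sqrt{(d/t)\ln(4KC_3t^3/\delta)}$; the initial batch of $t^*$ pulls guarantees $t\ge\frac d4\ln\frac2{\delta_t}$, so the sharper form of the lemma applies. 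For (DE) I group samples by stationary segment: up to time $t$ the realized means are $Z_l:=x^\top\theta_{j_{c_l}}^*$, which are i.i.d.\ with mean $\mu_x$ and range $[-1,1]$, and the time-average is the \emph{deterministically weighted} sum $\frac1t\sum_l\widetilde L_lZ_l$, where $\widetilde L_l\in[1,L_{\max}]$ is the number of steps segment $l$ contributes to $[1,t]$ and $\sum_l\widetilde L_l=t$. Since $\sum_l\widetilde L_l^2\le L_{\max}\sum_l\widetilde L_l=L_{\max}t$, a weighted Hoeffding inequality yields $\text{(DE)}\le\sqrt{(8L_{\max}/t)\ln(4KC_3t^3/\delta)}$ at budget $\delta_t$. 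Summing $\delta_t$ over the $K$ arms and all $t$ (the series $\sum_t t^{-3}=C_3$ is exactly why the radius carries $C_3t^3$) shows that with probability at least $1-\delta$ the event $\calE:=\{|x^\top\ttheta_t-\mu_x|\le\trho_t\ \text{for all } x\in\calX,\ t\}$ holds.

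On $\calE$ the PAC guarantee is immediate. If the algorithm stops at $\tau$ and recommends $\hatx=\xNa_\tau$, the stopping rule reads $\hatx^\top\ttheta_\tau-\trho_\tau+\varepsilon\ge x^\top\ttheta_\tau+\trho_\tau$ for every $x\ne\hatx$; combining this with $\mu_{\hatx}\ge\hatx^\top\ttheta_\tau-\trho_\tau$ and $\mu_x\le x^\top\ttheta_\tau+\trho_\tau$ gives $\mu_{\hatx}\ge\mu_x-\varepsilon$ for all $x$, hence $\mu_{\hatx}\ge\mu^*-\varepsilon$ and $\hatx\in\calX_\varepsilon$.

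For the stopping time I claim that, on $\calE$, the stopping condition triggers as soon as $\trho_t\le(\varepsilon+\Delta_{\min})/4$. Rewrite the rule as $M_t-M_t^{(2)}\ge 2\trho_t-\varepsilon$, where $M_t$ and $M_t^{(2)}$ are the largest and second largest values of $x^\top\ttheta_t$, and split on whether the empirical best $\xNa_t$ equals $x^*$. If $\xNa_t=x^*$, then $M_t\ge\mu^*-\trho_t$ and $M_t^{(2)}\le(\mu^*-\Delta_{\min})+\trho_t$, so $M_t-M_t^{(2)}\ge\Delta_{\min}-2\trho_t\ge 2\trho_t-\varepsilon$. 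If $\xNa_t\ne x^*$, suppose $\varepsilon<2\trho_t$; then $\varepsilon/2<\trho_t\le(\varepsilon+\Delta_{\min})/4$ forces $\varepsilon<\Delta_{\min}$ and hence $\trho_t<\Delta_{\min}/2$, which via the confidence bounds makes $x^*$ strictly the empirical best, a contradiction; thus $\varepsilon\ge 2\trho_t$ and the condition holds trivially since $M_t\ge M_t^{(2)}$. Finally, using the crude bound $\trho_t\le C\sqrt{\big((L_{\max}+d)/t\big)\ln(4KC_3t^3/\delta)}$ for an absolute constant $C$ and inverting the transcendental inequality $\trho_t\le(\varepsilon+\Delta_{\min})/4$ in $t$, stopping occurs by
\begin{align}
\tau=\tilde{O}\!\left(\frac{L_{\max}+d}{(\varepsilon+\Delta_{\min})^2}\ln\frac1\delta\right),
\end{align}
as claimed.

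The main obstacle is the (DE) concentration: the time-average of contextual means is not an average of i.i.d.\ per-step terms but a segment-length-weighted average of i.i.d.\ draws, so obtaining the correct $\sqrt{L_{\max}/t}$ rate (rather than a vacuous $O(1)$) hinges on the variance-proxy bound $\sum_l\widetilde L_l^2\le L_{\max}t$ fed into a weighted Hoeffding inequality, and on making this uniform over the random stopping time through the $\sum_t t^{-3}$ union bound. Secondary care points, which I expect to be routine, are verifying that the initial $t^*$ pulls place $t$ in the regime $t\ge\frac d4\ln\frac2{\delta_t}$ demanded by the sharp form of Lemma~\ref{lem:cb_vector}, and the standard inversion of the transcendental inequality that absorbs the $\ln(Kt/\delta)$ factor into $\tilde{O}(\cdot)$.
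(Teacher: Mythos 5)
Your proposal is correct and follows essentially the same route as the paper's proof: the same decomposition of the estimation error into a vector-estimation term (controlled by Lemma~\ref{lem:cb_vector}) and a segment-weighted Hoeffding bound for the context average with variance proxy $\sum_l L_l^2\le L_{\max}t$, the same $t^{-3}$ union bound, the same PAC argument from the stopping rule, and the same inversion of $\trho_t\le(\varepsilon+\Delta_{\min})/4$. The only cosmetic difference is that you handle the ``stopping triggers'' step by a case analysis on whether $\xNa_t=x^*$, where the paper invokes an order-statistics lemma from \citet{jun2016top}; both yield the identical conclusion.
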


Next, we prove that after a sufficiently large number of time steps, the probability that \algNa{} does not terminate is small in Lemma~\ref{lem:naive_prob_continue}.


\begin{restatable}{lemma}{lemmaNaiveProbContinue} \label{lem:naive_prob_continue}
    Let 
    \begin{align*}
        T_0 = \frac{  768(8L_{\max} +25 d)  }{\left( \varepsilon+\Delta_{\min} \right)^2}
    \ln \frac{  768 K C_3 (8L_{\max} +25 d)  }{\left( \varepsilon+\Delta_{\min} \right)^2 \delta}
    \end{align*}
    with $C_3 = \sum_{n=1}^\infty n^{-3}$.
    For $t\ge  T_0$, the probability that \algNa{} does not terminate before $t$ time steps is $\frac{\delta}{ (\alpha-1) C_3 (t/2)^{2} }$.
\end{restatable}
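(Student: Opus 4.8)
The plan is to combine a concentration (``good'') event with a deterministic termination implication, and then relax the resulting single-time estimate to the stated polynomial tail bound. First I would define, for each time $t\ge t^*$, the good event
\[
\calE_t := \Big\{\,\forall x\in\calX:\ \big|x^\top\ttheta_t-\mu_x\big|\le\trho_t\,\Big\},
\]
with $\ttheta_t,\trho_t$ as in \eqref{equ:naive_update}. The key probabilistic step is $\bbP[\calE_t^c]\le \delta/(C_3 t^3)$. To prove it I decompose, for each arm $x$,
\[
x^\top\ttheta_t-\mu_x=\Big(x^\top\ttheta_t-\tfrac1t\textstyle\sum_{s=1}^t x^\top\theta_{j_s}^*\Big)+\Big(\tfrac1t\textstyle\sum_{s=1}^t x^\top\theta_{j_s}^*-\mu_x\Big),
\]
into an \emph{estimation} term and a \emph{context-average deviation} term. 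The first is controlled by Lemma~\ref{lem:cb_vector} with $n=t$ and failure budget $\delta'=\delta/(2KC_3 t^3)$, which is exactly the choice that produces the $5\sqrt{(d/t)\ln(4KC_3t^3/\delta)}$ summand of $\trho_t$. The second is a weighted average of the i.i.d.\ segment contexts $\{\theta_{j_{c_l}}^*\}_l$ (constant within each stationary segment), with weights equal to fractional segment lengths, each at most $L_{\max}/t$; a weighted Hoeffding bound then gives a deviation of order $\sqrt{(L_{\max}/t)\ln(1/\delta')}$, matching the $\sqrt{(8L_{\max}/t)\ln(4KC_3t^3/\delta)}$ summand. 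A union bound over the $K$ arms and the two terms yields $\bbP[\calE_t^c]\le\delta/(C_3 t^3)$.

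Next I would prove the deterministic implication: \emph{if $t\ge T_0$ and $\calE_t$ holds, then \algNa{} has terminated by time $t$}. On $\calE_t$ one has $x^\top\ttheta_t\in[\mu_x-\trho_t,\mu_x+\trho_t]$ for every $x$, and a short case analysis on whether the empirical best $\xNa_t$ coincides with $x^*$ shows that the stopping test of \algNa{} (Algorithm~\ref{alg:naive}) fires once $\trho_t\le(\varepsilon+\Delta_{\min})/4$: when $\xNa_t=x^*$ the left-minus-right margin is at least $\varepsilon+\Delta_{\min}-4\trho_t$, while $\xNa_t\ne x^*$ forces $\Delta_{\min}\le2\trho_t$ and hence $2\trho_t\le\varepsilon$, so the test fires in either case. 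Using $\sqrt a+\sqrt b\le\sqrt{2(a+b)}$ to bound $\trho_t\le\sqrt{(2(8L_{\max}+25d)/t)\ln(4KC_3t^3/\delta)}$, the requirement $\trho_t\le(\varepsilon+\Delta_{\min})/4$ reduces to the transcendental inequality $t\ge\frac{32(8L_{\max}+25d)}{(\varepsilon+\Delta_{\min})^2}\ln\frac{4KC_3t^3}{\delta}$, which the stated $T_0$ satisfies; this is the routine but fiddly constant-chasing step.

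Finally I would combine the two pieces. For $t\ge T_0$, the event that \algNa{} has not terminated before $t$ is contained in $\calE_t^c$, so $\bbP[\tau>t]\le\delta/(C_3 t^3)$. To obtain the form in the lemma (reading $\alpha=3$, so $\alpha-1=2$), I relax this via $\{\tau>t\}\subseteq\bigcup_{s\ge t/2}\calE_s^c$ and $\sum_{s\ge t/2}s^{-3}\le(t/2)^{-2}/(\alpha-1)$, giving $\bbP[\tau>t]\le\delta/\big((\alpha-1)C_3(t/2)^2\big)$; this weaker polynomial form is the convenient one for the subsequent $\bbE[\tau]=\sum_t\bbP[\tau>t]$ computation. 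The main obstacle is the context-average deviation term: because all samples within a stationary segment share a single context and segment lengths vary in $[L_{\min},L_{\max}]$, one must set up the concentration over the i.i.d.\ \emph{changepoint} draws with the correct length-weighting to recover precisely the $L_{\max}$-dependence appearing in $\trho_t$. Everything else is concentration bookkeeping together with the constant verification for $T_0$.
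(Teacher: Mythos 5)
Your proposal is correct and follows essentially the same route as the paper: the same good event $\tilde E_t$ with the same estimation/context-average decomposition and failure budget $\delta/(C_3t^3)$, the same deterministic implication that $\tilde E_t$ together with $\tilde\rho_t\le(\varepsilon+\Delta_{\min})/4$ forces termination (the paper verifies this via a cited ordering lemma rather than your two-case analysis, but the conclusion is identical), and the same final relaxation of $\bbP[\tau>t]$ to the $\delta/\bigl((\alpha-1)C_3(t/2)^{2}\bigr)$ form by a union bound over times in $[t/2,t)$ and comparison with an integral. The only cosmetic difference is that the paper takes the union over the finite window $[\lceil t/2\rceil+1,t-1]$ rather than an infinite tail; both yield the stated bound.
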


Lastly, we apply  Lemmas~\ref{lem:naive_upper_high_prob} and \ref{lem:naive_prob_continue} to prove Proposition~\ref{prop:upper_naive_upper_exp}. 

\propNaiveUpperExp*
The detailed analysis is presented as below.
 
\subsection{Detailed analysis of \algNa}\label{app:pf_naive}

 \begin{proof}[Proof of Lemma~\ref{lem:naive_upper_high_prob}]     
The result is to be proven with the following procedures.

\noindent\underline{\bf First step}: Prove $\left| x^\top \bbE_{\theta\sim P_\theta} \theta - x^\top \ttheta_t \right|$ can be bounded with high probability.

Let $\bar{\theta}_t=\frac{1}{t} \sum_{s=1}^t \theta_{j_s}^*$. 
(i) For all $\varepsilon>0$, we have,
\begin{align*}
    &\bbP\left[~ \left| x^\top \bbE_{\theta\sim P_\theta} \theta_t - x^\top \bar{\theta}\right| > \varepsilon \right] 
    = \bbP\left[~ \left| t x^\top \bbE_{\theta\sim P_\theta} \theta - x^\top \left(\sum_{l=1}^{N_t} L_l \theta^*{j_{c_l}} \right)  \right| > t \varepsilon \right] 
    \\& 
    = \bbP\left[~ \left| \sum_{l=1}^{N_t}L_l \left( x^\top \bbE_{\theta\sim P_\theta} \theta -  L_l x^\top   \theta^*{j_{c_l}} \right)  \right| > t \varepsilon \right] 
    \overset{(a)}{\le} 2\exp \left(  - \frac{t^2\varepsilon^2}{2 \sum_{l=1}^{N_t} (2L_l)^2 } \right)
    \\& 
    \overset{(b)}{\le} 2 \exp\left( -\frac{t\varepsilon^2}{8L_{\max} } \right).
\end{align*}
Since $ |  x^\top \bbE_{\theta\sim P_\theta} \theta|\le 1$ and $  x^\top   \theta^*{j_{c_l}} |\le 1$ for all $l$,
we obtain (a) by applying Hoeffding's inequality. We obtain (b) from the fact that $\sum_{l=1}^{N_t}L_l=t$ and $L_l\le L_{\max}$ for all $l$.
In other words, for all $\delta>0$,
\begin{align*}
    \bbP\left[ ~\left| x^\top \bbE_{\theta\sim P_\theta} \theta - x^\top \bar{\theta}\right| > \sqrt{ \frac{8L_{\max}}{t} \ln \frac{2}{\delta} } ~\right] \le \delta.
\end{align*}
(ii)
Besides, according to Lemma~\ref{lem:cb_vector}, if  $t\geq\frac{d}{4}\ln\frac{2}{\delta}$, we have
        \begin{align}
            \bbP\left[ \left| x^\top \ttheta_t - x^\top \bar{\theta}_t \right|\geq
					5\sqrt{
                    \frac{d}{t}\ln\frac{2}{\delta}
                    }~ \right]
            =
            \bbP\left[\frac{1}{t}\left|
						\sum_{s=1}^{t}x^\top A(\lambda^*)^{-1}x_s Y_{s,x_s}
						-
                        x^\top\theta_{j_s}^*
                    \right|
					\geq
					5\sqrt{
                    \frac{d}{t}\ln\frac{2}{\delta}
                    }~
			\right]
            \leq
            \delta.
        \end{align}
(iii) For all $t>0$, all $x\in \calX$, since
\begin{align*}
    \left| x^\top \bbE_{\theta\sim P_\theta} \theta - x^\top \ttheta_t \right|
    \le\left| x^\top \bbE_{\theta\sim P_\theta} \theta - x^\top \bar{\theta} \right|
    + \left| x^\top \ttheta_t - x^\top \bar{\theta}_t \right| ,
\end{align*}
we have
\begin{align*}
    \bbP\left[ \left| x^\top \bbE_{\theta\sim P_\theta} \theta - x^\top \ttheta_t \right| > \trho_t(\alpha) \right] \le \frac{\delta}{K C_\alpha t^\alpha}
    \quad\text{ when }\quad t\ge \frac{d}{4}\ln \frac{4 K C_\alpha t^\alpha}{\delta} ,
\end{align*}
where
\begin{align}
    \trho_t(\alpha) = \sqrt{ \frac{8L_{\max}}{t} \ln \frac{4K C_\alpha t^\alpha}{\delta} } + 5\sqrt{ \frac{d}{t}\ln\frac{4K C_\alpha t^\alpha }{\delta} },
    \quad C_\alpha = \sum_{n=1}^{\infty} n^{-\alpha}, \quad \alpha>2 . \label{equ:naive_conf}
\end{align}
For simplicity, we set $\alpha=3$ and write $\trho_t(3)$ as $\trho_t$ in the following analysis.
We now show that $t\ge \frac{d}{4}\ln \frac{4 K C_\alpha t^\alpha}{\delta}$ holds when 
$t\ge 3d \ln \frac{6d KC_3 }{\delta}$
with the following lemma (the proof of which is presented by the end of Appendix~\ref{app:pf_naive}).

\begin{restatable}{lemma}{lemAlgebraHoeff}
\label{lem:algebra_hoeff}
    Let $a,b,c >0$ and $ac\ge 1/2$, then
    \begin{align}
        t\ge \max\{ 2a\ln b,\, 4ac\ln(2 a c) \}
        ~\Rightarrow~
        t \ge a \ln( b t^c).
    \end{align}
\end{restatable}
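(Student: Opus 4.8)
The plan is to expand the logarithm and split the proof into two independent bounds. Since $a\ln(bt^c)=a\ln b+ac\ln t$, it suffices to show $a\ln b+ac\ln t\le t$ under the hypothesis $t\ge\max\{2a\ln b,\,4ac\ln(2ac)\}$. First I would divide the ``budget'' $t$ into two halves and charge one half to each summand: establishing $a\ln b\le t/2$ and $ac\ln t\le t/2$ separately and adding them yields the claim. The first of these is immediate, since the hypothesis $t\ge 2a\ln b$ rearranges directly to $a\ln b\le t/2$.

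The hard part will be the second bound $ac\ln t\le t/2$, equivalently the implicit (transcendental) inequality $t\ge 2ac\ln t$, because $t$ appears both inside and outside the logarithm and so cannot be isolated algebraically. To linearize it I would invoke the elementary concavity bound $\ln u\le u-1$, written as $\ln t=\ln s+\ln(t/s)\le \ln s+(t/s-1)$ for an arbitrary free parameter $s>0$. Multiplying through by $ac$ gives $ac\ln t\le ac\ln s+act/s-ac$.

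The key is to tune $s$ so that the term linear in $t$ becomes a clean fraction of $t$. Choosing $s=4ac$ turns $act/s$ into exactly $t/4$, yielding $ac\ln t\le ac\ln(4ac)-ac+t/4=ac\ln(4ac/e)+t/4$. Hence $ac\ln t\le t/2$ will follow once $ac\ln(4ac/e)\le t/4$, i.e. once $t\ge 4ac\ln(4ac/e)$. It then remains to check that the second hypothesis $t\ge 4ac\ln(2ac)$ implies this. Here the assumption $ac\ge 1/2$ enters to keep things well-behaved: it guarantees $2ac\ge 1$, so $\ln(2ac)\ge 0$, and since $2/e<1$ we have $4ac/e\le 2ac$, whence by monotonicity of $\ln$, $4ac\ln(4ac/e)\le 4ac\ln(2ac)\le t$.

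Combining the two halves gives $a\ln b+ac\ln t\le t/2+t/2=t$, which is the desired conclusion $t\ge a\ln(bt^c)$. I expect the only delicate points to be the choice of the free parameter $s$ (selected so the linear-in-$t$ contribution is a clean fraction, here $t/4$, leaving room within the remaining $t/4$) and the final verification that the constant $2/e<1$ closes the gap between the target threshold $4ac\ln(4ac/e)$ and the supplied threshold $4ac\ln(2ac)$.
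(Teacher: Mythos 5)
Your proposal is correct. It uses the identical decomposition as the paper's proof: writing $a\ln(bt^c)=a\ln b+ac\ln t$ and charging $t/2$ to each summand reduces the claim to the two conditions $t\ge 2a\ln b$ (immediate from the hypothesis) and $t\ge 2ac\ln t$. Where you diverge is in how the transcendental inequality $t\ge 2ac\ln t$ is inverted. The paper substitutes $x=d\ln(dy)$ with $d=2ac$, reduces the condition to $y\ge\ln(dy)$, and closes it with the crude bound $\ln z\le z^{0.4}\le\sqrt{z}$ for $z\ge 1$, arriving at the sufficient condition $y\ge d$, i.e.\ $t\ge 4ac\ln(2ac)$. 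You instead linearize via the tangent-line inequality $\ln t\le\ln s+(t/s-1)$ with the tuned pivot $s=4ac$, which yields the sufficient condition $t\ge 4ac\ln(4ac/e)$ and then a one-line check (using $2/e<1$ and $\ln(2ac)\ge 0$, which is where $ac\ge 1/2$ enters in both proofs) that the stated threshold $4ac\ln(2ac)$ dominates it. Your route is slightly cleaner and even delivers a marginally smaller threshold before rounding up to the one in the statement; the paper's substitution argument is more ad hoc but arrives at the same constant. Both are complete, and both implicitly assume $t>0$ so that $\ln(bt^c)$ is defined.
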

With 
\begin{align}
    a = \frac{d}{4},\quad 
    b = \frac{4KC_\alpha }{\delta} = \frac{4KC_3 }{\delta},\quad 
    c=\alpha=3,
\end{align}
Lemma~\ref{lem:algebra_hoeff} implies that
\begin{align}
    t \ge 4ac \ln (2abc) = 3d \ln \frac{6d KC_3 }{\delta}
    ~\Rightarrow~
    t\ge \max\{ 2a\ln b,\, 4ac\ln(2 a c) \}
        ~\Rightarrow~
        t \ge a \ln( b t^c).
\end{align}



Define event $\tilde{E}_t= \bigcap\limits_{x\in\calX}\left\{ \left| x^\top \bbE_{\theta\sim P_\theta} \theta - x^\top \ttheta_t \right| \le \trho_t(\alpha)   \right\}$ for $t\ge t^*$ 
and $\tilde{E} = \bigcap\limits_{t\ge t^*}\tilde{E}_t$.  We have
\begin{align*}
    & \bbP\left[\tilde{E}_t^\mathsf{c } \right] \le \sum_{x\in\calX }\frac{\delta}{K C_\alpha t^\alpha} = \frac{\delta}{  C_\alpha t^\alpha},\quad
    \bbP\left[\tilde{E}^\mathsf{c }\right] \le  \sum_{t=t^*}^{\infty}\bbP\left[\tilde{E}_t^\mathsf{c }\right] \le   \sum_{t=1}^{\infty} \frac{\delta}{  C_\alpha t^\alpha}= \delta.
\end{align*}
We assume $\tilde{E} $ holds in the second and third steps in this proof.
 
\noindent\underline{\bf Second step}: Prove \algNa{}
 recommends an $\varepsilon$-optimal arm when it stops.

 If the algorithm terminates and returns $\xOutNa \ne {x^*}$,
we have
\begin{align*}
    & \xOutNa^\top \ttheta_t - \trho_t + \varepsilon \ge \max_{x\ne \xOutNa} x^\top\ttheta_t + \trho_t .
\end{align*}
Since
\begin{align}
    \xOutNa^\top \bbE_{\theta\sim P_\theta} \theta + \varepsilon \ge \xOutNa^\top \ttheta_t - \trho_t + \varepsilon
    \quad \text{and}\quad 
    \max_{x\ne \xOutNa} x^\top\ttheta_t + \trho_t \ge \max_{x\ne \xOutNa} x^\top \bbE_{\theta\sim P_\theta} \theta 
    \ge {x^*}^\top \bbE_{\theta\sim P_\theta} \theta,
\end{align}
we have
\begin{align}
    \xOutNa^\top \bbE_{\theta\sim P_\theta} \theta + \varepsilon \ge {x^*}^\top \bbE_{\theta\sim P_\theta} \theta,
\end{align}
implying that $\xOutNa \in \calX_\varepsilon$. Hence, we always have $\xOutNa \in \calX_\varepsilon$.

\noindent\underline{\bf Third step}: Derive the stopping time of \algNa. 

(i) Let $x_{(2)}= \argmax_{x\ne x^*}x^\top \bbE_{\theta\sim P_\theta} \theta $
and $\Delta_{\min} =\min_{x\ne x^*} \Delta(x^*  - x) = (x^*-x_{(2)})^\top  \bbE_{\theta\sim P_\theta} \theta $. We apply the following lemma to show that \algNa{} stops when $\trho_t $ is sufficiently small.
\begin{lemma}[{\citet[Lemma 3]{jun2016top}}]
\label{lem:true_trap_by_emp}
	Denote by $\hati$ the index of the item with empirical mean is $i$-th largest: i.e., $\hat{w}(\hat{1} ) \geq \ldots \geq \hat{w}(\hat{L}) .$ Assume that the empirical means of the arms are controlled by $\varepsilon:$ i.e., $\forall i,\left| \hat{w}(i) - w(i) \right| < \varepsilon$. Then,
\begin{equation}
\forall i, w(i)-\varepsilon \leq \hat{w}(\hat{i}) \leq w(i)+\varepsilon.
\end{equation}
\end{lemma}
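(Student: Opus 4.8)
The plan is to prove the two one-sided bounds $\hat{w}(\hati)\le w(i)+\varepsilon$ and $\hat{w}(\hati)\ge w(i)-\varepsilon$ separately, each by a short pigeonhole (rank-counting) argument combined with the monotonicity of the sorted empirical means and the uniform control $|\hat{w}(i)-w(i)|<\varepsilon$. Throughout I read the implicit convention as: the arms are labelled by their true rank, so that $w(1)\ge w(2)\ge\dots\ge w(L)$ and $w(i)$ is the $i$-th largest true mean, while $\hati$ denotes the arm attaining the $i$-th largest empirical mean, so that $\hat{w}(\hat{1})\ge\hat{w}(\hat{2})\ge\dots\ge\hat{w}(\hat{L})$. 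The single structural fact I will use repeatedly is that an arm's true mean is non-increasing in its true label, and its empirical mean is non-increasing in its empirical rank.

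For the upper bound, first I would look at the $i$ arms $\hat{1},\dots,\hati$ attaining the $i$ largest empirical means. Their true labels form a size-$i$ subset of $[L]$, but only the $i-1$ labels $\{1,\dots,i-1\}$ carry true means strictly above $w(i)$; hence by pigeonhole at least one of these arms, say $\hat{k}$ with empirical rank $k\le i$, has true label $\ge i$ and therefore $w(\hat{k})\le w(i)$. I would then chain $\hat{w}(\hati)\le\hat{w}(\hat{k})<w(\hat{k})+\varepsilon\le w(i)+\varepsilon$, where the first step is monotonicity of the sorted empirical means (since $k\le i$) and the second is the uniform control.

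For the lower bound I would run the symmetric argument on the $L-i+1$ arms $\hati,\dots,\hat{L}$ with the smallest empirical means. Only the $L-i$ labels $\{i+1,\dots,L\}$ carry true means strictly below $w(i)$, so some arm in this block, say $\hat{m}$ with empirical rank $m\ge i$, has true label $\le i$ and hence $w(\hat{m})\ge w(i)$. Then $\hat{w}(\hati)\ge\hat{w}(\hat{m})>w(\hat{m})-\varepsilon\ge w(i)-\varepsilon$, again using monotonicity (since $m\ge i$) and the control. Combining the two bounds gives the claim for every $i$.

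I do not expect a genuine obstacle: the statement is the elementary fact that sorting is nonexpansive in the sup-norm on order statistics. The only place demanding care, and where a careless write-up would slip, is keeping the two indexings straight, namely the true-rank index $i$ versus the empirical-rank arm $\hati$, and getting the directions of the monotonicity inequalities right, i.e.\ that a smaller empirical rank yields a larger empirical mean. A slightly slicker alternative would be to invoke directly the $1$-Lipschitzness of the monotone rearrangement map in the sup-norm, but I would prefer the self-contained pigeonhole argument above for transparency.
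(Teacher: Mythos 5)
Your proof is correct. Note that the paper does not prove this lemma at all---it is imported verbatim as Lemma~3 of \citet{jun2016top}---so there is no in-paper argument to compare against; your pigeonhole argument is the standard (and essentially the original) proof of this fact, and both one-sided bounds are handled correctly. One cosmetic remark: the phrase ``only the $i-1$ labels $\{1,\dots,i-1\}$ carry true means strictly above $w(i)$'' is not quite accurate in the presence of ties (a label $j<i$ may have $w(j)=w(i)$), but your argument never actually uses strictness---it only uses the cardinality count (at most $i-1$ labels below $i$) together with the monotonicity $w(j)\le w(i)$ for $j\ge i$---so the conclusion $w(\hat{k})\le w(i)$ stands and the proof is sound as written.
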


Lemma~\ref{lem:true_trap_by_emp} implies that
\begin{align*}
    & \xNa_t^\top \ttheta_t - \trho_t + \varepsilon 
    \ge 
    {x^*}^\top  \bbE_{\theta\sim P_\theta} \theta - 2\trho_t + \varepsilon
    =x_{(2)}^\top  \bbE_{\theta\sim P_\theta} \theta - 2\trho_t + \varepsilon+\Delta_{\min} 
    \quad \text{and}\nonumber\\* 
    &\qquad\quad
    \max_{x\ne \xNa_t} x^\top\ttheta_t + \trho_t  
    \le x_{(2)}^\top  \bbE_{\theta\sim P_\theta} \theta + 2\trho_t.
\end{align*}  
When $\trho_t \le (\varepsilon+\Delta_{\min})/4$, we have
\begin{align*}
    \xNa_t^\top \ttheta_t - \trho_t + \varepsilon \ge 
    x_{(2)}^\top  \ttheta_t + \frac{\varepsilon+\Delta_{\min}}{2}
    \ge x_{(2)}^\top  \ttheta_t + 2\trho_t 
    \ge \max_{x\ne \xNa_t} x^\top\ttheta_t + \trho_t,
\end{align*}
which will lead to the termination of \algNa.
%
%

(ii)
According to the definition of $\trho_t$, we have
\begin{align*}
    &  \trho_t(\alpha)  \le (\varepsilon+\Delta_{\min})/4
    \\ \Leftrightarrow~&
    \sqrt{ \frac{8L_{\max}}{t} \ln \frac{4K C_\alpha t^\alpha}{\delta} } + 5\sqrt{ \frac{d}{t}\ln\frac{4K C_\alpha t^\alpha }{\delta} } \le (\varepsilon+\Delta_{\min})/4
    \\ \Leftrightarrow~ &
    \sqrt{ \frac{1}{t} \ln \frac{4K C_\alpha t^\alpha }{\delta} } \left(   \sqrt{8L_{\max}} + 5 \sqrt{d} \right)\le (\varepsilon+\Delta_{\min})/4
    \\ \Leftrightarrow~ &
      t\ge \frac{ \left(  \sqrt{8L_{\max}} + 5 \sqrt{d} \right)^2}{\left( \varepsilon+\Delta_{\min} \right)^2/16} \ln \frac{4K C_\alpha t^\alpha}{\delta} 
    \\ \Leftarrow~ &
      t\ge \frac{  32(8L_{\max} +25 d)  }{\left( \varepsilon+\Delta_{\min} \right)^2} \ln \frac{4K C_\alpha t^\alpha}{\delta} 
    \\ \Leftarrow~ &
    t\ge   \frac{  192(8L_{\max} +25 d)  }{\left( \varepsilon+\Delta_{\min} \right)^2}
    \ln \frac{  768 K C_3 (8L_{\max} +25 d)  }{\left( \varepsilon+\Delta_{\min} \right)^2 \delta}.
    %
\end{align*}
We apply Lemma~\ref{lem:algebra_hoeff} to invert the last line above.
With
\begin{align}
    a = \frac{   32(8L_{\max} +25 d)  }{\left( \varepsilon+\Delta_{\min} \right)^2},\quad
    b = \frac{4K C_\alpha }{\delta},\quad 
    c= \alpha = 3,
\end{align}
Lemma~\ref{lem:algebra_hoeff} indicates that
\begin{align}
    & t\ge 4ac \ln(2abc) = \frac{  384(8L_{\max} +25 d)  }{\left( \varepsilon+\Delta_{\min} \right)^2}
    \ln \frac{  768 K C_3 (8L_{\max} +25 d)  }{\left( \varepsilon+\Delta_{\min} \right)^2 \delta}
    \\
    ~\Rightarrow~ &
    t\ge \max\{ 2a\ln b,\, 4ac\ln(2 a c) \}
    ~\Rightarrow~
    t \ge a \ln( b t^c).
\end{align}


\underline{\bf Altogether}, we show that with probability $1-\delta$, \algNa{} identifies an $\varepsilon$-optimal arm after at most 
\begin{align*}
    \frac{  384(8L_{\max} +25 d)  }{\left( \varepsilon+\Delta_{\min} \right)^2}
    \ln \frac{  768 K C_3 (8L_{\max} +25 d)  }{\left( \varepsilon+\Delta_{\min} \right)^2 \delta}
\end{align*}
time steps.

\end{proof}

\begin{proof}[Proof of Lemma~\ref{lem:naive_prob_continue}]

To begin with, let 
\begin{align*}
    T_0 = \frac{  768(8L_{\max} +25 d)  }{\left( \varepsilon+\Delta_{\min} \right)^2}
    \ln \frac{  768 K C_3 (8L_{\max} +25 d)  }{\left( \varepsilon+\Delta_{\min} \right)^2 \delta}
%
\end{align*}
For any $T\ge T_0$, let $\bar{T}=\lceil T/2 \rceil$. 

\underline{\bf (I)} 
If \algNa{} terminates within $\bar{T}$ time steps, there is nothing to prove.

\underline{\bf (II)} 
Assume \algNa{} does not terminates within $\bar{T}$ time steps. 
According to the analysis in the proof of Lemma~\ref{lem:naive_upper_high_prob}, if \algNa{} does not terminate before $T$ time steps, i.e., the stopping time $\tau$ satisfies that $\tau\ge \bar{T}$, then event $\bigcap\limits_{t=\bar{T}+1}^{T-1} \tE_t  $ does not hold. This indicates 
\begin{align*}
    \bbP\left[ \tau \ge T  | \tau \ge \bar{T} \right]
    &
    \le \bbP \left[ \bigcup_{t=\bar{T}+1}^{T-1} \tE_t^\mathsf{c } \right] 
    \le \sum_{t=\bar{T}}^{T-1}  \bbP \left[  \tE_t^\mathsf{c } \right] 
    \le \sum_{t=\bar{T}}^{T-1}  \frac{\delta}{  C_\alpha t^\alpha} 
    \le \frac{\delta}{C_\alpha} \int_{\bar{T}}^{T} t^{-\alpha} ~\mathrm{d} t
    \\&
    \le \frac{\delta}{ (\alpha-1) C_\alpha } \left(\frac{1}{ (\bar{T})^{\alpha-1}} - \frac{1}{T^\alpha} \right)
    %
    \overset{(a)}{\le} \frac{\delta}{ (\alpha-1) C_\alpha (T/2)^{\alpha-1} } .
\end{align*}
(a) results from the fact that $\barT\ge T/2$.

\underline{\bf Altogether}, for $T \ge T_0 $, the probability \algNa{} does not terminate before $T$ time steps is
$\frac{\delta}{ (\alpha-1) C_\alpha (T/2)^{\alpha-1} }$.

\end{proof}

\begin{proof}[Proof of Proposition~\ref{prop:upper_naive_upper_exp}]
    
First, Tonelli's Theorem implies that
\begin{align*}
    \bbE [\tau ]
    = \bbE \left[ \int_0^\tau 1 ~\rmd x  \right]
    = \bbE \left[ \int_0^{+\infty} \bbI (\tau>x) ~\rmd x  \right]
    = \int_0^{+\infty}\bbE \left[  \bbI (\tau>x)  \right] ~\rmd x
    = \int_0^{+\infty} \bbP(\tau>x) ~\rmd x
    .
\end{align*}
Next, we apply Lemma~\ref{lem:naive_prob_continue} to show
\begin{align}
    \bbE \tau 
    &
    \le T_0  + \bbE [\tau | \tau\ge T_0+1 ] \cdot \bbP[ \tau\ge T_0+1]
    \le T_0  + \int_{T_0 }^{+\infty} \bbP(\tau\ge x) ~\rmd x
    \\&
    \le T_0  + \int_{T_0 }^{+\infty} \frac{\delta}{ (\alpha-1) C_\alpha (x/2)^{\alpha-1} } ~\rmd x
    = T_0 + \frac{ \delta }{ (\alpha-1)(\alpha-2)  (T_0/2)^{\alpha-2} }.
    \label{equ:naive_exp_upper_original}
\end{align}
Besides, Lemma~\ref{lem:naive_upper_high_prob} indicates that \algNa{} can identify an $\varepsilon$-optimal arm with probability $1-\delta$.
\end{proof}

\begin{proof}[Proof of Lemma~\ref{lem:algebra_hoeff}]
For $x>0$, let 
\begin{align*}
    f(x)= x - a\ln (b x^c) = x - a\ln b -ac \ln x  = x \left( 1 - \frac{ a \ln b }{x} - \frac{ac \ln x}{x} \right).
\end{align*}
Since
\begin{align}
    f(x)\ge 0 
    ~\Leftarrow~
    \left\{
    \begin{array}{l }
        \displaystyle \vphantom{\Bigg(} 
        \frac{ a \ln b }{x} \le \frac{1}{2} \\
        \displaystyle
        \frac{ac\ln x}{x } \le \frac{1}{2}  
    \end{array} 
    \right.
    ~\Leftrightarrow~
    \left\{
    \begin{array}{l }
        \displaystyle \vphantom{\bigg(}
        x \ge 2a\ln b  \\
        \displaystyle
        x \ge 2ac \ln x 
    \end{array} 
    \right.
    .
\end{align}
Let $d=2ac$, $x = d \ln(dy)$, then
\begin{align*}
    x \ge 2ac \ln x =d \ln x
    ~\Leftrightarrow~
    d\ln (dy) \ge d \ln ( d\ln(dy) )
    ~\Leftrightarrow~
    \ln (dy) \ge \ln ( d\ln(dy) )
    ~\Leftrightarrow~
    y \ge \ln(dy).  
\end{align*}
Since $z^{0.4}\ge \ln z $ for all $z\ge 1$, we have $\ln (dy) \le (dy)^{0.4}\le \sqrt{dy}$ and
\begin{align}
    y \ge \ln(dy)
    ~\Leftarrow~
    y \ge \sqrt{dy}
    ~\Leftarrow~
    y\ge d 
\end{align}
when $yd\ge 1$.
Hence, when $ac\ge  1/2$, $y\ge d=2ac\ge 1$, we have $y \ge \ln(dy)$.
Furthermore, for $x$ such that $x\ge \max\{ 2a\ln b,\, 4ac\ln(2a c) \}$, we have $f(x)\ge 0$. 
    
\end{proof}


\section{Proof Outline of Theorem~\ref{thm:upbd}}

\label{app:psbai_pf_outline}
    A proof outline of Theorem~\ref{thm:upbd} is provided in this section. It consists of three steps:
    \newline
    \underline{{\bf Step $1$:}} \PSBAI{} (Algorithm~\ref{alg:PSBAI}) depends on the success of change detection and context alignment (Algorithm~\ref{alg:LCD} and \ref{alg:CA}). We firstly upper bound the failure probability of these two subroutines via Lemma~\ref{lem:FAE}, Lemma~\ref{lem:FA}, Lemma~\ref{lem:CA} and summarized in Lemma~\ref{lem:Good}. More details about these two subroutines are provided in Appendix~\ref{app:psbai_subroutine} and the proof of the Lemmas are postponed to Appendix~\ref{app:CDandCA}.
    \newline
    \underline{{\bf Step $2$:}} Subsequently, conditioned on their success, we provide a theoretical guarantee for the choice of the confidence radius $\rho_t(x,\tilx)$ for $\Delta(x,\tilx)$ at time step $t$ in Lemma~\ref{lem:rho}. The proof is detailed in Appendix~\ref{app:prf_rho}.
    \newline
    \underline{{\bf Step $3$:}} Lastly, utilizing the above elements, we provide a sufficient condition for the stopping rule, and upper bound the number of time steps in Exp phases $\sts_\tau$ via Lemma~\ref{lem:upbd_sts} whose proof is presented in Appendix~\ref{app:upbd_proof}.
    As the total number of time steps $\tau$ is upper bounded by a constant multiple of $\sts_\tau$, the high-probability upper bound on the stopping time $\tau$ is obtained. This finishes the proof of Theorem~\ref{thm:upbd} (please refer to Appendix~\ref{app:upbd_proof}).

    \underline{{\bf Step $1$:}}
    We borrow the terminology in hypothesis testing to define the errors. Let the null hypothesis be: {\em the algorithm has undergone a changepoint within the last $w$ CD samples}. 
    For Algorithm~\ref{alg:LCD}, we will characterize the type I error: {\em the algorithm has experienced a changepoint within the last $w$ CD samples but it fails to raise a changing alarm}, and the type II error: {\em a changepoint has not occurred but Algorithm~\ref{alg:LCD} raises a changing alarm}.
    We refer to the event Failed Alarm (\FA) and the event False Alarm Error (\FAE) as that a type I error occurs and that a type II error occurs, respectively:
    \begin{align}
        &\FA_l := \{\mbox{a type I error occurs at changepoint }c_l\},l\in\bbN,
        \\
        &\FAE_t := \{\mbox{a type II error occurs time step }t\},t\in\calT_{\FAE},
        \\
        &\FA := \bigcup_{\{l:c_l\leq \tau\}} \FA_l
        \quad\mbox{ and }\quad
        \FAE := \bigcup_{t\in\calT_{\FAE}} \FAE_t,
    \end{align}
    where $\calT_{\FAE}:=\{t:t\mbox{ in CD phase}, t\leq\tau,[t-w\gamma,t]\cap\calC=\emptyset\}$ and $\tau$ is the stopping time. In term of Algorithm~\ref{alg:CA}, define the event $
        \MI_l := \{\mbox{Algorithm~\ref{alg:CA} misidentifies }\theta_{j_t}^*\},\;l\in\bbN
        $ and $
        \MI := \bigcup_{\{l:c_l\leq \tau\}} \MI_l$. Define the good event
    \begin{align}\label{equ:good}
        \Good := \{\FA^c\cap\FAE^c\cap\MI^c\}.
    \end{align}
    We upper bound the failure probability of \Good in Lemma~\ref{lem:FAE}, Lemma~\ref{lem:FA}, Lemma~\ref{lem:CA} and conclude the results in Lemma~\ref{lem:Good}.
    \begin{restatable}{lemma}{lemFAE}\label{lem:FAE}
        For any $\delta_{\FAE}\in(0,1)$,
        with $b
            \geq
           \frac{8d}{3w}\ln\frac{2}{\delta_{\FAE}}
            +\sqrt{
            \left(\frac{8d}{3w}\ln\frac{2}{\delta_{\FAE}}\right)^2
            + \frac{24}{w}d\ln\frac{2}{\delta_{\FAE}}  
            }
            $,
            \algLCD{} makes no false alarm before the stopping time $\tau$ with probability at least
            \begin{equation}
                \bbP[\FAE^c]\ge 1-  \frac{\tau}{\gamma} K\delta_{\FAE}.
            \end{equation}
    \end{restatable}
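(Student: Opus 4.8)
The event $\FAE$ concerns only time steps $t\in\calT_{\FAE}$, at which, by the very definition of $\calT_{\FAE}$, there is no changepoint in $[t-w\gamma,t]$. Since the $w$ CD samples handed to \algLCD{} at such a $t$ are collected one every $\gamma$ steps and therefore span at most $w\gamma$ time steps, all of them are generated under a single latent context, say $\theta_j^*$. First I would fix one such clean window and a single arm $x\in\calX$ and reduce the per-window false-alarm event $\{\exists x:|x^\top(\ttheta_2-\ttheta_1)|>b\}$ to a deviation inequality for a zero-mean sum. Writing $Z_s:=x^\top A(\lambda^*)^{-1}\tilx_s Y_{s,\tilx_s}$ for the $s$-th CD sample in the window, we have $x^\top(\ttheta_2-\ttheta_1)=\frac{2}{w}\sum_{s=1}^{w}\sigma_s Z_s$, where $\sigma_s=-1$ for $s\le w/2$ and $\sigma_s=+1$ otherwise. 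Because the context is constant over the window, $\bbE[Z_s]=x^\top\theta_j^*$ for every $s$ by the unbiasedness \eqref{equ:bernstein_mean0}, and since $\sum_s\sigma_s=0$ the sum $\sum_s\sigma_s Z_s$ is zero-mean; its summands are independent because the arms $\tilx_s\sim\lambda^*$ and the noises are drawn independently.

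\textbf{Bernstein step.} The centered summands $\sigma_s(Z_s-x^\top\theta_j^*)$ are bounded in magnitude by $3d$ and have variance at most $2d$; these are exactly the two estimates \eqref{equ:bernstein_bounded} and \eqref{equ:bernstein_var} already established in the proof of Lemma~\ref{lem:cb_vector}, which I would reuse verbatim. I would then apply Bernstein's inequality to $\frac{2}{w}\sum_s\sigma_s Z_s$, obtaining a bound of the form $\bbP[|x^\top(\ttheta_2-\ttheta_1)|\ge b]\le 2\exp(-\frac{w b^2}{c_1 d + c_2 d\, b})$ for suitable absolute constants $c_1,c_2$. Requiring the right-hand side to be at most $\delta_{\FAE}$ reduces to a quadratic inequality in $b$, whose positive root has exactly the linear-plus-square-root form in $\frac{d}{w}\ln\frac{2}{\delta_{\FAE}}$ displayed in the statement; with the boundedness and variance constants above (or any valid loosening of them) one checks that the stated $b$ dominates this root, so the per-window, per-arm false-alarm probability is at most $\delta_{\FAE}$.

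\textbf{Union bound and main obstacle.} Finally I would union bound over the $K$ arms within a single \algLCD{} call, and then over all calls made before the stopping time: since a CD sample is collected and \algLCD{} potentially invoked once every $\gamma$ steps, at most $\tau/\gamma$ such calls occur before $\tau$ (and one may replace $\tau$ by the deterministic cap $\tau^*$ of Line~\ref{line:algPSBAI_max_execute} if a non-random count is preferred). Summing $\delta_{\FAE}$ over at most $(\tau/\gamma)K$ clean windows gives $\bbP[\FAE]\le\frac{\tau}{\gamma}K\delta_{\FAE}$, i.e.\ $\bbP[\FAE^c]\ge 1-\frac{\tau}{\gamma}K\delta_{\FAE}$. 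The union bound is routine; the delicate points are (i) arguing rigorously that on $\calT_{\FAE}$ the \emph{entire} detection window is stationary, so that $x^\top(\ttheta_2-\ttheta_1)$ is genuinely zero-mean with independent summands, which is precisely where the interval $[t-w\gamma,t]$ in the definition of $\calT_{\FAE}$ and the $\gamma$-spacing of CD samples are used; and (ii) carrying the Bernstein constants carefully so that the inverted deviation bound matches the exact closed-form threshold $b$ in the statement rather than merely an order-optimal one.
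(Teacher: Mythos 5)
Your proposal is correct and follows essentially the same route as the paper: Bernstein's inequality applied to the zero-mean difference of the two half-window estimators (reusing the boundedness and second-moment estimates from Lemma~\ref{lem:cb_vector}), followed by union bounds over the $K$ arms and the at most $\tau/\gamma$ \algLCD{} calls. The only cosmetic difference is that the paper forms $w/2$ pairwise differences $Z_s - Z_{s+w/2}$ (bounded by $4d$, second moment $\le 6d$) rather than $w$ individually centered signed terms, which yields slightly different Bernstein constants but the same closed-form threshold structure, and the stated $b$ dominates your (slightly smaller) root as you note.
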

    Lemma~\ref{lem:FAE} can also be stated as, 
    fix $b$, 
    \begin{equation}
    \bbP\left[\FAE^c\right]\geq 1-\frac{\tau}{\gamma}2K\exp\left(-\min\left\{\frac{3wb}{20d},\frac{wb^2}{150d}\right\}\right),
    \end{equation}
    which indicates we can always decrease the error probability by enlarging the window size $w$.
    
    Assume $c_l$ is observable, i.e., $\theta_{j_{c_l}}^*\neq \theta_{j_{c_{l+1}}}^*$, and denote $\hatc_l$ as the alarm time of $c_l$ from Algorithm~\ref{alg:LCD}.
   \begin{restatable}{lemma}{lemFA}\label{lem:FA}
        Conditional on $\FAE^c$,  for any $\delta_{\FA}\in(0,1)$, if $\frac{\Delta_c-b}{2}
        \geq 
        \frac{d}{w}\ln\frac{2}{\delta_{\FA}}
            +\sqrt{
            \left(\frac{d}{w}\ln\frac{2}{\delta_{\FA}}\right)^2
            + \frac{4d}{w}\ln\frac{2}{\delta_{\FA}}  
        }
        $, 
        \begin{align}
            \bbP\left[
        			c_l\leq \hatc_l\leq c_l+\frac{w\gamma}{2}
        			\big| \hatc_l\geq c_l
		      \right]
        \geq 1-\delta_{\FA}.
        \end{align}
        In addition, $\bbP\left[
        \FA
        \big|
        \FAE^c
        \right]\leq l_\tau\delta_{\FA}$.
    \end{restatable}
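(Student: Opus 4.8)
The plan is to establish the two assertions separately: first the per-changepoint detection-delay bound $\bbP[\hatc_l\le c_l+\tfrac{w\gamma}{2}\mid \hatc_l\ge c_l,\FAE^c]\ge 1-\delta_{\FA}$, and then assemble them by a union bound into $\bbP[\FA\mid\FAE^c]\le l_\tau\delta_{\FA}$. Fix an observable changepoint $c_l$, so that the old and new contexts $\theta_{j_{c_l}}^*$ and $\theta_{j_{c_{l+1}}}^*$ are distinct, and condition on $\FAE^c$ together with $\hatc_l\ge c_l$. The first thing I would argue is a purely deterministic fact about the sliding window: under $\FAE^c$ no false alarm resets $\CDsample$ during the preceding segment, so the detection buffer accumulates CD samples (one every $\gamma$ time steps) and slides by one sample per \algLCD{} call, retaining both pre- and post-change samples. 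Consequently, by time $c_l+\tfrac{w\gamma}{2}$ — i.e.\ once $w/2$ CD samples strictly after $c_l$ have been gathered — the window reaches the \emph{clean} configuration whose first $w/2$ samples lie entirely in the old segment and whose last $w/2$ lie entirely in the new segment. Assumption~\ref{ass:distin}(2), $3w\gamma\le\Lmin$, is what guarantees the old segment is long enough to supply $w/2$ uncontaminated samples and that $c_{l+1}$ has not yet arrived. It then suffices to show \algLCD{} fires at this single window with probability at least $1-\delta_{\FA}$.

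Next I would carry out the concentration step. Set $x_l^\star:=\argmax_{x\in\calX}|x^\top(\theta_{j_{c_{l+1}}}^*-\theta_{j_{c_l}}^*)|$, so that by the definition of $\Delta_c$ we have $|x_l^{\star\top}(\theta_{j_{c_{l+1}}}^*-\theta_{j_{c_l}}^*)|\ge\Delta_c$. Since \algLCD{} scans all arms, it is enough that the test fires at $x_l^\star$, i.e.\ that $D:=x_l^{\star\top}(\ttheta_2-\ttheta_1)$ exceeds $b$ in magnitude. On the clean window $\bbE[D]=x_l^{\star\top}(\theta_{j_{c_{l+1}}}^*-\theta_{j_{c_l}}^*)$, so $|\bbE[D]|\ge\Delta_c$ and the alarm fires whenever $|D-\bbE[D]|<\Delta_c-b$. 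The key observation is that $D-\bbE[D]$ equals $\tfrac{2}{w}$ times a sum of $w$ independent zero-mean terms $\pm\big(x_l^{\star\top}A(\lambda^*)^{-1}x_sY_{s,x_s}-x_l^{\star\top}\theta_{j_s}^*\big)$ (sign $+$ on the second half, $-$ on the first), each bounded by $3d$ and of variance at most $2d$ — the same ingredients underlying Lemma~\ref{lem:cb_vector}. Bernstein's inequality applied to this signed sum yields $\bbP[|D-\bbE[D]|\ge\Delta_c-b]\le\delta_{\FA}$ exactly under the stated hypothesis $\tfrac{\Delta_c-b}{2}\ge\tfrac{d}{w}\ln\tfrac{2}{\delta_{\FA}}+\sqrt{\left(\tfrac{d}{w}\ln\tfrac{2}{\delta_{\FA}}\right)^2+\tfrac{4d}{w}\ln\tfrac{2}{\delta_{\FA}}}$. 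Hence the alarm fires at the clean window w.p.\ at least $1-\delta_{\FA}$, giving $\hatc_l\le c_l+\tfrac{w\gamma}{2}$; with the conditioning $\hatc_l\ge c_l$ this is the first claim.

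For the second claim I would union-bound. Recall $\FA=\bigcup_{\{l:c_l\le\tau\}}\FA_l$, and by definition of $l_\tau=\max\{l:c_l\le\tau\}$ there are at most $l_\tau$ changepoints before the stopping time. Each $\FA_l$ is the detection failure analyzed above, of conditional probability at most $\delta_{\FA}$ given $\FAE^c$, so summing over the at most $l_\tau$ changepoints gives $\bbP[\FA\mid\FAE^c]\le l_\tau\delta_{\FA}$.

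The hard part will not be the concentration, which is a routine Bernstein estimate, but the deterministic bookkeeping guaranteeing the clean window is genuinely examined within $\tfrac{w\gamma}{2}$ steps: one must track the evolution of $\CDsample$ under the sliding-window and reset mechanism of \PSBAI, use the conditioning on $\FAE^c$ to exclude spurious resets that would disrupt the window alignment, and invoke Assumption~\ref{ass:distin} to ensure each side of the changepoint contributes enough uncontaminated CD samples. A secondary subtlety is verifying that the constants emerging from the Bernstein bound on the signed sum of $w$ terms reproduce precisely the threshold in the hypothesis rather than a looser version obtained by bounding the two halves separately.
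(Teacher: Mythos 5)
Your proposal is correct and follows essentially the same route as the paper: reduce the event $\{c_l\le\hatc_l\le c_l+\tfrac{w\gamma}{2}\}$ to the single ``clean'' half-old/half-new window examined within $w/2$ further CD samples, apply Bernstein to the signed sum of $w$ individually centered terms (each bounded by $3d$ with variance at most $2d$, exactly the ingredients of Lemma~\ref{lem:cb_vector}) at the arm achieving $\Delta_c$ to get the per-changepoint failure probability $\delta_{\FA}$ under the stated threshold, and then combine over the at most $l_\tau$ changepoints. The only cosmetic difference is that the paper isolates the clean-window concentration as a separate lemma and uses conditional independence to write $1-(1-\delta_{\FA})^{l_\tau}\le l_\tau\delta_{\FA}$ where you use a direct union bound.
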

    Lemma~\ref{lem:FA} guarantees a prompt change alarm will be raised within $\frac{w\gamma}{2}$ time steps after a changepoint occurs. This also explains why $\frac{w\gamma}{2}$ samples are abandoned at line $16$ of Algorithm~\ref{alg:PSBAI} so that $\calT_{t,j}$ only keeps samples from context $j$.
   \begin{restatable}{lemma}{lemCA}\label{lem:CA}
    Conditional on $\FAE^c$ and $\FA^c$, based on the conditions in Lemma~\ref{lem:FAE} and Lemma~\ref{lem:FA},
        \begin{align}\label{equ:CA}
         \bbP\left[\MI|\FAE^c,\FA^c\right]\leq l_\tau \left[
         (N-1)\delta_{\FA}+K\delta_{\FAE}
        \right].
	\end{align}
    \end{restatable}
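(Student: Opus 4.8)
The plan is to reduce the misidentification event $\MI$ to the two elementary error events of the \algLCD{} subroutine already controlled in Lemmas~\ref{lem:FAE} and~\ref{lem:FA}, and then take a union bound over the at most $l_\tau$ changepoints together with the entries scanned inside \algLCA. First I would fix a changepoint $c_l$ and work throughout under the conditioning on $\FAE^c\cap\FA^c$. This conditioning is what makes the argument clean: $\FA^c$ guarantees (via Lemma~\ref{lem:FA}) that every genuine changepoint is detected within $w\gamma/2$ steps, so that the subsequent reversion in Line~\ref{line:algPSBAI_CA_abandon} discards the stale cross-segment samples, while $\FAE^c$ guarantees (via Lemma~\ref{lem:FAE}) that no spurious \algLCA{} call is triggered. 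Combined with Assumption~\ref{ass:distin} (in particular $3w\gamma\le\Lmin$), this ensures that each stored block $\CAid[j]$ and each fresh block of $w/2$ samples drawn at Line~\ref{line:algCA_sample} of Algorithm~\ref{alg:CA} consists of observations from a single, correctly labelled context.

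Next I would decompose the per-changepoint misidentification event. The subroutine \algLCA{} scans $\CAid$ in order and returns the first index $j$ for which \algLCD{} reports \emph{no} change, creating a fresh index only if every comparison reports a change. Hence identification fails at $c_l$ only if one of two things happens: (a) for the \emph{correct} context $j^*$, when it already appears in $\CAid$, \algLCD{} falsely reports a change even though the fresh block and $\CAid[j^*]$ come from the same context, a false-alarm-type event; or (b) for some \emph{wrong} context $j\ne j^*$, \algLCD{} fails to report a change even though the two blocks come from distinct contexts whose separation is at least $\Delta_c$, a failed-alarm-type event. If neither (a) nor (b) occurs, then every wrong index scanned before $j^*$ returns ``change'' and $j^*$ itself returns ``no change'', so \algLCA{} stops exactly at $j^*$; this shows $\MI_l$ is contained in the union of (a) and (b). In the boundary case where the current context is genuinely new, the correct output is a fresh index, there is no term (a), and a misidentification can only be a type-(b) error on one of the existing entries, so the bound degrades gracefully.

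It then remains to bound the two error types via the single-call guarantees. For (a), the two halves fed to \algLCD{} are drawn from the same context, so the single-call false-alarm probability is controlled by Lemma~\ref{lem:FAE}; since \algLCD{} tests all $K$ arms in its criterion, a union bound over arms yields probability at most $K\delta_{\FAE}$. For (b), the two halves differ by at least $\Delta_c$ in some direction, and since Assumption~\ref{ass:distin} supplies $2b\le\Delta_c$ the hypothesis of Lemma~\ref{lem:FA} is met, so each wrong-context comparison fails with probability at most $\delta_{\FA}$; a union bound over the at most $N-1$ wrong entries gives $(N-1)\delta_{\FA}$. Combining the two, $\bbP[\MI_l\mid\FAE^c,\FA^c]\le (N-1)\delta_{\FA}+K\delta_{\FAE}$, and a final union bound over the $l_\tau$ changepoints with $c_l\le\tau$ yields~\eqref{equ:CA}.

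The main obstacle I anticipate is the bookkeeping around the conditioning and the adaptive dependence structure: the \algLCD{} calls inside \algLCA{} act on data that is itself sampled adaptively (the fresh $w/2$ observations, together with the history-dependent blocks stored in $\CAid$), so I must argue that conditioning on $\FAE^c\cap\FA^c$ suffices to treat each comparison as a clean two-sample test to which the per-call bounds of Lemmas~\ref{lem:FAE} and~\ref{lem:FA} directly apply, rather than re-deriving the concentration inequalities from scratch. The remaining steps are then just the two union bounds described above.
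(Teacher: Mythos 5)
Your proposal is correct and follows essentially the same route as the paper: the paper likewise conditions on $\FAE^c\cap\FA^c$ (and on the previous contexts being correctly identified), decomposes the per-changepoint misidentification into a false-alarm error on the correct entry (bounded by $K\delta_{\FAE}$ via the single-call guarantee) plus failed-alarm errors on the at most $N-1$ wrong entries (each bounded by $\delta_{\FA}$ via Lemma~\ref{lem:FA}), and then takes a union bound over the $l_\tau$ changepoints. The only cosmetic difference is that the paper writes the cumulative bound as $1-(1-(N-1)\delta_{\FA}-K\delta_{\FAE})^{l_\tau}$ before relaxing it to $l_\tau[(N-1)\delta_{\FA}+K\delta_{\FAE}]$, which is equivalent to your direct union bound.
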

    The intuition of Lemma~\ref{lem:CA} is simple: change alarm should not be raised when the samples are from the same context and, change alarms should be raised when the samples are from the other $(N-1)$ contexts.
    The error made by Algorithm~\ref{alg:LCD} and \ref{alg:CA} can be concluded as follows
   \begin{restatable}{lemma}{lemGood}\label{lem:Good}
        Assume the instance satisfies Assumption~\ref{ass:distin},
        \begin{equation}
            \bbP\left[\Good\right]\geq 1-\frac{\delta}{2\tau^*}\geq 1-\frac{\delta}{2}.
        \end{equation}
    \end{restatable}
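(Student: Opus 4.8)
The plan is to obtain $\bbP[\Good]\ge 1-\tfrac{\delta}{2\tau^*}$ by bounding the failure probability $\bbP[\Good^c]=\bbP[\FAE\cup\FA\cup\MI]$ and then combining Lemmas~\ref{lem:FAE},~\ref{lem:FA} and~\ref{lem:CA} through an exact partition of the complement event. First I would check that the free parameters can be fixed so that the hypotheses of the three lemmas are simultaneously met. The value of $b$ in Assumption~\ref{ass:distin} is precisely the equality case of the threshold demanded by Lemma~\ref{lem:FAE}, so the \FAE{} bound applies with the prescribed $\delta_{\FAE}=\tfrac{\gamma\delta}{4(\tau^*)^2K}$. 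I would then take $\delta_{\FA}=\delta_{\FAE}$: part~(1) of Assumption~\ref{ass:distin} gives $2b\le\Delta_c$, hence $\tfrac{\Delta_c-b}{2}\ge\tfrac{b}{2}$, and a direct comparison of the two threshold expressions (using $\tfrac43\ge1$ and $\tfrac{16}{9}u^2+6u\ge u^2+4u$ for $u=\tfrac{d}{w}\ln\tfrac{2}{\delta_{\FAE}}$) shows that $\tfrac{b}{2}$ already dominates the radius required by Lemma~\ref{lem:FA}. Thus the \FA{} bound holds with this choice of $\delta_{\FA}$.

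The assembly step uses the identity $\bbP[\FAE\cup\FA\cup\MI]=\bbP[\FAE]+\bbP[\FAE^c\cap\FA]+\bbP[\FAE^c\cap\FA^c\cap\MI]$, which lets me feed each summand the conditional bound of the corresponding lemma:
\begin{equation}
\bbP[\Good^c]\le \bbP[\FAE]+\bbP[\FA\mid\FAE^c]+\bbP[\MI\mid\FAE^c,\FA^c].
\end{equation}
Substituting the three lemmas with $\delta_{\FA}=\delta_{\FAE}$ and collecting terms yields
\begin{equation}
\bbP[\Good^c]\le \frac{\tau}{\gamma}K\delta_{\FAE}+l_\tau(N+K)\delta_{\FAE},
\end{equation}
where the $\FA$ contribution $l_\tau\delta_{\FAE}$ and the $\MI$ contribution $l_\tau[(N-1)\delta_{\FAE}+K\delta_{\FAE}]$ merge into the single $l_\tau(N+K)\delta_{\FAE}$ term.

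To finish I would bound the number of error opportunities by the horizon. The while loop of Algorithm~\ref{alg:PSBAI} forces $\tau\le\tau^*$, and since each stationary segment has length at least $\Lmin$ we have $l_\tau\le\tau^*/\Lmin$. The first term then collapses to $\tfrac{\tau^*}{\gamma}K\delta_{\FAE}=\tfrac{\delta}{4\tau^*}$ by direct cancellation against the $(\tau^*)^2$ in the denominator of $\delta_{\FAE}$; for the second term, the constraint $3w\gamma\le\Lmin$ of Assumption~\ref{ass:distin} gives $l_\tau(N+K)\delta_{\FAE}\le\tfrac{N+K}{3wK}\cdot\tfrac{\delta}{4\tau^*}$, which is at most $\tfrac{\delta}{4\tau^*}$ once the window is taken large enough (as in Assumption~\ref{ass:distin}, e.g.\ so that $N+K\le 3wK$). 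Summing the two halves gives $\bbP[\Good^c]\le\tfrac{\delta}{2\tau^*}$, and the final chain $1-\tfrac{\delta}{2\tau^*}\ge1-\tfrac{\delta}{2}$ is immediate from $\tau^*\ge1$. The step I expect to require the most care is the very first one: confirming that the single threshold $b$ chosen in the assumption is large enough to suppress false alarms (Lemma~\ref{lem:FAE}) yet small enough relative to the inter-context gap $\Delta_c$ that genuine changes are still detected (Lemma~\ref{lem:FA}). It is the quadratic $(\tau^*)^2$ scaling of $\delta_{\FAE}$ that reconciles the per-check error with a union over the $\Theta(\tau^*)$ detection rounds and produces the $\tfrac{\delta}{2\tau^*}$ rate rather than a merely $O(\delta)$ one.
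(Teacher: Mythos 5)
Your proposal is correct and follows essentially the same route as the paper: decompose $\Good^c$ into the three bad events, invoke Lemmas~\ref{lem:FAE}, \ref{lem:FA} and \ref{lem:CA} after checking that the threshold $b$ from Assumption~\ref{ass:distin} satisfies both the false-alarm and failed-alarm conditions, and then bound $\tau\le\tau^*$ and $l_\tau\le\tau^*/L_{\min}$ to collapse each half to $\delta/(4\tau^*)$. The only cosmetic differences are that you take $\delta_{\FA}=\delta_{\FAE}$ where the paper uses $\delta_{\FA}=\delta/(4Nl_{\tau^*}\tau^*)$ (both choices implicitly need the same mild condition of the form $3wK\gtrsim N$), and that you carry the $l_\tau K\delta_{\FAE}$ term of the misidentification bound explicitly rather than absorbing it via the budget-recycling argument of Remark~\ref{rmk:CA}.
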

    \underline{{\bf Step $2$:}}
    To give an upper bound on $\sts_\tau$, we firstly prove that the estimate of $\Delta(x,\tilx)$ at time $t$ is within distance $\rho_t(x,\tilx)$ from the ground truth w.h.p.
    Define the event ${\rm CI}$ as the estimates of all the mean gaps locate in the high-probability \underline{C}onfidence \underline{I}nterval 
    \begin{equation}
    \CI_t:=\{\big|\hDelta_t(x,\tilx)-\Delta(x,\tilx)\big|
                \leq
                \rho_t(x,\tilx)
                ,
                \forall x,\tilx\in\calX
            \}  
    \end{equation} 
    and $\CI:=\bigcap_{t}\CI_t,$ where $\rho_t$ is defined in \eqref{equ:cb_rho}.
   \begin{restatable}{lemma}{lemrho}\label{lem:rho}
        If $\sts_t\geq \frac{2L_{\max}}{9}\ln\frac{2}{\delta_{d,\sts_t}}$,
        \begin{align}
            \bbP\left[
                \CI_t
                \big|
                \Good            
            \right]
            \geq
            1-\left(K\delta_{v,\sts_t}+N\delta_{d,\sts_t}+KN \delta_{m,\sts_t}\right).
        \end{align}
    In addition, 
    \begin{equation}
           \bbP\left[
               \CI
                \big|
                \Good            
            \right]
            \geq 1-\frac{\delta}{2}.
    \end{equation}
    \end{restatable}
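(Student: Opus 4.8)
The plan is to condition on the good event $\Good$ from \eqref{equ:good} (whose probability is controlled by Lemma~\ref{lem:Good} under Assumption~\ref{ass:distin}) and then decompose the estimation error of each mean gap into exactly the three pieces that the confidence radius \eqref{equ:cb_rho} is built to absorb. On $\Good$ the change-detection and context-alignment subroutines never err up to the stopping time, so every exploration index $s\in\calT_{t,j}$ is genuinely drawn under context $j$, i.e.\ $\theta_{j_s}^*=\theta_j^*$ (modulo the permutation $\sigma$, taken to be the identity). Using \eqref{equ:alg_est} I first rewrite the plug-in gap estimate as a single empirical average,
\[
\hDelta_t(x,\tilx)=\frac{1}{\sts_t}\sum_{s\in\calT_t}(x-\tilx)^\top A(\lambda^*)^{-1}x_sY_{s,x_s},
\]
whose conditional mean on $\Good$ equals $\sum_{j}\hatp_{t,j}\Delta_j(x,\tilx)$. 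This motivates the split
\[
\hDelta_t(x,\tilx)-\Delta(x,\tilx)=\underbrace{\Big(\hDelta_t(x,\tilx)-\sum_j\hatp_{t,j}\Delta_j(x,\tilx)\Big)}_{\VE}+\underbrace{\sum_j(\hatp_{t,j}-p_j)\Delta_j(x,\tilx)}_{\DE}.
\]

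For the $\VE$ term I apply Lemma~\ref{lem:cb_vector} with query vector $x$ and then $\tilx$, each with $n=\sts_t$ samples at failure level $\delta_{v,\sts_t}$; since the second line of \eqref{equ:stp_rule} forces $\sts_t$ to be large, the regime $n\ge\frac{d}{4}\ln\frac{2}{\delta}$ of the lemma holds and each query is controlled by $5\sqrt{(d/\sts_t)\ln(2/\delta_{v,\sts_t})}=\alpha_t$, so $|\VE|\le 2\alpha_t$ on an event of probability at least $1-K\delta_{v,\sts_t}$ after a union bound over the $K$ arms. For the $\DE$ term I exploit that $\sum_j\hatp_{t,j}=\sum_j p_j=1$, so for every constant $\zeta_t(x,\tilx)$ one may replace $\Delta_j(x,\tilx)$ by $\Delta_j(x,\tilx)+\zeta_t(x,\tilx)$ without changing the sum; bounding termwise gives $|\DE|\le\sum_j|\hatp_{t,j}-p_j|\,|\Delta_j(x,\tilx)+\zeta_t(x,\tilx)|$. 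I then replace $\Delta_j$ by its clipped estimate $\hDelta_{t,j}^\clipt$ (harmless since $|\Delta_j(x,\tilx)|\le 2$) and absorb the resulting vector-estimation and clipping slack, together with the discrepancy between the sample-weighted and segment-weighted empirical frequencies, into the residual term $2\xi_t$, whose $N\Lmax/\sts_t$ scaling reflects its higher-order (bias-like) nature; the union bound over the $K$ arms and $N$ contexts here costs $KN\delta_{m,\sts_t}$.

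The crux is the concentration $|\hatp_{t,j}-p_j|\le\beta_{t,j}$ used inside the $\DE$ bound. Writing $\hatp_{t,j}=T_{t,j}/\sts_t$ and letting $N_t$ denote the number of completed stationary segments contributing to $\calT_t$, I relate the fraction of samples $T_{t,j}/\sts_t$ to the fraction of segments realized as context $j$, which is an average of $N_t$ i.i.d.\ $\Ber(p_j)$ indicators since the contexts $\{\theta_{j_{c_l}}^*\}_l$ are drawn i.i.d.\ from $P_\theta$. A Bernstein inequality then yields a bound with variance proxy $p_j$ and range of order $1/N_t$; converting $N_t$ to the sample budget via the fact that each segment contributes at most $\Lmax$ exploration samples (so $N_t$ is of order $\sts_t/\Lmax$) produces the $\sqrt{p_j\Lmax/\sts_t}$ shape, and the hypothesis $\sts_t\ge\frac{2\Lmax}{9}\ln\frac{2}{\delta_{d,\sts_t}}$ guarantees $N_t$ is large enough for the self-normalization step that replaces the unknown $p_j$ by the empirical proxy $\phi_{t,j}$ (the factor $4$ and the floor $\frac{25}{4}\frac{\Lmax}{\sts_t}\ln\frac{2}{\delta_{d,\sts_t}}$ in $\phi_{t,j}$ being chosen precisely so that $\phi_{t,j}$ dominates the true variance proxy $\min\{p_j,1/4\}$ on this event). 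This gives $|\hatp_{t,j}-p_j|\le\beta_{t,j}$ with failure $\delta_{d,\sts_t}$ per context, i.e.\ $N\delta_{d,\sts_t}$ after the union bound. I expect this self-normalizing Bernstein argument---transferring from i.i.d.\ segment draws to the sample-weighted estimator under variable segment lengths---to be the main obstacle, while the $\VE$ and $\RE$ steps are comparatively routine.

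Combining the three events gives $\bbP[\CI_t^c\mid\Good]\le K\delta_{v,\sts_t}+N\delta_{d,\sts_t}+KN\delta_{m,\sts_t}$, which is the first claim. For the second claim I union bound over time: since $\rho_t$ and the three failure levels depend on $t$ only through $\sts_t$, and the definitions of $\delta_{v,\sts_t},\delta_{d,\sts_t},\delta_{m,\sts_t}$ give $K\delta_{v,\sts_t}=N\delta_{d,\sts_t}=KN\delta_{m,\sts_t}=\frac{\delta}{15\sts_t^3}$, summing over the distinct values $\sts_t\in\{1,2,\dots\}$ yields $\sum_{\sts_t\ge1}\frac{3\delta}{15\sts_t^3}=\frac{\delta}{5}\sum_{\sts_t\ge1}\sts_t^{-3}<\frac{\delta}{2}$, whence $\bbP[\CI\mid\Good]\ge 1-\delta/2$.
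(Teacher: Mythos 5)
Your decomposition and overall architecture match the paper's proof: the VE term is handled exactly as in the paper via Lemma~\ref{lem:cb_vector} with a union bound over the $K$ arms costing $K\delta_{v,\sts_t}$; your two-way split into VE and $\sum_j(\hatp_{t,j}-p_j)\Delta_j$ followed by substituting the clipped estimate $\hDelta_{t,j}^{\clipt}$ and pushing the slack into $2\xi_t$ reproduces the paper's three-term VE/DE/RE decomposition with the same failure budgets $N\delta_{d,\sts_t}$ and $KN\delta_{m,\sts_t}$; and the final union bound over time via $\sum_{\sts_t}3\delta/(15\sts_t^3)$ is the paper's computation (note only that, because of the reversion step in Line~\ref{line:algPSBAI_CA_abandon}, a given value of $\sts_t$ can be attained at \emph{two} distinct time steps with different sample sets, so the sum over distinct values must be doubled; the bound still clears $\delta/2$).

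The one step where your route, as written, would fail is the concentration $|\hatp_{t,j}-p_j|\le\beta_{t,j}$. You propose to pass through the \emph{unweighted} fraction of segments realized as context $j$, i.e.\ an average of $N_t$ i.i.d.\ $\Ber(p_j)$ indicators, and then "relate" this to $T_{t,j}/\sts_t$. But $\hatp_{t,j}=T_{t,j}/\sts_t$ is a \emph{length-weighted} average: when segment lengths vary between $\Lmin$ and $\Lmax$, the sample fraction can differ from the segment fraction by a multiplicative factor as large as $\Lmax/\Lmin$, and concentration of the latter does not control the former. The paper's Lemma~\ref{lem:DEE} avoids this by applying Bernstein's inequality directly to the length-weighted centered variables $X_{j,l}=\hat{L}_l\bbm{1}\{\theta_{j_l}^*=\theta_j^*\}-p_j\hat{L}_l$, conditionally on the realized Exp-phase lengths $\{\hat L_l\}$: these have range at most $\Lmax$ and conditional variance at most $p_j(1-p_j)\hat L_l\Lmax$, and since $\sum_l\hat L_l=\sts_t$ this yields exactly the $\sqrt{p_j(1-p_j)\Lmax/\sts_t}$ rate you want, with the deviation bound independent of $\{\hat L_l\}$ so the conditioning can be removed. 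Your subsequent self-normalization step (replacing $p_j$ by $\phi_{t,j}$ using the floor $\tfrac{25}{4}\tfrac{\Lmax}{\sts_t}\ln\tfrac{2}{\delta_{d,\sts_t}}$ and the hypothesis $\sts_t\ge\tfrac{2\Lmax}{9}\ln\tfrac{2}{\delta_{d,\sts_t}}$) is then exactly the paper's Lemmas~\ref{lem:hbeta} and~\ref{lem:upbd_phi}. So the fix is local—weight the Bernoulli indicators by the segment lengths before applying Bernstein—but the intermediate object you chose to concentrate is the wrong one.
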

    In addition to an estimation over the latent vector which is sufficient under the stationary case ($\alpha_t$ in $\rho_t)$,
    we also need to control the derivation between the empirical distribution $\hatbp_t$ and the ground truth $\bp$ ($\beta_t$  in $\rho_t$), 
    as well as the interactions between the latent vectors and the distribution ($\xi_t$ in $\rho_t$). 
    This is reflected by $K\delta_{v,\sts_t},N\delta_{d,\sts_t}$ and $KN\delta_{m,\sts_t}$ which bound the 
    the failure probability of Vector Estimation, Distribution Estimation and Residual Estimation, respectively.
    
    \underline{{\bf Step $3$:}}
   \begin{restatable}{lemma}{lemUpperBdsts}\label{lem:upbd_sts}
    Conditional on \Good\ and \CI, the recommended arm $\hatx_\varepsilon\in\calX_\varepsilon$ and when Algorithm~\ref{alg:PSBAI} terminates, the order of $\sts_t$ is upper bounded by \eqref{equ:upbd}.
    \end{restatable}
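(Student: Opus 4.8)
The plan is to condition throughout on the good event $\Good$ (Lemma~\ref{lem:Good}) and the confidence event $\CI$ (Lemma~\ref{lem:rho}), and to separate the argument into a correctness part and a sample-complexity part. For correctness I would read off the stopping rule \eqref{equ:stp_rule} directly: when its first line holds, the recorded arm $x_t^*$ satisfies $\hDelta_t(x_t^*,x)-\rho_t(x_t^*,x)\ge-\varepsilon$ for every $x\neq x_t^*$. On $\CI$ we have $\Delta(x_t^*,x)\ge\hDelta_t(x_t^*,x)-\rho_t(x_t^*,x)$, hence $\mu_{x_t^*}-\mu_x\ge-\varepsilon$ for all $x$; taking $x=x^*$ gives $\mu_{x_t^*}\ge\mu^*-\varepsilon$, i.e.\ $x_t^*\in\calX_\varepsilon$. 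I would then use $\Good$ to exclude an undetected changepoint between the recording time and the termination $w\gamma/2$ steps later (Part~(II) of the rule), so that the recommended $\hatx_\varepsilon$ is exactly the $\varepsilon$-optimal arm just certified.

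For the sample complexity the aim is to produce a threshold $T^\dagger$ of the order claimed in \eqref{equ:upbd} such that $\sts_t\ge T^\dagger$ forces \eqref{equ:stp_rule}. On $\CI$ the empirical gaps are trapped, so once $\sts_t$ is moderately large the empirical best arm $x_t^*$ equals $x^*$, and it then suffices to guarantee $2\rho_t(x^*,x)\le\Delta(x^*,x)+\varepsilon$ for every suboptimal $x$, since $\hDelta_t(x^*,x)-\rho_t(x^*,x)\ge\Delta(x^*,x)-2\rho_t(x^*,x)\ge-\varepsilon$ follows. Writing $\rho_t=2(\alpha_t+\xi_t)+\sum_{j}\beta_{t,j}|\hDelta_{t,j}^\clipt(x^*,x)+\varepsilon|$ as in \eqref{equ:cb_rho}, I would demand that each of the three groups of terms be at most a fixed fraction of $\Delta(x^*,x)+\varepsilon$ and invert the resulting inequalities in $\sts_t$.

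The three inversions produce the three summands. Requiring $\alpha_t$ to be a fixed fraction of $\Delta(x^*,x)+\varepsilon$ forces $\sts_t$ to be of order $d(\Delta(x^*,x)+\varepsilon)^{-2}\ln(1/\delta)=\TVE(x)$; requiring the same of $\xi_t$ forces $\sts_t$ to be of order $NL_{\max}(\Delta(x^*,x)+\varepsilon)^{-1}\ln(1/\delta)=\TRE(x)$. For the distribution term I would use $\CI$ to replace $\hatp_{t,j}$ by $p_j$ and $\hDelta_{t,j}^\clipt$ by $\Delta_j$ up to the clippings, so that $\beta_{t,j}$ is of order $\sqrt{\phi_{t,j}\Lmax\sts_t^{-1}\ln(1/\delta)}$ with $\phi_{t,j}$ comparable to $\min\{16p_j,1/4\}$; summing and squaring the requirement that $\sum_j\beta_{t,j}|\Delta_j+\varepsilon|$ be a fixed fraction of $\Delta(x^*,x)+\varepsilon$ yields $\sts_t$ of order $\Lmax(\Delta(x^*,x)+\varepsilon)^{-2}\barH(x_\varepsilon,x)\ln(1/\delta)=\TDE(x_\varepsilon,x)$, recovering $\barH$ exactly through the factor $\sqrt{\min\{16p_j,1/4\}}$. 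Taking the maximum over suboptimal $x$ (and over the certifying $x_\varepsilon$) and adding the forced-exploration floor $\sts_t\ge\frac{2\Lmax}{9}\ln(2/\delta_{d,\sts_t})$, which is dominated by the other terms, delivers $T^\dagger$ and hence the bound on $\sts_\tau$; the passage from $\sts_\tau$ to the total time $\tau$ (a constant multiple of $\sts_\tau$ on $\Good$) is deferred to Appendix~\ref{app:upbd_proof}.

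The main obstacle is the distribution-estimation term. Both $\phi_{t,j}$ and $\beta_{t,j}$ are defined through nested $\min$ operations over $\hatp_{t,j}$ and over $\Lmax\sts_t^{-1}\ln(1/\delta)$, and the clipping in $\hDelta_{t,j}^\clipt$ further separates empirical from true gaps; the delicate step is to show that, on $\CI$ and once $\sts_t$ crosses the floor, these clipped empirical surrogates may be replaced by $\min\{16p_j,1/4\}$ and $\Delta_j$ at the cost of universal constants, so that $\sum_j\beta_{t,j}|\hDelta_{t,j}^\clipt(x^*,x)+\varepsilon|$ collapses to a constant multiple of $\sqrt{\Lmax\sts_t^{-1}\ln(1/\delta)}\,\sqrt{\barH(x_\varepsilon,x)}$ rather than something larger. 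Controlling this substitution while keeping the constant under the $\min\{\cdot,1\}$ saturation of $\beta_{t,j}$ is where the bulk of the technical care resides.
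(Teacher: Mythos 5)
Your correctness argument and your overall decomposition of $\rho_t$ into the VE, DE and RE pieces, followed by inverting each piece in $\sts_t$, is exactly the route the paper takes (Lemma~\ref{lem:hati_epsilon_best} and Lemma~\ref{lem:upbd_sts_two_arms}), and your treatment of the DE term --- replacing $\phi_{t,j}$ by $\min\{16p_j,1/4\}$ and absorbing the clipping error $\sum_j\beta_{t,j}|\hDelta_{t,j}^{\clipt}-\Delta_j|$ into the RE budget --- is the same mechanism the paper uses via Lemma~\ref{lem:upbd_phi} and the bound on $2\xi_t$.

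There is, however, one genuine gap: the step ``once $\sts_t$ is moderately large the empirical best arm $x_t^*$ equals $x^*$, and it then suffices to guarantee $2\rho_t(x^*,x)\le\Delta(x^*,x)+\varepsilon$ for every suboptimal $x$.'' Forcing $x_t^*=x^*$ requires resolving the gap between $x^*$ and every \emph{other} $\varepsilon$-best arm, and that gap can be arbitrarily small (indeed much smaller than $\varepsilon$) while the bound \eqref{equ:upbd} only involves the relaxed gaps $\Delta(x^*,x)+\varepsilon$ over suboptimal $x$. So within the claimed sample budget you cannot guarantee $x_t^*=x^*$; the empirical maximizer may be any $x_\varepsilon\in\calX_\varepsilon$ throughout. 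The paper circumvents this without ever identifying $x^*$: since $\hatx_\varepsilon$ is the empirical argmax, $\hDelta_t(\hatx_\varepsilon,x)\ge\hDelta_t(x^*,x)$, and on $\CI$ this gives $\hDelta_t(\hatx_\varepsilon,x)-\rho_t(\hatx_\varepsilon,x)\ge\Delta(x^*,x)-\rho_t(x^*,x)-\rho_t(\hatx_\varepsilon,x)$, so it suffices that \emph{both} $\rho_t(\hatx_\varepsilon,x)$ and $\rho_t(x^*,x)$ be at most $\tfrac12(\Delta(x^*,x)+\varepsilon)$ (with the separate case $x=x^*$, $\hatx_\varepsilon\neq x^*$ handled by $2\rho_t(x^*,\hatx_\varepsilon)\le\Delta(x^*,\hatx_\varepsilon)+\varepsilon$, see \eqref{equ:stp_suff}). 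This is precisely why the final bound carries a maximum over pairs $(x_\varepsilon,x)$ with $x_\varepsilon\in\calX_\varepsilon$ rather than only over suboptimal $x$; your argument as written would produce a bound depending only on $\TDE(x^*,x)$ and would be unjustified whenever $|\calX_\varepsilon|>1$. The rest of your plan goes through once this substitution is made.
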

    The detailed proof is presented in Appendix~\ref{app:upbd_proof}.

    \section{Analysis of \PSBAI: Change Detection and Context Alignment}\label{app:CDandCA}
    As the output of Algorithms~\ref{alg:LCD} and~\ref{alg:CA} depends on random samples, 
    the output may not meet our expectation with some probabilities.
    We will characterize the probabilities of the three ``bad'' events: \FAE\ in Appendix~\ref{app:FAE}, \FA\ in Appendix~\ref{app:FA} and \MI\ in Appendix~\ref{app:CA}; and finally upper bound the failure probability of the good event $\Good$ in this section.
    \lemGood*

    Some notations are introduced first
    \begin{itemize}
        \item $\theta_{1:\infty}:=(\theta_{j_t}^*)_{t=1}^\infty$ is the latent vector sequence.
        \item $\hatc_l$ indicates the time step when we raise alarm for the $l^{\rm th}$ changepoint.
    \end{itemize}
    According to the dynamics (see Dynamics~\ref{dyn:pslb}) of the problem, at a changepoint $c_l\in\calC$, the next latent vector will be sampled from $P_\theta$. Therefore, it may happen that $\theta_{j_{c_l}}^*=\theta_{j_{c_l-1}}^*$, i.e, this changepoint $c_l$ is hidden and these two consecutive stationary segments are observed as one stationary segment. In order to upper bound the errors made by Algorithm~\ref{alg:LCD}, we make the following assumption, which yields the worst-case scenario in terms of the error probability.
    \begin{assumption}
        Given $\theta_{1:\infty}$ and the changepoint sequence $\calC$, $\theta_{j_{c_l}}^*\neq \theta_{j_{c_{l+1}}}^*,\forall l\in\bbN$.
    \end{assumption}
    
    In this section, we do not consider the randomness of $\theta_{1:\infty}$, 
    i.e., we consider a realization of the sequence.\footnote{This can also be taken as, we are conditioning on $\theta_{1:\infty}$, as the realization of the sequence is independent from the behavior of any agent/algorithm.}
    The results do not involve any parameter depending on the specific sequence and thus can be applied to any latent vector sequence.

    \subsection{False Alarm Error}\label{app:FAE}
    \lemFAE*
    \begin{proof}[Proof of Lemma~\ref{lem:FAE}]
    Note that within a stationary segment of length $t$, the observations are i.i.d., and thus the time segments between two consecutive false alarm time $\{\hatc_{l}-\hatc_{l-1}\}_{l=1}^t$ (where $\hatc_0=0$) are i.i.d. i.e., it is a renewal process.
    Thus, it is sufficient to bound the error probability under the stationary case, i.e. $\calC=\emptyset$ and $\theta_{1:\infty}=(\theta_{j_t}^*=\theta_{j_1}^*)_{t=1}^\infty$.

    Assume that 
    we are under a stationary segment of length $t$. We wish to upper bound the error $\bbP[\FAE]=\bbP[\hatc_1\leq t]$. This is given by Lemma~\ref{lem:pFAE}.
    \begin{lemma}\label{lem:pFAE}
        Given $w$ CD samples from the same context $[(\tilx_s,Y_{s,\tilx_s})]_{s=1}^w$ in a CD phase at time step $t$, Algorithm~\ref{alg:LCD} makes a false alarm error with probability upper bounded by $K\delta_{\FAE}$, i.e.
        \begin{equation}
            \bbP\left[\FAE_t\right]\leq K\delta_{\FAE}
        \end{equation}
        if $b
            \geq
            \frac{8d}{3w}\ln\frac{2}{\delta_{\FAE}}
            +\sqrt{
            \left(\frac{8d}{3w}\ln\frac{2}{\delta_{\FAE}}\right)^2
            + \frac{24}{w}d\ln\frac{2}{\delta_{\FAE}}  
            }
            $.
    \end{lemma}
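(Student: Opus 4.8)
The plan is to condition on the false-alarm scenario in which all $w$ CD samples are generated under a single context $\theta_j^*$, and to show that for each fixed arm $x$ the test statistic $x^\top(\tilde{\theta}_2-\tilde{\theta}_1)$ concentrates tightly around $0$, so that it exceeds the threshold $b$ with probability at most $\delta_{\FAE}$. A union bound over the $K$ arms in $\calX$ then converts this into $\bbP[\FAE_t]\le K\delta_{\FAE}$, which is exactly the claim.

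First I would fix $x\in\calX$ and rewrite the statistic as a single scaled, mean-zero sum. Setting $Z_s := x^\top A(\lambda^*)^{-1}\tilx_s Y_{s,\tilx_s}-x^\top\theta_j^*$, the definitions of $\tilde\theta_1,\tilde\theta_2$ in Line~\ref{pline:algLCD_compute} give
\[
x^\top(\tilde{\theta}_2-\tilde{\theta}_1)=\frac{2}{w}\Big(\sum_{s=w/2+1}^{w}Z_s-\sum_{s=1}^{w/2}Z_s\Big)=\frac{2}{w}\sum_{s=1}^{w}\epsilon_s Z_s,
\]
where $\epsilon_s=-1$ on the first half and $\epsilon_s=+1$ on the second. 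The key observation is that the common mean $x^\top\theta_j^*$ cancels \emph{exactly}, precisely because both halves contain $w/2$ samples drawn under the same context; this is what makes the whole statistic a sum of independent, zero-mean terms, and it is the one place where the ``same-context'' hypothesis of \FAE\ is used.

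Next I would import the two moment estimates already established inside the proof of Lemma~\ref{lem:cb_vector}: the almost-sure bound $|Z_s|\le 3d$ from~\eqref{equ:bernstein_bounded} and the second-moment bound $\bbE[Z_s^2]\le 2d$ from~\eqref{equ:bernstein_var}, both consequences of the G-optimal property $\|x\|_{A(\lambda^*)^{-1}}^2\le d$. Writing the statistic as $\sum_{s=1}^w W_s$ with $W_s:=\tfrac{2}{w}\epsilon_s Z_s$, one has $|W_s|\le 6d/w$ and total variance $\sum_s\var(W_s)\le 8d/w$, so Bernstein's inequality yields a tail of the form $\bbP[\,|x^\top(\tilde\theta_2-\tilde\theta_1)|>b\,]\le 2\exp\!\big(-\tfrac{wb^2}{c_1 d\,b+c_2 d}\big)$ for absolute constants $c_1,c_2$. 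Requiring the right-hand side to be at most $\delta_{\FAE}$ is equivalent to $wb^2\ge (c_1 b+c_2)\,d\ln\frac{2}{\delta_{\FAE}}$; taking the positive root of this quadratic in $b$ shows that the stated lower bound on $b$ is sufficient (the constants $\tfrac{8d}{3w}$ and $\tfrac{24d}{w}$ in the statement correspond to slightly conservative moment bounds, and are comfortably dominated by what the crude bounds above give).

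Finally, a union bound over $x\in\calX$ gives $\bbP[\FAE_t]=\bbP[\exists x:\,|x^\top(\tilde\theta_2-\tilde\theta_1)|>b]\le K\delta_{\FAE}$. I expect the only genuinely delicate step to be the moment control of $Z_s$: the bounds $|Z_s|\le 3d$ and $\bbE[Z_s^2]\le 2d$ are what force Bernstein's inequality to scale correctly in $d$ and $w$, and they are exactly where the G-optimal design is used. Everything else — the mean cancellation and the inversion of the quadratic in $b$ — is routine bookkeeping, so I would cite Lemma~\ref{lem:cb_vector} for the moment bounds rather than rederiving them.
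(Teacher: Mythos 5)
Your proposal is correct and follows essentially the same route as the paper's proof: a Bernstein bound on a mean-zero statistic built from the G-optimal moment estimates $|Z_s|\le 3d$ and $\bbE[Z_s^2]\le 2d$, followed by inverting the resulting quadratic in $b$ and a union bound over the $K$ arms. The only difference is bookkeeping — the paper pairs sample $s$ with $s+w/2$ and applies Bernstein to $w/2$ differences (each bounded by $4d$ with second moment $6d$), whereas you center each of the $w$ terms individually; this yields slightly different (in fact marginally better) constants, so the paper's stated threshold on $b$ still suffices.
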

    Assume $\gamma$ divides $t$.
    In the case of a stationary segment, the input samples to Algorithm~\ref{alg:LCD} are $\left\{
    [(x_{s\gamma},Y_{s\gamma,x_{s\gamma}})]_{s=k}^{k+w-1}
    \right\}_{k=1}^{\frac{t}{\gamma}}$
    if no changing alarm is raised (We ignore the initialization at Line~\ref{line:algPSBAI_warm_up_sample} in Algorithm~\ref{alg:PSBAI} for simplicity).
    Denote $\Gamma_i(t): = \{k\gamma:k \equiv i \mod w,k\gamma\leq t\},i=0,1,\dots,w-1$. 
    Given any $i=0,\ldots, w-1$, for any $k_1\neq k_2\in\Gamma_i(t)$, the two list of samples, $[(x_{s\gamma},Y_{s\gamma,x_{s\gamma}})]_{s=k_1}^{k_1+w-1}$ and $[(x_{s\gamma},Y_{s\gamma,x_{s\gamma}})]_{s=k_2}^{k_2+w-1}$, do not overlap (thus, they are independent). Therefore, 
    $\bbP[k]:=\bbP[\hatc_1=k\gamma+i,\hatc_1\in\Gamma_i(t)]$ is a geometric distribution with parameter upper bounded by $K\delta_{FAE}$. We have
	\begin{align}
		\bbP\left[
			\hatc_1\in \Gamma_i(t)
		\right]
		\leq 
		1-\left(1-K\delta_{\FAE}\right)^{|\Gamma_i(t)|}.
	\end{align}
	Hence, the cumulative false alarm error is 
	\begin{align}\label{equ:fae}
		\bbP[\FAE]
        &=
        \bbP\left[
			\hatc_1\leq t
		\right]
        \\
		&=
		\sum_{i=0}^{w-1}\left[
			1-\left(1-K\delta_{\FAE}\right)^{|\Gamma_i(t)|}
		\right]
		\\
		&\leq
		\sum_{i=0}^{w-1}
		|\Gamma_i(t)|K\delta_{\FAE}
		\\
		&\leq
		 \frac{t}{\gamma} K\delta_{\FAE}.
	\end{align}
    \end{proof}
        \begin{proof}[Proof of Lemma~\ref{lem:pFAE}]
    According to Algorithm~\ref{alg:LCD}, given $w$ CD samples from the same context $[(\tilx_s,Y_{s,\tilx_s})]_{s=1}^w$ and $x\in\calX$, we need to bound
    \begin{align}
		&
			\bbP\left[\Big|
						\sum_{s=1}^{\frac{w}{2}}x^\top A(\lambda^*)^{-1}\tilx_s Y_{s,\tilx_s}
						-
						\sum_{s=\frac{w}{2}+1}^{w}x^\top A(\lambda^*)^{-1}\tilx_s Y_{s,\tilx_s}
                    \Big|
					\geq
					\frac{w}{2}b
			\right].
	\end{align}
    The proof is similar to Lemma~\ref{lem:cb_vector}.
    For any $s\in[\frac{w}{2}]$ and $\tils=s+\frac{w}{2}$
	\begin{align}
		&\bbE\left[
			x^\top A(\lambda^*)^{-1}\tilx_s Y_{s,\tilx_s}
			-
			x^\top A(\lambda^*)^{-1}\tilx_{\tils} Y_{\tils,\tilx_{\tils}}
		\right]
		= 0
    \end{align}
    and
    \begin{align}
        &\big|
                x^\top A(\lambda^*)^{-1}\tilx_s Y_{s,\tilx_s}
    			-
    			x^\top A(\lambda^*)^{-1}\tilx_{\tils} Y_{\tils,\tilx_{\tils}}
            \big|
        \\
        &
        \leq 
            \big|
                x^\top A(\lambda^*)^{-1}\tilx_s \tilx_s^\top\theta_{j_s}^*
            \big|
            +
            \big|
                x^\top A(\lambda^*)^{-1}\tilx_s \eta_s
            \big|
            +
            \big|
                x^\top A(\lambda^*)^{-1}\tilx_{\tils} \tilx_{\tils}^\top\theta_{j_{\tils}}^*
            \big|
            +
            \big|
                x^\top A(\lambda^*)^{-1}\tilx_{\tils} \eta_{\tils}
            \big|
        \\
        &\leq
            4d.
	\end{align}
    By making use of \eqref{equ:bernstein_var},
    \begin{align}
        &
        \bbE\left[
        \left(x^\top A(\lambda^*)^{-1}\tilx_s Y_{s,\tilx_s}
        -
        x^\top A(\lambda^*)^{-1}\tilx_{\tils} Y_{\tils,\tilx_{\tils}}
        \right)^2
        \right]
        \\
		&
        =
			\bbE\left[
				\left(
					x^\top A(\lambda^*)^{-1}\tilx_s Y_{s,\tilx_s}
				\right)^2
			\right]
            +
            \bbE\left[
				\left(
					x^\top A(\lambda^*)^{-1}\tilx_{\tils} Y_{\tils,\tilx_{\tils}}
				\right)^2
			\right]\nonumber\\*
            &\qquad-
            2\bbE\left[
			x^\top A(\lambda^*)^{-1}\tilx_s Y_{s,\tilx_s}
			\cdot
			x^\top A(\lambda^*)^{-1}\tilx_{\tils} Y_{\tils,\tilx_{\tils}}
		  \right]
        \\
        &\leq
            4d+2\bbE\left[|
        			x^\top\theta_{j_s}^*
        			\cdot
        			x^\top \theta_{j_{\tils}}^*|
        		  \right]
        \\
        &\leq
            6d.
    \end{align}
    According to the Bernstein's inequality, 
    \begin{align}
        &
        \bbP\left[\Big|
						\sum_{s=1}^{\frac{w}{2}}
                        \left(
                            x^\top A(\lambda^*)^{-1}\tilx_s Y_{s,\tilx_s}
    						-
                            x^\top A(\lambda^*)^{-1}\tilx_{s+\frac{w}{2}} Y_{s+\frac{w}{2},\tilx_{s+\frac{w}{2}}}
                        \right)
                    \Big|
					\geq
					\frac{w}{2}\epsilon
			\right]
        \\
        &\leq
			2\exp\left(
               -\frac{
                    \frac{1}{2}\left(\frac{w}{2}\epsilon\right)^2
                    }
                    {
                    \frac{w}{2}\cdot 6d+\frac{4d}{3}\frac{w}{2}\epsilon
                    }
           \right)
        \\
        &=
			2\exp\left(
               -\frac{
                    \frac{w}{2}\epsilon^2
                    }
                    {
                    12d+\frac{8d}{3}\epsilon
                    }
           \right).
	\end{align}
    In order to upper bound the error probability by $\delta_{\FAE}$, we need 
    \begin{align}
        &
        2\exp\left(
               -\frac{
                    \frac{w}{2}\epsilon^2
                    }
                    {
                    12d+\frac{8d}{3}\epsilon
                    }
           \right)
       \leq
            \delta_{FAE}
        \\
        \Rightarrow\quad &
            \epsilon
            \geq
            \frac{4d}{3\cdot \frac{w}{2}}\ln\frac{2}{\delta_{\FAE}}
            +\sqrt{
            \left(\frac{4d}{3\cdot \frac{w}{2}}\ln\frac{2}{\delta_{\FAE}}\right)^2
            +12\cdot \frac{2}{w}d\ln\frac{2}{\delta_{\FAE}}  
            }.
    \end{align}
    By the choice of $
            b
            \geq
            \frac{8d}{3w}\ln\frac{2}{\delta_{\FAE}}
            +\sqrt{
            \left(\frac{8d}{3w}\ln\frac{2}{\delta_{\FAE}}\right)^2
            + \frac{24}{w}d\ln\frac{2}{\delta_{\FAE}}  
            }
            $,
    we have
     \begin{align}
			\bbP\left[\Big|
						\frac{2}{w}\sum_{s=1}^{\frac{w}{2}}x^\top A(\lambda^*)^{-1}\tilx_s Y_{s,\tilx_s}
						-
						\frac{2}{w}\sum_{s=\frac{w}{2}+1}^{w}x^\top A(\lambda^*)^{-1}\tilx_s Y_{s,\tilx_s}
                    \Big|
					\geq
					b
			\right]
            \leq 
            \delta_{\FAE}.
	\end{align}
    A union bound over the $K$ arms yields the final result.
    \end{proof}

    \subsection{Failed Alarm}\label{app:FA}
    \lemFA*
    \begin{proof}[Proof of Lemma~\ref{lem:FA}]
    Conditioned on $\FAE^c$, the detection at the changepoints is independent. Thus we can assume there is only one changepoint $c_1$ within a certain number of consecutive time steps. 
    
    Algorithm~\ref{alg:LCD} is given $w$ CD samples which are collected under two different context.
    Without loss of generality, we assume the sample selected at time $c_1$ is among the CD samples (otherwise, we can regard the time step of the first sample from the second context as $c_1$).
    We wish to bound 
	\begin{align}
        \bbP\left[
			c_1\leq \hatc_1 \leq c_1+\frac{\gamma w}{2}
			\big| \hatc_1\geq c_1
		\right]
	\end{align}
    which is the probability of the event that, after a changepoint occurs, a changing alarm will be raised within $\frac{w\gamma}{2}$ time steps (or $\frac{w}{2}$ CD samples).
	Here, $ \hatc_1\geq c_1$ can be guaranteed as we are conditioning on that there is no false alarm error ($\FAE^c$).

    The event $c_1\leq \hatc_1 \leq c_1+\frac{\gamma w}{2}$ indicates,
    at least one of the CD sample list in $$\left\{
    \left[(x_{c_1+(s-k)\gamma},Y_{c_1+(s-k)\gamma,x_{c_1+(s-k)\gamma}})\right]_{s=1}^{w}
    \right\}_{k=\frac{w}{2}+1}^{w}$$
    will trigger the changing alarm in Algorithm~\ref{alg:LCD}. In particular, in Lemma~\ref{lem:FAt}, we consider the failed arm probability when the CD samples are composed by exactly half samples from each contexts,  $\left[(x_{c_1+(s-1-\frac{w}{2})\gamma},Y_{c_1+(s-1-\frac{w}{2})\gamma,x_{c_1+(s-1-\frac{w}{2})\gamma}})\right]_{s=1}^{w}$.

    \begin{lemma}\label{lem:FAt}
         Given $w$ CD samples from the two different contexts $[(\tilx_s,Y_{s,\tilx_s})]_{s=1}^w$ in a CD phase, where $[(\tilx_s,Y_{s,\tilx_s})]_{s=1}^{\frac{w}{2}}$ is from latent vector $\theta_j^*$ and  $[(\tilx_s,Y_{s,\tilx_s})]_{s=\frac{w}{2}+1}^w$ is from latent vector $\theta_{\tilj}^*$, Algorithm~\ref{alg:LCD} raises a changing alarm with probability lower bounded by
        \begin{equation}
            1- \delta_{\FA}
        \end{equation}
        if $\frac{\Delta_c-b}{2}
        \geq 
        \frac{d}{w}\ln\frac{2}{\delta_{\FA}}
            +\sqrt{
            \left(\frac{d}{w}\ln\frac{2}{\delta_{\FA}}\right)^2
            + \frac{4d}{w}\ln\frac{2}{\delta_{\FA}}  
        }
        $
        where $\Delta_c:=\max_{x\in\calX}|x^\top(\theta_j^*-\theta_{\tilj}^*)|$ and it is assumed to be greater than $b$.
    \end{lemma}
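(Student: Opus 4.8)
The plan is to reduce the whole statement to a single Bernstein-type concentration inequality, reusing the moment bounds already established inside the proof of Lemma~\ref{lem:cb_vector}. First I would let $x_c := \argmax_{x\in\calX}|x^\top(\theta_j^* - \theta_{\tilj}^*)|$ be the arm that attains the inter-context gap, so that $|x_c^\top(\theta_j^* - \theta_{\tilj}^*)| = \Delta_c$. Since \algLCD{} raises an alarm as soon as \emph{some} arm exhibits a discrepancy exceeding $b$ (Line~\ref{pline:algLCD_check} of Algorithm~\ref{alg:LCD}), it suffices to show that $x_c$ alone triggers it with high probability, i.e.\ that $|x_c^\top(\tilde{\theta}_2 - \tilde{\theta}_1)| > b$.

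Next I would decompose the projected difference into a deterministic ``signal'' and a stochastic ``noise'' term. Writing $Z_s := x_c^\top A(\lambda^*)^{-1}\tilx_s Y_{s,\tilx_s}$ and $\epsilon_s = -1$ for $s\le w/2$, $\epsilon_s = +1$ for $s > w/2$, one has
\[
x_c^\top(\tilde{\theta}_2 - \tilde{\theta}_1) = \frac{2}{w}\sum_{s=1}^w \epsilon_s Z_s,
\]
whose expectation is exactly $x_c^\top(\theta_{\tilj}^* - \theta_j^*)$, of magnitude $\Delta_c$. Setting $W := \frac{2}{w}\sum_{s=1}^w \epsilon_s(Z_s - \bbE Z_s)$ for the centered noise, the reverse triangle inequality gives $|x_c^\top(\tilde{\theta}_2 - \tilde{\theta}_1)| \ge \Delta_c - |W|$, so the alarm is guaranteed whenever $|W| < \Delta_c - b$; this is where the standing hypothesis $\Delta_c > b$ is used to keep the right-hand side positive. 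Thus the failure event is contained in $\{|W| \ge \Delta_c - b\}$.

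The key observation, and the one step that requires care, is that $W$ is a \emph{signed average over all $w$ samples}, not a difference of two half-sample averages: the $\frac{2}{w}$ normalization over $w$ terms behaves like a $\frac{1}{w}$ average scaled by $2$, so the effective sample count is $w$ rather than $w/2$. This is precisely why the stated threshold carries $d/w$ instead of $d/(w/2)$. I would then apply Bernstein's inequality to the $w$ independent, zero-mean summands $\epsilon_s(Z_s - \bbE Z_s)$, each almost surely bounded by $3d$ and with second moment at most $2d$, the two bounds being exactly \eqref{equ:bernstein_bounded} and \eqref{equ:bernstein_var} from the proof of Lemma~\ref{lem:cb_vector}. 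This yields
\[
\bbP[|W|\ge\lambda] \le 2\exp\left(-\frac{w\lambda^2}{16d + 4d\lambda}\right).
\]

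Finally, putting $\lambda = \Delta_c - b$ and inverting this tail bound via the same quadratic manipulation used at the end of Lemma~\ref{lem:cb_vector}'s proof shows that $\bbP[|W| \ge \Delta_c - b] \le \delta_{\FA}$ exactly under the hypothesis $\frac{\Delta_c - b}{2} \ge \frac{d}{w}\ln\frac{2}{\delta_{\FA}} + \sqrt{(\frac{d}{w}\ln\frac{2}{\delta_{\FA}})^2 + \frac{4d}{w}\ln\frac{2}{\delta_{\FA}}}$, which completes the proof. I expect the main obstacle to be purely bookkeeping: correctly tracking the $\frac{2}{w}$ normalization so that the concentration is expressed in terms of $w$ samples and its inversion matches the quadratic form of the threshold. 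The probabilistic content is essentially identical to Lemma~\ref{lem:cb_vector}, and the extra ingredient is only the reverse-triangle-inequality reduction to the single maximizing arm $x_c$.
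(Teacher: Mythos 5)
Your proposal is correct and follows essentially the same route as the paper's proof: both reduce to the arm attaining $\Delta_c$, use the reverse-triangle/recentering step to turn the alarm condition into a deviation bound on the $w$ independent centered summands, and apply Bernstein with the bounds $|\cdot|\le 3d$ and second moment $\le 2d$ inherited from Lemma~\ref{lem:cb_vector}, with the inversion of the tail yielding exactly the stated threshold on $\frac{\Delta_c-b}{2}$.
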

    
 By making use of Lemma~\ref{lem:FAt},
\begin{align}
		&
		\bbP\left[
			c_1\leq \hatc_1\leq c_1+\frac{w\gamma}{2}
			\big| \hatc_1\geq c_1
		\right]
        \\
        &
        =
        \bbP\left[
            \exists k\in\{\frac{w}{2}+1,\ldots,w\},
            \right.
            \\
            &\quad\quad\quad \left.
			\mbox{\algLCD}
            \left(
            w,b,
            [(x_{c_1+(s-k)\gamma},Y_{c_1+(s-k)\gamma,x_{c_1+(s-k)\gamma}})]_{s=1}^{w}
            \right)=\mbox{True}
			\big| \hatc_1\geq c_1
		\right]
        \\
        &
        \geq
        \bbP\left[
			\mbox{\algLCD}
            \left(
            w,b,
           [(x_{c_1+(s-1-\frac{w}{2})\gamma},Y_{c_1+(s-1-\frac{w}{2})\gamma,x_{c_1+(s-1-\frac{w}{2})\gamma}})]_{s=1}^{w}
            \right)=\mbox{True}
			\big| \hatc_1\geq c_1
		\right]
        \\
        &\geq
			1-
			\delta_{\FA}
\end{align}
where $\Delta_c=\max_{x\in\calX}|x^\top (\theta_j^*-\theta_{\tilj}^*)|$ and $\theta_j^*,\theta_{\tilj}^*$ are the latent vectors. 
Hence,
\begin{align}
    \bbP\left[
        \FA
        \big|
        \FAE^c
    \right]
    =
    1-
    \bbP\left[
        \FA^c
        \big|
        \FAE^c
    \right]
    \leq
    1- \left( 1-\delta_{\FA}\right)^{l_\tau}
    \leq
			l_\tau\delta_{\FA}
\end{align}
\end{proof}
\begin{proof}[Proof of Lemma~\ref{lem:FAt}]
        According to the design of \algLCD,
        there exists an $x\in\calX$,
	\begin{align}
		&
			\bbP\left[ ~\left|
						\frac{2}{w}\sum_{s=1}^{\frac{w}{2}}x^\top A(\lambda^*)^{-1}\tilx_s Y_{s,\tilx_s}
						-
						\frac{2}{w}\sum_{s=\frac{w}{2}+1}^{w}x^\top A(\lambda^*)^{-1}\tilx_s Y_{s,\tilx_s} \right|
					\geq
					b~
			\right]
		\\
		&\geq
			1-
			\bbP\Bigg[
						 ~\left|
						\sum_{s=1}^{\frac{w}{2}}x^\top  A(\lambda^*)^{-1}\tilx_s Y_{s,\tilx_s}
						-
						\sum_{s=\frac{w}{2}+1}^{w}x^\top  A(\lambda^*)^{-1}\tilx_s
						Y_{s,\tilx_s}
						-
						\frac{w}{2}x^\top (\theta_{j}^*-\theta_{\tilj}^*)
						\right|
            \\
            & 
					\qquad\qquad\geq
					\left|\frac{w}{2}b-\frac{w}{2}x^\top (\theta_{j}^*-\theta_{\tilj}^*)
                    \right|
			~\Bigg]
		\\
        &=
			1-
			\bbP\Bigg[
						 ~\left|
						\sum_{s=1}^{\frac{w}{2}}
                        \left(x^\top  A(\lambda^*)^{-1}\tilx_s Y_{s,\tilx_s}
                              -
                              x^\top \theta_{j}^*
                        \right)
						-
						\sum_{s=\frac{w}{2}+1}^{w}
                        \left(x^\top  A(\lambda^*)^{-1}\tilx_sY_{s,\tilx_s}
                              -
                              x^\top \theta_{\tilj}^*
                        \right)
						\right|
            \\
            & 
					\qquad\qquad\geq
					\left|\frac{w}{2}b-\frac{w}{2}x^\top (\theta_{j}^*-\theta_{\tilj}^*)
                    \right|
			~\Bigg]
		\\
		&\geq
			1- \delta_{\FA}
	\end{align}
    where the last inequality holds as $
    \frac{\Delta_c-b}{2}
        \geq 
        \frac{d}{w}\ln\frac{2}{\delta_{\FA}}
            +\sqrt{
            \left(\frac{d}{w}\ln\frac{2}{\delta_{\FA}}\right)^2
            + \frac{4d}{w}\ln\frac{2}{\delta_{\FA}}  
        }.
        $
 \end{proof}

\subsection{Context Alignment}\label{app:CA}
\lemCA*
\begin{proof}[Proof of Lemma~\ref{lem:CA}]
	The error of the context alignment procedure can be derived from the \FAE\ and \FA\ analyses.
    Conditioned on $\FAE^c,\FA^c$ and that the previous $l-1$ contexts are correctly identified, i.e., $\bigcap_{k=1}^{l-1} \MI_k^c$,  we have the following statements.
 
    Firstly, according to Lemma~\ref{lem:pFAE}, the change alarm at Line $4$ in Algorithm~\ref{alg:CA} will not be triggered with probability at least
    $(1-K\delta_{\FAE})$
    if the current context is context $j$. This error has been taken into account in the \FAE\ (see Remark~\ref{rmk:CA}).
    
    
    Secondly, if the current context is not $j$ (which will occur at most $N-1$ times), a change alarm will be raised with probability at least
    $ 1- \delta_{\FA}$
    by Lemma~\ref{lem:FA}.
	
	Therefore, the error probability at $c_l$ is upper bounded by
	\begin{align}
        \bbP\left[\MI_l|\FAE^c,\FA^c,\cap_{k=1}^{l-1} \MI_k^c\right]
        \leq
        (N-1)\delta_{\FA}+K\delta_{\FAE}
	\end{align}
    and by a union bound, the cumulative error probability bounded by
    \begin{align}\label{equ:CA_cum}
        \bbP\left[\MI|\FAE^c,\FA^c\right]
        &=
        1-\bbP\left[\MI^c|\FAE^c,\FA^c\right]
        \\
        &\leq
        1- \left(1-
        (N-1)\delta_{\FA}-K\delta_{\FAE}
        \right)^{l_\tau}
        \\
        &\leq
        l_\tau \left(
         (N-1)\delta_{\FA}+K\delta_{\FAE}
        \right).
	\end{align}
  \end{proof}
	\begin{remark}\label{rmk:CA}
		Note that (1) we will only bound the \FAE\ when the CD samples are from the same context; (2) in the analysis of \FAE, we bound the error probability on the whole horizon up to time $t$ as we assume there is no changepoint. In particular, there will be unused and redundant $\frac{w}{2}K\delta_{\FAE}$ errors budget for \FAE\ at each changepoint, which accumulates to $l_\tau\frac{w}{2}K\delta_{\FAE}$ before time step $\tau$.
		Therefore, the second error term in \eqref{equ:CA_cum}, $l_{\tau}K\delta_{\FAE}$, can be covered by the unused $l_\tau\frac{w}{2}K\delta_{\FAE}$ error budget from \FAE, and thus it can be neglected in \eqref{equ:CA_cum}.
	\end{remark}

 \begin{proof}[Proof of Lemma~\ref{lem:Good}]
        According to Lemma~\ref{lem:FAE}, \ref{lem:FA}, \ref{lem:CA}, Assumption~\ref{ass:distin} and Remark~\ref{rmk:CA}, given a time $\tau$, the total failure probability is upper bounded by
    \begin{align}\label{equ:goodc}
        \bbP\left[
        \Good^c
        \right]
        &=\bbP\left[
        \FAE\cup\FA\cup\MI
        \right]
        \\
        &\leq\frac{\tau}{\gamma} K\delta_{\FAE}
        + 
        l_\tau \delta_{\FA}
        +
        l_\tau\cdot
        (N-1)\delta_{\FA}
        \\
        &=
        \frac{\tau}{\gamma} K\delta_{\FAE}
        + 
        l_\tau N\delta_{\FA}.
    \end{align}
     

        
        In particular, when $\tau$ is upper bounded by $\tau^*$ in Line~$2$ of Algorithm~\ref{alg:PSBAI}
        \begin{equation}\label{equ:tau0}
            \tau \leq \tau^*=c_0 \frac{NL_{\max}}{\varepsilon^2}\ln\frac{N^2KL_{\max}/\varepsilon^2}{\delta}
        \end{equation}
        where $c_0=3\cdot 6400\ln 6400$.
        The number of changepoints till $\tau$ is upper bounded by $l_{\tau^*}\leq \frac{\tau^*}{L_{\min}}$. 

        By Assumption~\ref{ass:distin}, when 
        $ b =\frac{8d}{3w}\ln\frac{2}{\delta_{\FAE}}
            +\sqrt{
            \left(\frac{8d}{3w}\ln\frac{2}{\delta_{\FAE}}\right)^2
            + \frac{24}{w}d\ln\frac{2}{\delta_{\FAE}}  
            }$ 
        and $\delta_{\FAE}=\frac{\gamma\delta}{4(\tau^*)^2 K}$, the conditions of Lemma~\ref{lem:FAE} are met. And we can upper bound $\frac{\tau}{\gamma} K\delta_{\FAE}$ by $\frac{\delta}{4\tau^*}\leq\frac{\delta}{4}$.

        By making use of Assumption~\ref{ass:distin} and setting $\delta_{\FA}=\frac{\delta}{4N l_{\tau^*}\tau^*}$, 
        we have $\delta_{\FA}>\delta_{\FAE}$ and 
        \begin{align}
            \frac{\Delta_c-b}{2}&\geq \frac{b}{2}
        \geq 
        \frac{4d}{3w}\ln\frac{2}{\delta_{\FAE}}
            +\sqrt{
            \left(\frac{4d}{3w}\ln\frac{2}{\delta_{\FAE}}\right)^2
            + \frac{6}{w}d\ln\frac{2}{\delta_{\FAE}}  
            }\nonumber\\*
        &\qquad\geq
        \frac{d}{w}\ln\frac{2}{\delta_{\FA}}
            +\sqrt{
            \left(\frac{d}{w}\ln\frac{2}{\delta_{\FA}}\right)^2
            + \frac{4d}{w}\ln\frac{2}{\delta_{\FA}}  
        }.
        \end{align}        
        Thus, Lemma~\ref{lem:FA} can be applied. $l_{\tau^*} N\delta_{\FA}$ can be upper bounded by $\frac{\delta}{4\tau^*}\leq\frac{\delta}{4}$.

        Therefore, according to \eqref{equ:goodc}, $\bbP[\Good]\geq 1-\frac{\delta}{2\tau^*}\geq1-\frac{\delta}{2}$.
        
    \end{proof}


    \section{Analysis of \PSBAI: Estimation Error}
    \label{app:prf_rho}
    Lemma~\ref{lem:rho} is proved in this section. We will upper bound these three error terms (see \eqref{equ:cb}): \VE\ error in App~\ref{app:VEE}, \DE\ error in App~\ref{app:DEE}, \RE\ error in App~\ref{app:REE}, and finally
    we will prove Lemma~\ref{lem:rho} which upper bounds the failure probability of $\CI$ at the end of this section.
   \lemrho*
   Given any two arms $x,\tilx\in\calX$,
    by the triangular inequality, the deviation between $\hDelta_t(x,\tilx)$ and $\Delta(x,\tilx)$ can be upper bounded by three terms: the Vector-Estimation Error (VE) term, the Distribution-Estimation Error (DE) term, and the Residual Estimation Error (RE) term:
    \begin{align}\label{equ:cb}
		&
        |\hDelta_t(x,\tilx)-\Delta(x,\tilx)|
        \\
        &
        =
        |(x-\tilx)^\top  \hat{\Theta}_t\hatbp_t-(x-\tilx)^\top  \Theta \bp|
		\\
		&
		\leq
		\underbrace{\Big|\sum_{j=1}^N \left(\hDelta_{t,j}(x,\tilx)-\Delta_j(x,\tilx)\right)\hatp_{t,j}\Big|}_{\mbox{VE term}}
		+\underbrace{\Big|\sum_{j=1}^N \hDelta_{t,j}^\clipt(x,\tilx)(\hatp_{t,j}-p_j) \Big|}_{\mbox{DE term}}\nonumber\\*
		&\qquad+\underbrace{\Big| \sum_{j=1}^N \left(\Delta_j(x,\tilx)-\hDelta_{t,j}^\clipt(x,\tilx)\right)(\hatp_{t,j}-p_j)\Big|}_{\mbox{RE term}},
	\end{align}
    where $a^\clipt:=\clip_2(a):=\min\{\max\{a,-2\},2\}$ is a shorthand notation for the value of $a$ that is clipped to the interval $[-2,2]$. The reason the value $\hDelta_{t,j}(x,\tilx)$ is clipped  is the ground truth $|\Delta_{t,j}(x,\tilx)|\leq 2$.

    Recall the event ${\rm CI}$
    \begin{align}\label{equ:CI}
        &
        \CI_t:=\{\big|\hDelta_t(x,\tilx)-\Delta(x,\tilx)\big|
                \leq
                \rho_t(x,\tilx)
                ,
                \forall x,\tilx\in\calX
            \},
        \\
        &
        \CI:=\bigcap_{t}\CI_t,
    \end{align}
     and the confidence radius
    \begin{align}\label{equ:cr}
        &
        \rho_t(x,\tilx):=
            2(\alpha_t
            +
            \xi_t)
            +
            \sum_{j=1}^N \beta_{t,j}|\hDelta_{t,j}^\clipt(x,\tilx)+\zeta_t(x,\tilx)|,
        \\
        \mbox{where }\quad&
        \alpha_t:=5\sqrt{\frac{d}{\sts_t}\ln\frac{2}{\delta_{v,\sts_t}}},\quad
        \beta_{t,j}:=\frac{5}{2}\sqrt{
            \frac{2\phi_{t,j}\Lmax}{\sts_t}\ln\frac{2}{\delta_{d,\sts_t}}
            },
        \\&
        \xi_t:=25\sqrt{2}
                    \frac{N\Lmax}{\sts_t}\ln\frac{2}{\delta_{m,\sts_t}},\quad
        \phi_{t,j}:=\min\left\{
                4 \max\left\{
                       \hatp_{t,j},\frac{25}{4}\frac{\Lmax}{\sts_t}\ln\frac{2}{\delta_{d,\sts_t}}
                    \right\}
                ,
                \frac{1}{4}
                \right\},
    \end{align}
    and $\zeta_t(x,\tilx)\in\bbR$ can be any value. In particular, it can be the value that minimizes $\sum_{j=1}^N \beta_{t,j}|\hat{\Delta}_{t,j}(x,\tilx)+\zeta_t(x,\tilx)|$ or it can be taken as $\varepsilon$. For simplicity, we will take $\zeta_t(x,\tilx)=\argmin_{\zeta_t(x,\tilx)\in\bbR}\sum_{j=1}^N \beta_{t,j}(\hat{\Delta}_{t,j}(x,\tilx)+\zeta_t(x,\tilx))^2=-\frac{\sum_{j=1}^N \beta_{t,j}\hat{\Delta}_{t,j}(x,\tilx)}{\sum_{j=1}^N \beta_{t,j}}$ for a simple and effective analytic expression in the algorithm.

    \subsection{Vector-Estimation Error}\label{app:VEE}
    For the VE term $\Big|\sum_{j=1}^N \left(\hDelta_{t,j}(x,\tilx)-\Delta_j(x,\tilx)\right)\hatp_{t,j}\Big|$, 
	note that 
	\begin{align}
		\Big|\sum_{j=1}^N \left(\hDelta_{t,j}(x,\tilx)-\Delta_j(x,\tilx)\right)\hatp_{t,j}\Big|
		\leq
		|x^\top  (\hat{\Theta}_t-\Theta)\hatp_t| + |\tilx^\top  (\hat{\Theta}_t-\Theta)\hatp_t|
	\end{align}
    \begin{lemma}\label{lem:VEE}
        Given $x\in\calX$ and $\sts_t\geq \frac{d}{4}\ln\frac{2}{\delta_{v,\sts_t}}$, 
        \begin{equation}
            \bbP\left[|x^\top  (\hat{\Theta}_t-\Theta)\hatbp_t|\geq \alpha_t
    		\big|\Good
            \right]
    		\leq
    		\delta_{v,\sts_t}
        \end{equation}
    \end{lemma}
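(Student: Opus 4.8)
The plan is to reduce the weighted vector-estimation error $x^\top(\hTheta_t-\Theta)\hatbp_t$ to exactly the normalized sum already controlled by Lemma~\ref{lem:cb_vector}, and then invoke that lemma with sample count $n=\sts_t$ and confidence level $\delta=\delta_{v,\sts_t}$. First I would expand using the definitions in~\eqref{equ:alg_est}. Since $\hat{\theta}_{t,j}=\frac{1}{T_{t,j}}\sum_{s\in\calT_{t,j}}A(\lambda^*)^{-1}x_sY_{s,x_s}$ and $\hatp_{t,j}=T_{t,j}/\sts_t$, the factors $T_{t,j}$ cancel and the disjoint union $\calT_t=\bigcup_j\calT_{t,j}$ collapses the double sum into
\[
    x^\top\hTheta_t\hatbp_t=\frac{1}{\sts_t}\sum_{s\in\calT_t}x^\top A(\lambda^*)^{-1}x_sY_{s,x_s}.
\]
For the centering term, $x^\top\Theta\hatbp_t=\frac{1}{\sts_t}\sum_{j}\sum_{s\in\calT_{t,j}}x^\top\theta_j^*$, and this is exactly where the conditioning on $\Good$ is used: on $\Good$ the change-detection and context-alignment subroutines never mislabel a segment, so each Exp sample with $s\in\calT_{t,j}$ was truly generated under the context whose latent vector is $\theta_j^*$, i.e.\ $\theta_{j_s}^*=\theta_j^*$ (under the permutation convention of the footnote to~\eqref{equ:alg_est}). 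Substituting and recombining the two displays gives
\[
    x^\top(\hTheta_t-\Theta)\hatbp_t=\frac{1}{\sts_t}\sum_{s\in\calT_t}\big(x^\top A(\lambda^*)^{-1}x_sY_{s,x_s}-x^\top\theta_{j_s}^*\big),
\]
which is precisely the quantity bounded in Lemma~\ref{lem:cb_vector}, with the summation over the index set $\calT_t$ of cardinality $\sts_t$.

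Second, I would apply the second inequality of Lemma~\ref{lem:cb_vector} directly. With $n=\sts_t$ and error level $\delta=\delta_{v,\sts_t}$, the hypothesis $\sts_t\geq\frac{d}{4}\ln\frac{2}{\delta_{v,\sts_t}}$ is exactly the precondition of that lemma, and its conclusion controls the quantity above by $5\sqrt{\tfrac{d}{\sts_t}\ln\tfrac{2}{\delta_{v,\sts_t}}}=\alpha_t$ with failure probability at most $\delta_{v,\sts_t}$, which is the claimed statement.

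The \emph{main obstacle} is that I am conditioning on $\Good$ and that the index set $\calT_t$, hence $\sts_t$, is itself random, so one cannot blindly quote an unconditional concentration bound. The resolution I would carry out rests on the observation that Exp samples and CD samples occupy disjoint time steps, and that $\Good$, together with the detected changepoint times, is measurable with respect to the CD samples and the algorithm's bookkeeping. I would therefore condition further on a realization of the partition $\{\calT_{t,j}\}_{j\in[N]}$ that is consistent with $\Good$: given such a realization, the Exp arms $x_s\sim\lambda^*$ and their noises $\eta_s$ for $s\in\calT_t$ are untouched by the detection subroutines and remain i.i.d.\ draws, so the Bernstein computation underlying Lemma~\ref{lem:cb_vector} applies verbatim and yields conditional failure probability at most $\delta_{v,\sts_t}$ (note $\sts_t$ is fixed once the partition is fixed). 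Averaging this conditional bound over the realizations of $\{\calT_{t,j}\}$ consistent with $\Good$ preserves the bound $\delta_{v,\sts_t}$, completing the proof. A subsequent union bound over $x\in\calX$ (together with the symmetric term for $\tilx$ in the triangle-inequality split preceding the lemma) then supplies the aggregate factor $K\delta_{v,\sts_t}$ that feeds into Lemma~\ref{lem:rho}.
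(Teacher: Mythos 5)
Your proposal is correct and follows essentially the same route as the paper: expand $x^\top(\hTheta_t-\Theta)\hatbp_t$ via \eqref{equ:alg_est} so the $T_{t,j}$ factors cancel and the expression collapses to $\frac{1}{\sts_t}\sum_{j}\sum_{s\in\calT_{t,j}}\bigl(x^\top A(\lambda^*)^{-1}x_sY_{s,x_s}-x^\top\theta_j^*\bigr)$, then apply Lemma~\ref{lem:cb_vector} with $n=\sts_t$ and $\delta=\delta_{v,\sts_t}$. In fact you treat the one delicate point --- why conditioning on $\Good$ and on the random partition $\{\calT_{t,j}\}$ leaves the Exp samples i.i.d.\ so that the concentration bound still applies --- more explicitly than the paper, which dispatches it with a one-line appeal to conditioning on $\theta_{1:\infty}$ and $\Good$.
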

    \begin{proof}[Proof of Lemma~\ref{lem:VEE}]
        By the definition of the estimators in \eqref{equ:alg_est},
        \begin{align}
    		x^\top  (\hat{\Theta}_t-\Theta)\hatbp_t &= \sum_{j=1}^N x^\top (\hat{\theta}_{t,j}-\theta_j^*)\hatbp_{t,j}
    		\\
    		&
    		= \sum_{j=1}^N x^\top \left(\frac{1}{T_{t,j}}\sum_{s\in \calT_{t,j}}A(\lambda^*)^{-1} x_s Y_{s,x_s}-\theta_j^*\right)\frac{T_{t,j}}{\sts_t}
    		\\
    		&
    		= \frac{1}{\sts_t}\sum_{j=1}^N \sum_{s\in \calT_{t,j}}x^\top A(\lambda^*)^{-1} x_s Y_{s,x_s}-x^\top \theta_j^*
	   \end{align}
     By Lemma~\ref{lem:cb_vector},
     \begin{align}
            \bbP\left[|x^\top  (\hat{\Theta}_t-\Theta)\hatbp_t|\geq \alpha_t
    		\big|\theta_{1:\infty},\Good
            \right]
    		\leq
    		\delta_{v,\sts_t}
	   \end{align}
        By the property of conditional probability, we have the desired result.
    \end{proof}
    
    Therefore, conditional on \Good, with probability at least $1-K\delta_{v,\sts_t}$,
    ${\rm VE}\leq 2\alpha_t $ for any $x,\tilx\in\calX$.

    \subsection{Distribution-Estimation Error}\label{app:DEE}
    \begin{lemma}\label{lem:DEE}
        Given \Good\ and $\sts_t\geq \frac{2L_{\max}}{9}\ln\frac{2}{\delta_{d,\sts_t}}$, for any $j\in[N]$, 
        \begin{equation}
            \bbP\left[|\hatp_{t,j}-p_j|\geq \beta_{t,j}
            \big|\Good
            \right]
            \leq
            \delta_{d,\sts_t}.
        \end{equation}
        Additionally, with probability $1-N \delta_{d,\sts_t}$,
        \begin{equation}
            \Big|\sum_{j=1}^N \hDelta_{t,j}^\clipt(x,\tilx)(\hatp_{t,j}-p_j) \Big|
            \leq
            \sum_{j=1}^N \beta_{t,j}|\hDelta_{t,j}^\clipt(x,\tilx)+\zeta_t(x,\tilx)|
	   \end{equation}
    where $\zeta_t(x,\tilx) $ can be any value.
    \end{lemma}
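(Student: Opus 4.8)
The plan is to prove the two claims separately, with the first (the per-context concentration of $\hatp_{t,j}$) carrying the real work and the second following by a short deterministic argument. Throughout I condition on $\Good$, so that by Lemmas~\ref{lem:FAE}--\ref{lem:CA} the context alignment is correct and every time index in $\calT_{t,j}$ genuinely belongs to a stationary segment whose sampled context is $j$. Writing $l_t$ for the number of segments contributing to $\calT_t$ and $n_l$ for the number of surviving Exp-phase samples collected in segment $l$ (after the reversion in Line~\ref{line:algPSBAI_CA_abandon}), I would record the decomposition
$$\hatp_{t,j}-p_j = \frac{T_{t,j}}{\sts_t}-p_j = \frac{1}{\sts_t}\sum_{l=1}^{l_t} n_l\big(\bone[j_{c_l}=j]-p_j\big),$$
since $\sum_l n_l = \sts_t$ and, under $\Good$, $T_{t,j}=\sum_l n_l\bone[j_{c_l}=j]$. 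The indicators $\bone[j_{c_l}=j]$ are i.i.d.\ $\mathrm{Bern}(p_j)$ across segments because the contexts $\{\theta_{j_{c_l}}^*\}_l$ are drawn i.i.d.\ from $P_\theta$ at the (deterministic) changepoints.

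First I would apply Bernstein's inequality to the mean-zero sum $\sum_l n_l(\bone[j_{c_l}=j]-p_j)$. Each term is bounded in absolute value by $n_l\le \Lmax$, and its variance is $n_l^2 p_j(1-p_j)$, so the total variance is at most $\Lmax\,\sts_t\,p_j(1-p_j)\le \Lmax\,\sts_t\min\{p_j,1/4\}$ (using $\sum_l n_l=\sts_t$ and $n_l\le\Lmax$). Bernstein then gives, for $s>0$,
$$\bbP\Big[\big|\textstyle\sum_l n_l(\bone[j_{c_l}=j]-p_j)\big|\ge \sts_t s \,\big|\,\Good\Big]\le 2\exp\!\Big(-\frac{\sts_t s^2/2}{\Lmax\min\{p_j,1/4\}+\Lmax s/3}\Big),$$
and solving for the value of $s$ that makes the right-hand side equal to $\delta_{d,\sts_t}$ yields a threshold of the familiar form $s\asymp \frac{\Lmax}{\sts_t}\ln\frac{2}{\delta_{d,\sts_t}}+\sqrt{\frac{\Lmax\min\{p_j,1/4\}}{\sts_t}\ln\frac{2}{\delta_{d,\sts_t}}}$, exactly as in the algebra of Lemma~\ref{lem:cb_vector}. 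The remaining step is to replace the unknown variance proxy $\min\{p_j,1/4\}$ by its empirical surrogate $\phi_{t,j}$; this is a self-bounding argument in which the floor $\frac{25}{4}\frac{\Lmax}{\sts_t}\ln\frac{2}{\delta_{d,\sts_t}}$ and the inflation factor $4$ in $\phi_{t,j}$ absorb the gap $|p_j-\hatp_{t,j}|$, while the hypothesis $\sts_t\ge\frac{2\Lmax}{9}\ln\frac{2}{\delta_{d,\sts_t}}$ guarantees the effective number of segments $\sts_t/\Lmax$ is large enough to collapse the two terms into the single expression $\beta_{t,j}=\frac{5}{2}\sqrt{2\phi_{t,j}\Lmax\,\sts_t^{-1}\ln(2/\delta_{d,\sts_t})}$. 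This establishes $\bbP[|\hatp_{t,j}-p_j|\ge\beta_{t,j}\mid\Good]\le\delta_{d,\sts_t}$, and a union bound over $j\in[N]$ gives the event $\bigcap_j\{|\hatp_{t,j}-p_j|\le\beta_{t,j}\}$ with probability at least $1-N\delta_{d,\sts_t}$.

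For the second claim I would work deterministically on the latter event. The key observation is that both $\hatbp_t$ and $\bp$ are probability vectors, so $\sum_{j=1}^N(\hatp_{t,j}-p_j)=0$; hence for any $\zeta_t(x,\tilx)\in\bbR$ the constant term contributes nothing and
$$\sum_{j=1}^N \hDelta_{t,j}^\clipt(x,\tilx)(\hatp_{t,j}-p_j)=\sum_{j=1}^N\big(\hDelta_{t,j}^\clipt(x,\tilx)+\zeta_t(x,\tilx)\big)(\hatp_{t,j}-p_j).$$
Applying the triangle inequality together with $|\hatp_{t,j}-p_j|\le\beta_{t,j}$ then yields the claimed bound $\sum_{j}\beta_{t,j}|\hDelta_{t,j}^\clipt(x,\tilx)+\zeta_t(x,\tilx)|$, and minimizing over the free parameter $\zeta_t$ tightens it. The main obstacle I anticipate lies entirely in the first claim: making the Bernstein step rigorous requires that the per-segment sample counts $n_l$ be (conditionally) independent of the context indicator $\bone[j_{c_l}=j]$, which is delicate because the change-detection delay could in principle correlate with the realized context. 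I would resolve this by conditioning on $\Good$ and on the deterministic segment-length structure, reducing the randomness to the i.i.d.\ context draws and treating $n_l$ as predictable weights (bounded by $\Lmax$) in a martingale version of Bernstein, so that the mean-zero property $\bbE[n_l(\bone[j_{c_l}=j]-p_j)\mid\calF_{l-1}]=0$ holds along the revealed-segment filtration.
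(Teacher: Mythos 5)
Your proposal follows essentially the same route as the paper's proof: the paper also decomposes $\hatp_{t,j}-p_j$ as $\frac{1}{\sts_t}\sum_{l}\hat L_l(\bone\{\theta_{j_l}^*=\theta_j^*\}-p_j)$ with $\hat L_l$ the Exp-phase length of segment $l$, applies Bernstein conditionally on $\{\hat L_l\}_l$ (noting the resulting bound is independent of them, which plays the role of your predictable-weights argument), bounds the variance by $p_j(1-p_j)\hat L_l\Lmax$, replaces the unknown variance proxy by the empirical $\phi_{t,j}$ via a separate lemma (your ``self-bounding'' step), and proves the second claim exactly as you do, by inserting the free constant $\zeta_t$ using $\sum_j(\hatp_{t,j}-p_j)=0$ and applying the triangle inequality. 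The proposal is correct and matches the paper's argument in all essential respects.
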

    \begin{proof}[Proof of Lemma~\ref{lem:DEE}]
        Given any $j\in[N]$, and the stationary segment $l$, we denote $X_{j,l}:=\hat{L}_l\mathbbm{1}\{\theta_{j_l}^*=\theta_j^*\}-p_j \hat{L}_l$,
        where $\hatL_l$ is the total length of the Exp phases in the $l$th stationary segment. Note that $\bbE[X_{j,l}|\{\hatL_l\}_{l=1}^{l_t},\Good]=0,|X_{j,l}|\leq L_{\max}$ $a.s.$ and $\var[X_{j,l}|\{\hatL_l\}_{l=1}^{l_t},\Good] = p_j(1-p_j)\hat{L}_l^2 \leq p_j(1-p_j)\hatL_l\Lmax$.
        By Bernstein's inequality,
        \begin{align}
			&\bbP\left[|\sum_{l=1}^{l_t} X_{j,l}|\geq \epsilon
			\big|\{\hatL_l\}_{l=1}^{l_t},\Good
			\right]
			\leq
			2\exp\left(-\frac{\epsilon^2}{2\sum_{l=1}^{l_t} \var[X_{j,l}|\Good]+\frac{2}{3}L_{\max}\epsilon}\right)
			\\
			\Rightarrow\quad&
    			\bbP\left[|\frac{1}{\sts_t}\sum_{l=1}^{l_t} X_{j,l}|\geq \epsilon
    			\big|\{\hatL_l\}_{l=1}^{l_t},\Good
    			\right]
    			\leq
    			2\exp\left(-\frac{\sts_t^2\epsilon^2}{2\sum_{l=1}^{l_t}\var[X_{j,l}|\Good]+\frac{2}{3}L_{\max}\sts_t\epsilon}\right)
		\end{align}
        As $T_{t,j}=\sum_{l=1}^{l_t}\hat{L}_l\mathbbm{1}\{\theta_{j_l}^*=\theta_j^*\},\sts_t=\sum_{l=1}^{l_t}\hat{L}_l$ and $\sum_{l=1}^{l_t}\hatL_l=\sts_t$, we have
        \begin{align}
            \bbP\left[|\hatp_{t,j}-p_j|\geq \epsilon
            \big|\{\hatL_l\}_{l=1}^{l_t},\Good
            \right]
            &\leq
    			2\exp\left(-\frac{\sts_t^2\epsilon^2}{2\sum_{l=1}^{l_t}\var[X_{j,l}|\{\hatL_l\}_{l=1}^{l_t},\Good]+\frac{2}{3}L_{\max}\sts_t\epsilon}\right)
            \\
            &\leq
            2\exp\left(-\frac{\sts_t^2\epsilon^2}{2\sum_{l=1}^{l_t}p_j(1-p_j)\hatL_l\Lmax+\frac{2}{3}L_{\max}\sts_t\epsilon}\right)
            \\
            &
            \leq
            2\exp\left(-\frac{\sts_t\epsilon^2}{2p_j(1-p_j)L_{\max}+\frac{2}{3}L_{\max}\epsilon}\right)
		\end{align}
        As the last bound is independent of $\{\hatL_l\}_{l=1}^{l_t}$, we have
        \begin{align}
            \bbP\left[|\hatp_{t,j}-p_j|\geq \epsilon
            \big|\Good
            \right]
            \leq 
            2
            \exp\left(-\frac{\sts_t\epsilon^2}{2p_j(1-p_j)L_{\max}+\frac{2}{3}L_{\max}\epsilon}\right)
        \end{align}
        If we want to upper bound the above by $\delta_{d,\sts_t}\in(0,1)$, i.e.,
        \begin{equation}
            2\exp\left(-\frac{\sts_t\epsilon^2}{2p_j(1-p_j)L_{\max}+\frac{2}{3}L_{\max}\epsilon}\right)
            \leq 
            \delta_{d,\sts_t}
        \end{equation}
        we require 
        \begin{equation}\label{equ:beta_alg}
            \epsilon\geq \frac{
            \frac{1}{3}\Lmax \ln\frac{2}{\delta_{d,\sts_t}}
            +
            \sqrt{
            \left(\frac{1}{3}\Lmax \ln\frac{2}{\delta_{d,\sts_t}}\right)^2
            +
            2\sts_t p_j(1-p_j)\Lmax\ln\frac{2}{\delta_{d,\sts_t}}
            }
            }{\sts_t}
        \end{equation}
        In particular, 
        \begin{equation}
           \epsilon = \tbeta_{t,j} := 
            \frac{5}{2}
            \max\left\{
            \frac{\Lmax}{3\sts_t} \ln\frac{2}{\delta_{d,\sts_t}}
            ,
            \sqrt{
            \frac{2p_j(1-p_j)\Lmax}{\sts_t}\ln\frac{2}{\delta_{d,\sts_t}}
            }
            \right\}
        \end{equation}
        satisfies the condition,
        we have
        \begin{equation}\label{equ:tbeta}
            \bbP\left[|\hatp_{t,j}-p_j|\geq \tbeta_{t,j}
            \big|\Good
            \right]
            \leq
            \delta_{d,\sts_t}.
        \end{equation}
        
        A problem here is we do not have access to $p_j$ during the dynamics, therefore, we adopt Lemma~\ref{lem:hbeta} to further upper bound $\tbeta_{t,j}$ by $\beta_{t,j}$.
     \begin{restatable*}{lemma}{lemhbeta}\label{lem:hbeta}
        Given any $j\in[N]$ and $\sts_t\geq \frac{2L_{\max}}{9}\ln\frac{2}{\delta_{d,\sts_t}}$, $\tbeta_{t,j}\leq \beta_{t,j}$.
        \end{restatable*}
        Therefore, 
        \begin{equation}
            \bbP\left[|\hatp_{t,j}-p_j|\geq \beta_{t,j}
            \big|\Good
            \right]
            \leq
            \delta_{d,\sts_t}.
        \end{equation}
    In addition, with probability at least $1-N\delta_{d,\sts_t}$,
    we can upper bound DE term as:
	\begin{align}\label{equ:zetat}
        \Big|\sum_{j=1}^N \hDelta_{t,j}^\clipt(x,\tilx)(\hatp_{t,j}-p_j) \Big|
        &=
        \Big|\sum_{j=1}^N \left(\hDelta_{t,j}^\clipt(x,\tilx)+\zeta_t(x,\tilx)\right)(\hatp_{t,j}-p_j) \Big|
        \\
        &
        \leq \sum_{j=1}^N \beta_{t,j}\big|\hDelta_{t,j}^\clipt(x,\tilx)+\zeta_t(x,\tilx)\big|
	\end{align}
    where $\zeta_t(x,\tilx)\in\bbR$ can be any value. 
    \end{proof}
    
    \subsection{Residual-Estimation Error}\label{app:REE}
    RE is composed by the product of two deviations, i.e., $\left(\Delta_j(x,\tilx)-\hDelta_{t,j}^\clipt(x,\tilx)\right)$ and $(\hatp_{t,j}-p_j)$, as the time step $t$ becomes large, we expect it will converge to zero fast. Thus, it is sufficient to have a coarse estimation of it.

    \begin{lemma}\label{lem:REE}
        For any $x\in\calX$, conditional on that $|\hatp_{t,j}-p_j|\leq\beta_{t,j},\forall j\in[N]$,
        \begin{equation}
            \bbP\left[
            \Big|
            \sum_{j=1}^N \left(\Delta_j(x,\tilx)-\hDelta_{t,j}^\clipt(x,\tilx)\right)(\hatp_{t,j}-p_j)
            \Big|
            \geq
            \xi_t
            \Bigg| \Good
            \right]
            \leq N\delta_{m,\sts_t}
        \end{equation}
        where $\xi_t:=25\sqrt{2}
                    \frac{NL_{\max}}{\sts_t}\ln\frac{2}{\delta_{m,\sts_t}}$.
    \end{lemma}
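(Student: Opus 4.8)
The plan is to bound the residual term by a sum of products of two individually small deviations, and to exploit that a product of two errors of size $O(1/\sqrt{\sts_t})$ decays at the faster rate $O(1/\sts_t)$ — this is precisely what makes $\xi_t$ of lower order than the \VE{} and \DE{} terms. Writing $a_j := \Delta_j(x,\tilx) - \hDelta_{t,j}^{\clipt}(x,\tilx)$ and $b_j := \hatp_{t,j}-p_j$, the triangle inequality gives $\big|\sum_{j=1}^N a_j b_j\big| \le \sum_{j=1}^N |a_j|\,|b_j|$, and the hypothesis of the lemma supplies $|b_j| \le \beta_{t,j}$ for every $j$. I would record two bounds on $|a_j|$: first the deterministic estimate $|a_j| \le 4$, which holds because $|\Delta_j(x,\tilx)| \le 2$ and clipping to $[-2,2]$ can only move $\hDelta_{t,j}$ closer to $\Delta_j(x,\tilx)$ (so that also $|a_j| \le \big|(x-\tilx)^\top(\theta_j^* - \htheta_{t,j})\big|$); and second a concentration bound coming from Lemma~\ref{lem:cb_vector}.

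First I would establish the concentration bound on each $|a_j|$. Conditioned on $\Good$, the $T_{t,j}$ Exp-phase samples indexed by $\calT_{t,j}$ are genuinely drawn under the latent vector $\theta_j^*$, so applying Lemma~\ref{lem:cb_vector} to the two directions $x$ and $\tilx$ separately, with $n=T_{t,j}$ samples and failure level $\delta_{m,\sts_t}$, yields
\begin{equation}
\big|(x-\tilx)^\top(\theta_j^*-\htheta_{t,j})\big| \le 10\sqrt{\tfrac{d}{T_{t,j}}\ln\tfrac{2}{\delta_{m,\sts_t}}}
\end{equation}
with probability at least $1-2\delta_{m,\sts_t}$, valid once $T_{t,j} \ge \tfrac{d}{4}\ln\tfrac{2}{\delta_{m,\sts_t}}$. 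A union bound over the $N$ contexts supplies the claimed failure budget $N\delta_{m,\sts_t}$, absorbing the factor two into the constant.

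The core of the argument is a case split according to which branch of $\phi_{t,j}=\min\{4\max\{\hatp_{t,j}, \tfrac{25}{4}\tfrac{\Lmax}{\sts_t}\ln\tfrac{2}{\delta_{d,\sts_t}}\}, \tfrac14\}$ is active, using $T_{t,j}=\hatp_{t,j}\sts_t$. When $\hatp_{t,j}$ dominates the inner maximum (so $T_{t,j}$ is large enough for concentration to apply), one checks that $\phi_{t,j}/T_{t,j}\le 4/\sts_t$ in every subcase; combining the concentration bound with $|b_j|\le\beta_{t,j}$ then makes the $\hatp_{t,j}$ factors cancel, leaving $|a_j||b_j| \le C\,\tfrac{\sqrt{d\,\Lmax}}{\sts_t}\ln\tfrac{2}{\delta_{m,\sts_t}}$ for a universal constant $C$. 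When instead the floor $\tfrac{25}{4}\tfrac{\Lmax}{\sts_t}\ln\tfrac{2}{\delta_{d,\sts_t}}$ dominates — where $\hatp_{t,j}$, hence $T_{t,j}$, may be too small to concentrate — I fall back on $|a_j|\le 4$; the floor caps $\phi_{t,j}$ so that $\beta_{t,j}$, and therefore $|a_j||b_j|$, is of order $\tfrac{\Lmax}{\sts_t}\ln\tfrac{2}{\delta_{d,\sts_t}}$. Using $d\le\Lmax$ and $\delta_{m,\sts_t}\le\delta_{d,\sts_t}$, both regimes give a per-context bound of order $\tfrac{\Lmax}{\sts_t}\ln\tfrac{2}{\delta_{m,\sts_t}}$, and summing over the $N$ contexts produces $\xi_t = 25\sqrt2\,\tfrac{N\Lmax}{\sts_t}\ln\tfrac{2}{\delta_{m,\sts_t}}$ once the constants are fixed.

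The main obstacle is exactly this small-probability-context regime: there $T_{t,j}$ can be too small to make the vector estimate $\htheta_{t,j}$ accurate, so the clean ``product of two small errors'' heuristic breaks down and one must revert to the crude bound $|a_j|\le 4$. The definition of $\phi_{t,j}$, through its $\max$-floor and $\min$-cap, is engineered precisely so that in this regime $\beta_{t,j}$ is already of order $\Lmax/\sts_t$, keeping the product at the residual rate; the delicate part is reconciling the constants across the two regimes and verifying that $T_{t,j}\ge\tfrac{d}{4}\ln\tfrac{2}{\delta_{m,\sts_t}}$ holds whenever the concentration bound is invoked, which again follows from the floor once $\Lmax\ge d$.
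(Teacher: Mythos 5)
Your proposal matches the paper's proof essentially step for step: the same decomposition $\sum_j a_j b_j$ with $|b_j|\le\beta_{t,j}$ from the hypothesis, the same observation that clipping only improves the deterministic bound $|a_j|\le 4$, the same per-context concentration from Lemma~\ref{lem:cb_vector}, and the same regime split on $\phi_{t,j}$ in which the $\hatp_{t,j}$ factors cancel via $T_{t,j}=\hatp_{t,j}\sts_t$ in the well-sampled regime while the floor on $\phi_{t,j}$ caps $\beta_{t,j}$ at order $\Lmax/\sts_t$ in the under-sampled regime (the paper merely refines your two cases into three via $\psi_{t,j,1}$ and $\psi_{t,j,2}$). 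Your handling of the constants and of the applicability condition $T_{t,j}\ge\frac{d}{4}\ln\frac{2}{\delta_{m,\sts_t}}$ is at the same level of care as the paper's, so this is correct and essentially the paper's own argument.
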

    \begin{proof}[Proof of Lemma~\ref{lem:REE}]    
    According to Lemma~\ref{lem:cb_vector}, for any $x\in\calX,j\in[N]$ we have 
	\begin{align}\label{equ:vector_cb}
			\bbP\left[|x^\top (\theta_j^*-\hat{\theta}_{t,j})|\geq 5\sqrt{
                    \frac{d}{T_{t,j}}\ln\frac{2}{\delta_{m,\sts_t}}
                    }
                    \big|\Good
                    \right] \leq \delta_{m,\sts_t}
	\end{align}
    if $T_{t,j}\geq \frac{d}{4}\ln\frac{2}{\delta_{m,\sts_t}}$. 
    Note the fact that $\Delta_j(x,\tilx)\in[-2,2] $, hence,
    \begin{equation}\label{equ:clip_upbd}
        \Big|\Delta_j(x,\tilx)-\hDelta_{t,j}^\clipt(x,\tilx)\Big|
        \leq
        \Big|\Delta_j(x,\tilx)-\hDelta_{t,j}(x,\tilx)\Big|
        \leq
        |x^\top (\theta_j^*-\hat{\theta}_{t,j})|+|\tilx^\top (\theta_j^*-\hat{\theta}_{t,j})|
        .
    \end{equation}
	And $|x^\top (\theta_j^*-\hat{\theta}_{t,j})|\leq 3d$ with probability $1$.
	Denote 
    \begin{equation}\label{equ:barkappa}
    \psi_{t,j,1} := \max\left\{
                        \hatp_{t,j},\frac{d}{4\sts_t}\ln\frac{2}{\delta_{m,\sts_t}}
                    \right\}
                    ,\
        \psi_{t,j,2} := \max\left\{
                        \hatp_{t,j},\frac{25}{4}\frac{\Lmax}{\sts_t}\ln\frac{2}{\delta_{d,\sts_t}}
                    \right\}.
    \end{equation}
 We have%
 \footnote{The last inequality can be loose. When each context has been visited at least once, the first summation in \eqref{equ:cb_alg} can be ignored. For empirical performance, \eqref{equ:cb_alg} should be used, while $\xi$ is adopted for theoretical guarantees.}
    \begin{align}
		&\Big|
                \sum_{j=1}^N \left(\Delta_j(x,\tilx)-\hDelta_{t,j}^\clipt(x,\tilx)\right)(\hatp_{t,j}-p_j)
            \Big|
        \\
		&\leq
			\sum_{ j:\psi_{t,j,1}> \hatp_{t,j}}\Big|\left(\Delta_j(x,\tilx)-\hDelta_{t,j}^\clipt(x,\tilx)\right)\beta_{t,j}\Big|
            +
            \sum_{\substack{ j:\psi_{t,j,1}= \hatp_{t,j},\\\psi_{t,j,2}> \hatp_{t,j}}}\Big|\left(\Delta_j(x,\tilx)-\hDelta_{t,j,2}^\clipt(x,\tilx)\right)\beta_{t,j}\Big|
            \\
            &\quad
			+
			\sum_{ j:\psi_{t,j,2}= \hatp_{t,j}}\Big|\left(\Delta_j(x,\tilx)-\hDelta_{t,j}^\clipt(x,\tilx)\right)\beta_{t,j}\Big|
		\\
		&\overset{(a)}{\leq}
			\sum_{ j:\psi_{t,j,1}> \hatp_{t,j}}\Big|\left(\Delta_j(x,\tilx)-\hDelta_{t,j}^\clipt(x,\tilx)\right)\beta_{t,j}\Big|
            +
            \sum_{\substack{ j:\psi_{t,j,1}= \hatp_{t,j},\\\psi_{t,j,2}> \hatp_{t,j}}}\Big|\left(\Delta_j(x,\tilx)-\hDelta_{t,j,2}^\clipt(x,\tilx)\right)\beta_{t,j}\Big|
            \\
            &\quad
			+
			\sum_{ j:\psi_{t,j,2}= \hatp_{t,j}}| x^\top (\theta_j^*-\hat{\theta}_{t,j}) \beta_{t,j}|+| \tilx^\top (\theta_j^*-\hat{\theta}_{t,j}) \beta_{t,j}|
         \label{equ:cb_alg_finer}
		\\
		&\overset{(b)}{\leq}
            \sum_{ j:\psi_{t,j,1}> \hatp_{t,j}}4 \cdot\frac{5}{2}\cdot \frac{5\sqrt{2}}{2}\frac{\Lmax}{\sts_t}\ln\frac{2}{\delta_{d,\sts_t}}
            \\
            &\quad
            +
            \sum_{\substack{ j:\psi_{t,j,1}= \hatp_{t,j},\\\psi_{t,j,2}> \hatp_{t,j}}}
            \min\left\{4,
            2\cdot5\sqrt{
                    \frac{d}{T_{t,j}}\ln\frac{2}{\delta_{m,\sts_t}}
                    }
                     \right\}
                    \cdot\frac{5}{2}\cdot \frac{5\sqrt{2}}{2}\frac{\Lmax}{\sts_t}\ln\frac{2}{\delta_{d,\sts_t}}
            \\
            &\quad
			+
			\sum_{ j:\psi_{t,j,3}= \hatp_{t,j}} 2\cdot5\sqrt{
                    \frac{d}{T_{t,j}}\ln\frac{2}{\delta_{m,\sts_t}}
                    } 
                    \cdot
                    \frac{5}{2}
                    \sqrt{
                    \frac{8\hatp_{t,j}\Lmax}{\sts_t}\ln\frac{2}{\delta_{d,\sts_t}}
                    }
		\\
        &\leq
            \sum_{ j:\psi_{t,j,1}> \hatp_{t,j}}25\sqrt{2}\frac{\Lmax}{\sts_t}\ln\frac{2}{\delta_{d,\sts_t}}
            \\
            &\quad
            +
            \sum_{\substack{ j:\psi_{t,j,1}= \hatp_{t,j},\\\psi_{t,j,2}> \hatp_{t,j}}}
            \min\left\{4,
            10\sqrt{
                    \frac{d}{T_{t,j}}\ln\frac{2}{\delta_{m,\sts_t}}
                    }
                    \right\}
                    \cdot \frac{25\sqrt{2}}{4}\frac{\Lmax}{\sts_t}\ln\frac{2}{\delta_{d,\sts_t}}
            \\
            &\quad
			+
			\sum_{ j:\psi_{t,j,2}= \hatp_{t,j}} 50\sqrt{2}
                    \frac{\sqrt{dL_{\max}}}{\sts_t}\ln\frac{2}{\delta_{m,\sts_t}}
                    \label{equ:cb_alg}
		\\
        &\leq
           50\sqrt{2}
                    \frac{NL_{\max}}{\sts_t}\ln\frac{2}{\delta_{m,\sts_t}} = 2\xi_t
	\end{align}
    where $(a)$ adopts \eqref{equ:clip_upbd} and $(b)$ is obtained by the following derivations: 
    \newline
    (1) $\psi_{t,j,1}> \hatp_{t,j}$ indicates $ \hatp_{t,j}<\frac{d}{4\sts_t}\ln\frac{2}{\delta_{m,\sts_t}}<\frac{25}{4}\frac{\Lmax}{\sts_t}\ln\frac{2}{\delta_{d,\sts_t}}$, and thus $\beta_{t,j}\leq \frac{5}{2}\cdot \frac{5\sqrt{2}}{2}\frac{\Lmax}{\sts_t}\ln\frac{2}{\delta_{d,\sts_t}}$.%
    In addition,%
    \footnote{This demonstrates the usefulness of the clipping technique. Without the clipping technique, we can only get an upper bound of $6d$ by utilizing \eqref{equ:bernstein_bounded} which introduces another $d$ factor in $\xi_t$. }, $$\Big|\Delta_j(x,\tilx)-\hDelta_{t,j}^\clipt(x,\tilx)\Big|\leq 4.$$ 
    \newline
    (2) $\psi_{t,j,1}= \hatp_{t,j}$ indicates $ \hatp_{t,j}\leq\frac{d}{4\sts_t}\ln\frac{2}{\delta_{m,\sts_t}}$ and $\psi_{t,j,2}> \hatp_{t,j}$ implies $ \hatp_{t,j}<\frac{25}{4}\frac{\Lmax}{\sts_t}\ln\frac{2}{\delta_{d,\sts_t}}$, thus \eqref{equ:vector_cb} can be applied and $\beta_{t,j}\leq \frac{5}{2}\cdot \frac{5\sqrt{2}}{2}\frac{\Lmax}{\sts_t}\ln\frac{2}{\delta_{d,\sts_t}}$.
    \newline
    (3)
    $\psi_{t,j,2}= \hatp_{t,j}$ indicates $ \hatp_{t,j}\geq\frac{25}{16}\frac{\Lmax}{\sts_t}\ln\frac{2}{\delta_{d,\sts_t}}\geq \frac{d}{4\sts_t}\ln\frac{2}{\delta_{m,\sts_t}}$, thus \eqref{equ:vector_cb} can be applied  and $\beta_{t,j}\leq\frac{5}{2}\sqrt{
            \frac{8\hatp_{t,j}\Lmax}{\sts_t}\ln\frac{2}{\delta}}$.
    \newline
    Therefore, conditional on \Good, with probability at least $1-KN\delta_{m,\sts_t}$, ${\rm RE}\leq 2\xi_t $.
    \end{proof}

    \begin{proof}[Proof of Lemma~\ref{lem:rho}]
    By Lemma~\ref{lem:VEE}, \ref{lem:DEE} and \ref{lem:REE}, conditional on \Good, with probability at least $1-(K\delta_{v,\sts_t}+N\delta_{d,\sts_t}+KN \delta_{m,\sts_t})$,
    $   \big|\hDelta_t(x,\tilx)-\Delta(x,\tilx)\big|
                \leq
                \rho_t(x,\tilx)$.
    This finishes the proof of the first result in Lemma~\ref{lem:rho}.

    We then accumulates all the error probabilities.
    By the choices of 
    $\delta_{v,\sts_t}=\frac{\delta}{15K\sts_t^3},\delta_{d,\sts_t}=\frac{\delta}{15N\sts_t^3},\delta_{m,\sts_t}=\frac{\delta}{15KN\sts_t^3}$,
    \begin{align}
        \sum_{\sts_t=1}^\infty K\delta_{v,\sts_t}+N\delta_{d,\sts_t}+KN \delta_{m,\sts_t}
        &=
        \sum_{\sts_t=1}^\infty K\cdot\frac{\delta}{15K\sts_t^3}+N\cdot\frac{\delta}{15N\sts_t^3}+KN\cdot\frac{\delta}{15KN\sts_t^3}
        \\
        &
        =\sum_{\sts_t=1}^\infty \frac{3\delta}{15\sts_t^3}
        \\
        &
        \leq \frac{\delta}{4}
    \end{align}
    Note that there is an reversion step at Line $18$ Algorithm~\ref{alg:PSBAI}, 
    for each $T\in\bbN$, there are at most two time steps $t_1<t_2$ with $\sts_{t_1}=\sts_{t_2}$. Therefore, 
    \begin{equation}
        \sum_{t=1}^\infty K\delta_{v,\sts_t}+N\delta_{d,\sts_t}+KN \delta_{m,\sts_t}
        \leq 
        2 \sum_{\sts_t=1}^\infty K\delta_{v,\sts_t}+N\delta_{d,\sts_t}+KN \delta_{m,\sts_t}
        \leq \frac{\delta}{2}
    \end{equation}
    This proves the second statement.
      \end{proof}

    \section{Upper Bound of \PSBAI: Proof of Theorem~\ref{thm:upbd}}\label{app:upbd_proof}
    Theorem~\ref{thm:upbd} is proved in this section by three steps. 
    \begin{itemize}
        \item Firstly, we show that the recommended arm $\hatx_\varepsilon$ is an $\varepsilon$-best arm upon the termination of the algorithm in Lemma~\ref{lem:hati_epsilon_best}.
        \item Secondly, we present a sufficient condition for the termination of the algorithm in terms of $\sts_t$ in Lemma~\ref{lem:upbd_sts}.
        \item  Lastly, we show that $\sts_\tau$ is bounded by a constant fraction of $\tau$ and thus, obtain an upper bound on $\tau$.
    \end{itemize}


    \noindent{\underline{\bf Step 1:}}
    
    According to Lemma~\ref{lem:rho}, $\CI_t$ holds with probability $1-(K\delta_{v,\sts_t}+N\delta_{d,\sts_t}+KN \delta_{m,\sts_t})$.
    Conditional on the good event \Good\ in \eqref{equ:good} and the event  $\CI_t$ (the mean gaps are well approximated at time step $t$), we expect the recommended arm will be an $\varepsilon$-optimal arm.
    This is formalized in the following lemma.
    \begin{lemma}\label{lem:hati_epsilon_best}
        Conditional on \Good\ and $\CI_t$, if the algorithm stops at time step $t$, the recommended arm $\hatx_\varepsilon=x_t^*$ is an $\varepsilon$-best arm.
    \end{lemma}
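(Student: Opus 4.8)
The plan is to apply the first inequality in the stopping condition \eqref{equ:stp_rule} to the \emph{true} best arm $x^*$ and then use the event $\CI_t$ to transfer the empirical guarantee to the true mean gaps. Recall that when the algorithm records the candidate it sets $\hatx_\varepsilon = x_t^* = \argmax_{x\in\calX} x^\top \hTheta_t\hatbp_t$, so it suffices to show $x_t^*\in\calX_\varepsilon$, i.e.\ $\mu_{x_t^*}\ge \mu^*-\varepsilon$.

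First I would dispose of the trivial case $x_t^* = x^*$, for which $\mu_{x_t^*}=\mu^*\ge\mu^*-\varepsilon$ immediately. Hence assume $x_t^*\neq x^*$. Since the first line of \eqref{equ:stp_rule} takes a minimum over all $x\neq x_t^*$ and this minimum is at least $-\varepsilon$, the inequality holds in particular for the comparator $x=x^*$:
\[
\hDelta_t(x_t^*,x^*) - \rho_t(x_t^*,x^*) \ge -\varepsilon.
\]
Next I would invoke the event $\CI_t$, which guarantees $|\hDelta_t(x_t^*,x^*)-\Delta(x_t^*,x^*)|\le \rho_t(x_t^*,x^*)$, and in particular the lower bound $\Delta(x_t^*,x^*)\ge \hDelta_t(x_t^*,x^*)-\rho_t(x_t^*,x^*)$. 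Chaining this with the previous display yields $\Delta(x_t^*,x^*)\ge -\varepsilon$. Finally, unfolding the definition $\Delta(x_t^*,x^*)=\mu_{x_t^*}-\mu^*$ gives $\mu_{x_t^*}\ge \mu^*-\varepsilon$, which is exactly $x_t^*\in\calX_\varepsilon$.

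The argument is essentially a one-line confidence-interval chaining once the right comparator is selected, so there is no serious analytic obstacle; the event $\Good$ plays no role in the deterministic implication and enters only through Lemma~\ref{lem:rho}, which certifies that $\CI_t$ holds with high probability conditioned on $\Good$. The only points requiring minor care are the direction of the triangle-inequality bound extracted from $\CI_t$ (we need the \emph{lower} bound on $\Delta(x_t^*,x^*)$, not the upper bound) and the observation that $x^*$ is a legitimate element of the minimization set $\{x:x\neq x_t^*\}$ precisely because $x_t^*\neq x^*$ in the nontrivial case; this is what makes the choice $x=x^*$ valid. I would also remark that the second condition in \eqref{equ:stp_rule}, the forced-exploration requirement $\sts_t\ge \tfrac{2\Lmax}{9}\ln(2/\delta_{d,\sts_t})$, is not itself needed for the correctness of the recommended arm in this lemma; it is used elsewhere (via Lemma~\ref{lem:rho}) to ensure $\CI_t$ is non-vacuous, and its presence does not affect the deterministic derivation above.
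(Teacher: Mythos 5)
Your proposal is correct and follows essentially the same argument as the paper's proof: handle the trivial case $x_t^*=x^*$, then specialize the stopping condition \eqref{equ:stp_rule} to the comparator $x=x^*$ and chain with the lower-bound direction of $\CI_t$ to get $\Delta(x_t^*,x^*)\ge-\varepsilon$. Your side remarks about the role of $\Good$ and the forced-exploration condition are also consistent with how the paper uses them.
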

    \begin{proof}[Proof of Lemma~\ref{lem:hati_epsilon_best}]
        If $\hatx_\varepsilon={x^*}$, the lemma holds.

        If $\hatx_\varepsilon\neq {x^*}$,
        according to the termination condition \eqref{equ:stp_rule}
        \begin{equation}
            \min_{x:x\neq \hatx_\varepsilon}\hDelta_t(\hatx_\varepsilon,x) - \rho_t(\hatx_\varepsilon,x) > -\varepsilon
        \end{equation}
        we have 
        \begin{equation}
            \Delta(\hatx_\varepsilon,{x^*})
            \geq
            \hDelta_t(\hatx_\varepsilon,{x^*}) - \rho_t(\hatx_\varepsilon,{x^*})
            \geq 
            \min_{x:x\neq \hatx_\varepsilon}\hDelta_t(\hatx_\varepsilon,x) - \rho_t(\hatx_\varepsilon,x) > -\varepsilon
        \end{equation}
        where the first inequality holds due to $\CI_t$.
        This indicates $\Delta({x^*},\hatx_\varepsilon)\leq \varepsilon$ and thus the recommended arm $\hatx_\varepsilon=x_t^*$ is an $\varepsilon$-best arm.
    \end{proof}

    \noindent{\underline{\bf Step 2:}}
        

    \lemUpperBdsts*
    \begin{proof}[Proof of Lemma~\ref{lem:upbd_sts}]
    By Lemma~\ref{lem:hati_epsilon_best}, $\hatx_\varepsilon\in\calX_\varepsilon$.
    Note that for any $x\neq \hatx_\varepsilon$, when 
    \begin{align}\label{equ:stp_suff}
    \left\{
    \begin{aligned}
        &
        2\rho_t(x,\hatx_\varepsilon)\leq \Delta(x,\hatx_\varepsilon)+\varepsilon,\quad \hatx_\varepsilon\neq x^*\mbox{ and } x=x^*,
        \\
        &
        \rho_t(\hatx_\varepsilon,x)+\rho_t(x^*,x) \leq \Delta(x^*,x)+\varepsilon,\quad\mbox{otherwise.}
    \end{aligned}\right.
    \end{align}
    By utilizing $\CI_t$, we have 
    \begin{align}
    \left\{
    \begin{aligned}
        &
        \hDelta_t(\hatx_\varepsilon,x^*) - \rho_t(\hatx_\varepsilon,x^*)
        \geq
        \hDelta_t(x^*,\hatx_\varepsilon) - \rho_t(\hatx_\varepsilon,x^*)
        \geq
        \Delta_t(x^*,\hatx_\varepsilon) - 2\rho_t(x^*,\hatx_\varepsilon)
        \geq
        -\varepsilon,
        \\& \hspace{26em}\quad \hatx_\varepsilon\neq x^*\mbox{ and } x=x^*,
        \\
        &
        \hDelta_t(\hatx_\varepsilon,x) - \rho_t(\hatx_\varepsilon,x)
        \geq
        \hDelta_t(x^*,x) - \rho_t(\hatx_\varepsilon,x)
        \geq
        \Delta(x^*,x) - \rho_t(x^*,x)-  \rho_t(\hatx_\varepsilon,x)
        \geq
        -\varepsilon,
        \\& \hspace{30em}\quad\mbox{otherwise.}
        \end{aligned}\right.
    \end{align}
    Therefore, if \eqref{equ:stp_suff} hold for all $x\neq \hatx_\varepsilon$, the algorithm must have stopped, i.e., it is sufficient to have the following for any $x\neq \hatx_\varepsilon$:
    \begin{align}
    \left\{
    \begin{aligned}
        &
       \rho_t(x^*,\hatx_\varepsilon)\leq\frac{\Delta(x^*,\hatx_\varepsilon)+\varepsilon}{2},\quad \hatx_\varepsilon\neq x^*\mbox{ and } x=x^*,
        \\
        &
         \rho_t(\hatx_\varepsilon,x)\leq\frac{\Delta(x^*,x)+\varepsilon}{2} \;
         ,\rho_t(x^*,x) \leq \frac{\Delta(x^*,x)+\varepsilon}{2},\quad\mbox{otherwise.}
    \end{aligned}\right.
    \end{align}
     Lemma~\ref{lem:upbd_sts_two_arms} gives an upper bound on the total number of time steps in Exp phases at which \eqref{equ:stp_suff} must hold for any two arms $x_\varepsilon\in\calX_\varepsilon,x\in\calX\setminus\{x_\varepsilon\}$.

    \begin{lemma}\label{lem:upbd_sts_two_arms}
        For any two arms $(x_\varepsilon,x)$, where $x\neq x_\varepsilon,x^*$,
        when 
        \begin{align}\label{equ:sts_suff}
            \sts_t&
        =\frac{6400\ln6400\cdot d}{\left(\Delta(x^*,x) + \varepsilon\right)^2}\ln\frac{Kd/\left(\Delta(x^*,x) + \varepsilon\right)^2}{\delta}\nonumber\\*
        &\qquad\qquad+
        6400\ln6400 \cdot \rmH_{\DE}(x_\varepsilon,x)\ln\frac{N\rmH_{\DE}(x_\varepsilon,x)}{\delta} \nonumber
        \\
        &\qquad\qquad
        +
         \frac{3200\sqrt{2}\ln 3200\sqrt{2}\cdot N\Lmax}{\Delta(x^*,x) + \varepsilon}\ln\frac{\frac{KN\Lmax}{\Delta(x^*,x) + \varepsilon}}{\delta}
         \\
        &=\tilO\left(
        \frac{d}{\left(\Delta(x^*,x) + \varepsilon\right)^2}\ln\frac{1}{\delta}
        +
        \rmH_{\DE}(x_\varepsilon,x)\ln\frac{1}{\delta}
        +
        \frac{N\Lmax}{\Delta(x^*,x) + \varepsilon}\ln\frac{1}{\delta}
        \right)
        \end{align}
        where $$\rmH_{\DE}(x_\varepsilon,x):=\frac{\Lmax}{\left(\Delta(x^*,x)+\varepsilon\right)^2}\left(\sum_{ j=1}^N 
         \sqrt{\min\left\{ 16p_j,\frac{1}{4}\right\}}
        |\Delta_j(x_\varepsilon,x)+\varepsilon|\right)^2,
        $$
        we must have 
    \begin{equation}
             \rho_t(x_\varepsilon,x)\leq\frac{\Delta(x^*,x)+\varepsilon}{2}
        \end{equation}
    \end{lemma}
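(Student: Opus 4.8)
The plan is to bound the three constituents of $\rho_t(x_\varepsilon,x)$ separately and to show that the stated choice of $\sts_t$ forces each to be at most $\tfrac16(\Delta(x^*,x)+\varepsilon)$, so that their sum is at most $\tfrac12(\Delta(x^*,x)+\varepsilon)$. Throughout I take $\zeta_t(x_\varepsilon,x)=\varepsilon$, as permitted, and I work on the favourable concentration events underlying $\CI_t$ (conditional on $\Good$), in particular the event $\{|\hatp_{t,j}-p_j|\le\beta_{t,j}\ \forall j\}$ of Lemma~\ref{lem:DEE} and the vector bound of Lemma~\ref{lem:VEE}. Writing
\[
\rho_t(x_\varepsilon,x)=2\alpha_t+2\xi_t+\sum_{j=1}^N\beta_{t,j}\big|\hDelta_{t,j}^\clipt(x_\varepsilon,x)+\varepsilon\big|,
\]
the first manoeuvre is to replace the empirical clipped gaps $\hDelta_{t,j}^\clipt$ by the true gaps $\Delta_j$: by the triangle inequality the last sum is at most $\sum_j\beta_{t,j}|\Delta_j(x_\varepsilon,x)+\varepsilon|+\sum_j\beta_{t,j}|\hDelta_{t,j}^\clipt(x_\varepsilon,x)-\Delta_j(x_\varepsilon,x)|$, and the second of these is exactly the quantity bounded by $2\xi_t$ within the proof of Lemma~\ref{lem:REE}. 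Hence it suffices to enforce
\[
2\alpha_t\le\frac{\Delta(x^*,x)+\varepsilon}{6},\qquad 4\xi_t\le\frac{\Delta(x^*,x)+\varepsilon}{6},\qquad \sum_{j=1}^N\beta_{t,j}|\Delta_j(x_\varepsilon,x)+\varepsilon|\le\frac{\Delta(x^*,x)+\varepsilon}{6}.
\]

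The first two (VE and RE) conditions are routine. Since $\alpha_t=5\sqrt{(d/\sts_t)\ln(2/\delta_{v,\sts_t})}$ and $\xi_t=25\sqrt2\,(N\Lmax/\sts_t)\ln(2/\delta_{m,\sts_t})$ are explicit decreasing functions of $\sts_t$ (recall $\delta_{v,\sts_t},\delta_{m,\sts_t}$ are polynomial in $\sts_t$), after squaring/rearranging each inequality takes the form $\sts_t\ge a\ln(b\,\sts_t^{\,c})$, which I invert using Lemma~\ref{lem:algebra_hoeff}. The $\alpha_t$ inequality yields the threshold $\tilO\!\big(d(\Delta(x^*,x)+\varepsilon)^{-2}\ln(1/\delta)\big)$ and the $\xi_t$ inequality yields $\tilO\!\big(N\Lmax(\Delta(x^*,x)+\varepsilon)^{-1}\ln(1/\delta)\big)$, matching the first and third terms of the claimed $\sts_t$.

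The main obstacle is the DE term $\sum_j\beta_{t,j}|\Delta_j(x_\varepsilon,x)+\varepsilon|$, where the randomness enters through $\phi_{t,j}$ (hence $\beta_{t,j}$). Using $\beta_{t,j}\le\tfrac52\sqrt{(2\phi_{t,j}\Lmax/\sts_t)\ln(2/\delta_{d,\sts_t})}$, the task reduces to controlling $\phi_{t,j}$ by $\min\{16p_j,\tfrac14\}$. I will split the index set according to whether the floor $\tfrac{25}{4}(\Lmax/\sts_t)\ln(2/\delta_{d,\sts_t})$ inside $\phi_{t,j}$ is active, mirroring the $\psi_{t,j,1},\psi_{t,j,2}$ case analysis of Lemma~\ref{lem:REE}. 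For indices where the floor is inactive, $\phi_{t,j}=\min\{4\hatp_{t,j},\tfrac14\}$, and the concentration $\hatp_{t,j}\le p_j+\beta_{t,j}$ (the factor $4$ and the constant $16$ being chosen precisely to absorb the Bernstein slack, cf.\ Lemma~\ref{lem:hbeta}) gives $\phi_{t,j}\le\min\{16p_j,\tfrac14\}$; summing over these $j$ and factoring out the common prefactor produces the bound $\tfrac52\sqrt{(2\Lmax/\sts_t)\ln(2/\delta_{d,\sts_t})}\,\sqrt{\barH(x_\varepsilon,x)}$. For indices where the floor is active, $\phi_{t,j}$ is of order $(\Lmax/\sts_t)\ln(2/\delta_{d,\sts_t})$, so each such $\beta_{t,j}$ is of order $(\Lmax/\sts_t)\ln(2/\delta_{d,\sts_t})$ and their total contribution (at most $N$ terms, with $|\Delta_j(x_\varepsilon,x)+\varepsilon|\le 4$) is a residual of the same order as $\xi_t$, which I fold into the RE budget by inflating its constant. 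Finally I use the identity $\Lmax\,\barH(x_\varepsilon,x)=(\Delta(x^*,x)+\varepsilon)^2\,\rmH_{\DE}(x_\varepsilon,x)$ to rewrite the DE bound as $\tfrac52(\Delta(x^*,x)+\varepsilon)\sqrt{(2\rmH_{\DE}(x_\varepsilon,x)/\sts_t)\ln(2/\delta_{d,\sts_t})}$; requiring this to be at most $\tfrac16(\Delta(x^*,x)+\varepsilon)$ gives $\sts_t\ge C\,\rmH_{\DE}(x_\varepsilon,x)\ln(2/\delta_{d,\sts_t})$ for an absolute constant $C$, which Lemma~\ref{lem:algebra_hoeff} inverts to the second term $\tilO(\rmH_{\DE}(x_\varepsilon,x)\ln(1/\delta))$. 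Summing the three thresholds (each dominating its own condition, the logarithmic factors inflating only mildly under the larger common $\sts_t$) produces the stated value of $\sts_t$ and completes the argument. The delicate point, and the place I expect the most bookkeeping, is the floor/clip case split that turns the empirical $\phi_{t,j}$ into a clean $\min\{16p_j,\tfrac14\}$ while quarantining the small-$p_j$ contexts into the residual term.
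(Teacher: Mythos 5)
Your proposal is correct and follows essentially the same route as the paper's proof: the same VE/DE/RE budget split of $(\Delta(x^*,x)+\varepsilon)/2$, the same replacement of $\hDelta_{t,j}^{\clipt}$ by $\Delta_j$ at a cost of $2\xi_t$ absorbed into the residual budget (which is why the paper's constants satisfy $c_v+c_d+2c_m=1$), the same floor-active/floor-inactive case split on $\phi_{t,j}$ with the bootstrap bound $\phi_{t,j}\le\min\{16p_j,\tfrac14\}$ (that is Lemma~\ref{lem:upbd_phi}, not Lemma~\ref{lem:hbeta}, though the mechanism you describe is the right one), and the same logarithmic inversion. The only deviations are cosmetic — your $\tfrac16$-per-term allocation versus the paper's $c_v=\tfrac34$, $c_d=c_m=\tfrac18$, which changes the absolute constants in \eqref{equ:sts_suff} but not the $\tilO$ rates.
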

    By taking the maximum of \eqref{equ:sts_suff} over all $\varepsilon$-best arms and the suboptimal arms, we get 
    \begin{align}
        \tilT
        &
        :=\max_{x_\varepsilon\in\calX_\varepsilon,x\neq x_\varepsilon,x^*}
        \frac{6400\ln6400\cdot d}{\left(\Delta(x^*,x) + \varepsilon\right)^2}\ln\frac{Kd/\left(\Delta(x^*,x) + \varepsilon\right)^2}{\delta}\nonumber\\*
          &\qquad\qquad+
        6400\ln6400 \cdot \rmH_{\DE}(x_\varepsilon,x)\ln\frac{N\rmH_{\DE}(x_\varepsilon,x)}{\delta} \nonumber
        \\
        &\qquad\qquad
        +
         \frac{3200\sqrt{2}\ln 3200\sqrt{2}\cdot N\Lmax}{\Delta(x^*,x) + \varepsilon}\ln\frac{\frac{KN\Lmax}{\Delta(x^*,x) + \varepsilon}}{\delta}\label{equ:sts_upbd}
        \\&=
        \tilO\left(
         \max_{x_\varepsilon\in\calX_\varepsilon,x\neq x_\varepsilon,x^*}
        \frac{d}{\left(\Delta(x^*,x) + \varepsilon\right)^2}\ln\frac{1}{\delta}
        +
        \rmH_{\DE}(x_\varepsilon,x)\ln\frac{1}{\delta}
        +
        \frac{N\Lmax}{\Delta(x^*,x) + \varepsilon}\ln\frac{1}{\delta}
        \right).
    \end{align}
    \end{proof}
    \begin{proof}[Proof of Lemma~\ref{lem:upbd_sts_two_arms}]
        Let $c_v,c_d,c_m\in(0,1)$ be constants, satisfying $c_v+c_d+2c_m=1$. By the definition of $\rho_t(x_\varepsilon,x)$ in \eqref{equ:cb_rho}, it is sufficient to have 
        \begin{align}
    		\left\{
    			\begin{aligned}
    				& 
    				2\alpha_t \leq c_v\frac{\Delta(x^*,x) + \varepsilon}{2}
    				\\
    				&
    				\sum_{j=1}^N \beta_{t,j}|\hDelta_{t,j}^\clipt(x_\varepsilon,x)+\zeta_t(x_\varepsilon,x)|
    				\leq (c_d+c_m)\frac{\Delta(x^*,x) + \varepsilon}{2}
    				\\
    				&
    				2 \xi_t
    				\leq c_m\frac{\Delta(x^*,x) + \varepsilon}{2}
    			\end{aligned}
    		\right.
	\end{align}
    where we take $\zeta_t(x_\varepsilon,x)=\varepsilon $ for theoretical simplicity, but we adopt $\zeta_t(x,\tilx)=\argmin_{\zeta_t(x,\tilx)\in\bbR}\sum_{j=1}^N \beta_{t,j}(\hat{\Delta}_{t,j}(x,\tilx)+\zeta_t(x,\tilx))^2=-\frac{\sum_{j=1}^N \beta_{t,j}\hat{\Delta}_{t,j}(x,\tilx)}{\sum_{j=1}^N \beta_{t,j}}$ in the algorithm, which still enjoys the theoretical guarantee.

    \noindent\underline{\bf{VE Term}}
	According to the definition of $\alpha_t$, we need to bound:
	\begin{align}
		&\alpha_t \leq c_v \frac{\Delta(x^*,x) + \varepsilon}{4}
		\\
		\Leftrightarrow\quad&
		5\sqrt{\frac{d}{\sts_t}\ln\frac{2}{\delta_{v,\sts_t}}} \leq c_v \frac{\Delta(x^*,x) + \varepsilon}{4}
	\end{align}
    By a coarse estimation, it is sufficient to have
    \begin{align}
		\sts_t 
		\geq
		\frac{\barc_v d}{\left(\Delta(x^*,x) + \varepsilon\right)^2}\ln\frac{Kd/\left(\Delta(x^*,x) + \varepsilon\right)^2}{\delta}
		=
		O\left(\frac{d}{\left(\Delta(x^*,x) + \varepsilon\right)^2}\ln\frac{Kd/\left(\Delta(x^*,x) + \varepsilon\right)^2}{\delta}\right)
	\end{align}
	where $\barc_v=\frac{6400}{c_v^2}\ln\frac{6400}{c_v^2}$.

    \noindent\underline{\bf{DE Term}}
	The difficulty lies at the \DE\ term.
	\begin{align}
        &\sum_{j=1}^N \beta_{t,j}|\hDelta_{t,j}^\clipt(x_\varepsilon,x)+\zeta_t(x_\varepsilon,x)|
        \\
		&\leq
        \sum_{j=1}^N \beta_{t,j}|\hDelta_{t,j}^\clipt(x_\varepsilon,x)+\varepsilon|
        \\
        &\leq
		\sum_{ j:\psi_{t,j}> \hatp_{t,j}} \beta_{t,j}|\hDelta_{t,j}^\clipt(x_\varepsilon,x)+\varepsilon|
        \\
        &\quad
        +
        \sum_{ j:\psi_{t,j}= \hatp_{t,j}} \beta_{t,j}|\Delta_j(x_\varepsilon,x)+\varepsilon|
        + \beta_{t,j}|\hDelta_{t,j}^\clipt(x_\varepsilon,x)-\Delta_j(x_\varepsilon,x)|
        \\
        &\leq
        \sum_{ j:\psi_{t,j}> \hatp_{t,j}} \beta_{t,j} (2+\varepsilon)
        +
        \sum_{ j:\psi_{t,j}= \hatp_{t,j}} \beta_{t,j}|\hDelta_{t,j}^\clipt(x_\varepsilon,x)-\Delta_j(x_\varepsilon,x)|
        \\
        &\quad
        +\sum_{ j:\psi_{t,j}= \hatp_{t,j}} \beta_{t,j}|\Delta_j(x_\varepsilon,x)+\varepsilon|
        \\
        &\leq
        2\xi_t + \sum_{ j:\psi_{t,j}= \hatp_{t,j}} \beta_{t,j}|\Delta_j(x_\varepsilon,x)+\varepsilon|
	\end{align}
    We will upper bound $2\xi_t$ by $c_m \frac{\Delta(x^*,x) + \varepsilon}{2}$ first;
    the second term will be included in the RE term and analyzed later.
    Therefore, it is sufficient to have
    \begin{equation}\label{equ:DE_suff}
        \sum_{ j:\psi_{t,j}= \hatp_{t,j}} \beta_{t,j}|\Delta_j(x_\varepsilon,x)+\varepsilon| 
        \leq
        c_d \frac{\Delta(x^*,x) + \varepsilon}{2}
    \end{equation}

    By the definition of $\beta_{t,j}$ in \eqref{equ:cr} and Lemma~\ref{lem:upbd_phi}, we get
    \begin{equation}\label{equ:betatj_upbd}
        \beta_{t,j}
        =
        \frac{5}{2}\sqrt{
            \frac{2\phi_{t,j}\Lmax}{\sts_t}\ln\frac{2}{\delta_{d,\sts_t}}
            }
        \leq 
        \frac{5}{2}\sqrt{
            \frac{2\min\left\{ 16p_j,\frac{1}{4}\right\}\Lmax}{\sts_t}\ln\frac{2}{\delta_{d,\sts_t}}
            }
    \end{equation}
    \begin{restatable*}{lemma}{lemupbdphi}\label{lem:upbd_phi}
        If $\psi_{t,j}= \hatp_{t,j}$,
        then 
        \begin{equation}
            \phi_{t,j}\leq  \min\left\{ 16p_j,\frac{1}{4}\right\}.
        \end{equation}
    \end{restatable*}
    Therefore, in order for \eqref{equ:DE_suff} to hold, it is sufficient to have
    \begin{align}
        &
        \sum_{ j:\psi_{t,j,2}= \hatp_{t,j}} 
        \frac{5}{2}\sqrt{
            \frac{2\min\left\{ 16p_j,\frac{1}{4}\right\}\Lmax}{\sts_t}\ln\frac{2}{\delta_{d,\sts_t}}
        }
        |\Delta_j(x_\varepsilon,x)+\varepsilon| 
        \leq
        c_d \frac{\Delta(x^*,x) + \varepsilon}{2}
        \\
        \Leftrightarrow\quad&
        \frac{5}{2}\sqrt{
            \frac{2\Lmax}{\sts_t}\ln\frac{2}{\delta_{d,\sts_t}}
        }
        \sum_{ j:\psi_{t,j,2}= \hatp_{t,j}} 
        \sqrt{\min\left\{ 16p_j,\frac{1}{4}\right\}}
        |\Delta_j(x_\varepsilon,x)+\varepsilon| 
        \leq
        c_d \frac{\Delta(x^*,x) + \varepsilon}{2}
        \\
        \Leftarrow\quad&
        \frac{5}{2}\sqrt{
            \frac{2\Lmax}{\sts_t}\ln\frac{2}{\delta_{d,\sts_t}}
        }
        \sum_{ j=1}^N 
        \sqrt{\min\left\{ 16p_j,\frac{1}{4}\right\}}
        |\Delta_j(x_\varepsilon,x)+\varepsilon| 
        \leq
        c_d \frac{\Delta(x^*,x) + \varepsilon}{2}
        \\
        \Leftarrow\quad&
        \sts_t \geq 
        \barc_d \rmH_{\DE}(x_\varepsilon,x)\ln\frac{N\rmH_{\DE}(x_\varepsilon,x)}{\delta}
        =O\left(\rmH_{\DE}(x_\varepsilon,x)\ln\frac{N\rmH_{\DE}(x_\varepsilon,x)}{\delta}\right)
    \end{align}
    where  $\barc_d = \frac{100}{c_d^2}\ln \frac{100}{c_d^2}$.

    \noindent\underline{\bf{RE Term}} By the definition of $\xi_t$ in \eqref{equ:cr}, it is sufficient to have
    \begin{align}
        &
        25\sqrt{2}\frac{N\Lmax}{\sts_t}\ln\frac{2}{\delta_{m,\sts_t}}
        \leq
        c_m\frac{\Delta(x^*,x) + \varepsilon}{2}
        \\
        \Leftarrow \quad&
         \sts_t \geq 
        \frac{\barc_m\cdot N\Lmax}{\Delta(x^*,x) + \varepsilon}\ln\frac{\frac{KN\Lmax}{\Delta(x^*,x) + \varepsilon}}{\delta}
        =O\left(\frac{N\Lmax}{\Delta(x^*,x) + \varepsilon}\ln\frac{\frac{KN\Lmax}{\Delta(x^*,x) + \varepsilon}}{\delta}\right)
    \end{align}
    where $\barc_m = \frac{400\sqrt{2}}{c_m}\ln\frac{400\sqrt{2}}{c_m}$.

    By taking $c_v = \frac{3}{4},c_d=\frac{1}{8}$ and $c_m=\frac{1}{8}$, 
    the upper bound on $\sts_t$ can be concluded as
    \begin{align}\label{equ:upbd_sts_1}
        \sts_t &=         
        \frac{6400\ln6400\cdot d}{\left(\Delta(x^*,x) + \varepsilon\right)^2}\ln\frac{Kd/\left(\Delta(x^*,x) + \varepsilon\right)^2}{\delta}
        +
        6400\ln6400 \cdot \rmH_{\DE}(x_\varepsilon,x)\ln\frac{N\rmH_{\DE}(x_\varepsilon,x)}{\delta}
        \\
        &\qquad\qquad
        +
         \frac{3200\sqrt{2}\ln 3200\sqrt{2}\cdot N\Lmax}{\Delta(x^*,x) + \varepsilon}\ln\frac{\frac{KN\Lmax}{\Delta(x^*,x) + \varepsilon}}{\delta}
        \\
        &=O\left(
        \frac{d}{\left(\Delta(x^*,x) + \varepsilon\right)^2}\ln\frac{Kd/\left(\Delta(x^*,x) + \varepsilon\right)^2}{\delta}
        \right.
        \\ &\qquad\qquad\left.
        +
        \rmH_{\DE}(x_\varepsilon,x)\ln\frac{N\rmH_{\DE}(x_\varepsilon,x)}{\delta}
        +
        \frac{N\Lmax}{\Delta(x^*,x) + \varepsilon}\ln\frac{\frac{KN\Lmax}{\Delta(x^*,x) + \varepsilon}}{\delta}
        \right)
        \\
        &=\tilO\left(
        \frac{d}{\left(\Delta(x^*,x) + \varepsilon\right)^2}\ln\frac{1}{\delta}
        +
        \rmH_{\DE}(x_\varepsilon,x)\ln\frac{1}{\delta}
        +
        \frac{N\Lmax}{\Delta(x^*,x) + \varepsilon}\ln\frac{1}{\delta}
        \right).
    \end{align}
    \end{proof}


    \noindent{\underline{\bf Step 3:}}
    \thmUpperBd*
    \begin{proof}[Proof of Theorem~\ref{thm:upbd}]
    By 
    Assumption~\ref{ass:distin}, 
    and the choice of the parameters, Lemma~\ref{lem:Good} indicates \Good\ is guaranteed with probability at least $1-\frac{\delta}{2}$.
    
    Note that some arm pulls are not counted in $\sts_t$:
	\begin{itemize}
		\item In every $\gamma$ time steps, Algorithm~\ref{alg:PSBAI} enters the CD phase to select a CD sample.
		\item When a changepoint is detected, Algorithm~\ref{alg:PSBAI} steps into the CA phase, where $\frac{w}{2}$ arms are sampled in Algorithm~\ref{alg:CA}.
		\item During the CA phase, we abandon $\frac{w\gamma}{2}$ samples.
	\end{itemize}
	Therefore, when $t$ is large (or after the first stationary segment)
	\begin{align}\label{equ:upbd_t}
		t 
        \leq 
		\sts_t\cdot \frac{\gamma}{\gamma-1}\frac{L_{\min}}{L_{\min}-w\gamma-\frac{w}{2}}
		\leq
		T_t \cdot \frac{\gamma}{\gamma-1}\frac{L_{\min}}{L_{\min}-\frac{3w\gamma}{2}}.
	\end{align}
    As $\gamma\geq 2$ and $w\gamma$ are set to be upper bounded by $\frac{L_{\min}}{3}$, so the fractions above is upper bounded by an absolute constant $4$, e.g., with $\gamma=2$ and $w=\frac{L_{\min}}{6}$, the constant is $4$.
    Therefore, $t$ is of the same order as $\sts_t$. By Lemma~\ref{lem:upbd_sts}, \eqref{equ:upbd} holds with probability $1-\delta$.
    \end{proof}

    \section{Analysis of \PSBAI: Technical Lemmas}\label{sec:tech}
    \lemhbeta
    \begin{proof}[Proof of Lemma~\ref{lem:hbeta}]
    Recall
    \begin{align}
        \phi_{t,j} =\min \left\{4\max\left\{
                \hatp_{t,j},\frac{25}{4}\frac{\Lmax}{\sts_t}\ln\frac{2}{\delta_{d,\sts_t}}
            \right\}
            ,\frac{1}{4}\right\}
    \end{align}
    The proof is scheduled in $2$ steps.
    
    \noindent{{\bf Step 1}}:
    Upper bound $\sqrt{
            \frac{2p_j(1-p_j)\Lmax}{\sts_t}\ln\frac{2}{\delta_{d,\sts_t}}
            }$. 
            
            When $\frac{\Lmax}{3\sts_t} \ln\frac{2}{\delta_{d,\sts_t}}
            \leq
            \sqrt{
            \frac{2p_j(1-p_j)\Lmax}{\sts_t}\ln\frac{2}{\delta_{d,\sts_t}}
            }$,
            we have
            \begin{equation}
                \tbeta_{t,j} 
                =
                \frac{5}{2}\sqrt{
                \frac{2p_j(1-p_j)\Lmax}{\sts_t}\ln\frac{2}{\delta_{d,\sts_t}}
                }
            \end{equation}
            As $p_j(1-p_j)\leq \frac{1}{4},\forall p_j\in(0,1)$, this nai\"ve bound gives 
            $\tbeta_{t,j} 
            \leq
                \frac{5}{2}\sqrt{
                \frac{2\cdot\frac{1}{4}\cdot\Lmax}{\sts_t}\ln\frac{2}{\delta_{d,\sts_t}}
                }$. 
                
            \underline{If $T_{t,j}>0$}, according to Lemma~\ref{lem:phi},
            \begin{align}
                \tbeta_{t,j} 
                &=
                    \frac{5}{2}\sqrt{
                    \frac{2p_j(1-p_j)\Lmax}{\sts_t}\ln\frac{2}{\delta_{d,\sts_t}}
                    }
                \\
                &\leq
                \frac{5}{2}\sqrt{
                    \frac{2p_jT_{t,j}\Lmax}{\sts_tT_{t,j}}\ln\frac{2}{\delta_{d,\sts_t}}
                    }
                \\
                &\leq
                \frac{5}{2}\sqrt{
                    \frac{2\hatp_{t,j}\Lmax}{\sts_t}
                    \cdot 4\max\left\{
                        1,\frac{25}{4}\frac{\Lmax}{T_{t,j}}\ln\frac{2}{\delta_{d,\sts_t}}
                    \right\}
                    \ln\frac{2}{\delta_{d,\sts_t}}
                    },
            \end{align}
            thus we have
            \begin{equation}\label{equ:hbeta_upbd}
                \tbeta_{t,j} 
                \leq 
                \frac{5}{2}\sqrt{
                \frac{2\phi_{t,j}\Lmax}{\sts_t}\ln\frac{2}{\delta_{d,\sts_t}}
                }
            \end{equation}
            \underline{If $T_{t,j}=0$}, i.e., the latent context $j$ has never been observed and $\hatp_{t,j}=0$, we have 
            \begin{align}
    			\tbeta_{t,j} 
                &=
                    \frac{5}{2}\sqrt{
                    \frac{2p_j(1-p_j)\Lmax}{\sts_t}\ln\frac{2}{\delta_{d,\sts_t}}
                    }
    			\\
    			&\leq 
    			\frac{5}{2}\sqrt{\frac{2\tbeta_{t,j} (1-\tbeta_{t,j} )L_{\max}}{\sts_t}\ln\frac{2}{\delta_{d,\sts_t}}}
    			\\
    			\Rightarrow\quad&
    			\tbeta_j(t,\delta)
                \leq
                \frac{\frac{25}{8}L_{\max}\ln\frac{2}{\delta_{d,\sts_t}}}{\sts_t+\frac{25}{8}L_{\max}\ln\frac{2}{\delta_{d,\sts_t}}}
                \leq
                \frac{25}{8}\frac{L_{\max}}{\sts_t}\ln\frac{2}{\delta_{d,\sts_t}}
                \label{equ:tjis0}
		  \end{align}
            Take the trivial bound into consideration and observe that $\phi_{t,j}=\min\{\frac{25L_{\max}}{\sts_t}\ln\frac{2}{\delta_{d,\sts_t}},\frac{1}{4}\}$ and that
            $$
            \frac{25}{8}\frac{L_{\max}}{\sts_t}\ln\frac{2}{\delta_{d,\sts_t}}
            \leq
            \frac{25\sqrt{2}L_{\max}}{2\sts_t}\ln\frac{2}{\delta_{d,\sts_t}}
            = 
            \frac{5}{2}\sqrt{
                        \frac{2\phi_{t,j}\Lmax}{\sts_t}\ln\frac{2}{\delta_{d,\sts_t}}
            }.
            $$
            Thus, \eqref{equ:hbeta_upbd} also holds for $T_{t,j}=0$.

        \underline{To conclude}, we have
        \begin{align}
                \tbeta_{t,j}
                &=
                    \frac{5}{2}
                    \max\left\{
                    \frac{\Lmax}{3\sts_t} \ln\frac{2}{\delta_{d,\sts_t}}
                    ,
                    \sqrt{
                    \frac{2p_j(1-p_j)\Lmax}{\sts_t}\ln\frac{2}{\delta_{d,\sts_t}}
                    }
                    \right\}\nonumber\\*
                &\leq 
                    \frac{5}{2}
                    \max\left\{
                        \frac{\Lmax}{3\sts_t} \ln\frac{2}{\delta_{d,\sts_t}}
                        ,
                        \sqrt{
                        \frac{2\phi_{t,j}\Lmax}{\sts_t}\ln\frac{2}{\delta_{d,\sts_t}}
                        }
                    \right\}
        \end{align}

        \noindent{{\bf Step 2}}: Simplify the bound. Recall that
        $$ \phi_{t,j}=\min\left\{
                4 \max\left\{
                        \hatp_{t,j},\frac{25}{4}\frac{\Lmax}{\sts_t}\ln\frac{2}{\delta_{d,\sts_t}}
                    \right\}
                ,
                \frac{1}{4}
                \right\}$$
        As $\sts_t\geq \frac{2L_{\max}}{9}\ln\frac{2}{\delta_{d,\sts_t}}$, we have
        \begin{equation}
            \frac{\Lmax}{3\sts_t} \ln\frac{2}{\delta_{d,\sts_t}}
            \leq
            \sqrt{
            \frac{2\cdot\frac{1}{4}\Lmax}{\sts_t}\ln\frac{2}{\delta_{d,\sts_t}}
            }
        \end{equation}

        According to the definition of $\phi$, 
         \underline{if $\hatp_{t,j}\geq\frac{25}{4}\frac{\Lmax}{\sts_t}\ln\frac{2}{\delta_{d,\sts_t}}$}, we have 
         \begin{equation}
            \frac{\Lmax}{3\sts_t} \ln\frac{2}{\delta_{d,\sts_t}}
            \leq
            \frac{5\sqrt{2}\Lmax}{\sts_t} \ln\frac{2}{\delta_{d,\sts_t}}
            \leq
            \sqrt{
            \frac{2\cdot 4\hatp_{t,j}\Lmax}{\sts_t}\ln\frac{2}{\delta_{d,\sts_t}}
            }
         \end{equation}
          \underline{If $\hatp_{t,j}\leq\frac{25}{4}\frac{\Lmax}{\sts_t}\ln\frac{2}{\delta_{d,\sts_t}}$}, we have 
          \begin{equation}
              \frac{\Lmax}{3\sts_t} \ln\frac{2}{\delta_{d,\sts_t}}
            \leq
                \frac{25\sqrt{2}\Lmax}{\sts_t} \ln\frac{2}{\delta_{d,\sts_t}}
                =
                \sqrt{
                \frac{2\Lmax}{\sts_t}\ln\frac{2}{\delta_{d,\sts_t}}
                \cdot
                25\frac{\Lmax}{\sts_t}\ln\frac{2}{\delta_{d,\sts_t}}
                }
          \end{equation}
           Thus
         \begin{equation}
                \max\left\{
                    \frac{\Lmax}{3\sts_t} \ln\frac{2}{\delta_{d,\sts_t}}
                    ,
                    \sqrt{
                    \frac{2\phi_{t,j}\Lmax}{\sts_t}\ln\frac{2}{\delta_{d,\sts_t}}
                    }
                \right\}
                =
                \sqrt{
                    \frac{2\phi_{t,j}\Lmax}{\sts_t}\ln\frac{2}{\delta_{d,\sts_t}}}
         \end{equation}
        This gives us the desired result.
        \end{proof}
        \begin{lemma}\label{lem:phi}
            When $\tbeta_{t,j} 
                =
                \frac{5}{4}\sqrt{
                \frac{2p_j(1-p_j)\Lmax}{\sts_t}\ln\frac{2}{\delta_{d,\sts_t}}
                }$ and $T_{t,j}>0$,
            we have 
            \begin{equation}
                \frac{p_j}{T_{t,j}}\leq \frac{4}{\sts_t}\max\left\{
                        1,\frac{25}{16}\frac{\Lmax}{T_{t,j}}\ln\frac{2}{\delta_{d,\sts_t}}
                    \right\}
            \end{equation}
        \end{lemma}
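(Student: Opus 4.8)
The plan is to rewrite the claimed inequality in an equivalent multiplicative form and then obtain it by a short self-bounding argument. Writing $\kappa_t := \frac{\Lmax}{\sts_t}\ln\frac{2}{\delta_{d,\sts_t}}$ and recalling $\hatp_{t,j}=T_{t,j}/\sts_t$, I would first observe that multiplying the target through by $T_{t,j}>0$ turns it into the equivalent statement
\[
    p_j \;\le\; \max\Big\{\,4\hatp_{t,j},\ \tfrac{25}{4}\kappa_t\,\Big\},
\]
since $4\hatp_{t,j}\cdot\frac{25}{16}\frac{\Lmax}{T_{t,j}}\ln\frac{2}{\delta_{d,\sts_t}} = \frac{25}{4}\kappa_t$ using $\hatp_{t,j}/T_{t,j}=1/\sts_t$. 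Thus it suffices to upper bound the true mass $p_j$ by the observable quantity $\hatp_{t,j}$ up to the additive floor $\frac{25}{4}\kappa_t$. If $p_j=0$ both sides vanish, so I may assume $p_j>0$.

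The only external input is the oracle concentration bound \eqref{equ:tbeta}: conditional on $\Good$, with probability at least $1-\delta_{d,\sts_t}$ we have $|\hatp_{t,j}-p_j|\le\tbeta_{t,j}$, hence $p_j \le \hatp_{t,j}+\tbeta_{t,j}$. Working on this event and using the hypothesis that $\tbeta_{t,j}$ equals its square-root branch, together with the trivial bound $1-p_j\le 1$, gives
\[
    p_j \;\le\; \hatp_{t,j}+\tfrac54\sqrt{2\,p_j\,\kappa_t}.
\]
This is an implicit inequality for $p_j$, which I would resolve by a two-case split (rather than the quadratic formula) so as to land on the exact constants. If $p_j\le 4\hatp_{t,j}$, the first term of the max already dominates and there is nothing more to do. Otherwise $\hatp_{t,j}<p_j/4$, and substituting into the display yields $\frac34 p_j<\frac54\sqrt{2\,p_j\,\kappa_t}$; squaring and cancelling one factor of $p_j>0$ gives $\frac{9}{25}p_j<2\kappa_t$, i.e. $p_j<\frac{50}{9}\kappa_t<\frac{25}{4}\kappa_t$, so the floor term dominates. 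In either case $p_j\le\max\{4\hatp_{t,j},\frac{25}{4}\kappa_t\}$, and dividing back through by $T_{t,j}$ recovers the stated bound.

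The computations are all routine; the one point requiring care is that the argument is genuinely self-referential — $\tbeta_{t,j}$ depends on the unknown $p_j$ through its own variance proxy $p_j(1-p_j)$ — so the real content is converting the implicit inequality $p_j\lesssim\hatp_{t,j}+\sqrt{p_j\kappa_t}$ into an explicit bound while keeping the constants tight enough that $\frac{50}{9}\le\frac{25}{4}$ and the factor $\frac{25}{16}$ inside the max emerges exactly. I would also flag that, because this step invokes \eqref{equ:tbeta}, the lemma is to be read as holding on the concentration event of that display rather than unconditionally; this is precisely the event on which Lemma~\ref{lem:hbeta} then replaces $\tbeta_{t,j}$ by the computable radius $\beta_{t,j}$, and since \eqref{equ:tbeta} is established by Bernstein's inequality independently of $\beta_{t,j}$ and $\phi_{t,j}$, no circularity is introduced.
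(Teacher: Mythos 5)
Your proof is correct and follows essentially the same route as the paper's: both feed the concentration bound $p_j\le\hatp_{t,j}+\tbeta_{t,j}$ from \eqref{equ:tbeta} (relaxed via $1-p_j\le 1$) into a self-bounding inequality and resolve it by a case split, the only cosmetic difference being that you work with $p_j$ directly in the equivalent form $p_j\le\max\{4\hatp_{t,j},\tfrac{25}{4}\tfrac{\Lmax}{\sts_t}\ln\tfrac{2}{\delta_{d,\sts_t}}\}$, whereas the paper first divides through by $T_{t,j}\sts_t$ and solves for $p_j/T_{t,j}$. Your closing caveat is also accurate: the paper's own proof invokes the same concentration inequality, so the lemma is indeed a statement on that event (left implicit in the paper), and no circularity arises.
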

        \begin{proof}[Proof of Lemma~\ref{lem:phi}]
        \begin{align}
            \frac{p_j}{T_{t,j}} 
            &=
            \frac{p_j \sts_t}{T_{t,j}\sts_t}
            \\
            &\leq
                \frac{
                    \hatp_{t,j} \sts_t+\frac{5}{4}
                    \sqrt{
                        2p_j(1-p_j)\Lmax\sts_t\ln\frac{2}{\delta_{d,\sts_t}}
                    }
                }
                {T_{t,j}\sts_t}
            \\
            &=
                \frac{1}{\sts_t}
                +
                \frac{5}{4}\sqrt{
                \frac{p_j}{T_{t,j}} 
                }
                \cdot
                \sqrt{
                \frac{1}{T_{t,j}} 
                -
                \frac{p_j}{T_{t,j}} 
                }
                \cdot
                \sqrt{
                \frac{2\Lmax}{\sts_t}
                \ln\frac{2}{\delta_{d,\sts_t}}
                }
            \\
            \Rightarrow\quad
            &
                \frac{p_j}{T_{t,j}} 
                \leq
                    \frac{4}{\sts_t}
                    \max\left\{
                        1,\frac{25}{16}\frac{\Lmax}{T_{t,j}}\ln\frac{2}{\delta_{d,\sts_t}}
                    \right\}
        \end{align}
                
        \end{proof}


\lemupbdphi
\begin{proof}[Proof of Lemma~\ref{lem:upbd_phi}]
        Recall to the definition of $\phi_{t,j}$ in \eqref{equ:cr} and $\psi_{t,j}$ in \eqref{equ:barkappa},
        \begin{equation}
        \phi_{t,j}=
        \min\left\{
                4 \max\left\{
                        \hatp_{t,j},\frac{25}{4}\frac{\Lmax}{\sts_t}\ln\frac{2}{\delta_{d,\sts_t}}
                    \right\}
                ,
                \frac{1}{4}
                \right\}
        ,\
            \psi_{t,j}= \max\left\{
                        \hatp_{t,j},\frac{25}{4}\frac{\Lmax}{\sts_t}\ln\frac{2}{\delta_{d,\sts_t}}
                    \right\},
        \end{equation}
        we have 
        \begin{equation}
            \hatp_{t,j}\geq\frac{25}{4}\frac{\Lmax}{\sts_t}\ln\frac{2}{\delta_{d,\sts_t}},
            \quad \phi_{t,j} = \min\left\{ 4\hatp_{t,j},\frac{1}{4}\right\}
        \end{equation}
        We only need to upper bound $\hatp_{t,j}$.

        By \eqref{equ:tbeta},
        \begin{align}
            &\psi_{t,j} = \hatp_{t,j}
            \leq
            p_j+\frac{5}{2}
                \max\left\{
                \frac{\Lmax}{3\sts_t} \ln\frac{2}{\delta_{d,\sts_t}}
                ,
                \sqrt{
                \frac{2p_j(1-p_j)\Lmax}{\sts_t}\ln\frac{2}{\delta_{d,\sts_t}}
                }
                \right\}
            \\
            \Rightarrow\quad&
                    \psi_{t,j} 
                    \leq
                     p_j+\frac{5}{2}
                    \max\left\{
                    \frac{4}{75}\psi_{t,j}
                    ,
                    \sqrt{
                    p_j(1-p_j)\frac{8}{25}\psi_{t,j}
                    }
                    \right\}
                \\& \hphantom{\psi_{t,j}  }
                =
                    p_j+
                    \max\left\{
                    \frac{4}{15}\psi_{t,j}
                    ,
                    \sqrt{
                    2p_j(1-p_j)\psi_{t,j}
                    }
                    \right\}
            \\
            \Rightarrow\quad&
            \psi_{t,j} \leq 4p_j
        \end{align}
        Thus $\hatp_{t,j}= \psi_{t,j} \leq 4p_j$, and $\phi_{t,j}\leq  \min\left\{ 16p_j,\frac{1}{4}\right\}$.
    \end{proof}


\section{Upper Bound of \algParallel: Proof of Theorem~\ref{thm:parallel_upper_exp}}


\label{pf:thm_parallel_upper_exp}

\thmParallelUpperExp*


\begin{proof}

We let $\tau_1$ denote the stopping time of \PSBAI, $\tau_2$ denote the stopping time of \algNa, and $\tau$ denote the stopping time of \algParallel when an identical sequence of samples and returns are applied to them. By the design of these algorithms, we have
\begin{align}
  \tau \le \min \{ \tau_1, \tau_2 \}.
\end{align}

\underline{\bf Part I}: Prove that $\bbP( \xOutParallel \notin \calX_\varepsilon  )\le \delta$.

Let event $\calE_1=\mathsf{1}\{ \tau_1< \tau_2 \}\cap \mathsf{1}\{ \text{\PSBAI\ returns arm $\xOutPSBAI$  when it terminates}\}$. Recall that $\xOutNa$ denotes the recommended arm of \algNa{} when it terminates and $\xOutParallel$ denotes the recommended arm of \algParallel{} when it terminates. The design of algorithms indicates that: 
\begin{align}
    \text{$\xOutParallel=\xOutPSBAI$ when $\calE_1$ occurs, and $\xOutParallel=\xOutNa$ when $\calE_1^c$ occurs.}
\end{align}
Moreover, with the performance guarantees of \algNa{} and \PSBAI (shown in Proposition~\ref{prop:upper_naive_upper_exp} and Theorem~\ref{thm:upbd} individually), we have
\begin{align}
    \bbP[ \xOutParallel \notin \calX_\varepsilon  ]
    &
    = \bbP[\xOutParallel \notin \calX_\varepsilon | \calE_1] \cdot \bbP[\calE_1]
        +\bbP(\xOutParallel \notin \calX_\varepsilon | \calE_1^c] \cdot \bbP[\calE_1^c] \nonumber\\*
    &= \bbP[\xOutPSBAI \notin \calX_\varepsilon | \calE_1] \cdot \bbP[\calE_1]
        +\bbP[\xOutNa \notin \calX_\varepsilon | \calE_1^c] \cdot \bbP[\calE_1^c]
    \\&
    \le \delta \cdot \bbP[\calE_1]
        + \delta \cdot \bbP[\calE_1^c]
    = \delta \cdot \left( \bbP[\calE_1] + \bbP[\calE_1^c] \right)
    = \delta.
\end{align}

\underline{\bf Part II}: Derive the expected sample complexity of \algParallel.

{\bf We first derive the conditional expectation of the stopping time of \PSBAI.}
   By the same argument as in \eqref{equ:upbd_t}, $\sts_t\leq t\leq 4\sts_t$ for any $t$.
   Therefore, it is sufficient to upper bound $\bbE[\sts_{\tau_1}]$ and $\bbE[\tau_1]$ can be upper bounded by $ 4\bbE[\sts_{\tau_1}]$.
    
    According to Lemma~\ref{lem:upbd_sts} and Lemma~\ref{lem:upbd_sts}, when both events \Good\ and \CI\ occur,
    \PSBAI{} terminates with $\sts_{\tau_1}$ satisfying $\sts_{\tau_1}\le \tilT$,
    where $\tilT$ is defined in \eqref{equ:sts_upbd} and as below:
    \begin{align}
        \tilT
        &
        :=\max_{x_\varepsilon\in\calX_\varepsilon,x \neq x_\varepsilon,x^*}
        \frac{6400\ln6400\cdot d}{\left(\Delta(x^*,x ) + \varepsilon\right)^2}\ln\frac{Kd/\left(\Delta(x^*,x ) + \varepsilon\right)^2}{\delta} \nonumber\\*
        &\qquad\qquad+
        6400\ln6400 \cdot \rmH_{\DE}(x_\varepsilon,x )\ln\frac{N\rmH_{\DE}(x_\varepsilon,x )}{\delta}
        \\
        &\qquad\qquad
        +
         \frac{3200\sqrt{2}\ln 3200\sqrt{2}\cdot NL_{\max}}{\Delta(x^*,x ) + \varepsilon}\ln\frac{\frac{KNL_{\max}}{\Delta(x^*,x ) + \varepsilon}}{\delta}.
    \end{align}
    
    Given time step $t$ with $\sts_t\geq 2\tilT$ (thus $t\geq 8\tilT$), we will upper bound $\bbP[\tau_1>t]$ in the following. We firstly upper bound $\bbP\left[\sts_{\tau_1}>\sts_t\right]$. Note that
    \begin{align}
        \bbP\left[\sts_{\tau_1}>\sts_t\right] 
        &=
            \bbP\left[\sts_{\tau_1}>\sts_t|\sts_{\tau_1}\geq\frac{\sts_t}{2}+1\right]\bbP\left[\sts_{\tau_1}\geq\frac{\sts_t}{2}+1\right]  \nonumber\\*
        &\qquad+
            \bbP\left[\sts_{\tau_1}>\sts_t|\sts_{\tau_1}<\frac{\sts_t}{2}+1\right]\bbP\left[\sts_{\tau_1}<\frac{\sts_t}{2}+1\right]
        \\
        &\leq \bbP\left[\sts_{\tau_1}>\sts_t|\sts_{\tau_1}\geq\frac{\sts_t}{2}+1\right].
    \end{align}
    If \PSBAI{} fails to terminate with $\sts_{\tau_1}\leq \frac{\sts_t}{2}$, it implies that event $\Good\bigcap \left(\bigcap_{s\in\calI}\CI_s\right)$ fails, where $\calI=\{s:\frac{\sts_t}{2}+1\leq\sts_s\leq \sts_t\}$.
    Hence,
    \begin{align}
        \bbP\left[\sts_{\tau_1}>\sts_t\right]
        &\leq
        \bbP\left[\sts_{\tau_1}>\sts_t\bigg|\sts_{\tau_1}\geq\frac{\sts_t}{2}+1\right] \nonumber\\*
        &\leq
            \bbP\left[\Good^c\bigcup \left(\bigcup_{s\in\calI}\CI_s^c\right)\right]\nonumber\\*
        &\leq
            \bbP\left[\Good^c\right]
            +
            \bbP\left[\Good\bigcap \left(\bigcup_{s\in\calI}\CI_s^c\right)\right]
        \\
        &
        \leq
            \bbP\left[\Good^c\right]
            +
            \bbP\left[\left(\bigcup_{s\in\calI}\CI_s^c\right)\Big|\Good\right]\nonumber\\*
         &\overset{(a)}{\leq}
            \frac{\delta}{2\tau^*}
            +
            2\sum_{\sts_s={\sts_t}/{2}+1}^{\sts_t}
            \left( K\delta_{v,\sts_s}+N\delta_{d,\sts_s}+KN \delta_{m,\sts_s} \right)
        \\
        &
        \overset{(b)}{\leq}
            \frac{\delta}{2\tau^*}
            +
             2\sum_{\sts_s={\sts_t}/{2}+1}^{\sts_t}
            \frac{\delta}{5\sts_s^3}
        \leq
            \frac{\delta}{2\tau^*}
            +
            2\int_{{\sts_t}/{2}}^{\sts_t}
            \frac{\delta}{5x^3} ~\rmd x
        =
            \frac{\delta}{2\tau^*}
            +
           \frac{3\delta}{5\sts_t^2},
    \end{align}
    where $(a)$ is obtained by applying Lemma~\ref{lem:rho} to time steps in $\calI\cap\calT_t$ (i.e., the Exp time steps in $\calI$) and note that there are at most two time steps $t_1<t_2$ with $\sts_{t_1}=\sts_{t_2}$ due to the reversion step; $(b)$ is derived by substituting $\delta_{v,\sts_t}=\frac{\delta}{15K\sts_t^3},\delta_{d,\sts_t}=\frac{\delta}{15N\sts_t^3},\delta_{m,\sts_t}=\frac{\delta}{15KN\sts_t^3}$ which are used to define the confidence radius $\rho$ in \eqref{equ:cb_rho}. 
    By using the fact that $4\sts_{\tau_1}\geq \tau_1$, for any $\sts_t\geq 2\tilT$, we have 
    \begin{align}
        &
        \bbP\left[\tau_1>4\sts_t\right]
        \leq 
        \bbP\left[\sts_{\tau_1}>\sts_t\right]
        \leq
        \frac{\delta}{2\tau^*}
        +
       \frac{3\delta}{5\sts_t^2}
       \\
       \Rightarrow\quad&
       \bbP\left[\tau_1>4\sts_t+i\right]
       \leq
       \frac{\delta}{2\tau^*}
        +
       \frac{3\delta}{5\sts_t^2},\quad i=0,1,2,3
       \\
       \Rightarrow\quad&
       \bbP\left[\tau_1>t\right]
       \leq
       \frac{\delta}{2\tau^*}
        +
       \frac{48\delta}{5(t-4)^2}, \quad\forall 8\tilT \le t\le \tau^*.
    \end{align}
    This indicates for $t\geq 8\tilT$, the probability that \PSBAI{} does not stop after $t$ time steps is $\frac{\delta}{2\tau^*}
        +
       \frac{48\delta}{5(t-4)^2}$. Therefore, by Tonelli's Theorem, we have
    \begin{align}
        \bbE[\tau_1 |8\tilT<\tau_1\leq \tau^*] 
        &
        \leq
        \sum_{t=8\tilT}^{\tau^*-1} \bbP[\tau_1>t]
        \leq
        \sum_{t=8\tilT}^{\tau^*-1}  \left( \frac{\delta}{2\tau^*}+\frac{48\delta}{5(t-4)^2} \right)
        \\&
        \leq
        1+
        \int_{8\tilT-1}^{\tau^*-1}\frac{48\delta}{5(t-4)^2} ~\rmd t
        \leq
        2.
        \label{eqn:parallel_psbai_exp_up}
    \end{align}

{\bf Next, we bound $\bbE \tau$, the expected sample complexity of \algParallel.} $\bbE \tau$ can be decomposed as below:
\begin{align}
    \bbE \tau  
    &
    \le 8\tilde{T} + \bbE [\tau | 8\Tilde{T} < \tau \le \tau^* ] + \bbE [\tau | \tau > \tau^* ].
    \label{eqn:parallel_decomp}
\end{align}
Since $\tau=\min \{ \tau_1,\tau_2\} $ and $\bbP[ \min\{ \tau_1,\tau_2\} > t ] \le \bbP[  \tau_1 > t ]  $ for all $t$, we have
\begin{align}
    \bbE [\tau | 8\Tilde{T} < \tau \le \tau^* ]
    &
    = \sum_{t=8\tilde{T}}^{\tau^*-1} \bbP[ \tau  > t ] 
    = \sum_{t=8\tilde{T}}^{\tau^*-1} \bbP[ \min\{ \tau_1,\tau_2\} > t ] 
    \\&
    \le \sum_{t=8\tilde{T}}^{\tau^*-1} \bbP[  \tau_1 > t ] 
    = \bbE[ \tau_1 |  8\Tilde{T} < \tau_1 \le \tau^* ].
    \label{eqn:parallel_decomp_1}
\end{align}
Besides, since \PSBAI{} will terminate after $\tau^*$ time steps, i.e., $\bbP(\tau_1\le \tau^*)=1$. We have
\begin{align}
    \bbE [\tau | \tau \ge \tau^* ] 
    = \sum_{t=\tau^*}^{\infty} \bbP[ \tau  \ge t ] 
    \le 1 + \sum_{t=\tau^*+1}^{\infty} \bbP[ \tau_2  \ge t ] 
    = 1+ \bbE [\tau_2 | \tau_2 > \tau^*  ] .
    \label{eqn:parallel_decomp_2}
\end{align}
Lemma~\ref{lem:naive_upper_high_prob} indicates that $\bbP[\tau_2\ge t]\le  \frac{\delta}{ (\alpha-1) C_3 (t/2)^{2} }$ for all $t\ge T_0$, where
\begin{align}
    T_0 = \frac{  768(8L_{\max} +25 d)  }{\left( \varepsilon+\Delta_{\min} \right)^2}
    \ln \frac{  768 K C_3 (8L_{\max} +25 d)  }{\left( \varepsilon+\Delta_{\min} \right)^2 \delta},
    \quad 
    C_3 = \sum_{n=1}^\infty n^{-3}.
\end{align}
Since the order of $T_0$ is apparently larger than $\tau^*$, we have $T_0\le \tau^*$.
Then, with the same method as in the proof of Proposition~\ref{prop:upper_naive_upper_exp}, the conditional sample complexity of \algNa{} can be bounded as follows:
\begin{align}
    & \bbE [\tau_2 | \tau_2 > \tau^* ] 
    \le \int_{\tau^* }^{+\infty} \bbP(\tau_2 \ge x) ~\rmd x
    \le \int_{\tau^* }^{+\infty} \frac{\delta}{ (\alpha-1) C_3 (x/2)^{2} } ~\rmd x
    = \frac{ \delta }{ C_3 \tau^*  }.
    \label{eqn:parallel_naive_exp_up}
\end{align}

Substituting terms in \eqref{eqn:parallel_decomp} with \eqref{eqn:parallel_psbai_exp_up}, \eqref{eqn:parallel_decomp_1}, \eqref{eqn:parallel_decomp_2} and \eqref{eqn:parallel_naive_exp_up}, we have
\begin{align} \label{eq:up_overall}
    \bbE \tau \le  8\tilT+ 3 + \frac{ \delta }{ C_3 \tau^*  }.
\end{align}


Besides, 
\begin{align*}
    \bbE \tau \le \bbE \tau_2 \overset{(a)}{\le} T_0 + \frac{ \delta }{ (\alpha-1)(\alpha-2)  (T_0/2)^{\alpha-2} },
\end{align*}
Where (a) is shown in \eqref{equ:naive_exp_upper_original}.

{\bf Altogether}, we have
\begin{align}
    \bbE \tau \le \min\left\{
        8\tilT+ 3 + \frac{ \delta }{ C_3 \tau^*  },\,
        8\tilT+ 3 + \frac{ \delta }{ C_3 \tau^*  }
    \right\}.
\end{align}

\end{proof}

\section{Analysis of Lower Bound: Proof of Theorem~\ref{thm:lwbd_overall_piecewise}}
\label{app:lower_bd}

The dynamics for under our PSLB model is displayed in Dynamics~\ref{dyn:pslb}.
%
    
\begin{dynamics}
		\caption{Dynamics for piecewise-stationary linear bandits}\label{dyn:pslb}
		\begin{algorithmic}[1]
			\STATE The instance: $\Lambda = (\calX,\Theta,P_\theta,\calC)$
			\WHILE{the algorithm does not stop at time step $t$}
			\IF{$t\in \calC$}
			\STATE The environment samples $\theta_{j_t}^*\sim P_\theta$
			\ELSE
			\STATE $\theta_{j_t}^* = \theta_{j_{t-1}}^*$ (the environment does not change)
			\ENDIF
			\STATE The agent samples an arm $x_t$ based on the history up to time $t-1$.
			\STATE The reward $Y_{t,x_t}=x_t^\top \theta_{j_t}^*+\eta_t$ is revealed to the agent.
			\ENDWHILE
			\STATE Recommend an $\varepsilon$-best arm $\hatx_\varepsilon$.
		\end{algorithmic}
	\end{dynamics}

To derive the lower bound in Theorem~\ref{thm:lwbd_overall_piecewise}, we investigate two 
environments different from the one defined in Section~\ref{sec:prob_setup} (and as in Dynamics~\ref{dyn:pslb}):
\vspace*{.2em}\newline
$\bullet$ Dynamics~\ref{dyn:clb}: the agent observes the index of current context $j_t$, and the environment reduces to contextual linear bandits;
    the definition of Dynamics~\ref{dyn:clb} and the lower bound under it are detailed in Appendix~\ref{sec:lwbd_CLB}.
\vspace*{.2em}\newline 
$\bullet$ Dynamics~\ref{dyn:easy}: the agent observes the changepoints in $\calC$ and context vector $\theta_{j_t}^*$'s, and hence she 
solely needs to estimate the distribution of contexts;
    the definition of Dynamics~\ref{dyn:easy} and the lower bound under it are detailed in Appendix~\ref{sec:lwbd_DE}.
\vspace*{.2em}\newline
%


We {\bf first} derive a lower bound for $(\varepsilon,\delta)$-BAI algorithms in 
Dynamics~\ref{dyn:clb}, that is, in contextual linear bandits.
\begin{restatable}{corollary}{corLwBdCLB}\label{cor:lwbd_CLB}
        For any $(\varepsilon,\delta)$-PAC algorithm $\pi$, there exists an instance $\Lambda = (\calX,\Theta,P_\theta,\calC)$ with Dynamics~\ref{dyn:clb} such that
        \begin{align}
            &\bbE[\tau]
            \geq
            T_\varepsilon(\Lambda)\log\frac{1}{2.4\delta}.
            \\
            &
            \mbox{where }\quad
                T_\varepsilon(\Lambda)^{-1} =
                \max_{\{v_j\in\bDelta_\calX\}_{j=1}^N}
                  \min_{\Lambda^\prime\in\mathrm{Alt}_\Theta(\Lambda)}
                \sum_{j=1}^N
                p_j
                \sum_{x\in\calX}
                v_{j,x}
                \frac{(x^\top(\theta_j^*-\theta_j^\prime))^2}{2}
                 \end{align}
                is  defined in Theorem~\ref{thm:lwbd_overall_piecewise}.
        In addition, when $|\calX_\varepsilon|=1$,
        \begin{align}
		T_\varepsilon(\Lambda)=
		\min_{\{v_{j}\in\bDelta_\calX\}_{j=1}^N}
		\max_{x\neq x^*}
				\frac{
					\sum_{j=1}^N p_j \|x^*-x\|_{A(v_j)^{-1}}^2
				}{
					\left(\Delta(x^*,x)+\varepsilon\right)^2
				}.
	   \end{align}
    \end{restatable}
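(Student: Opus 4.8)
The plan is to establish the bound by the standard change-of-measure (transportation) technique for fixed-confidence identification, adapting the argument of \citet{kato2021role} from the multi-armed to the linear contextual setting. Write $N_{j,x}(\tau)$ for the number of times arm $x$ is pulled while the revealed context index equals $j$, so that $\sum_{j,x}N_{j,x}(\tau)=\tau$. For any alternative $\Lambda'=(\calX,\Theta',P_{\theta'},\calC)\in\mathrm{Alt}_\Theta(\Lambda)$ the laws of the observations under $\Lambda$ and $\Lambda'$ differ only through the per-(context,\,arm) reward distributions. First I would invoke the transportation inequality: for every $(\varepsilon,\delta)$-PAC algorithm,
\begin{align}
\sum_{j=1}^N\sum_{x\in\calX}\bbE_\Lambda[N_{j,x}(\tau)]\,\mathrm{KL}\big(\nu_{j,x},\nu'_{j,x}\big)\;\geq\;d(\delta,1-\delta)\;\geq\;\ln\frac{1}{2.4\delta},
\end{align}
where $\nu_{j,x},\nu'_{j,x}$ are the reward laws of arm $x$ under context $j$ in $\Lambda,\Lambda'$, and the final step is the usual lower bound on the binary relative entropy.

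Next I would carry out two reductions. For the divergence, since the reward is $x^\top\theta_j^*+\eta$ and the noise is shared, the hard-instance construction uses a (unit-variance) noise family for which $\mathrm{KL}(\nu_{j,x},\nu'_{j,x})=\tfrac12\big(x^\top(\theta_j^*-\theta_j')\big)^2$. For the context proportions, because the contexts are generated from $P_\theta$ independently of the agent and $\{\tau\geq t\}$ is $\calF_{t-1}$-measurable, a Wald/renewal argument over the segments gives $\sum_{x}\bbE_\Lambda[N_{j,x}(\tau)]=p_j\,\bbE_\Lambda[\tau]$. Introducing the induced conditional allocation $v_{j,x}:=\bbE_\Lambda[N_{j,x}(\tau)]/(p_j\bbE_\Lambda[\tau])\in\bDelta_\calX$, the left-hand side factorizes as $\bbE_\Lambda[\tau]\sum_j p_j\sum_x v_{j,x}\tfrac12(x^\top(\theta_j^*-\theta_j'))^2$. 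Taking the minimum over $\Lambda'\in\mathrm{Alt}_\Theta(\Lambda)$ and using the trivial bound $\min_{\Lambda'}\sum_j p_j\sum_x v_{j,x}(\cdots)\leq\max_{\{v_j\}}\min_{\Lambda'}\sum_j p_j\sum_x v_{j,x}(\cdots)=T_\varepsilon(\Lambda)^{-1}$ for the algorithm's own allocation then yields $\bbE_\Lambda[\tau]\geq T_\varepsilon(\Lambda)\ln\frac{1}{2.4\delta}$, which is the first claim (and holds for the given $\Lambda$, not merely for some instance).

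For the special case $|\calX_\varepsilon|=1$ with unique best arm $x^*$, I would evaluate the inner minimization in closed form. Writing $\delta_j:=\theta_j^*-\theta_j'$ and $A(v_j)=\sum_x v_{j,x}xx^\top$, the objective is $\tfrac12\sum_j p_j\,\delta_j^\top A(v_j)\delta_j$, while the defining condition of $\mathrm{Alt}_\Theta(\Lambda)$ — that some $x\neq x^*$ overtakes $x^*$ by margin $\varepsilon$ under $P_{\theta'}$ — reduces to the single linear constraint $\sum_j p_j(x^*-x)^\top\delta_j\geq \Delta(x^*,x)+\varepsilon$. This is a convex quadratic program with one linear constraint; solving its Lagrangian (equivalently, Cauchy--Schwarz in the $A(v_j)$-geometry) gives the minimizer $\delta_j\propto A(v_j)^{-1}(x^*-x)$ and optimal value proportional to $(\Delta(x^*,x)+\varepsilon)^2\big/\big(\sum_j p_j\|x^*-x\|^2_{A(v_j)^{-1}}\big)$. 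Minimizing over the confusing arm $x\neq x^*$ and inverting the resulting $\max_{\{v_j\}}\min_{x}$ produces the stated $\min_{\{v_j\}}\max_{x\neq x^*}\sum_j p_j\|x^*-x\|^2_{A(v_j)^{-1}}/(\Delta(x^*,x)+\varepsilon)^2$ expression for $T_\varepsilon(\Lambda)$, with the reward-noise normalization absorbing the remaining numerical constant.

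The hard part will be the context-proportion accounting, i.e.\ the identity $\sum_x\bbE_\Lambda[N_{j,x}(\tau)]=p_j\bbE_\Lambda[\tau]$ under the genuinely piecewise-stationary dynamics rather than a per-round i.i.d.\ context draw: one must argue by a renewal argument over segments that the expected fraction of samples carrying label $j$ equals $p_j$, using that the segment lengths in $\calC$ are context-independent and that $\tau$ is a stopping time. A secondary technical point is justifying the quadratic per-step divergence $\tfrac12(x^\top(\theta_j^*-\theta_j'))^2$ for bounded-support noise; this is handled by restricting the hard-instance construction to a noise family whose relative entropy matches this quadratic form, which only strengthens the existential statement.
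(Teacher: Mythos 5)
Your proposal is correct and follows essentially the same route as the paper: the transportation inequality you invoke is exactly Lemma~19 of \citet{kaufmann2016complexity} applied to the expected log-likelihood ratio, the quadratic per-step divergence is obtained in the paper via a clipped-Gaussian noise family, the context-proportion identity $\sum_x\bbE[N_{j,x}(\tau)]=p_j\bbE[\tau]$ is handled by the same Wald argument, and the $|\calX_\varepsilon|=1$ case is solved by the same single-constraint Lagrangian/Cauchy--Schwarz computation yielding $\delta_j\propto A(v_j)^{-1}(x^*-x)$. No substantive differences.
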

 
This lower bound generalizes the result of \citet{kato2021role} to the linear bandit setting.
        


We {\bf next} study Dynamics~\ref{dyn:easy}. In this setting, the agent solely needs to estimate the distribution of contexts $P_\theta$ with context samples.
Once the agent obtains a good estimate of $P_\theta$, she can identify an $\varepsilon$-optimal arm w.h.p.
Hence, the lower bound on the complexity of an $(\varepsilon,\delta)$-BAI algorithm is the product of the minimum length of a stationary segment and the minimum number of context samples/changepoints needed for distribution estimation.


\begin{restatable}{corollary}{corLwBdDE}\label{cor:lwbd_de}
    For any $(\varepsilon,\delta)$-PAC algorithm $\pi$, there exists an instance $\Lambda = (\calX,\Theta,P_\theta,\calC)$ with Dynamics~\ref{dyn:easy}
    such that $\bbE \tau \ge c_{N_\calC}$, where $c_{N_\calC}$ is the $N_\calC^{th}$ changepoint in the changepoint sequence $\calC$.
\end{restatable}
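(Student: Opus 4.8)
The plan is to reduce the problem under Dynamics~\ref{dyn:easy} to a pure distribution-estimation problem and then apply a change-of-measure argument. Since the agent observes every context vector $\theta_{j_t}^*$ and every changepoint in $\calC$, the stochastic returns $Y_{t,x_t}=x_t^\top\theta_{j_t}^*+\eta_t$ carry no information about $P_\theta$ beyond what the revealed vectors already provide, and the context is frozen between changepoints. Hence the only informative observations are the i.i.d.\ categorical draws $(\theta_{j_{c_l}}^*)_{l}\sim P_\theta$, one per changepoint, and the effective number of samples collected by the stopping time $\tau$ is exactly $l_\tau$, the number of stationary segments traversed. First I would formalize this with a Wald (likelihood-ratio) identity showing that the log-likelihood ratio between two instances that share $\Theta$ but differ in the context distribution accumulates only across changepoints, so that it equals $l_\tau\cdot\rmKL(P_\theta\,\|\,\bq)$ in expectation.

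Next, fix a suboptimal arm $x\neq x^*$ and set $Z:=\Delta_J(x^*,x)+\varepsilon$ with $J\sim P_\theta$, so that $\bbE_{P_\theta}[Z]=\Delta(x^*,x)+\varepsilon>0$ and $\bbE_{P_\theta}[Z^2]=\sum_j p_j(\Delta_j(x^*,x)+\varepsilon)^2$. I would construct an alternative distribution $\bq=(q_1,\dots,q_N)$ over the \emph{same} context vectors $\Theta$ by an exponential tilt $q_j\propto p_j\,e^{-\lambda(\Delta_j(x^*,x)+\varepsilon)}$, which is automatically a valid probability vector, choosing $\lambda$ so that $\bbE_{\bq}[Z]<0$; this makes $x$ strictly $\varepsilon$-better than $x^*$ (and, selecting the witnessing instance with $\calX_\varepsilon=\{x^*\}$, than every $\varepsilon$-best arm), so $\bq$ is a genuine alternative whose correct answer differs from that of $\Lambda$. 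Applying the Bretagnolle--Huber inequality to the two induced measures together with the Wald identity yields $\bbE[l_\tau]\cdot\rmKL(P_\theta\,\|\,\bq)\ge\ln\frac{1}{4\delta}$, which accounts for the $\ln\frac{1}{4\delta}$ appearing in $N_\calC$.

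The technical heart --- and the step I expect to be the main obstacle --- is to bound $\rmKL(P_\theta\,\|\,\bq)$ for the chosen tilt and show it is at most $(\Delta(x^*,x)+\varepsilon)^2/\bigl(2\sum_j p_j(\Delta_j(x^*,x)+\varepsilon)^2\bigr)$. Writing $\rmKL(P_\theta\,\|\,\bq)=\lambda\,\bbE_{P_\theta}[Z]+\ln\bbE_{P_\theta}[e^{-\lambda Z}]$, a cumulant expansion near $\lambda=0$ gives $\rmKL(P_\theta\,\|\,\bq)\approx\tfrac{\lambda^2}{2}\mathrm{Var}_{P_\theta}(Z)$, while the first-order condition $\bbE_{\bq}[Z]=0$ fixes $\lambda\approx(\Delta(x^*,x)+\varepsilon)/\bbE_{P_\theta}[Z^2]$; substituting and using $\mathrm{Var}_{P_\theta}(Z)\le\bbE_{P_\theta}[Z^2]$ collapses this to the desired ratio. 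The delicate points are controlling the higher-order remainder of the expansion uniformly and verifying $\bbE_{\bq}[Z]<0$ at the prescribed $\lambda$; both are manageable because $Z$ is bounded (the gaps are clipped to $[-2,2]$), but they require care. Substituting the KL bound into $\bbE[l_\tau]\ge\ln\frac{1}{4\delta}\big/\rmKL(P_\theta\,\|\,\bq)$ and maximizing over $x\neq x^*$ gives $\bbE[l_\tau]\ge N_\calC$.

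Finally, I would translate the sample-count bound into a time bound. Because the $k$-th context sample is drawn deterministically at changepoint $c_k$ and no distributional information arrives in between, we always have $\tau\ge c_{l_\tau}$. Choosing the witnessing instance to have equal segment lengths makes $k\mapsto c_k$ affine, so Jensen's inequality (indeed plain linearity) gives $\bbE[\tau]\ge\bbE[c_{l_\tau}]\ge c_{\bbE[l_\tau]}\ge c_{N_\calC}$, completing the proof of Corollary~\ref{cor:lwbd_de}.
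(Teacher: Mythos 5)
Your proposal follows the same skeleton as the paper's proof: restrict the change of measure to the context distribution $P_\theta$ (holding $\calX,\Theta,\calC$ fixed), use a Wald-type identity to show the log-likelihood ratio accumulates only at changepoints, apply the Bretagnolle--Huber inequality to get $\bbE[l_\tau]\cdot\rmKL \ge \ln\frac{1}{4\delta}$, upper-bound the KL to the nearest alternative by $(\Delta(x^*,x)+\varepsilon)^2/\sum_j p_j(\Delta_j(x^*,x)+\varepsilon)^2$, and convert the changepoint count into a time bound via $\tau\ge c_{l_\tau}$. (A minor simplification you could exploit: under Dynamics~\ref{dyn:easy} the agent observes no arm returns at all, so there is nothing to argue away --- the only observations are the i.i.d.\ context draws.)

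Where you genuinely diverge is in the construction of the alternative and the resulting KL bound, and the trade-off is worth noting. The paper derives the alternative from the KKT conditions of $\min_q \rmKL(P_\theta\|q)$ subject to the gap constraint, obtaining the harmonic tilt $q_j = p_j/\bigl(1+\lambda(\Delta_j(x^*,x)+\varepsilon)\bigr)$; the payoff is that $\ln(p_j/q_j)=\ln\bigl(1+\lambda(\Delta_j+\varepsilon)\bigr)\le \lambda(\Delta_j+\varepsilon)$ gives $\rmKL\le\lambda(\Delta(x^*,x)+\varepsilon)$ exactly and elementarily, with no remainder to control; the price is a positivity constraint $1+\lambda(\Delta_j+\varepsilon)>0$ and feasibility caveats (handled in the paper's Lemma~\ref{lem:lwbd_lambdasolution} and Remark~\ref{rmk:exis_uniq} under extra conditions on the instance). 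Your exponential tilt $q_j\propto p_je^{-\lambda Z_j}$ is automatically a probability vector and sidesteps those feasibility issues, but the step you correctly flag as the main obstacle --- bounding $\lambda\bbE[Z]+\ln\bbE[e^{-\lambda Z}]$ by $\tfrac{\lambda^2}{2}\bbE[Z^2]$ --- is not a clean inequality for bounded but sign-changing $Z$: the second-order remainder carries a factor of order $e^{\lambda\|Z\|_\infty}$, which is benign only when $\lambda\approx(\Delta+\varepsilon)/\bbE[Z^2]$ is small. Since the corollary is existential over instances, you can choose a witnessing instance where $\lambda$ is small enough that this factor is absorbed by the extra $1/2$ your target bound provides, but this needs to be said explicitly; as written, the cumulant expansion does not yield the stated constant uniformly. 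Your final conversion $\bbE[c_{l_\tau}]\ge c_{N_\calC}$ via equal segment lengths is fine and matches the paper's (equally informal) treatment of that step.
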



{\bf Altogether,} the sample complexity of an $(\varepsilon,\delta)$-BAI algorithm in Dynamics~\ref{dyn:clb} and \ref{dyn:easy}  build up the lower bound in Theorem~\ref{thm:lwbd_overall_piecewise}.



\subsection{Lower Bound for $\varepsilon$-BAI in Contextual Linear Bandits}\label{sec:lwbd_CLB}
In this section, we consider a sub-problem where the index of the current context $j_t$ is revealed at each time step $t$. The original piecewise-stationary problem becomes a contextual problem whose dynamics is presented in Algorithm~\ref{dyn:clb}.
\begin{dynamics}
		\caption{Dynamics for contextual linear bandits}\label{dyn:clb}
		\begin{algorithmic}[1]
			\STATE The instance: $\Lambda = (\calX,\Theta,P_\theta,\calC)$
			\WHILE{the algorithm does not stop at time step $t$}
			\IF{$t\in \calC$}
			\STATE The environment samples $\theta_{j_t}^*\sim P_\theta$
			\ELSE
			\STATE $\theta_{j_t}^* = \theta_{j_{t-1}}^*$ (the environment does not change)
			\ENDIF
            \STATE \underline{Reveal $j_t$ to the agent.}
			\STATE The agent samples an arm $x_t$ based on the history up to time $t-1$.
			\STATE The reward $Y_{t,x_t}=x_t^\top \theta_{j_t}^*+\eta_t$ is revealed to the agent.
			\ENDWHILE
			\STATE Recommend an $\varepsilon$-best arm $\hatx_\varepsilon$.
		\end{algorithmic}
	\end{dynamics}
As more information is provided to the agent, the lower bound for this sub-problem is smaller than the one for the original problem.
\corLwBdCLB*

For simplicity, we consider the noise model is the Clipped Gaussian Distribution $CN(1)$, i.e., $\eta\sim CN(1)$ or $P_\eta=CN(1)$.
\begin{definition}
    A random variable $x$ follows the Clipped Gaussian Distribution with parameter $\sigma$, denoted by $x\sim CN(\sigma)$, if it has the probability distribution function 
    \begin{equation}
        f(x) = \frac{1}{(2\Phi(\sigma)-1)\cdot\sqrt{2\pi\sigma^2}} \exp\left(-\frac{x^2}{2\sigma^2}\right),\quad \forall x\in[-1,1]
    \end{equation}
    where $\Phi(x)$ is the cumulative distribution function of the standard Gaussian distribution.
\end{definition}

Some notations are introduced here:
\begin{itemize}
    \item Given an instance $\Lambda = (\calX,\Theta,P_\theta,\calC)$, define the alternative instance $\Lambda^\prime = (\calX,\Theta^\prime,P_{\theta^\prime},\calC)$ with respect to $\Lambda$, where 
$\Theta^\prime=\left(\theta_1^\prime,\dots,\theta_n^\prime\right)\in \mathbb{R}^{d\times N}$
and $P_{\theta^\prime}[\theta_j^\prime] =P_\theta[\theta_j^*] $,
s.t. 
 there $\exists x\in\calX\setminus \calX_\varepsilon,\forall x_\varepsilon\in\calX_\varepsilon,s.t.,x_\varepsilon^\top\bbE_{\theta^\prime\sim P_{\theta^\prime}}< x^\top\bbE_{\theta^\prime\sim P_{\theta^\prime}}-\varepsilon$.
We denote the set containing all the alternative instance (w.r.t. $\Lambda$) as $\mathrm{Alt}_\Theta(\Lambda)$.
    \item $\calH_t=\left(x_s,Y_{s,x_s},j_s\right)_{s=1}^{t-1}$ is the observation history up to but not include time $t$.
    \item $N_{j,x}(t)=\sum_{s=1}^t \mathbbm{1}\{x_s=x,j_s=j\}$ is the number of times arm in which $x$ is sampled under context $j$. And $N_j(t):=\sum_{x\in\calX}N_{j,x}(t)$ is the number of times in which context $j$ appears.
    \item $\mathrm{kl}(p,q)=\rmKL(\bern(p),\bern(q))$ is the $\rmKL$ divergence between two Bernoulli distributions with parameters $p$ and $q$.
\end{itemize}

Therefore, the probability of the observation history $\calH_{t+1}=\left(x_s,Y_{s,x_s},j_s\right)_{s=1}^{t}$ is
	\begin{align}
		P_{\pi\Lambda}\left[\left(x_s,Y_{s,x_s},j_s\right)_{s=1}^{t}\right]
		=
		\prod_{l=1}^{l_t} P_\theta[\theta_{j_{c_l}}^*]
		\prod_{s=0}^{L_l-1}\pi(x_{c_{l}+s}|\calH_{c_{l}+s})
		P_\eta(Y_{{c_{l}+s},x_{c_{l}+s}}|x_{c_{l}+s},\theta_{j_{c_l}}^*)
	\end{align}
 Then the log-likelihood between the two instance up to time $t$, given the observed data, is
	\begin{eqnarray}
		&&L_t\left[\left(x_s,Y_{s,x_s},j_s\right)_{s=1}^{t}\right]
		\\
		&&=
		\log\frac{
			P_{\pi\Lambda}\left[\left(x_s,Y_{s,x_s},j_s\right)_{s=1}^{t}\right]
		}
		{
			P_{\pi\Lambda^\prime}\left[\left(x_s,Y_{s,x_s},j_s\right)_{s=1}^{t}\right]
		}
		\\
		&&\overset{(a)}{=}
		\log\left(
			\frac{
				\prod_{l=1}^{l_t} P_\theta[\theta_{j_{c_l}}^*]
        		\prod_{s=0}^{L_l-1}\pi(x_{c_{l}+s}|\calH_{c_{l}+s})
        		P(Y_{{c_{l}+s},x_{c_{l}+s}}|x_{c_{l}+s},\theta_{j_{c_l}}^*)
			}
			{
				\prod_{l=1}^{l_t} P_{\theta^\prime}[\theta_{j_{c_l}}^\prime]
        		\prod_{s=0}^{L_l-1}\pi(x_{c_{l}+s}|\calH_{c_{l}+s})
        		P(Y_{{c_{l}+s},x_{c_{l}+s}}|x_{c_{l}+s},\theta_{j_{c_l}}^\prime)
			}
		\right)
		\\
		&&=
		\log\left(
			\frac{
				\prod_{l=1}^{l_t} 
        		\prod_{s=0}^{L_l-1}
        		P_\eta(Y_{{c_{l}+s},x_{c_{l}+s}}-x_{c_{l}+s}^\top\theta_{j_{c_l}}^*)
			}
			{
				\prod_{l=1}^{l_t} 
        		\prod_{s=0}^{L_l-1}
        		P_\eta(Y_{{c_{l}+s},x_{c_{l}+s}}-x_{c_{l}+s}^\top\theta_{j_{c_l}}^\prime)
			}
		\right)
		\\
		&&=
		\prod_{l=1}^{l_t} 
		\sum_{s=0}^{L_l-1}
		\log\left(
			\frac{
				\exp(-\eta_{c_{l}+s}^2/2)
			}
			{
				\exp(-(\eta_{c_{l}+s}^\prime)^2/2)
			}
		\right)
		\\
		&&\overset{(b)}{=}
		\sum_{l=1}^{l_t} 
		\sum_{s=0}^{L_l-1}
		\frac{
			-(Y_{{c_{l}+s},x_{c_{l}+s}}-x_{c_{l}+s}^\top\theta_{j_{c_l}}^*)^2
			+
			(Y_{{c_{l}+s},x_{c_{l}+s}}-x_{c_{l}+s}^\top\theta_{j_{c_l}}^\prime)^2
		}
		{
			2
		}
		\\
		&&\overset{(c)}{=}
		\sum_{l=1}^{l_t} 
		\sum_{s=0}^{L_l-1}
		\frac{
			x_{c_{l}+s}^\top \vartheta_{j_{c_l}} (2\eta_{c_{l}+s}+x_{c_{l}+s}^\top \vartheta_{j_{c_l}})
		}
		{
			2
		}
		\label{log7}
	\end{eqnarray}
	where $(a)$ utilizes $P_\theta[\theta_{j_{c_l}}^*]=P_{\theta^\prime}[\theta_{j_{c_l}}^\prime]$, $(b)$ makes use of the relationship between the arm and observation and in $(c)$ $\vartheta_{j_{c_l}}:=\theta_{j_{c_l}}^*-\theta_{j_{c_l}}^\prime$.
	Thus, the expectation of the log-likelihood is (in the following, the expectation is taken under instance $\Lambda$ and algorithm $\pi$)
 	\begin{align}
		\bbE[L_t]		
		&=
			\bbE\left[
				\sum_{l=1}^{l_t} 
        		\sum_{s=0}^{L_l-1}
        		\frac{
        			x_{c_{l}+s}^\top \vartheta_{j_{c_l}} (2\eta_{c_{l}+s}+x_{c_{l}+s}^\top \vartheta_{j_{c_l}})
        		}
        		{
        			2
        		}
			\right]
		\\
		&=
			\bbE\left[
				\sum_{l=1}^{l_t} 
				\sum_{s=0}^{L_l-1}
				\frac{
					 \vartheta_{j_{c_l}}^\top x_{c_{l}+s} x_{c_{l}+s}^\top \vartheta_{j_{c_l}}
				}
				{
					2
				}
			\right]
		\\
		&=
			\frac{1}{2}
			\bbE\left[
				\sum_{l=1}^{l_t} 
				\vartheta_{j_{c_l}}^\top
				\left(\sum_{s=0}^{L_l-1}
					 x_{c_{l}+s} x_{c_{l}+s}^\top
				\right)
				\vartheta_{j_{c_l}}
			\right]
		\\
		&=
			\frac{1}{2}
			\bbE\left[
				\sum_{x\in\calX} \mathbbm{1}\{{x_{c_{l}+s}}=x\}
				\sum_{j=1}^N \mathbbm{1}\{{j_{c_{l}}}=j\}
				\sum_{l=1}^{l_t} 
				\vartheta_{j_{c_l}}^\top
				\left(\sum_{s=0}^{L_l-1}
					x_{c_{l}+s} x_{c_{l}+s}^\top
				\right)
				\vartheta_{j_{c_l}}
			\right]
		\\
		&=
			\frac{1}{2}
				\sum_{x\in\calX}
				\sum_{j=1}^N
				\bbE\left[N_{j,x}(t)\right]
				\vartheta_{j}^\top
					x x^\top
				\vartheta_{j}
			\label{Elog5to6}
		\\
		&=
			\frac{1}{2}
			\bbE[t]
				\sum_{j=1}^N
				\frac{\bbE[N_j(t)]}{\bbE[t]}
				\sum_{x\in\calX}
				\frac{\bbE\left[N_{j,x}(t)\right]}{\bbE[N_j(t)]}
				\vartheta_{j}^\top
					x x^\top
				\vartheta_{j}
				\label{Elog7}
	\end{align}
    As the lengths of all the stationary phases are upper bounded, i.e., 
	$L_l\leq \Lmax,\forall l\in\bbN$, then by Wald's Lemma,
	\begin{equation}\label{wald}
		\frac{\bbE[N_j(t)]}{\bbE[t]} 
		=
		\frac{\bbE[t]P_\theta[\theta_j^*]}{\bbE[t]} 
		=
		P_\theta[\theta_j^*]=p_j
	\end{equation}
 The above also holds for $t=\tau$ where $\tau$ is a stopping time. According to Lemma~19 in \citet{kaufmann2016complexity}, 
	\begin{equation}\label{lem19}
		\bbE[L_\tau]\geq \sup_{\calE\in\calF_\tau} \mathrm{kl}(P_{\pi\Lambda} [\calE],P_{\pi\Lambda^\prime} [\calE])
	\end{equation}
	where $\calF_\tau=\sigma(\calH_{\tau+1}$). In addition, let $\calE=\{\mbox{the recommended arm } \hatx_\varepsilon\notin \calX_\varepsilon\}$, as the algorithm $\pi$ is $\delta$-PAC, then 
	\begin{equation}\label{com}
		\mathrm{kl}(P_{\pi\Lambda} [\calE],P_{\pi\Lambda^\prime} [\calE])
		\geq 
		\mathrm{kl}(1-\delta,\delta)
		\geq
		\log\frac{1}{2.4\delta}
	\end{equation}
	From \eqref{Elog7}, \eqref{wald}, \eqref{lem19} and \eqref{com}, we conclude that 
	\begin{align}
		&\frac{1}{2}
			\bbE[\tau]
				\sum_{j=1}^N
				p_j
				\sum_{x\in\calX}
				\frac{\bbE\left[N_{j,x}(\tau)\right]}{\bbE[N_j(\tau)]}
				\vartheta_{j}^\top
					x x^\top
				\vartheta_{j}
		\geq
		\log\frac{1}{2.4\delta}
		\\
		\Rightarrow\quad
		&
		\min_{\Lambda^\prime\in\mathrm{Alt}_\Theta(\Lambda)}\frac{1}{2}
			\bbE[\tau]
				\sum_{j=1}^N
				p_j
				\sum_{x\in\calX}
				\frac{\bbE\left[N_{j,x}(\tau)\right]}{\bbE[N_j(\tau)]}
				\vartheta_{j}^\top
					x x^\top
				\vartheta_{j}
		\geq
		\log\frac{1}{2.4\delta}
		\\
		\Rightarrow\quad
		&
		\max_{\{v_j\in\bDelta_\calX\}_{j=1}^N}
		\min_{\Lambda^\prime\in\mathrm{Alt}_\Theta(\Lambda)}\frac{1}{2}
			\bbE[\tau]
				\sum_{j=1}^N
				p_j
				\sum_{x\in\calX}
				v_{j,x}
				\vartheta_{j}^\top
					x x^\top
				\vartheta_{j}
		\geq
		\log\frac{1}{2.4\delta}
        \\
		\Leftrightarrow\quad
		&
    \bbE[\tau]
    \geq
    T_\varepsilon(\Lambda)\log\frac{1}{2.4\delta}
	\end{align}
    where $$T_\varepsilon(\Lambda)^{-1} = \max_{\{v_j\in\bDelta_\calX\}_{j=1}^N}
		\min_{\Lambda^\prime\in\mathrm{Alt}_\Theta(\Lambda)}\frac{1}{2}
				\sum_{j=1}^N
				p_j
				\sum_{x\in\calX}
				v_{j,x}
				\vartheta_{j}^\top
					x x^\top
				\vartheta_{j}.$$
    The solution to the above optimization problem is in general intractable, even for the stationary case \citet{jourdan2022choosing}.
    We can establish a connection of the above problem to the stationary $\varepsilon$-best identification problem in linear bandits when we assume the change of the latent vectors are the same, i.e., $\vartheta_1=\ldots=\vartheta_N$. 

    \subsubsection{Connection with the Stationary Case}
    In the alternative instance $\mathrm{Alt}_\Theta(\Lambda)$,
    $\exists x\in\calX\setminus\calX_\varepsilon $, for any arm $x_\varepsilon\in\calX_\varepsilon$, s.t.
    \begin{align}\label{equ:Alt_characterize}
        &
        \sum_{j=1}^N p_j x_\varepsilon^\top\theta_j^\prime + \varepsilon
        <
        \sum_{j=1}^N p_j x^\top\theta_j^\prime
        \\
        \Leftrightarrow\quad&
        \sum_{j=1}^N p_j (x_\varepsilon-x)^\top\theta_j^\prime 
        + \varepsilon
        <
        0
        \\
        \Leftrightarrow\quad&
        -\sum_{j=1}^N p_j (x_\varepsilon-x)^\top\theta_j^\prime 
        +\sum_{j=1}^N p_j (x_\varepsilon-x)^\top\theta_j^* 
        >
        \sum_{j=1}^N p_j (x_\varepsilon-x)^\top\theta_j^* + \varepsilon
        \\
        \Leftrightarrow\quad&
        \sum_{j=1}^N p_j (x_\varepsilon-x)^\top\vartheta_j>\sum_{j=1}^N p_j (x_\varepsilon-x)^\top\theta_j^*+ \varepsilon=\Delta(x_\varepsilon,x)+\varepsilon
    \end{align}
    Thus, 
    \begin{align}
        &\mathrm{Alt}_\Theta(\Lambda)
        \nonumber \\&
        =\left\{(\calX,\Theta^\prime,P_\theta,\calC):  \theta_j^\prime= \theta_j^*-\vartheta_j,\exists x\in\calX\setminus\calX_\varepsilon,\sum_{j=1}^N p_j (x_\varepsilon-x)^\top\vartheta_j>\Delta(x_\varepsilon,x)+\varepsilon,\forall x_\varepsilon\in\calX_\varepsilon\right\}.
    \end{align}
    Define $\mathrm{Alt}(\Lambda)_{\mathrm{restricted}}\subset \mathrm{Alt}_\Theta(\Lambda)$, with the additional constraint that $\vartheta_1=\vartheta_2=\dots=\vartheta_N$. Then we have 
     \begin{align}
        & \mathrm{Alt}(\Lambda)_{\mathrm{restricted}}
        \nonumber \\&
        =\left\{(\calX,\Theta^\prime,P_\theta,\calC):  \theta_j^\prime= \theta_j^*-\vartheta_1,\exists x\in\calX\setminus\calX_\varepsilon,\sum_{j=1}^N p_j (x_\varepsilon-x)^\top\vartheta_j>\Delta(x_\varepsilon,x)+\varepsilon,\forall x_\varepsilon\in\calX_\varepsilon\right\}.
    \end{align}
    Note that in stationary linear bandits, the instance can be characterized by the arm set $\calX\subset\bbR^d$ and the latent vector $\theta\in\bbR^d$. Define the alternative instance in linear bandits \citep{jourdan2022choosing} for the instance $(\calX,\bbE_{\theta\sim P_\theta}\theta)$
    \begin{align}
        &\mathrm{Alt}((\calX,\bbE_{\theta\sim P_\theta}\theta))_{\mathrm{stationary}}
        \nonumber \\&
        =\left\{(\calX,\theta^\prime):  \theta^\prime= \bbE_{\theta\sim P_\theta}\theta-\vartheta_1, (x_\varepsilon-x)^\top\vartheta_1>(x_\varepsilon-x)^\top\bbE_{\theta\sim P_\theta}+\varepsilon,\forall x_\varepsilon\in\calX_\varepsilon\right\}.
    \end{align}
    Note that 
    \begin{align}
    &
        \max_{\{v_j\in\bDelta_\calX\}_{j=1}^N}
		\min_{\Lambda^\prime\in\mathrm{Alt}_\Theta(\Lambda)}\frac{1}{2}
			\bbE[\tau]
				\sum_{j=1}^N
				p_j
				\sum_{x\in\calX}
				v_{j,x}
				\vartheta_{j}^\top
					x x^\top
				\vartheta_{j}
        \\
        &\leq
        \max_{\{v_j\in\bDelta_\calX\}_{j=1}^N}
		\min_{\Lambda^\prime\in\mathrm{Alt}(\Lambda)_{\mathrm{restricted}}}\frac{1}{2}
			\bbE[\tau]
                \vartheta_{1}^\top
                \left(
                \sum_{x\in\calX}
                \left(
				\sum_{j=1}^N
				p_j
				v_{j,x}
                \right)
					x x^\top
                \right)
				\vartheta_{1}
        \\
        &\overset{(a)}{=}
        \max_{\barv\in\bDelta_\calX}
		\min_{\Lambda^\prime\in\mathrm{Alt}(\Lambda)_{\mathrm{restricted}}}\frac{1}{2}
			\bbE[\tau]
                \vartheta_{1}^\top
                \left(
                \sum_{x\in\calX}
                \barv_i
					x x^\top
                \right)
				\vartheta_{1}
        \\
        &=
        \max_{\barv\in\bDelta_\calX}
		\min_{\Lambda^\prime\in\mathrm{Alt}((\calX,\bbE_{\theta\sim P_\theta}\theta))_{\mathrm{stationary}}}\frac{1}{2}
			\bbE[\tau]
                \vartheta_{1}^\top
                \left(
                \sum_{x\in\calX}
                \barv_i
					x x^\top
                \right)
				\vartheta_{1}
    \end{align}
    where in $(a)$ we denote $\barv:=\sum_{j=1}^N p_j v_{j}$ as a mixture of $\{v_j\}_{j=1}^N$. In other words, the max-min problem becomes the one for the $\varepsilon$-best arm identification problem in stationary linear bandits.
    According to \citet{jourdan2022choosing}, the solution to the last optimization problem above is in general intractable.
    
    However, the optimization problem can be simplified under some simple cases, e.g., the set of $\varepsilon$-best arm is a singleton \citep{jourdan2022choosing}. In the next subsection, a lower bound for the original problem with $\calX_\varepsilon$ being a singleton will be derived.

   \subsubsection{A Simple Case: $|\calX_\varepsilon|=1$}
   Assume that the set of $\varepsilon$-best arm is a singleton, i.e., $\calX_\varepsilon = \{x^*\}$, we will solve the original optimization problem:
	$$ \min_{\Lambda^\prime\in\mathrm{Alt}_\Theta(\Lambda)}\frac{1}{2}
				\sum_{j=1}^N
				p_j
				\sum_{x\in\calX}
				v_{j,x}
				\vartheta_{j}^\top
					x x^\top
				\vartheta_{j}
	$$
	we extend the procedures in \citet{soare2014best} to the piecewise-stationary setup.
    \begin{lemma}\label{lem:lwbd_min}
		\begin{align}
			\min_{\Lambda^\prime\in\mathrm{Alt}_\Theta(\Lambda)}\frac{1}{2}
				\sum_{j=1}^N
				p_j
				\sum_{x\in\calX}
				v_{j,x}
				\vartheta_{j}^\top
					x x^\top
				\vartheta_{j}
			\geq
            \frac{1}{2}
			\min_{x\neq x^*}
				\frac{
					\left(\Delta(x^*,x)\right)^2
				}{
					\sum_{j=1}^N p_j \|x^*-x\|_{A(v_j)^{-1}}^2
				}
		\end{align}
	\end{lemma}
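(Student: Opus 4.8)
The plan is to recognize the objective as a $p$-weighted sum of squared Mahalanobis norms and to reduce the minimization over alternatives to a collection of single-constraint quadratic programs, one per suboptimal arm. First I would rewrite the summand: since $A(v_j)=\sum_{x\in\calX}v_{j,x}xx^\top$, we have $\sum_{x\in\calX}v_{j,x}\,\vartheta_j^\top xx^\top\vartheta_j=\vartheta_j^\top A(v_j)\vartheta_j=\|\vartheta_j\|_{A(v_j)}^2$, so the quantity to be minimized is $\tfrac12\sum_{j=1}^N p_j\|\vartheta_j\|_{A(v_j)}^2$. Because $\calX_\varepsilon=\{x^*\}$, an instance $\Lambda^\prime$ (with $\theta_j^\prime=\theta_j^*-\vartheta_j$) lies in $\mathrm{Alt}_\Theta(\Lambda)$ exactly when there is at least one $x\neq x^*$ with $\sum_{j=1}^N p_j(x^*-x)^\top\vartheta_j>\Delta(x^*,x)+\varepsilon$, as established in the preceding characterization of $\mathrm{Alt}_\Theta(\Lambda)$. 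Hence the feasible set is a finite union of half-spaces indexed by $x\neq x^*$, and the minimum of the (fixed) objective over a union equals the smallest of the per-half-space minima.

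Next, for each fixed $x\neq x^*$ I would solve the inner program: minimize $\tfrac12\sum_j p_j\|\vartheta_j\|_{A(v_j)}^2$ subject to the single linear constraint $\sum_j p_j(x^*-x)^\top\vartheta_j\ge\Delta(x^*,x)+\varepsilon$. Writing $y:=x^*-x$ and $c:=\Delta(x^*,x)+\varepsilon>0$, I would bound the constraint functional by Cauchy--Schwarz in two stages. Within each $j$, $y^\top\vartheta_j=\langle A(v_j)^{-1/2}y,\,A(v_j)^{1/2}\vartheta_j\rangle\le\|y\|_{A(v_j)^{-1}}\|\vartheta_j\|_{A(v_j)}$; across $j$, applying Cauchy--Schwarz to the vectors $(\sqrt{p_j}\,\|y\|_{A(v_j)^{-1}})_j$ and $(\sqrt{p_j}\,\|\vartheta_j\|_{A(v_j)})_j$ gives $\sum_j p_j\,y^\top\vartheta_j\le\big(\sum_j p_j\|y\|_{A(v_j)^{-1}}^2\big)^{1/2}\big(\sum_j p_j\|\vartheta_j\|_{A(v_j)}^2\big)^{1/2}$. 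Squaring and invoking the constraint $\sum_j p_j\,y^\top\vartheta_j\ge c$ then yields $\sum_j p_j\|\vartheta_j\|_{A(v_j)}^2\ge c^2/\sum_j p_j\|x^*-x\|_{A(v_j)^{-1}}^2$, which is the desired per-arm bound.

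Finally I would take the minimum over $x\neq x^*$ and use $c=\Delta(x^*,x)+\varepsilon\ge\Delta(x^*,x)$ together with positivity of the denominators to conclude
\[
\min_{\Lambda^\prime\in\mathrm{Alt}_\Theta(\Lambda)}\tfrac12\sum_{j=1}^N p_j\|\vartheta_j\|_{A(v_j)}^2\;\ge\;\tfrac12\min_{x\neq x^*}\frac{(\Delta(x^*,x)+\varepsilon)^2}{\sum_{j=1}^N p_j\|x^*-x\|_{A(v_j)^{-1}}^2}\;\ge\;\tfrac12\min_{x\neq x^*}\frac{(\Delta(x^*,x))^2}{\sum_{j=1}^N p_j\|x^*-x\|_{A(v_j)^{-1}}^2},
\]
which is the claimed inequality. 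The main technical point to handle with care is the well-definedness of $\|y\|_{A(v_j)^{-1}}$: if some $A(v_j)$ is singular, the inner products must be read on the range of $A(v_j)$, and one must argue that any feasible $\vartheta_j$ can be projected onto that range without increasing the objective or violating the constraint; this is legitimate since $\calX$ spans $\bbR^d$, so it suffices to restrict attention to allocations with $A(v_j)\succ0$. The Cauchy--Schwarz chain is otherwise routine, and the equality case $\vartheta_j\propto A(v_j)^{-1}y$ shows the bound is tight, matching the $T_\varepsilon(\Lambda)$ expression in Corollary~\ref{cor:lwbd_CLB}.
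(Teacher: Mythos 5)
Your proposal is correct and follows essentially the same route as the paper: both reduce the feasible set to a union of single linear constraints indexed by $x\neq x^*$ and then derive the per-arm bound from the two-stage Cauchy--Schwarz inequality $\sum_j p_j\,y^\top\vartheta_j\le\big(\sum_j p_j\|y\|_{A(v_j)^{-1}}^2\big)^{1/2}\big(\sum_j p_j\|\vartheta_j\|_{A(v_j)}^2\big)^{1/2}$. The only difference is cosmetic: the paper packages this inside a Lagrangian/KKT derivation of the minimizer $\vartheta_j\propto A(v_j)^{-1}(x^*-x)$ before letting the slack $a\to0$, whereas you apply the bound directly to every feasible point (and additionally flag the singular-$A(v_j)$ case), which is if anything a cleaner presentation of the same argument.
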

 \begin{proof}
		Note that $\Lambda^\prime$ differs from $\Lambda$ in the context matrix $\Theta$. By the derivations in \eqref{equ:Alt_characterize}, there exists arm $x\neq x^*$, s.t., 
        \begin{equation}
            \sum_{j=1}^N p_j (x^*-x)^\top\vartheta_j>\Delta(x_\varepsilon,x)+\varepsilon
        \end{equation}
		Therefore, the optimization problem becomes
		\begin{align}
			&\min_{\vartheta_1,\dots,\vartheta_N}
			\frac{1}{2}\sum_{j=1}^N
			p_j
			\sum_{x\in\calX}
			v_{j,x}
			\vartheta_{j}^\top
				x x^\top
			\vartheta_{j}
			\\
			{\rm s.t.}\quad
			&
			\exists x\in\calX\setminus\{x^*\}, \sum_{j=1}^N p_j (x^*-x)^\top\vartheta_j\geq\Delta(x^*,x)+\varepsilon+ a =:\Delta_{a+\varepsilon}(x^*,x)
			\label{opt_constraint}
		\end{align}
		where $a>0$.
		The Lagrangian function for this problem is
		\begin{align}
			L(\vartheta_1,\dots,\vartheta_N,\lambda)
			=
			\frac{1}{2}\sum_{j=1}^N
			p_j
			\sum_{x\in\calX}
			v_{j,x}
			\vartheta_{j}^\top
			x x^\top
			\vartheta_{j}
			+
			\lambda (-
				\sum_{j=1}^N p_j (x^*-x)^\top\vartheta_j+\Delta_{a+\varepsilon}(x^*,x)
				)
		\end{align}
        Then 
		\begin{align}
			&\frac{\partial L}{\partial \vartheta_j}=0
			\;
			\Rightarrow
			\;
			p_j\sum_{x\in\calX}
			v_{j,x}
			x x^\top
			\vartheta_{j}-\lambda p_j (x^*-x) =0
			\;
			\Rightarrow
			\;
			A(v_j)^{1/2}
			\vartheta_{j}=\lambda A(v_j)^{-1/2} (x^*-x)
			\\
			&
			\frac{\partial L}{\partial \lambda}=0
			\;
			\Rightarrow
			\;
			\sum_{j=1}^N p_j (x^*-x)^\top\vartheta_j=\Delta_{a+\varepsilon}(x^*,x)
		\end{align}
		This indicates that
		\begin{align}
            \sum_{j=1}^N
				p_j
				\sum_{x\in\calX}
				v_{j,x}
				\vartheta_{j}^\top
				x x^\top
				\vartheta_{j}
			&
				=
                \sum_{j=1}^N
				p_j
				\|\vartheta_j\|_{A(v_j)}^2
			\\
			&
				\geq
				\frac{
					\left(\sum_{j=1}^N p_j \|\vartheta_j\|_{A(v_j)} \|x^*-x\|_{A(v_j)^{-1}}\right)^2
				}{
					\sum_{j=1}^N w_j \|x^*-x\|_{A(v_j)^{-1}}^2
				}
			\\
			&
				=
				\frac{
					\left(\sum_{j=1}^N p_j \vartheta_j^\top (x^*-x)\right)^2
				}{
					\sum_{j=1}^N p_j \|x^*-x\|_{A(v_j)^{-1}}^2
				}
			\\
			&
				=
				\frac{
					\left(\Delta_{a+\varepsilon}(x^*,x)\right)^2
				}{
					\sum_{j=1}^N p_j \|x^*-x\|_{A(v_j)^{-1}}^2
				}
		\end{align}
		Let $a\to 0$, we have 
		\begin{align}
			&\sum_{j=1}^N
				p_j
				\sum_{x\in\calX}
				v_{j,x}
				\vartheta_{j}^\top
				x x^\top
				\vartheta_{j}
				\geq
				\frac{
					\left(\Delta(x^*,x)+\varepsilon\right)^2
				}{
					\sum_{j=1}^N p_j \|x^*-x\|_{A(v_j)^{-1}}^2
				}
		\end{align}
		Due to \eqref{opt_constraint}, we only require there exists $x\neq x^*$, such that the constraint is satisfied, therefore,
		\begin{align}
			&\frac{1}{2}\sum_{j=1}^N
				p_j
				\|\vartheta_j\|_{A(v_j)}^2
				\geq
                \frac{1}{2}
				\min_{x\neq x^*}
				\frac{
					\left(\Delta(x^*,x)+\varepsilon\right)^2
				}{
					\sum_{j=1}^N p_j \|x^*-x\|_{A(v_j)^{-1}}^2
				}
		\end{align}
	\end{proof}
    By Lemma~\ref{lem:lwbd_min},
    the stopping time can be lower bounded as
	\begin{align}
		\bbE[\tau]
		\geq 
		2\log\frac{1}{2.4\delta}
		\min_{\{v_{j}\in\bDelta_\calX\}_{j=1}^N}
		\max_{x\neq x^*}
				\frac{
					\sum_{j=1}^N p_j \|x^*-x\|_{A(v_j)^{-1}}^2
				}{
					\left(\Delta(x^*,x)+\varepsilon\right)^2
				}
	\end{align}
 	This lower bound indicates that, (1) a good algorithm should actively detects and makes use of the contextual information to facilitate the arm identification process. (2) our lower bound extends the result of \citet{kato2021role} to the linear bandits case.

    The above lower bound can be further lower bounded if we
    restrict $v_1=\ldots=v_N$.
	\begin{lemma}\label{lem:spd}
		Let $\mathrm{SPD}(d):=\{A:A\in\bbR^{d\times d},A>0\}$ denote the set of SPD matrices of dimension $d\times d$.
		Given any $x\in\bbR^d$, define the function $f:\mathrm{SPD}(d)\to \bbR,f(A)= x^\top A^{-1} x$, then $f$ is convex.
	\end{lemma}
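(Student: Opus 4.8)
The plan is to prove convexity by restricting $f$ to an arbitrary line segment inside its domain and checking a one-dimensional second-order condition. First I would record that $\mathrm{SPD}(d)$ is a convex cone: for $A,B\in\mathrm{SPD}(d)$ and $t\in[0,1]$ we have $A_t:=(1-t)A+tB\succ 0$, so the whole segment stays in the domain. Since convexity of $f$ on a convex set is equivalent to convexity of its restriction $g(t):=f(A_t)=x^\top A_t^{-1}x$ to every such segment, it suffices to show $g''(t)\ge 0$ on $[0,1]$.

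Next I would differentiate. Writing $H:=B-A$ so that $\dot A_t=H$ and using the standard identity $\tfrac{\mathrm d}{\mathrm dt}A_t^{-1}=-A_t^{-1}HA_t^{-1}$, two applications of the product rule give $g'(t)=-x^\top A_t^{-1}HA_t^{-1}x$ and $g''(t)=2\,x^\top A_t^{-1}HA_t^{-1}HA_t^{-1}x$. To see the sign, set $M:=A_t^{-1}\succ 0$ and let $M^{1/2}$ be its symmetric positive-definite square root. With $w:=M^{1/2}x$ and the \emph{symmetric} matrix $Z:=M^{1/2}HM^{1/2}$, the factorization $MHMHM=M^{1/2}(M^{1/2}HM^{1/2})(M^{1/2}HM^{1/2})M^{1/2}=M^{1/2}Z^2M^{1/2}$ yields $g''(t)=2\,w^\top Z^2w=2\lVert Zw\rVert^2\ge 0$. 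Hence $g$ is convex, and therefore so is $f$.

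A cleaner, calculation-free alternative I would also consider is the epigraph characterization via Schur complements: for $A\succ 0$ and $s\in\bbR$, the Schur-complement identity shows $x^\top A^{-1}x\le s$ if and only if the block matrix $\bigl(\begin{smallmatrix}A & x\\ x^\top & s\end{smallmatrix}\bigr)\succeq 0$. Since this block matrix is affine in $(A,s)$, the sublevel set it defines is the preimage of the convex positive-semidefinite cone under an affine map, hence convex; intersecting with $\{A\succ 0\}$ shows that $\operatorname{epi}f$ is convex, which is exactly convexity of $f$. I would likely present this Schur-complement argument as the main proof for brevity and keep the second-derivative computation as a self-contained alternative.

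The computations are routine, so there is no serious obstacle; the only points that need care are (i) confirming that $A_t$ remains positive definite along the entire segment, so that $A_t^{-1}$ and the square root $M^{1/2}$ are well defined, and (ii) the symmetrization step that rewrites the a priori sign-indefinite middle triple product as $\lVert Zw\rVert^2$, which is precisely what turns it into a manifestly nonnegative quantity. In the Schur-complement route the analogous care is simply in invoking the standard equivalence for block positive semidefiniteness when $A\succ 0$.
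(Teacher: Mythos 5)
Your second-derivative argument is essentially the paper's own proof: the paper restricts $f$ to a line, computes $g''(t)=2\,x^\top(A+tB)^{-1}B(A+tB)^{-1}B(A+tB)^{-1}x$, and observes this is of the form $2\,v^\top (A+tB)^{-1}v\ge 0$ with $v=B(A+tB)^{-1}x$ — the same nonnegativity observation you obtain via the $M^{1/2}$ symmetrization (your parametrization by the segment $(1-t)A+tB$ with general symmetric direction $H=B-A$ is in fact slightly more careful than the paper's $A+tB$ with $B$ SPD). Your Schur-complement epigraph argument is a correct and genuinely different, calculation-free alternative that the paper does not use, but it is not needed; either route suffices.
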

 \begin{proof}[Proof of Lemma~\ref{lem:spd}]
    Given any $A,B\in \mathrm{SPD}(d)$, define $$g:\{t\in\bbR:A+tB\in\mathrm{SPD}(d)\}\to \bbR,g(t)= x^\top (A+tB)^{-1} x$$
    It is suffice to prove $g$ is convex.
    \begin{align}
        g^\prime (t)&=-x^\top(A+tB)^{-1} B (A+tB)^{-1} x
        \\
        g^{\prime\prime} (t)&=2 x^\top(A+tB)^{-1} B (A+tB)^{-1} B (A+tB)^{-1} x
        \\
        &=2 \left(x^\top(A+tB)^{-1} B\right) (A+tB)^{-1} \left(B (A+tB)^{-1} x\right)\geq 0
    \end{align}
    Therefore, $g$ is convex.
\end{proof}
	
	Given Lemma~\ref{lem:spd}, we have
	\begin{align}
		\sum_{j=1}^N p_j \|x^*-x\|_{A(v_j)^{-1}}^2
		&=
			(x^*-x)^\top \left(\sum_{j=1}^N p_jA(v_j)^{-1}\right) (x^*-x)
		\\
		&\geq
			(x^*-x)^\top \left(\sum_{j=1}^N p_j A(v_j)\right)^{-1} (x^*-x)
		\\
		&=
			(x^*-x)^\top \left(\sum_{j=1}^N p_j \sum_{x\in\calX} v_{j,x}x x^\top\right)^{-1} (x^*-x)
		\\
		&=
			(x^*-x)^\top \left(\sum_{x\in\calX} \left(\sum_{j=1}^N p_j v_{j,x}\right)x x^\top\right)^{-1} (x^*-x)
		\\
		&=
			(x^*-x)^\top A(\barv)^{-1} (x^*-x)
	\end{align}
	where $\barv:=\sum_{j=1}^N p_j v_{j}\in\bDelta_\calX$ and the inequality becomes equality when $v_1=\dots=v_n$.
	Thus,
	\begin{align}
		\bbE[\tau]
		&
		\geq 
			2\log\frac{1}{2.4\delta}
    		\min_{\{v_j\in\bDelta_\calX\}_{j=1}^N}
    		\max_{x\neq x^*}
				\frac{
					\sum_{j=1}^N p_j \|x^*-x\|_{A(v_j)^{-1}}^2
				}{
					\left(\Delta(x^*,x)+\varepsilon\right)^2
				}
		\\
		&
		\geq
		2\log\frac{1}{2.4\delta}
			\min_{\barv\in\bDelta_\calX}
			\max_{x\neq x^*}
					\frac{
						\|x^*-x\|_{A(\barv)^{-1}}^2
					}{
						\left(\Delta(x^*,x)+\varepsilon\right)^2
					}.
     \label{equ:lwbd_vi_equal}
	\end{align}
 The last term mimics the lower bound in the stationary linear bandits with the latent vector $\bbE_{\theta\sim P_\theta}\theta$.
 
In addition, if we let $p_1= 1$ and $p_j= 0$ for all $j\ne 1$ in the instance, 
our lower bound can be simplified 
to the lower bound in stationary linear bandits with latent vector 
$\theta_1^*$~\citep{Jedra2020optimal}
\begin{align}
		\bbE[\tau]
		&
		\geq 
			2\log\frac{1}{2.4\delta}
    		\min_{\{v_j\in\bDelta_\calX\}_{j=1}^N}
    		\max_{x\neq x^*}
				\frac{
					\sum_{j=1}^N p_j \|x^*-x\|_{A(v_j)^{-1}}^2
				}{
					\left(\Delta(x^*,x)+\varepsilon\right)^2
				}
		\\
		&
		=
		2\log\frac{1}{2.4\delta}
			\min_{v_1\in\bDelta_\calX}
			\max_{x\neq x^*}
					\frac{
						\|x^*-x\|_{A(v_1)^{-1}}^2
					}{
						\left(\Delta_1(x^*,x)+\varepsilon\right)^2
					}.
     \label{equ:lwbd_termA}
	\end{align}

 
    \subsection{Lower Bound on the Number of changepoints}\label{sec:lwbd_DE}
    In this section, we consider an even easier problem: all the contextual information is known to the agent, except for the distribution $P_\theta$. The dynamics is displayed in Dynamics~\ref{dyn:easy}.
    \begin{dynamics}
		\caption{Dynamics for an easier problem}\label{dyn:easy}
		\begin{algorithmic}[1]
			\STATE The instance: $\Lambda = (\calX,\Theta,P_\theta,\calC)$
			\WHILE{the algorithm does not stop at time step $t$}
			\IF{$t\in \calC$}
            \STATE \underline{The agent acknowledges $t\in \calC$.}
			\STATE The environment samples $\theta_{j_t}^*\sim P_\theta$
			\ELSE
			\STATE $\theta_{j_t}^* = \theta_{j_{t-1}}^*$ (the environment does not change)
			\ENDIF
			\STATE \underline{The agent observes $\theta_{j_t}^*$}
			\ENDWHILE
			\STATE Recommend an $\varepsilon$-best arm $\hatx_\varepsilon$.
		\end{algorithmic}
	\end{dynamics}

    We are going to consider the alternative instances with respect to the distribution on $\theta$, i.e., $\Lambda^\prime=(\calX,\Theta,P_\theta^\prime,\calC)$. 
    where 
	$\exists x\in\calX\setminus \calX_\varepsilon,\forall x_\varepsilon\in\calX_\varepsilon,s.t.,x_\varepsilon^\top\bbE_{\theta\sim P_\theta^\prime}< x^\top\bbE_{\theta\sim P_\theta^\prime}-\varepsilon$.
    Denote the set of all the alternative instance (w.r.t. $\Lambda$) as $\mathrm{Alt}_P(\Lambda)$.
    According to the Pinsker's inequality, let $\calE:=\{\mbox{the recommended arm } \hatx_\varepsilon\notin \calX_\varepsilon\}$, then for any $\delta$-PAC algorithm $\pi$,
	\begin{align}
		&
			P_{\pi\Lambda}[\calE]+P_{\pi\Lambda^\prime}[\calE^c]
			\geq 
			\frac{1}{2}\exp\left(-\mathrm{KL}(P_{\pi\Lambda},P_{\pi\Lambda^\prime})\right),\quad\forall \Lambda^\prime\in \mathrm{Alt}_P(\Lambda)
		\\
		\Rightarrow\quad
		&
			4\delta
			\geq
			\exp\left(-\mathrm{KL}(P_{\pi\Lambda},P_{\pi\Lambda^\prime})\right),\quad\forall \Lambda^\prime\in \mathrm{Alt}_P(\Lambda)
		\\
		\Rightarrow\quad
		&
			\mathrm{KL}(P_{\pi\Lambda},P_{\pi\Lambda^\prime})
			\geq
			\ln\frac{1}{4\delta},\quad\forall \Lambda^\prime\in \mathrm{Alt}_P(\Lambda)
		\\
		\Rightarrow\quad
		&
			\min_{\Lambda^\prime\in\mathrm{Alt}_P(\Lambda)}\mathrm{KL}(P_{\pi\Lambda},P_{\pi\Lambda^\prime})
			\geq
			\ln\frac{1}{4\delta}
		\\
		\Rightarrow\quad
		&
			\bbE[l_t]\min_{\Lambda^\prime\in \mathrm{Alt}_P(\Lambda)}\mathrm{KL}(P_{\theta},P_{\theta}^\prime)
			\geq
			\ln\frac{1}{4\delta}
		\\
		\Rightarrow\quad
		&
			\bbE[l_t]
			\geq
			\frac{1}{\min_{\Lambda^\prime\in\mathrm{Alt}_P(\Lambda)}\mathrm{KL}(P_{\theta},P_{\theta}^\prime)}
			\ln\frac{1}{4\delta}\label{pinsker1}
	\end{align}
	We will give an upper bound on the denominator.
    Note that
	\begin{align}\label{equ:lwbd_NumbofCP}
		\min_{\Lambda^\prime\in\mathrm{Alt}_P(\Lambda)}\mathrm{KL}(P_{\theta},P_{\theta}^\prime) 
		=
		 \min_{x\in \calX\setminus\calX_\varepsilon}\min_{\Lambda_x}\mathrm{KL}(P_{\theta},P_{\theta}^x)
	\end{align}
	where $\Lambda_x$ is an alternative instance with distribution $P_{\theta}^x$ s.t. 
	$x^\top\bbE_{\theta\sim P_\theta^x}\theta-\varepsilon> x_\varepsilon^\top\bbE_{\theta\sim P_\theta^x}\theta,\forall x_\varepsilon\in\calX_\varepsilon$.

    Given any $x\notin\calX_\varepsilon$, we denote the shorthand notation $q_j=P_{\theta}^x[\theta_j^*]$.
	We have  
	\begin{align}
		\sum_{j=1}^N q_j \Delta_j(x,x_\varepsilon)\geq \varepsilon+a, \mbox{ for any }a>0,x_\varepsilon\in \calX_\varepsilon
	\end{align}
    Fix $a>0$ which is sufficiently small,
	we have the following optimization problem:
	\begin{align}
		&\min_{q\in\bDelta_N} \sum_{j=1}^N p_j\ln\frac{p_j}{q_j}
		\\
		s.t.\;& \sum_{j=1}^N q_j = 1
		\\
		& \sum_{j=1}^N q_j \Delta_j(x_\varepsilon,x)+ \varepsilon+ a\leq 0,\forall x_\varepsilon\in\calX_\varepsilon
	\end{align}
	If such alternative distribution $q$ exists, let $L$ denote the augmented Lagrangian function
    \begin{equation}
        L(q,\lambda,\{\lambda_{x_\varepsilon}\}_{x_\varepsilon\in\calX_\varepsilon})
        =
        \sum_{j=1}^N p_j\ln\frac{p_j}{q_j}
        +
        \lambda \bigg(\sum_{j=1}^N q_j - 1\bigg)
        +
        \sum_{x_\varepsilon\in\calX_\varepsilon}
        \lambda_{x_\varepsilon}(q_j \Delta_j(x_\varepsilon,x)+ \varepsilon+ a)
    \end{equation}
 By the KKT conditions, we have:
	\begin{align}\label{equ:lwbd_KKT}
		&
		\frac{\partial L}{\partial q_j} = \frac{-p_j}{q_j} + \lambda + \sum_{x_\varepsilon\in\calX_\varepsilon}\lambda_{x_\varepsilon} \Delta_j(x_\varepsilon,x) = 0,\;\forall j\in[N]
		\\
		&
		\frac{\partial L}{\partial \lambda} = \sum_{j=1}^N q_j - 1 = 0
		\\
		&
		\frac{\partial L}{\partial \lambda_{x_\varepsilon}} = \sum_{j=1}^N q_j \Delta_j(x_\varepsilon,x)+ \varepsilon+a= 0,\forall x_\varepsilon\in\calX_\varepsilon
	\end{align}
	or $\lambda_{x_\varepsilon} =0$ for some $x_\varepsilon\in\calX_\varepsilon$, which indicates some conditions are not satisfied. The equations above give
	\begin{align}\label{equ:qj}
		q_j = \frac{p_j}{1+\sum_{x_\varepsilon\in\calX_\varepsilon}\lambda_{x_\varepsilon}(\Delta_j(x_\varepsilon,x)+\varepsilon+a)},\;\forall j\in[N]
	\end{align}
	By solving
	\begin{align}\label{equ:beta}
		\sum_{j=1}^N \frac{p_j}{1+\sum_{x_\varepsilon\in\calX_\varepsilon}\lambda_{x_\varepsilon}(\Delta_j(x_\varepsilon,x)+\varepsilon+a)} - 1 = 0
	\end{align}
    which is a polynomial with $N-1$ degree. Thus, an explicit solution for any instance is not applicable.
    However, there are some cases where we can get an estimate of $\lambda_{x_\varepsilon}$
    
    \begin{lemma}\label{lem:lwbd_lambdasolution}
        Denote $j_0=\argmin_{j\in[N]}\Delta_j(x^*,x)$ and $j_1=\argmax_{j\in[N]}\Delta_j(x^*,x)$.
        When $|\calX_\varepsilon|=1$, 
        $-\Delta_{j_0}(x^*,x)-2\varepsilon>\Delta_{j_1}(x^*,x)>\varepsilon$ and $ \sum_{j=1}^N p_j(\Delta_j(x^*,x)+\varepsilon+a)^3\leq 0$, the solution to \eqref{equ:beta} is upper bounded by
        $$\lambda_{\istar}\leq\frac{\Delta(x^*,x)+\varepsilon+a}{\sum_{j=1}^N p_j(\Delta_j(x^*,x)+\varepsilon+a)^2}$$
    \end{lemma}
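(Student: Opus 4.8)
The plan is to analyze the stationarity equation \eqref{equ:beta} directly. Since $|\calX_\varepsilon|=1$, write $x^*$ for the unique $\varepsilon$-best arm (so $i^*=x^*$), let $\lambda:=\lambda_{i^*}$ be the single multiplier, and abbreviate $\delta_j:=\Delta_j(x^*,x)+\varepsilon+a$ and $m_k:=\sum_{j=1}^N p_j\delta_j^k$. Because $\sum_j p_j=1$ and $\Delta(x^*,x)=\sum_j p_j\Delta_j(x^*,x)$, we have $m_1=\Delta(x^*,x)+\varepsilon+a$, so the claimed bound is exactly $\lambda\le m_1/m_2$. With this notation \eqref{equ:beta} reads $g(\lambda):=\sum_{j=1}^N\frac{p_j}{1+\lambda\delta_j}-1=0$, and \eqref{equ:qj} shows the relevant solution is the positive root (the trivial root $\lambda=0$ returns $q=p$, which violates the active constraint since $\Delta(x^*,x)>0$).

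First I would set up the domain and the basic shape of $g$. The hypotheses $\Delta_{j_1}(x^*,x)>\varepsilon$ and $-\Delta_{j_0}(x^*,x)-2\varepsilon>\Delta_{j_1}(x^*,x)$ give $\Delta_{j_1}(x^*,x)>\varepsilon$ and $\Delta_{j_0}(x^*,x)<-3\varepsilon$, hence $\delta_{j_1}>0$ and $\delta_{j_0}<0$ (for small $a$); the latter guarantees feasibility of $\mathrm{Alt}_P(\Lambda)$ and pins the valid interval to $\lambda\in[0,\lambda_{\max})$ with $\lambda_{\max}:=1/|\delta_{j_0}|$, on which every $1+\lambda\delta_j>0$ so that $q$ is a genuine distribution. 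On this interval $g''(\lambda)=\sum_j\frac{2p_j\delta_j^2}{(1+\lambda\delta_j)^3}>0$, so $g$ is strictly convex; moreover $g(0)=0$, $g'(0)=-m_1<0$, and $g(\lambda)\to+\infty$ as $\lambda\uparrow\lambda_{\max}$ (the $j_0$-term blows up). Consequently $g$ has a unique root $\lambda^*\in(0,\lambda_{\max})$, with $g<0$ on $(0,\lambda^*)$ and $g>0$ on $(\lambda^*,\lambda_{\max})$.

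It then suffices to show $\lambda^*\le\lambda_0$ where $\lambda_0:=m_1/m_2$. If $\lambda_0\ge\lambda_{\max}$ this is immediate from $\lambda^*<\lambda_{\max}$. Otherwise $\lambda_0$ lies in the valid interval, and by the sign pattern of $g$ it is enough to prove $g(\lambda_0)\ge0$. Using the exact identity $\frac{1}{1+u}=1-u+\frac{u^2}{1+u}$ with $u=\lambda\delta_j$ gives $g(\lambda)=-\lambda m_1+\lambda^2\sum_j\frac{p_j\delta_j^2}{1+\lambda\delta_j}$, so after dividing by $\lambda_0>0$ the inequality $g(\lambda_0)\ge0$ is equivalent to $\sum_j\frac{p_j\delta_j^2}{1+\lambda_0\delta_j}\ge m_2$, i.e.\ to $\sum_j\frac{p_j\delta_j^3}{1+\lambda_0\delta_j}\le0$. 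I would close this with the reduction $\sum_j\frac{p_j\delta_j^3}{1+\lambda_0\delta_j}=m_3-\lambda_0\sum_j\frac{p_j\delta_j^4}{1+\lambda_0\delta_j}\le m_3\le0$, where the equality again uses $\frac{1}{1+u}=1-\frac{u}{1+u}$, the first bound drops the nonnegative term $\lambda_0\sum_j\frac{p_j\delta_j^4}{1+\lambda_0\delta_j}\ge0$ (valid since $1+\lambda_0\delta_j>0$ on the interval), and the last step invokes the hypothesis $m_3=\sum_j p_j\delta_j^3\le0$. Combining the two cases yields $\lambda_{i^*}=\lambda^*\le m_1/m_2=\frac{\Delta(x^*,x)+\varepsilon+a}{\sum_j p_j(\Delta_j(x^*,x)+\varepsilon+a)^2}$.

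The main obstacle is the second paragraph rather than the final inequality: the KKT system \eqref{equ:lwbd_KKT} only characterizes critical points, so the real work is to certify that the constrained KL-minimizer corresponds precisely to the positive root of $g$ inside the valid interval $[0,\lambda_{\max})$ and to rule out degeneracies (infeasibility, or $\lambda$ leaving the region where $q\ge0$). The strict convexity of $g$ together with its two endpoint behaviours does this cleanly, and the monotone sign structure it provides is exactly what converts the target $\lambda^*\le\lambda_0$ into the single scalar inequality $g(\lambda_0)\ge0$; once there, the algebraic identity makes the remaining estimate short, with the hypothesis $m_3\le0$ entering only at the very last line.
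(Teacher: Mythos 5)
Your proof is correct, and it reaches the bound by a genuinely different route from the paper's. The paper works directly with the root itself: after noting that $q_j=p_j/(1+\lambda\delta_j)$ must be a probability distribution (so $1+\lambda\delta_j>0$ for all $j$, forcing $\lambda$ into the interval you call $[0,\lambda_{\max})$), it applies the truncated expansion $\frac{1}{1+u}=1-u+u^2-u^3+\frac{u^4}{1+u}\ge 1-u+u^2-u^3$ termwise to the equation $\sum_j \frac{p_j}{1+\lambda\delta_j}=1$, cancels the constant, divides by $\lambda$, and obtains the quadratic inequality $0\ge -\lambda^2 m_3+\lambda m_2-m_1$; since $m_3\le 0$ the term $-\lambda^2 m_3$ is nonnegative and can be dropped, giving $\lambda\le m_1/m_2$ immediately. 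You instead evaluate the residual $g$ at the candidate value $\lambda_0=m_1/m_2$ and use the sign structure of $g$ (convexity, $g(0)=0$, $g'(0)<0$, blow-up at $\lambda_{\max}$) to convert $g(\lambda_0)\ge 0$ into the desired bound on the root, closing with exact identities rather than a truncated series. The two arguments use the same two ingredients ($1+\lambda\delta_j>0$ and $m_3\le 0$), but yours buys a cleaner treatment of which solution of the KKT system is actually relevant — the paper only addresses uniqueness of the positive root separately in Remark~\ref{rmk:exis_uniq}, via essentially the same convexity computation you build into the main argument — and it handles the degenerate case $\lambda_0\ge\lambda_{\max}$ explicitly, which the paper does not mention. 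The paper's version is slightly shorter because it never needs the uniqueness or endpoint discussion: its polynomial inequality holds for \emph{any} valid solution. One shared loose end in both arguments is that the blow-up of the $j_0$-term (and the paper's claim $|\lambda\delta_j|<1$) implicitly assumes $p_{j_0}>0$; this is harmless but worth a word.
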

    \begin{proof}[Proof of Lemma~\ref{lem:lwbd_lambdasolution}]
        When the $\varepsilon$-best arm set is a singleton, \eqref{equ:beta} becomes
        (where $a$ is sufficiently small)
        \begin{align}
		 \sum_{j=1}^N \frac{p_j}{1+\lambda_{x^*}(\Delta_j(x^*,x)+\varepsilon+a)} - 1 = 0
	    \end{align}
        As \eqref{equ:qj} is a probability distribution, we need $1+\lambda_{x^*}(\Delta_{j_0}(x^*,x)+\varepsilon+a)\geq 0$, which indicates $\lambda_{x^*}\leq \frac{-1}{\Delta_j(x^*,x)+\varepsilon+a} $. 
        By the condition on $j_0$ and $j_1$,
        $0<\lambda_{x^*}(\Delta_{j}(x^*,x)+\varepsilon+a)< 1,\forall j\in[N]$.

        Therefore, by using $\frac{1}{1+x}=1-x+x^2-x^3+\frac{x^4}{1+x} $ for $|x|< 1$, we can expand the equation as follows
        \begin{align}
            1 & = \sum_{j=1}^N \frac{p_j}{1+\lambda_{x^*}(\Delta_j(x^*,x)+\varepsilon+a)}
            \\
            &\geq
            \sum_{j=1}^N p_j \left(\left(1-\lambda_{x^*}(\Delta_j(x^*,x)+\varepsilon+a)
            +(\lambda_{x^*}(\Delta_j(x^*,x)+\varepsilon+a))^2
            \right.\right.
            \\& \hphantom{\ge}
            \left.\left.
            -(\lambda_{x^*}(\Delta_j(x^*,x)+\varepsilon+a))^3
            \right)\right)
            \\
            \Leftrightarrow\quad&
            0\geq
            -\lambda_{x^*}^2\sum_{j=1}^N p_j(\Delta_j(x^*,x)+\varepsilon+a)^3
            +\lambda_{x^*}\sum_{j=1}^N p_j(\Delta_j(x^*,x)+\varepsilon+a)^2
            \\& \hphantom{0\ge}
            -\sum_{j=1}^N p_j(\Delta_j(x^*,x)+\varepsilon+a)
            \\
            \Rightarrow\quad&
            \lambda_{x^*}\leq \frac{\sum_{j=1}^N p_j(\Delta_j(x^*,x)+\varepsilon+a)}{\sum_{j=1}^N p_j(\Delta_j(x^*,x)+\varepsilon+a)^2}
            =\frac{\Delta(x^*,x)+\varepsilon+a}{\sum_{j=1}^N p_j(\Delta_j(x^*,x)+\varepsilon+a)^2}.
        \end{align}
    \end{proof}

	In general, when we get the $\{\lambda_{x_\varepsilon}\}_{x_\varepsilon\in\calX_\varepsilon} $ and plug it in \eqref{equ:qj}, the alternative distribution $q$ is obtained.  
    Finally, we let $a\to 0$.

    This gives the lower bound for the number of changepoints that need to be observed.
	A coarse estimation of the KL divergence without solving \eqref{equ:beta} can be done as follows 
	\begin{align}
		\sum_{j=1}^N p_j\ln\frac{p_j}{q_j} 
		&= \sum_{j=1}^N p_j\ln\left(1+\sum_{x_\varepsilon\in\calX_\varepsilon}\lambda_{x_\varepsilon}(\Delta_j(x_\varepsilon,x)+\varepsilon)\right)
		\\
		& \leq \sum_{j=1}^N p_j\sum_{x_\varepsilon\in\calX_\varepsilon}\lambda_{x_\varepsilon}(\Delta_j(x_\varepsilon,x)+\varepsilon)
		\\
		& = \sum_{x_\varepsilon\in\calX_\varepsilon}\lambda_{x_\varepsilon}(\Delta(x_\varepsilon,x)+\varepsilon).
	\end{align}
	Note that the solution of $\lambda_{x_\varepsilon}$  depends on $\Delta(x_\varepsilon,x)$ (e.g. Lemma~\ref{lem:lwbd_lambdasolution}), so the final solution is of order $(\Delta(x_\varepsilon,x)+\varepsilon)^2$ for a given $x\notin\calX$. This is the solution to the inside minimization problem in \eqref{equ:lwbd_NumbofCP}.
    The final lower bound will be
   \begin{align}
        \min_{\Lambda^\prime\in \mathrm{Alt}_P(\Lambda)}\mathrm{KL}(P_{\theta},P_{\theta}^\prime) 
		&=
		 \min_{x\in \calX\setminus\calX_\varepsilon}\min_{\Lambda_x}\mathrm{KL}(P_{\theta},P_{\theta}^x)
        \\
        &\leq
        \min_{x\notin\calX_\varepsilon}\sum_{x_\varepsilon\in\calX_\varepsilon}\lambda_{x_\varepsilon}(\Delta(x_\varepsilon,x)+\varepsilon).
    \end{align}
    The lower bound is
    \begin{align}
        \bbE[l_t]
			\geq
            \max_{x\notin\calX_\varepsilon}
			\frac{1}{\sum_{x_\varepsilon\in\calX_\varepsilon}\lambda_{x_\varepsilon}(\Delta(x_\varepsilon,x)+\varepsilon)}
			\ln\frac{1}{4\delta}
    \end{align}
    where $\lambda_{x_\varepsilon}$ is the solution to \eqref{equ:beta}.

    With the setup in Lemma~\ref{lem:lwbd_lambdasolution},
    we have
    \begin{align}
        \min_{\Lambda^\prime\in\mathrm{Alt}_P(\Lambda)}\mathrm{KL}(P_{\theta},P_{\theta}^\prime) 
		&=
		 \min_{x\in \calX\setminus\calX_\varepsilon}\min_{\Lambda_x}\mathrm{KL}(P_{\theta},P_{\theta}^x)
        \\
        &\leq
        \min_{x\neq x_\varepsilon}\frac{(\Delta(x^*,x)+\varepsilon)^2}{\sum_{j=1}^N p_j(\Delta_j(x^*,x)+\varepsilon)^2}
    \end{align}
    and the lower bound is 
    \begin{align}
        \bbE[l_t]
			\geq
            \max_{x\neq x_\varepsilon}
			\frac{\sum_{j=1}^N p_j(\Delta_j(x^*,x)+\varepsilon)^2}{(\Delta(x^*,x)+\varepsilon)^2}
			\ln\frac{1}{4\delta}.
    \end{align}

    \begin{remark}\label{rmk:exis_uniq}
    We give some comments on the existence and uniqueness of the solution to \eqref{equ:beta}.
    
    Existence:
    \begin{itemize}
        \item It is possible that \eqref{equ:beta} does not have a solution. For instance, consider a three-arm instance: $x_{(1)} = (1,0.5),x_{(2)} = (0.5,1),x_{(3)} = (0.6,0.6),\theta_1^* = (1,0),\theta_2^*=(0,1),P_\theta = (0.5,0.5),\varepsilon = 0.1$. We have ${x_{(1)}}^\top\bbE_{\theta\sim P_\theta}={x_{(2)}}^\top\bbE_{\theta\sim P_\theta}= 0.75$ and ${x_{(3)}}^\top\bbE_{\theta\sim P_\theta}= 0.6$, thus $x_{(3)}$ is not an $\varepsilon$-best arm.
		Furthermore, there does not exist an alternative distribution $q$, such that $x_{(3)}$ is the best arm and neither $x_{(1)},x_{(2)}$ is $\varepsilon$-best. Under such case, the lower bound on $\bbE[l_t]$ is $0$ (we regard $\min_{x\in\emptyset} f(x)=+\infty$ by convention).

		\item It is possible that \eqref{equ:beta} does not have a solution and it is unnecessary to estimate $P_{\theta}$. For instance, consider a two-arm instance: $x_{(1)} = (1,0.5),x_{(2)} = (0.5,0.1),\theta_1^* = (1,0),\theta_2^*=(0,1),P_\theta = (0.5,0.5),\varepsilon = 0.1$. Arm $x_{(1)}$ is better than arm $x_{(2)}$ under all contexts. Therefore, no matter what $P_{\theta}$ is, arm $x_{(1)}$ is the $\varepsilon$-best arm.
		Under such cases, 
        there may exist an algorithm and it is sufficient for it to determine the best arm if the context vectors are well-approximated.
        There is no need to estimate $P_\theta$.
  
        \item A necessary condition for the existence of the solution of \eqref{equ:beta} is: there exists $x\notin \calX_\varepsilon$, for any $x_\varepsilon\in\calX_\varepsilon$, there $\exists j(x_\varepsilon)\in[N]$, $s.t.\Delta_{j(x_\varepsilon)}(x_\varepsilon,x)<-\varepsilon$. In other words, for each $\varepsilon$-best arm $x_\varepsilon$, there is at least one context in which the alternative arm $x$ is better than $x_\varepsilon$ by at least $\varepsilon$.
        
        \item A sufficient condition for the existence of the solution of \eqref{equ:beta} is: there exists $x\notin \calX_\varepsilon$ and $j\in[N]$, $s.t.\Delta_{j}(x_\varepsilon,x)<-\varepsilon$, $\forall x_\varepsilon\in\calX_\varepsilon$. In other words, $x$ is better than any $\varepsilon$-best arm under context $j$. In the alternative instance, we can lift $P_\theta^\prime[\theta_j^*]$ close to $1$ so that arm $x$ becomes the $\varepsilon$-best arm and $\calX_\varepsilon^\prime\cap\calX_\varepsilon=\emptyset$.
      \end{itemize}
      Uniqueness:
      \begin{itemize}
          \item The uniqueness of the solution is not guaranteed, as the KKT conditions \eqref{equ:lwbd_KKT} is only a necessary condition for the solution. We need to look for the solution that minimizes the $\rmKL$ divergence.
          \item A sufficient condition for the uniqueness of the solution is (if it exists, which indicates $\exists j\in[N],s.t.\Delta_j(x^*,x)<-\varepsilon$): $|\calX_\varepsilon|=1$. Specifically, denote $f(\lambda_{x^*}):=\sum_{j=1}^N \frac{p_j}{1+\lambda_{x^*}(\Delta_j(x^*,x)+\varepsilon+a)}$, we have 
          \begin{align}
              &
              \frac{\partial f}{\partial \lambda_{x^*}} = \sum_{j=1}^N \frac{-p_j(1+\lambda_{x^*}(\Delta_j(x^*,x)+\varepsilon+a))}{1+\lambda_{x^*}(\Delta_j(x^*,x)+\varepsilon+a)}
              \\
              &
              \frac{\partial^2 f}{\partial \lambda_{x^*}^2} = \sum_{j=1}^N \frac{2p_j(1+\lambda_{x^*}(\Delta_j(x^*,x)+\varepsilon+a))^2}{1+\lambda_{x^*}(\Delta_j(x^*,x)+\varepsilon+a)}>0
          \end{align}
          As $f(0)=1,\frac{\partial f}{\partial \lambda_{x^*}}(0)=-1 $ and $\frac{\partial^2 f}{\partial \lambda_{x^*}^2}>0$, so there is exactly $1$ solution $\lambda_{x^*}>0$.
      \end{itemize}
	\end{remark}

\section{More Examples and Details}
\label{sec:tight_ill}

In this section, we {\bf firstly} provide one more example to illustrate the tightness of our derived upper bound lower bounds, indicating the efficiency of our \algParallel{} algorithm.
In addition, we can observe how the upper and lower bounds are affected by the level of piecewise non-stationarity and whether our \algParallel{} algorithm can reduce the influence manifested by $\Lmax$. 
{\bf After that}, we present a proof sketch of the results in Corollaries~\ref{cor:tight_ExpIns}
and \ref{cor:tight_two} in Appendix~\ref{app:tight_ill_proof}.
\begin{example}\label{exp:instance2}
    Instance $\Lambda = (\calX,\Theta,P_\theta,\calC)$ is with
    \begin{itemize}
        \item $ d$ arms: $x_{(i)} = \be_i,i\in[d]$.
        \item $N=d$ contexts: $\theta_1^* =(a,0,0,\ldots,0)^\top$, $\theta_2^*=(a,b,b,\ldots,b)^\top-b \be_j,j\geq 2$, where
        $b>a>\varepsilon,b-a>\varepsilon$
        \item Context distribution: $p_j=p,j\geq 2$ and $p_1=1-(N-1)p$, where $p\in(0,\frac{a-\varepsilon}{(N-2)b})$.
        %
        
    \end{itemize}
\end{example}

 Under Example~\ref{exp:instance2}, we have 
\begin{align}
    &
    \Delta(x_{(1)},x_{(i)})=\left(1-(N-2)p\right)\cdot a-(N-2)p\cdot (b-a) >\varepsilon
    \\
    &
    \Delta_j(x_{(1)},x_{(i)})=-b+a<-\varepsilon,\quad i\geq 2, j\neq 1,i
\end{align}
Thus, (1) $x_{(1)}$ is the unique $\varepsilon$-best arm; (2) $\{x_{(i)}\}_{i\geq 2}$ are equivalent, and $\Delta_{\min}:=\Delta(x_{(1)},x_{(i)}) $; (3) for any $i\geq 2$, $x_{(i)}$ can be an $\varepsilon$-optimal arm under some alternative distributions.

\begin{restatable}{corollary}{corTightTwo}\label{cor:tight_two}
    Firstly, for the instances defined in Example~\ref{exp:instance2}, we have
    \begin{align}
        \rmH_{\DE} 
        &\leq
        \frac{16(N-2)\Lmax}{\left(\Delta_{\min}+\varepsilon\right)^2}
        \Bigg( 
        (a+\varepsilon)^2
        +
        (b-a-\varepsilon)^2
        \Bigg),
    \end{align}
    and
    the sample complexity of the \algParallel{} is tight up to $ (N \Lmax/\Lmin)$ and logarithmic factors.
    We also further observe some specific instances: 
    (i) when $p\to 0^+$, with $\Delta_{\min} =\min_{x\ne x^*} \Delta(x^*,x)$, we have
        \begin{align}
         \frac{\bbE[\tau]^* }{ \ln(1/\delta)}
         &\in 
         \tilO
        \Bigg(
        \min\Bigg\{
        \frac{d}{\left(\Delta_{\min} + \varepsilon\right)^2}
        +
        \frac{NL_{\max}}{\Delta_{\min} + \varepsilon}
        ,\;
        \frac{  L_{\max} +  d   }{\left(\Delta_{\min} + \varepsilon\right)^2}
        \Bigg\}
        \Bigg)\nonumber\\* %
        &\qquad\bigcap%
        \Omega \left\{
        \max\left\{
        \frac{
                        d
                    }{
                        \left(\Delta_{\min}+\varepsilon\right)^2
                    }
                ,
                \frac{L_{\min}(b-a-\varepsilon)}{\Delta_{\min}+\varepsilon}
                \right\}
            \right\}.
    \end{align}
    (ii) When $p\to \left(\frac{a-\varepsilon)}{(N-2)b}\right)^-$ and
        $ (a+\varepsilon)^2+(b-a-\varepsilon)^2 = \Omega(1) $, 
    we have
    $\rmH_{\DE}=\frac{N\Lmax}{\left( \Delta(x_{(1)},x_{(i)})+\varepsilon\right)^2}$
    and
    \begin{align}
        \frac{ \bbE[\tau]^* }{ \ln (1/\delta) } \in 
        \tilO
        \bigg(
        \!\min\bigg\{
        \rmH_{\DE} 
        , 
        \frac{ d+ L_{\max}    }{\varepsilon^2} 
        \bigg\}
        \bigg)
        \bigcap
        \Omega
        \left(
                \frac{  d+ \Lmin  }{\varepsilon^2} 
        \!\right).
    \end{align}
  
The upper bounds are achieved by the \algParallel{} algorithm.
\end{restatable}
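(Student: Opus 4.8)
The plan is to verify the two bounds in Corollary~\ref{cor:tight_two} by specializing the general upper bound of Theorem~\ref{thm:parallel_upper_exp} and the generic lower bound of Theorem~\ref{thm:lwbd_overall_piecewise} to the explicit instance in Example~\ref{exp:instance2}. First I would compute the key instance-dependent quantities. Because the arms are the standard basis vectors $x_{(i)}=\be_i$, the G-optimal allocation is uniform and $A(\lambda^*)^{-1}$ is a scaled identity, which makes $\|x_{(i)}-x_{(1)}\|_{A(\lambda^*)^{-1}}^2$ explicit. The contextual gaps are already given: $\Delta_j(x_{(1)},x_{(i)})=a$ when $j=1$ (or $j=i$ in a way that must be carefully tracked against the definition $\theta_2^*=(a,b,\dots,b)^\top-b\be_j$), and $\Delta_j(x_{(1)},x_{(i)})=a-b<-\varepsilon$ for the remaining $j\notin\{1,i\}$. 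Substituting these into
\begin{equation}
\barH(x_\varepsilon,x)=\Big(\sum_{j=1}^N\sqrt{\min\{16p_j,1/4\}}\,|\Delta_j(x_{(1)},x_{(i)})+\varepsilon|\Big)^2
\end{equation}
and using $p_j=p$ for $j\ge2$, $p_1=1-(N-1)p$, I would bound $\min\{16p_j,1/4\}\le 16p_j$ term by term and collect the $(a+\varepsilon)$ and $(b-a-\varepsilon)$ contributions, which yields the stated closed-form upper bound on $\rmH_{\DE}$ after multiplying by $\Lmax/(\Delta_{\min}+\varepsilon)^2$.

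Next I would assemble the overall upper bound. Plugging $\rmH_{\DE}$ together with $\TVE(x)=\tilO(d/(\Delta_{\min}+\varepsilon)^2)$ and $\TRE(x)=\tilO(N\Lmax/(\Delta_{\min}+\varepsilon))$ into Theorem~\ref{thm:parallel_upper_exp}, and taking the $\min$ with $\TGNa+\TPSNa=\tilO((d+\Lmax)/(\Delta_{\min}+\varepsilon)^2)$ from Proposition~\ref{prop:upper_naive_upper_exp}, directly gives the upper sides of parts (i) and (ii). The two regimes differ only in how $p$ scales: as $p\to0^+$ the contexts $j\ge2$ become vanishingly rare, so the $16p_j$ factor suppresses the distribution-estimation cost and the $\TRE$ residual term $NL_{\max}/(\Delta_{\min}+\varepsilon)$ survives; as $p\to((a-\varepsilon)/((N-2)b))^-$ the gap $\Delta_{\min}\to\varepsilon$ order and $\rmH_{\DE}=\Theta(N\Lmax/(\Delta_{\min}+\varepsilon)^2)$ dominates, recovering the claimed forms. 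I would be careful to track the $(\Delta_{\min}+\varepsilon)$ factors, since in regime (ii) $\Delta_{\min}$ is comparable to $\varepsilon$ and the denominators collapse to $\varepsilon^2$.

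For the lower bounds I would invoke Theorem~\ref{thm:lwbd_overall_piecewise}, namely $\bbE[\tau_\pi]\ge\max\{T_\varepsilon(\Lambda)\ln(1/(2.4\delta)),\,c_{N_\calC}\}$. Since $|\calX_\varepsilon|=1$ here, the simplified form of $T_\varepsilon(\Lambda)$ from Corollary~\ref{cor:lwbd_CLB}, namely $\min_{\barv}\max_{x\neq x^*}\|x^*-x\|_{A(\barv)^{-1}}^2/(\Delta(x^*,x)+\varepsilon)^2$, applies and evaluates (by symmetry of the basis arms) to order $d/(\Delta_{\min}+\varepsilon)^2$. The changepoint term uses $c_{N_\calC}\ge N_\calC\Lmin$ with $N_\calC=\max_{x\neq x^*}\sum_j p_j(\Delta_j(x^*,x)+\varepsilon)^2/(\Delta(x^*,x)+\varepsilon)^2\cdot\ln(1/(4\delta))$, which I would evaluate in each regime; in (i) the surviving $p_j$ contributions give the $\Lmin(b-a-\varepsilon)/(\Delta_{\min}+\varepsilon)$ term, and in (ii) it contributes order $\Lmin/\varepsilon^2$. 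Taking the max of the two lower-bound terms yields the $\Omega$ sides. The main obstacle I anticipate is the bookkeeping in evaluating $N_\calC$ and $\rmH_{\DE}$ consistently as $p$ approaches its boundary values—one must verify the existence/solvability conditions for the alternative distribution (cf.\ Remark~\ref{rmk:exis_uniq}) so that the changepoint lower bound is nonvacuous, and confirm that the upper and lower bounds genuinely match up to the advertised $(N\Lmax/\Lmin)$ and logarithmic slack rather than an uncontrolled polynomial gap.
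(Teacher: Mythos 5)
Your overall skeleton matches the paper's: compute the contextual gaps for Example~\ref{exp:instance2}, bound $\barH$ and hence $\rmH_{\DE}$, plug into Theorem~\ref{thm:parallel_upper_exp} (taking the $\min$ with the \algNa{} bound) for the upper side, and specialize Theorem~\ref{thm:lwbd_overall_piecewise} via the singleton-$\calX_\varepsilon$ form of $T_\varepsilon(\Lambda)$ for the $d/(\Delta_{\min}+\varepsilon)^2$ part of the lower side. Those pieces of your plan are sound and essentially identical to what the paper does, including your (correct) observation that an alternative distribution exists here because $\Delta_j(x_{(1)},x_{(i)})=a-b<-\varepsilon$ for $j\notin\{1,i\}$, so the changepoint lower bound is nonvacuous.

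The gap is in how you obtain the changepoint term of the lower bound. You propose to evaluate the generic quantity $N_\calC=\max_{x\neq x^*}\sum_j p_j(\Delta_j(x^*,x)+\varepsilon)^2/(\Delta(x^*,x)+\varepsilon)^2\cdot\ln(1/(4\delta))$ from the theorem statement. For this instance $\sum_j p_j(\Delta_j(x_{(1)},x_{(i)})+\varepsilon)^2=(p_1+p)(a+\varepsilon)^2+(N-2)p(b-a-\varepsilon)^2$, so as $p\to0^+$ (where $\Delta_{\min}\to a$) the ratio tends to $1$ and the generic formula only yields $N_\calC=\Theta(\ln(1/\delta))$, i.e.\ a changepoint lower bound of order $\Lmin\ln(1/\delta)$. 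This does \emph{not} recover the claimed $\Omega\big(\Lmin(b-a-\varepsilon)/(\Delta_{\min}+\varepsilon)\big)$ term of part (i), which can be much larger when $b\gg a$. The paper instead solves the instance-specific KKT system \eqref{equ:beta} exactly for $x_\varepsilon=x_{(1)}$, $x=x_{(i)}$, obtaining $\lambda_{x_{(1)}}=(\Delta(x_{(1)},x_{(i)})+\varepsilon)/\big((a+\varepsilon)(b-a-\varepsilon)\big)$ and hence $\bbE[l_\tau]\ge (a+\varepsilon)(b-a-\varepsilon)/(\Delta(x_{(1)},x_{(i)})+\varepsilon)^2\cdot\ln(4/\delta)$; multiplying by $\Lmin$ and letting $p\to0^+$ gives exactly the $(b-a-\varepsilon)/(\Delta_{\min}+\varepsilon)$ factor. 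So to prove the statement as written you must carry out the exact KKT solve for this instance (the generic $N_\calC$ formula, which comes from the one-sided estimate in Lemma~\ref{lem:lwbd_lambdasolution}, is too lossy here). The rest of your plan, including the regime-(ii) bookkeeping where the denominators collapse to $\varepsilon^2$, goes through as you describe.
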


We can observe from Corollary~\ref{cor:tight_two} that
\begin{itemize}
    \item In case (i), when $p\to 0^+$, $p_1\to 1-\frac{(N-1)(a-\varepsilon)}{(N-2)b} $ and $p_j\to  \left(\frac{a-\varepsilon)}{(N-2)b}\right)$, and thus the instance tends to be non-stationary and $\Delta_{\min}\to \varepsilon$. We will obtain
        \begin{align}
            \frac{ \bbE \tau }{ \ln(1/\delta)} \in \tilde{\Theta}
            \left(
            \frac{  d}{ (\Delta_{\min}+\varepsilon)^2 }
            \right),
        \end{align}
        indicating that our algorithm can also reduce the impact of $\Lmax$.
    \item In case (ii), the upper and lower bounds are with the same order, and the difference is solely manifested by a additive term $\Lmax-\Lmin$, suggesting that \algParallel{} is near optimal and again, \algParallel{} mitigates the impact of $\Lmax$.
\end{itemize}





\subsection{Analysis of examples}

\label{app:tight_ill_proof}
%
%


Recall the instance $\Lambda = (\calX,\Theta,P_\theta,\calC)$ in Example~\ref{exp:instance_ExpIns}: 

\exampleExpIn*
\corTightExpIns*
\begin{proof}[Proof of Corollary~\ref{cor:tight_ExpIns}]
    As $\mu_{x_{(1)}} = \cos\phi$, $\mu_{x_{(i)}} = 0,\mu_{x_{(d+i)}}=\cos^2\phi$ for $i=2,\ldots,d$ and $\varepsilon\in(1-\cos\phi,\cos\phi)$, $x_{(1)}$ is the best arm and $x_{(i)},i=2,\ldots,d$ are not $\varepsilon$-best arms and $x_{(d+i)},i=2,\ldots,d$ are  $\varepsilon$-best arms. $\Delta_{\min}= \cos\phi-\cos^2\phi$.

    {When  $\varepsilon<\cos\phi-\cos^2\phi=\Delta_{\min}$}, $x_{(1)}$ is the unique $\varepsilon$-best arm. 
    By solving \eqref{equ:lwbd_vi_equal}, we see that there exists a continuous function $f(\phi)$ such that $f(\phi)\to 0$ as $\phi\to 0$ and we can get the lower bound for the \VE\ term as
    \begin{equation}\label{equ:example_lwbd}
        \Omega\left( (1+f(\phi)) \cdot \frac{ d }{(\Delta(x_{(1)},x_{(d+1)})+\varepsilon)^2}\ln\frac{1}{\delta}\right).
    \end{equation}
    As $(x_{(1)}-x_{(i)})^\top\theta_{j\pm}^*>0$ for $i=2,\ldots,d,j\in[d-1]$, $x_{(i)}$ cannot be an $\varepsilon$-best arm under any alternative distribution, we only need to consider $x_{(d+i)},i=2,\ldots,d$, which are equivalent. Given any $x_{(d+i)}$,
    by solving \eqref{equ:beta}, we can upper bound $$\lambda_{x_{(1)}}\leq \frac{\Delta_{\min}+\varepsilon+\sin^2\phi}{-(\Delta_{i+}(x_{(1)},x_{(d+i)})+\varepsilon)(\Delta_{i-}(x_{(1)},x_{(d+i)})+\varepsilon)},$$
    and thus the number of change points or context samples can be lower bounded as 
    \begin{align}
        \bbE[l_\tau]
        &\geq 
        \frac{-(\Delta_{i+}(x_{(1)},x_{(d+i)})-\varepsilon)(\Delta_{i-}(x_{(1)},x_{(d+i)})+\varepsilon)}{(\Delta_{\min}+\varepsilon+\sin^2\phi)(\Delta_{\min}+\varepsilon)}
        \\
        &=
        \frac{(1-\cos\phi-\varepsilon)(1+\cos\phi-2\cos^2\phi+\varepsilon)}{(\Delta_{\min}+\varepsilon+\sin^2\phi)(\Delta_{\min}+\varepsilon)}
        =
         \frac{1-\cos\phi-\varepsilon}{\Delta_{\min}+\varepsilon}
         = O(1).
    \end{align}

    According to Theorem~\ref{thm:parallel_upper_exp}, the upper bound is 
    \begin{align}
        &
        \tilO\left(
        \frac{d}{(\Delta(x_{(1)},x_{(d+1)})+\varepsilon)^2}\ln\frac{1}{\delta}
        +
        \rmH_{\DE}(\Delta(x_{(1)},x_{(d+1)})\ln\frac{1}{\delta}
        +
        \frac{N\Lmax}{\Delta(x_{(1)},x_{(d+1)})+\varepsilon}\ln\frac{1}{\delta}
        \right)
        \\
        &=
        \tilO\left(
        \frac{d}{(\Delta_{\min}+\varepsilon)^2}\ln\frac{1}{\delta}
        +
        \rmH_{\DE}(\Delta(x_{(1)},x_{(d+1)})\ln\frac{1}{\delta}
        +
        \frac{N\Lmax}{\Delta_{\min}+\varepsilon}\ln\frac{1}{\delta}
        \right)
    \end{align}
    where 
    \begin{align}\label{equ:ExpIns_up}
    \rmH_{\DE}(\Delta(x_{(1)},x_{(d+1)}) =
        \left\{
        \begin{aligned}
             &16N\Lmax,& \varepsilon\geq 1-\cos\phi
             \\
             &\frac{16N\Lmax(\Delta_{\min}+\varepsilon+2(1-\cos\phi)/N)^2}{(\Delta_{\min}+\varepsilon)^2 },& \varepsilon< 1-\cos\phi
        \end{aligned}
        \right.
    \end{align}
    As $\Delta_{\min} = \cos\phi-\cos^2\phi = \cos\phi(1-\cos\phi) $, thus $1-\cos\phi= \frac{ \Delta_{\min}}{\cos\phi}\leq \sqrt{2} \Delta_{\min} $, and $\rmH_{\DE}(\Delta(x_{(1)},x_{(d+1)})<144 N\Lmax$ for any choice of $\varepsilon$.

    Hence, the upper bound is
    \begin{align}
        \bbE[\tau] &=  \tilO\left(
        \frac{d}{(\Delta_{\min}+\varepsilon)^2}\ln\frac{1}{\delta}
        +
        N\Lmax\ln\frac{1}{\delta}
        +
        \frac{N\Lmax}{\Delta_{\min}+\varepsilon}\ln\frac{1}{\delta}
        \right)\nonumber\\*
        &=  \tilO\left(
        \frac{d}{(\Delta_{\min}+\varepsilon)^2}\ln\frac{1}{\delta}+
       \frac{N\Lmax}{\Delta_{\min}+\varepsilon}\ln\frac{1}{\delta}
        \right).
    \end{align}
    As $\phi\to 0$, the first term (the \VE{} term) dominates and it matches the lower bound in \eqref{equ:example_lwbd}.
    Therefore, the upper bound is asymptotically tight up to logarithmic terms.
\end{proof}

\corTightTwo*

\begin{proof}[Proof of Corollary~\ref{cor:tight_two}]

Under Example~\ref{exp:instance2}, we have 
\begin{align}
    &
    \Delta(x_{(1)},x_{(i)})=\left(1-(N-2)p\right)\cdot a-(N-2)p\cdot (b-a) >\varepsilon
    \\
    &
    \Delta_j(x_{(1)},x_{(i)})=-b+a<-\varepsilon,\quad i\geq 2, j\neq 1,i
\end{align}
Thus, (1) $x_{(1)}$ is the unique $\varepsilon$-best arm; (2) $\{x_{(i)}\}_{i\geq 2}$ are equivalent, and $\Delta_{\min}:=\Delta(x_{(1)},x_{(i)}) $; (3) for any $i\geq 2$, $x_{(i)}$ can be an $\varepsilon$-best arm under some alternative distributions. 

For any $i\geq 2$, by solving \eqref{equ:beta} with $x_\varepsilon=x_{(1)},x=x_{(i)}$, we obtain $\lambda_{x_{(1)}}=\frac{\Delta(x_{(1)},x_{(i)})+\varepsilon}{(a+\varepsilon)(b-a-\varepsilon)} $ and the alternative distribution $$P_\theta^\prime[\theta_j^*]=\frac{p_j}{1+\lambda_{x_{(1)}}(\Delta_j(x_{(1)},x_{(i)})+\varepsilon )}.$$
The lower bound on the expected number of changepoints is 
\begin{equation}
    \bbE[l_{\tau}]\geq \frac{(a+\varepsilon)(b-a-\varepsilon)}{\left(\Delta(x_{(1)},x_{(i)})+\varepsilon\right)^2}\ln\frac{4}{\delta}.
\end{equation}
Thus the time complexity is lower bounded by
\begin{equation}
    \bbE[\tau]\geq L_{\min}\cdot\frac{(a+\varepsilon)(b-a-\varepsilon)}{\left(\Delta(x_{(1)},x_{(i)})+\varepsilon\right)^2}\ln\frac{4}{\delta}.
\end{equation}
Furthermore, by solving \eqref{equ:lwbd_vi_equal}, we obtain the lower bound on the expected sample complexity when the context index $j_t$ is revealed:
\begin{align}
                \bbE[\tau]
            &
            \geq 
                2\log\frac{1}{2.4\delta}
                \min_{\barv\in\Delta_K}
                \max_{i\neq 1}
                    \frac{
                        \frac{1}{\barv_1}+\frac{1}{\barv_i}
                    }{
                        \left(\Delta(x_{(1)},x_{(i)})+\varepsilon\right)^2
                    }
            \geq 
                2\log\frac{1}{2.4\delta}
                    \frac{
                        (\sqrt{d-1}+1)^2
                    }{
                        \left(\Delta(x_{(1)},x_{(i)})+\varepsilon\right)^2
                    }
            \\
            &
            \geq 
                \frac{
                    d
                }{
                    \left(\Delta(x_{(1)},x_{(i)})+\varepsilon\right)^2
                }
                \cdot
                2\log\frac{1}{2.4\delta}.
            \end{align}
We get the lower bound on the expected sample complexity:
\begin{align}\label{equ:instance2_lwbd}
    \bbE[\tau]\geq \max\left\{
                \frac{
                        d
                    }{
                        \left(\Delta_{\min}+\varepsilon\right)^2
                    }
                \cdot
                2\log\frac{1}{2.4\delta}
                ,
                \frac{L_{\min}(a+\varepsilon)(b-a-\varepsilon)}{\left(\Delta_{\min}+\varepsilon\right)^2}\ln\frac{4}{\delta}
                \right\}.
\end{align}
According to Theorem~\ref{thm:parallel_upper_exp}, the upper bound on the expected sample complexity is 
 \begin{align}\label{equ:instance2_upbd}
        &
        \tilO
        \Bigg(
        \min\Bigg\{
        \frac{d}{\left(\Delta_{\min} + \varepsilon\right)^2}\ln\frac{1}{\delta}
        +\rmH_{\DE}\ln\frac{1}{\delta}
        +
        \frac{NL_{\max}}{\Delta_{\min} + \varepsilon}\ln\frac{1}{\delta}
        ,\;
        \frac{  L_{\max} +  d   }{\left(\Delta_{\min} + \varepsilon\right)^2}
        \ln \frac{ 1 }{ \delta}
        \Bigg\}
        \Bigg)
        \\
        &\leq
         \tilO
        \Bigg(
        \min\Bigg\{
        \frac{d}{\left(\Delta_{\min} + \varepsilon\right)^2}\ln\frac{1}{\delta}
        +
        \frac{N\Lmax\Big( 
            (a+\varepsilon)^2
            +
            (b-a-\varepsilon)^2
            \Big)}{\left(\Delta_{\min}+\varepsilon\right)^2}
        \ln\frac{1}{\delta}
        \nonumber \\
        & \hspace{3.5em}
        +
        \frac{NL_{\max}}{\Delta_{\min} + \varepsilon}\ln\frac{1}{\delta}
        ,\;
        \frac{  L_{\max} +  d   }{\left(\Delta_{\min} + \varepsilon\right)^2}
        \ln \frac{ 1 }{ \delta}
        \Bigg\}
        \Bigg)
    \end{align}
    where we utilize
    \begin{align}
        \rmH_{\DE}&=\frac{\Lmax}{\left(\Delta_{\min}+\varepsilon\right)^2}
        \Bigg( 
        \sqrt{\min\left\{ 16p_j,\frac{1}{4}\right\}}|a+\varepsilon|
        +
        \sqrt{\min\left\{ 16p,\frac{1}{4}\right\}}|a+\varepsilon|
        \\& \hspace{1.5em}
        +
        (N-2) \sqrt{\min\left\{ 16p,\frac{1}{4}\right\}}|b-a-\varepsilon|
        \Bigg)^2
        \\
        &\leq
        \frac{16(N-2)\Lmax}{\left(\Delta_{\min}+\varepsilon\right)^2}
        \Bigg( 
        (a+\varepsilon)^2
        +
        (b-a-\varepsilon)^2
        \Bigg). \label{equ:instance2_HDE}
    \end{align}
    By comparing the lower bound in \eqref{equ:instance2_lwbd} and the upper bound \eqref{equ:instance2_upbd}, we conclude that
    \begin{itemize}
        \item the sample complexity of \algParallel{} is tight up to $\frac{N\Lmax}{L_{\min}}$ and logarithmic factors.
        \item When the mean gap is small, the sample complexity of \algParallel{} is dominated by the former term, i.e., the design of \PSBAI.
        \item When $p\to 0^+$, then $p_1\to 1, p_j\to 0$ for $j\geq 2$, so the instance tends to be stationary and $\Delta_{\min}\to a$. 
        The lower bound in \eqref{equ:instance2_lwbd} becomes
        \begin{align}
             \bbE[\tau]\geq \max\left\{
                \frac{
                        d
                    }{
                        \left(\Delta_{\min}+\varepsilon\right)^2
                    }
                \cdot
                2\log\frac{1}{2.4\delta}
                ,
                \frac{L_{\min}(b-a-\varepsilon)}{\Delta_{\min}+\varepsilon}\ln\frac{4}{\delta}
                \right\}
        \end{align}
        and the upper bound in \eqref{equ:instance2_upbd} turns into
        \begin{align}
         \tilO
        \Bigg(
        \min\Bigg\{
        \frac{d}{\left(\Delta_{\min} + \varepsilon\right)^2}\ln\frac{1}{\delta}
        +
        \frac{NL_{\max}}{\Delta_{\min} + \varepsilon}\ln\frac{1}{\delta}
        ,\;
        \frac{  L_{\max} +  d   }{\left(\Delta_{\min} + \varepsilon\right)^2}
        \ln \frac{ 1 }{ \delta}
        \Bigg\}
        \Bigg).
    \end{align}
    The vector estimation term dominates the sample complexity and our upper bound is tight.
    \item When $p\to \left(\frac{a-\varepsilon)}{(N-2)b}\right)^-$, then $p_1\to 1-\frac{(N-1)(a-\varepsilon}{(N-2)b} $ and $p_j\to  \left(\frac{a-\varepsilon}{(N-2)b}\right)$, so the instance tends to be non-stationary and $\Delta_{\min}\to \varepsilon$. 
        The lower bound in \eqref{equ:instance2_lwbd} becomes
        \begin{align}
             \bbE[\tau]\geq \max\left\{
                \frac{
                        d
                    }{
                        4\varepsilon^2
                    }
                \cdot
                2\log\frac{1}{2.4\delta}
                ,\
                \frac{L_{\min}(a+\varepsilon)(b-a-\varepsilon)}{4\varepsilon^2}\ln\frac{4}{\delta}
                \right\}
        \end{align}
        and the upper bound in \eqref{equ:instance2_upbd} turns into
        \begin{align}
      \tilO
        \Bigg(
        \min\Bigg\{
        \frac{d}{4\varepsilon^2}\ln\frac{1}{\delta}
        +
        \rmH_{\DE} \cdot 
        \ln\frac{1}{\delta}
        +
        \frac{NL_{\max}}{2\varepsilon}\ln\frac{1}{\delta}
        ,\
        \frac{  L_{\max} +  d   }{4\varepsilon^2}
        \ln \frac{ 1 }{ \delta}
        \Bigg\}
        \Bigg)
    \end{align}
    where $\rmH_{\DE} $ is upper bounded by \eqref{equ:instance2_HDE}. 
    \newline
    (1) If $\Big( (a+\varepsilon)^2+(b-a-\varepsilon)^2\Big) = O(4\varepsilon^2+N\Lmax) $, \DE\ is independent of $\varepsilon$ and \VE\ increases as $\varepsilon$ decreases, thus the expected sample complexity is dominated by vector estimation term when $\varepsilon$ is small;
    \newline
    (2) If $\Big( (a+\varepsilon)^2+(b-a-\varepsilon)^2\Big) = \Omega(1) $, the expected sample complexity is dominated by distribution estimation term.
    \newline
    Under both scenarios, our upper bound is tight.
    \end{itemize}

\end{proof}

\section{Experimental Details}
\label{app:experiment}
For the computation of the G-optimal allocation, we adopt the Wolfe–Atwood Algorithm with the Kumar–Yildirim start introduced in~\citet{michael2016minimum}, where the input to the function is the arm set and the output is the G-optimal allocation. All experiments are conducted via MATLAB R2021b on a MacBook Pro with Apple M1 Pro chip and 16 GB memory.

To shorten the execution time, 
\begin{itemize}
    \item Before \PSBAI{} stops, \PSBAI{} and \algNa{} are conducted in parallel (i.e., we run \algParallel{}). After \PSBAI{} stops, \algNa{} continues and is run in a batch manner, i.e., we sample $L_{\min}$ samples according to the G-optimal allocation one time and update the statistics. As the sample complexity of  \algNa{} is of order at least $10^7$ and each stationary segment is of order $10^4$, the effect of this batch sampling procedure can be largely ignored.
    \item \algDBAI{} and \algDBAIbeta{} are both conducted in segments, because the latent context vector can be observed by the two algorithms and the latent vector does not change within a stationary segment.
    \item As the experiment in Section~\ref{sec:exp} illustrates that \PSBAI{} outperforms \algNa{} and dominates the \algParallel{} algorithm, we run \PSBAI{} instead of \algParallel{} for the addition experiments in Section~\ref{app:misspe} and Section~\ref{app:robustness_w_b}.
\end{itemize}

To increase the robustness of the algorithm, the window size for \algLCD{} is doubled. Note that this will only influence an absolute constant in the sample complexity of the proposed algorithm and the order of the sample complexity remains.

\subsection{Modification of Confidence Radii in \PSBAI{} and \algParallel{}}
During the proof of the upper bound, we have relaxed the absolute constants in the confidence radii to simplify the proof and increase the readability. In the experiments, we utilize the tighter confidence radii to gain better empirical performance. Note that when these tighter confidence radii are utilized by our algorithm, it still enjoys  the current theoretical guarantee.
The choice of confidence radii are as follows:
\begin{itemize}
    \item $\alpha_t$: according to Lemma~\ref{lem:VEE} and Lemma~\ref{lem:cb_vector}, $\alpha$ can be tightened to be 
    \begin{equation}
        \alpha_t^\rmalg =  \frac{d}{\sts_t}\ln\frac{2}{\delta_{v,\sts_t}}
            +\sqrt{
            \left(\frac{d}{\sts_t}\ln\frac{2}{\delta_{v,\sts_t}}\right)^2
            + \frac{4d}{\sts_t}\ln\frac{2}{\delta_{v,\sts_t}}   }
            \leq 
            5\sqrt{\frac{d}{\sts_t}\ln\frac{2}{\delta_{v,\sts_t}}}.
    \end{equation}
    \item $\beta_{t,j}$: according to \eqref{equ:beta_alg} and Lemma~\ref{lem:hbeta}, $\beta_{t,j}$ can be tightened to be
    \begin{align}
        &\beta_{t,j}^\rmalg = \min\left\{\frac{
            \frac{1}{3}\Lmax \ln\frac{2}{\delta_{d,\sts_t}}
            +
            \sqrt{
            \left(\frac{1}{3}\Lmax \ln\frac{2}{\delta_{d,\sts_t}}\right)^2
            +
            \frac{2\phi_{t,j}\Lmax}{\sts_t}\ln\frac{2}{\delta_{d,\sts_t}}
            }
            }{\sts_t},\ 1\right\},
            \\
            &\mbox{where }\phi_{t,j}:
			= \min\left\{ 4\max\left\{\hatp_{t,j},\frac{25}{4}\frac{L_{\max}}{\sts_t}\ln\frac{2}{\delta_{d,\sts_t}}\right\},\, \frac{1}{4}\right\}.
    \end{align}
    \item $\xi_t$: instead of using $25\sqrt{2}\frac{N\Lmax}{\sts_t}\ln\frac{2}{\delta_{m,\sts_t}}$,
    we turn to bound the residual error by \eqref{equ:cb_alg_finer}:
    \begin{align}
       \xi_t^\rmalg= &  \sum_{ j:\psi_{t,j,1}> \hatp_{t,j}}4 \cdot \beta_{t,j}^\rmalg 
            +
            \sum_{\substack{ j:\psi_{t,j,1}= \hatp_{t,j},\\\psi_{t,j,2}> \hatp_{t,j}}}
            \min\left\{4,
            2\tilde{\xi}_{t,j}^\rmalg
                     \right\}
                    \cdot 
                   \beta_{t,j}^\rmalg 
        \nonumber\\*
            &\quad
			+
			\sum_{ j:\psi_{t,j,3}= \hatp_{t,j}} 10\sqrt{
                    \frac{d}{T_{t,j}}\ln\frac{2}{\delta_{m,\sts_t}}
                    } 
                    \cdot
                    \beta_{t,j}^\rmalg 
    \end{align}
    where 
  \begin{align}
    &\psi_{t,j,1} := \max\left\{
                        \hatp_{t,j},\frac{d}{4\sts_t}\ln\frac{2}{\delta_{m,\sts_t}}
                    \right\}
                    ,\
        \psi_{t,j,2} := \max\left\{
                        \hatp_{t,j},\frac{25}{4}\frac{\Lmax}{\sts_t}\ln\frac{2}{\delta_{d,\sts_t}}
                    \right\}
                    ,\\
   & \tilde{\xi}_{t,j}^\rmalg = 
    \frac{d}{\sts_{t,j}}\ln\frac{2}{\delta_{m,\sts_t}}
            +\sqrt{
            \left(\frac{d}{\sts_{t,j}}\ln\frac{2}{\delta_{m,\sts_t}}\right)^2
            + \frac{4d}{\sts_{t,j}}\ln\frac{2}{\delta_{m,\sts_t}}   }
            \leq 
            5\sqrt{\frac{d}{\sts_{t,j}}\ln\frac{2}{\delta_{m,\sts_t}}}.
    \end{align}
    $\tilde{\xi}_{t,j}^\rmalg$ is obtained in a similar manner as $\alpha_t^\rmalg$. 
    $\xi_t^\rmalg$ characterizes the confidence radii of each context at a finer level.
\end{itemize}
These finer confidence bounds can save a constant of $2.5$ when $t$ is large.

\subsection{Misspecified $L_{\max}$ and $L_{\min}$}\label{app:misspe}
As \algParallel{} requires the knowledge of $L_{\max}$, we empirically test the  robustness of the algorithm towards $L_{\max}$ on the instances in section~\ref{sec:exp}. We run \PSBAI{} with $\tilL_{\min}=\nu L_{\min}$ and $\tilL_{\max}=\nu L_{\max}$ where $\nu=0.8$ or $1.2$. An $\varepsilon$-best arm is recommended in all experiments. The sample complexities are presented in Figure~\ref{fig:misspe}.
\begin{figure}
    \centering
    \includegraphics[width=\linewidth]{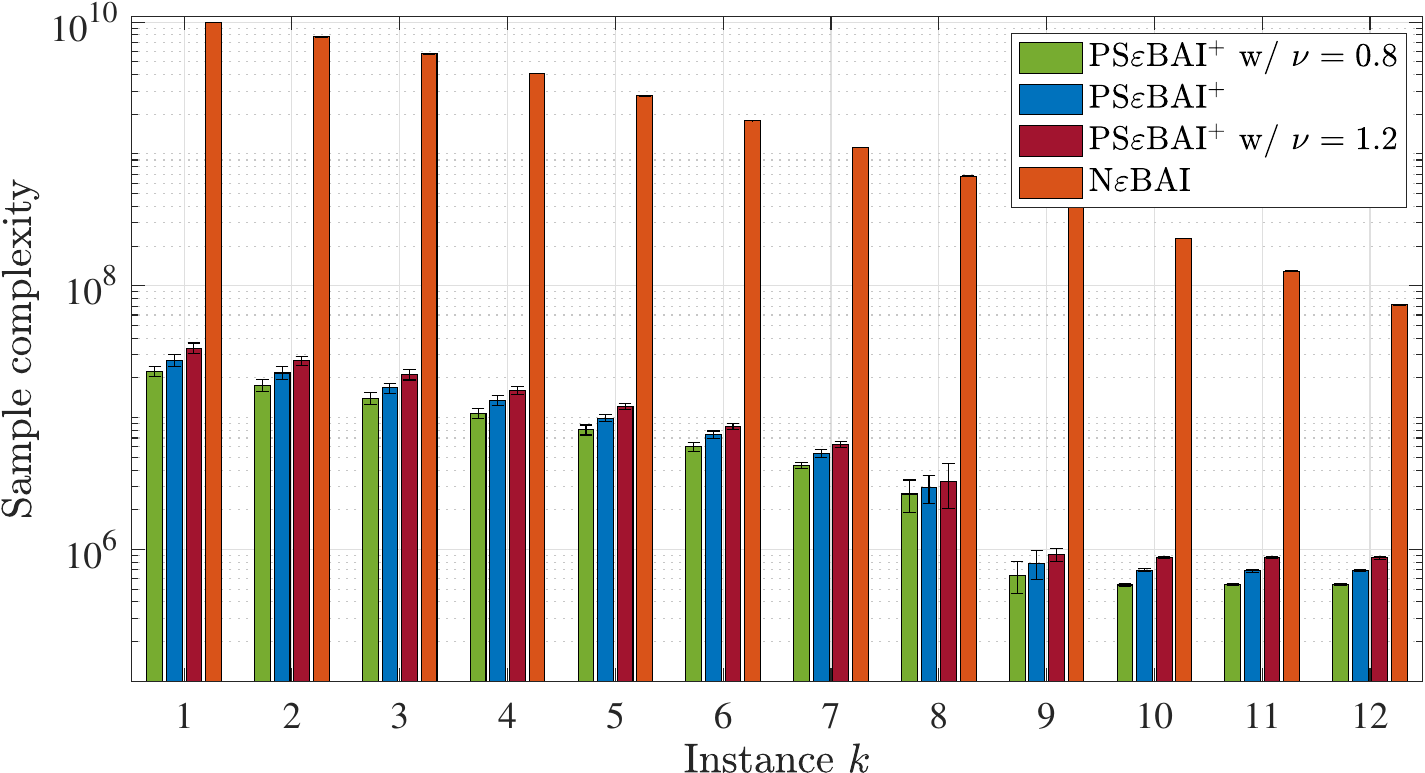}
    \caption{Misspeficied $L_{\max}$ and $L_{\min}$.}
    \label{fig:misspe}
\end{figure}
The result indicates that \PSBAI{} (thus \algParallel{}) is robust towards the knowledge of $L_{\max}$ and its superiority over \algNa{} is maintained.

\subsection{Robustness towards $w$ and $b$}\label{app:robustness_w_b}
According to the distinguishability condition (Assumption~\ref{ass:distin}), point (2) indicates we can set $w=\frac{L_{\min}}{3\gamma}$, where $\gamma$ is the change detection frequency, and \eqref{equ:assumption} indicates $w$ and $b$ are coupled. We denote $ \tilde{w}:=\frac{L_{\min}}{18}$.

To exam the robustness towards $w$ and $b$, we choose to vary the choice of $\gamma$, thus, $w$ and $b$ will change accordingly. Specifically, we select $\gamma\in\{2,3,6,12\}$ and the corresponding $w\in\{3\tilde{w},2\tilde{w},\tilde{w},\frac{\tilde{w}}{2}\}$. The other parameters remain unchanged. The experiment result is presented in Figure~\ref{fig:robusteness_w_b}. When $\gamma=18,w = \frac{\tilde{w}}{3}$, Assumption~\ref{ass:distin} is severely violated and the result is not desirable.
\begin{figure}
    \centering
    \includegraphics[width=\linewidth]{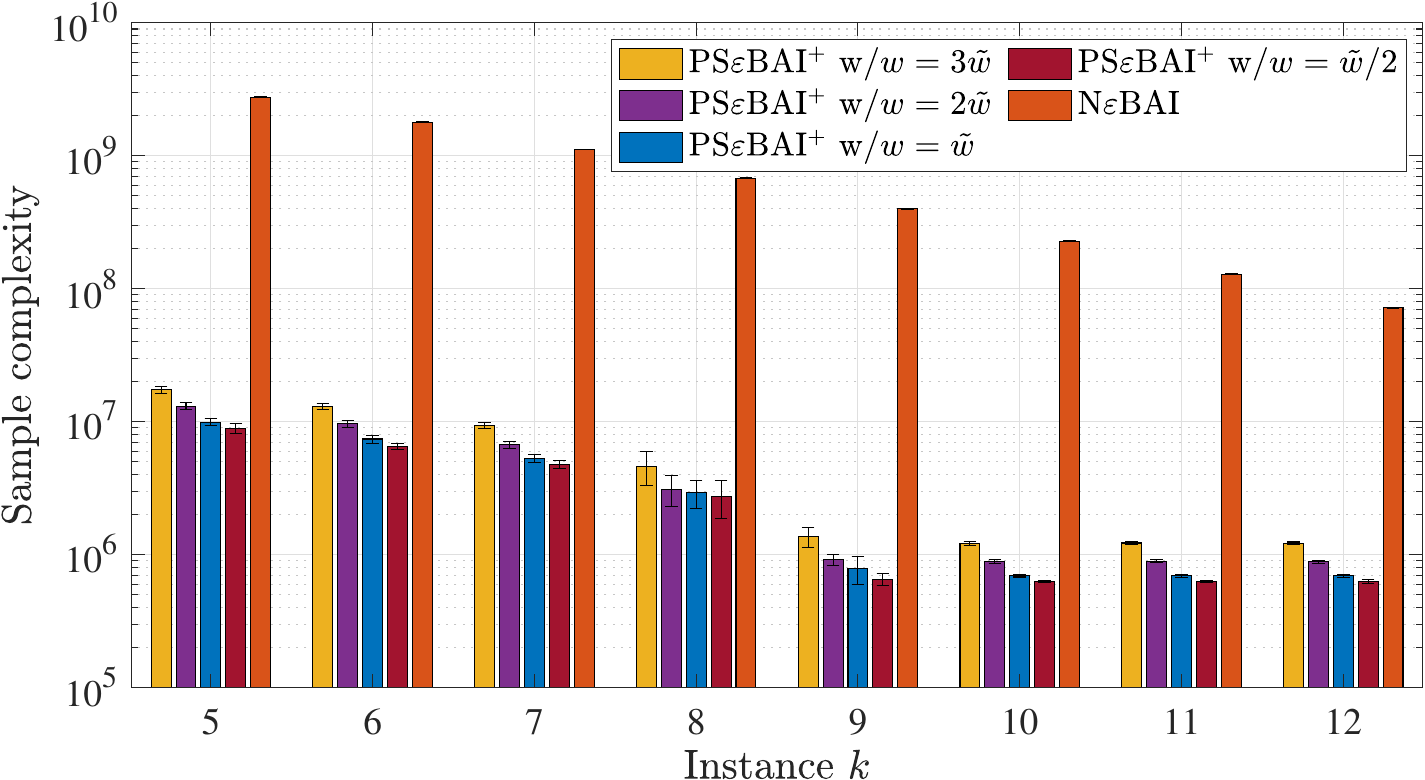}
    \caption{Robustness towards $w$ and $b$.}
    \label{fig:robusteness_w_b}
\end{figure}

The result indicates that, while smaller $\gamma$ and greater $w$ can result in slightly greater sample complexity, the overall sample complexity does not vary much and the superiority of our algorithm over the naive uniform sampling algorithm \algNa{} is maintained. We conclude that our algorithm is robust against these choices, as long as Assumption~\ref{ass:distin} is not severely violated.

\subsection{Benchmarks: \algDBAI and \algDBAIbeta{}}\label{app:dbai}
We present the {\rm Distribution $\varepsilon$-Best Arm Identification} (\algDBAI) in Algorithm~\ref{alg:dbai}  and its variant \algDBAIbeta{} in Algorithm~\ref{alg:dbaibeta}. According to Dynamics~\ref{dyn:easy}, the agent has access to the context vector $\theta_{j_t}^*$ and the changepoint $t\in\calC$. Thus only the distribution $P_\theta$ remains unknown and the agent needs to estimate it via the observed contexts.
\begin{algorithm}[H] 
    \caption{\sc Distribution $\varepsilon$-Best Arm Identification (\algDBAI) }\label{alg:dbai}
    \begin{algorithmic}[1]
        \STATE \textbf{Input:} the arm set $\calX$, the latent vector matrix $\Theta$, the slackness parameter $\varepsilon$, the confidence parameter $\delta$.
        \STATE \textbf{Initialize}: Compute the G-optimal allocation $\lambda^*$. 
        \STATE Compute
            $C_3 = \sum_{n=1}^{\infty} n^{-3}$.
        \STATE Observe $\theta_{j_1}^*$.
        \STATE Compute
            \begin{align}\label{equ:dbai_update}
                &
                \ddotp_{t,j} = \sum_{l_s=1}^{l_t}\frac{\mathbbm{1}\{c_{l_s}=c_j\}}{l_t},\quad
                \ddotx_t = \argmax_{x\in \calX} x^\top \Theta\ddotp_{t} \\
                &
                \ddotbeta_t =\sqrt{\frac{1}{2l_t}\ln\frac{2 C_3 Nl_t^3}{\delta}}
                ,\quad
                \ddotrho_t(\ddotx_t,x) =\ddotbeta_t \min_{\zeta(\ddotx_t,x)\in\bbR}\sum_{j=1}^N |\Delta_{j}(\ddotx_t,x)+\zeta(\ddotx_t,x)|
            \end{align}
        \WHILE{ $\min_{x\ne \ddotx_t}(\ddotx_t-x)^\top \Theta\ddotp_{t} - \ddotrho_t <-\varepsilon$ or $t\notin \calC$}
            \STATE Observe $\theta_{j_t}^*$ and $\mathbbm{1}\{t\in\calC\}$ update $t=t+1$.
            \STATE Update $\ddotx_t $ and $\ddotrho_t$ with \eqref{equ:dbai_update} and update $l_t$ if $t\in\calC$. 
        \ENDWHILE
        \STATE Recommend arm $\ddotx_\varepsilon =\ddotx_t $.
    \end{algorithmic}
\end{algorithm}
The design is straightforward except for $\zeta(\ddotx_t,x)$. It minimizes the summation $\zeta(\ddotx_t,x):=\argmin_{y\in\bbR}\sum_{j=1}^N |\Delta_{j}(\ddotx_t,x)+\zeta(\ddotx_t,x)|$. This trick has also been utilized in the design of \PSBAI{} (see \eqref{equ:zetat}). It helps to better exploit the structure of the latent vectors $\Theta$ and facilitates the identification process. For easy implementation, we choose a proxy
$\zeta(\ddotx_t,x)=\argmin_{y\in\bbR}\sum_{j=1}^N (\Delta_{j}(\ddotx_t,x)+\zeta(\ddotx_t,x))^2= -\frac{1}{N}\sum_{j=1}^N \Delta_{j}(\ddotx_t,x)$
\begin{algorithm}[H] 
    \caption{\sc Distribution $\varepsilon$-Best Arm Identification-$\beta$ (\algDBAIbeta) }\label{alg:dbaibeta}
    \begin{algorithmic}[1]
        \STATE \textbf{Input:} the arm set $\calX$, the latent vector matrix $\Theta$, the slackness parameter $\varepsilon$, the confidence parameter $\delta$.
        \STATE \textbf{Initialize}: Compute the G-optimal allocation $\lambda^*$. 
        \STATE Compute
            $C_3 = \sum_{n=1}^{\infty} n^{-3}$.
        \STATE Observe $\theta_{j_1}^*$.
        \STATE Compute
            \begin{align}\label{equ:dbaibeta_update}
                &
                \circp_{t,j} = \sum_{l_s=1}^{l_t}\frac{L_{l_s}\mathbbm{1}\{c_{l_s}=c_j\}}{t},\quad
                \circx_t = \argmax_{x\in \calX} x^\top \Theta\circp_{t}, \\
                &
                \circbeta_{t,j}:=
            			 \min\bigg\{\frac{
            \frac{1}{3}\Lmax \ln\frac{2}{\delta_{t}}
            +
            \sqrt{
            \left(\frac{1}{3}\Lmax \ln\frac{2}{\delta_{t}}\right)^2
            +
            \frac{2\phi_{t,j}\Lmax}{t}\ln\frac{2}{\delta_{t}}
            }
            }{t},\, 1\bigg\},
                \\
                &\circphi_{t,j}:
        			= \min\left\{ 4\max\left\{\circp_{t,j},\frac{25}{4}\frac{L_{\max}}{t}\ln\frac{2}{\delta_{t}}\right\},\, \frac{1}{4}\right\},\ \delta_t = \frac{\delta}{C_3 N t^3},
                \\
                &
                \circrho_t(\ddotx_t,x) =\min_{\zeta(\circx_t,x)\in\bbR}\sum_{j=1}^N |\Delta_{j}(\circx_t,x)+\zeta(\circx,x)| \circbeta_{t,j}.
            \end{align}
        \WHILE{ $\min_{x\ne \circx_t}(\circx_t-x)^\top \Theta\circp_{t} - \circrho_t <-\varepsilon$ }
            \STATE Observe $\theta_{j_t}^*$ and $\mathbbm{1}\{t\in\calC\}$ and let $t=t+1$.
            \STATE Update $\circx_t $ and $\circrho_t$ with \eqref{equ:dbaibeta_update}. 
        \ENDWHILE
        \STATE Recommend arm $\circx_\varepsilon =\circx_t $.
    \end{algorithmic}
\end{algorithm}
\algDBAIbeta{} utilizes the techniques we have used to bound the deviation between the true distribution $p_j$ and the estimated ones $\circp_{t,j}$ for all $j\in[N]$. Similarly, we let $\zeta(\circx,x)=\argmin_{\zeta(\circx,x)\in\bbR}\sum_{j=1}^N \circbeta_{t,j}(\Delta_{j}(\circx_t,x)+\zeta(\circx,x))^2 = -\frac{\sum_{j=1}^N \circbeta_{t,j}\Delta_{j}(\circx_t,x)}{\sum_{j=1}^N \circbeta_{t,j}}$ for efficient computing.

As the theoretical guarantee of \algDBAIbeta{} can be derived following a similar manner as the proof of the \DE\ term of \PSBAI{} in Appendix~\ref{app:DEE} and in the proof of Lemma~\ref{lem:upbd_sts}, for simplicity, we just present the result and omit the proof here here.
\begin{theorem}
    \algDBAIbeta{} identifies an $\varepsilon$-best arm within
        \begin{equation}
            \tilO\left(
             \max_{x_\varepsilon\in\calX,x\neq x_\varepsilon,x^*}  
             \frac{L_{\max}{\overset{\circ}{H}}^2(x_\varepsilon,x)}{(\Delta_{\min}+\varepsilon)^2}\ln\frac{1}{\delta}\right),
        \end{equation}
    time steps with probability at least $1-\delta$ and in expectation, where 
    $$\overset{\circ}{H}(x_\varepsilon,x):=\min_{\zeta(\circx_\varepsilon,x)\in\bbR}\sum_{j=1}^N \sqrt{\min\{16p_j,\frac{1}{4}\}}|\Delta_{j}(\circx_\varepsilon,x)+\zeta(\circx_\varepsilon,x)|.$$
\end{theorem}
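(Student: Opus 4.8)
The plan is to mirror the distribution-estimation (DE) analysis of \PSBAI{} in Appendix~\ref{app:DEE} and Lemma~\ref{lem:upbd_sts_two_arms}, which is natural: under Dynamics~\ref{dyn:easy} the latent vectors $\Theta$ are observed exactly, so the only source of error is the estimation of $\bp$ by $\circp_t$, and the analysis reduces to the DE term alone (no VE or RE term). \textbf{Step 1 (confidence interval).} First I would show that, with probability at least $1-\delta$, the event $\CI:=\bigcap_t\{\,|(x-\tilx)^\top\Theta\circp_t-\Delta(x,\tilx)|\le\circrho_t(x,\tilx),\ \forall x,\tilx\in\calX\,\}$ holds. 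Writing $(x-\tilx)^\top\Theta\circp_t-\Delta(x,\tilx)=\sum_{j=1}^N\Delta_j(x,\tilx)(\circp_{t,j}-p_j)$ and using that $\circp_t,\bp\in\Delta_N$ forces $\sum_j(\circp_{t,j}-p_j)=0$, I may insert any $\zeta\in\bbR$ to get $\sum_j(\Delta_j(x,\tilx)+\zeta)(\circp_{t,j}-p_j)$. The per-context deviation $|\circp_{t,j}-p_j|\le\circbeta_{t,j}$ is controlled by the same segment-length-weighted Bernstein argument as Lemma~\ref{lem:DEE} (with $\sts_t$ replaced by $t$ and $\hatp_{t,j}$ by $\circp_{t,j}$), and the tightened radius $\circbeta_{t,j}$ is justified as the quantity $\beta_{t,j}^{\rmalg}$ via Lemma~\ref{lem:hbeta}. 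Since the resulting bound $\sum_j|\Delta_j(x,\tilx)+\zeta|\,\circbeta_{t,j}$ holds for \emph{every} $\zeta$, it holds for the minimizing $\zeta$, which is exactly $\circrho_t(x,\tilx)$; a union over the $N$ contexts and over all $t$ (with $\delta_t=\delta/(C_3Nt^3)$ and $\sum_t t^{-3}=C_3$) closes the event.

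\textbf{Step 2 (PAC) and Step 3 (sample complexity).} Conditioned on $\CI$, I would argue as in Lemma~\ref{lem:hati_epsilon_best}: upon stopping, the rule $\min_{x\neq\circx_t}(\circx_t-x)^\top\Theta\circp_t-\circrho_t(\circx_t,x)\ge-\varepsilon$ together with the confidence interval yields $\Delta(\circx_t,x^*)\ge(\circx_t-x^*)^\top\Theta\circp_t-\circrho_t(\circx_t,x^*)\ge-\varepsilon$, so $\circx_\varepsilon=\circx_t\in\calX_\varepsilon$. For the stopping guarantee I would find the smallest $t$ for which $\circrho_t(\circx_\varepsilon,x)\le(\Delta(x^*,x)+\varepsilon)/2$ for all suboptimal $x$, exactly as in the DE block of Lemma~\ref{lem:upbd_sts_two_arms}. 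On $\CI$, Lemma~\ref{lem:upbd_phi} gives $\circphi_{t,j}\le\min\{16p_j,1/4\}$, hence $\circbeta_{t,j}\lesssim\sqrt{(L_{\max}/t)\ln(1/\delta_t)}\,\sqrt{\min\{16p_j,1/4\}}$ and therefore $\circrho_t(\circx_\varepsilon,x)\lesssim\sqrt{(L_{\max}/t)\ln(1/\delta_t)}\;\overset{\circ}{H}(\circx_\varepsilon,x)$. Solving this inequality for $t$ and maximizing over $(x_\varepsilon,x)$ produces the claimed bound $\tilO\big(\max_{x_\varepsilon,x}L_{\max}\,\overset{\circ}{H}^2(x_\varepsilon,x)/(\Delta_{\min}+\varepsilon)^2\cdot\ln(1/\delta)\big)$.

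\textbf{Step 4 (expectation).} To upgrade the high-probability bound to a bound in expectation I would reuse the polynomial-tail device of Lemma~\ref{lem:naive_prob_continue} and Proposition~\ref{prop:upper_naive_upper_exp}. The choice $\delta_t=\delta/(C_3Nt^3)$ makes the probability that the algorithm has not stopped by time $t$ decay like $t^{-2}$ once $t$ exceeds the Step-3 threshold; hence $\bbE[\tau]=\int_0^{\infty}\bbP[\tau>x]\,\rmd x$ is finite and of the same order as the high-probability bound, as claimed.

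The main obstacle I expect is Step 3: correctly propagating the segment-length weighting through the Bernstein bound so that the $L_{\max}$ factor and the $\sqrt{\min\{16p_j,1/4\}}$ weights assemble into $\overset{\circ}{H}$, and verifying that the $\zeta$-minimization baked into the definition of $\circrho_t$ is compatible with the $\zeta$-free identity of Step 1 (the point being that choosing $\zeta$ to \emph{minimize} the confidence radius still upper-bounds the sign-indefinite estimation error, precisely because the identity $\sum_j(\circp_{t,j}-p_j)=0$ holds for all $\zeta$). Converting the bound on the number of segments $l_\tau$ into one on elapsed time via $l_\tau L_{\min}\le\tau\le l_\tau L_{\max}$ requires a little care but is otherwise routine.
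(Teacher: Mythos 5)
Your proposal is correct and takes essentially the same route the paper intends: the paper omits the detailed proof of this theorem, stating only that it "can be derived following a similar manner as the proof of the \DE{} term of \PSBAI{}" (Appendix~\ref{app:DEE} and Lemma~\ref{lem:upbd_sts}) and giving a four-step sketch for the analogous \algDBAI{} result (confidence interval on $\bp$, correctness via the Lemma~\ref{lem:hati_epsilon_best} argument, solving the stopping condition for $t$, and the polynomial-tail upgrade to expectation). Your plan reproduces exactly those steps, including the key ingredients the paper relies on — the segment-weighted Bernstein bound of Lemma~\ref{lem:DEE}, the $\zeta$-insertion identity justified by $\sum_j(\circp_{t,j}-p_j)=0$, and Lemma~\ref{lem:upbd_phi} to convert $\circphi_{t,j}$ into $\min\{16p_j,1/4\}$ so that the radii assemble into $\overset{\circ}{H}$.
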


We present a theorem, along with a proof sketch, for the theoretical guarantee of \algDBAI{} below.
\begin{theorem}
    \algDBAI\ identifies an $\varepsilon$-best arm within
        \begin{equation}
           \tilO\left(
            \max_{x_\varepsilon\in\calX,x\neq x_\varepsilon,x^*}  
            \frac{L_{\max}\ddot{H}^2(x_\varepsilon,x)}{(\Delta_{\min}+\varepsilon)^2}\ln\frac{1}{\delta}\right),
        \end{equation}
    time steps with probability at least $1-\delta$ and in expectation, where $\ddot{H}(x_\varepsilon,x):=\min_{\zeta(\ddotx_\varepsilon,x)\in\bbR}\sum_{j=1}^N |\Delta_{j}(\ddotx_\varepsilon,x)+\zeta(\ddotx_\varepsilon,x)|$.
\end{theorem}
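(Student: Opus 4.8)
The plan is to follow the template established for \algNa{} and the \DE{} analysis of \PSBAI, but simplified by the fact that under Dynamics~\ref{dyn:easy} the arm set, the latent vector matrix $\Theta$, and the changepoints are all observed, so the \emph{only} object to be estimated is the context distribution $\bp$. First I would set up the concentration event for the empirical distribution. Since the context samples $\theta_{j_{c_{l_s}}}^*$ are drawn i.i.d.\ from $P_\theta$ at the observed changepoints, for each fixed number of context samples $l_t=n$ and each $j\in[N]$ the estimate $\ddotp_{n,j}$ is an empirical mean of i.i.d.\ Bernoulli$(p_j)$ variables, so Hoeffding's inequality gives $\bbP[|\ddotp_{n,j}-p_j|\geq \ddotbeta_n]\leq 2\exp(-2n\ddotbeta_n^2)$. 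With the choice $\ddotbeta_n=\sqrt{\frac{1}{2n}\ln\frac{2C_3 N n^3}{\delta}}$ this is at most $\frac{\delta}{C_3 N n^3}$; a union bound over the $N$ coordinates and over $n\in\bbN$ (summable thanks to the $n^3$ factor, with $\sum_n n^{-3}=C_3$) yields a good event $\ddot{\calE}$ on which $|\ddotp_{n,j}-p_j|\leq\ddotbeta_n$ for all $n$ and all $j$ simultaneously, with $\bbP[\ddot{\calE}]\geq 1-\delta$.

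Next I would verify that $\ddotrho_t$ is a valid confidence radius for the mean gap. The key trick is that $\sum_{j}(\ddotp_{t,j}-p_j)=0$, so for any shift $\zeta\in\bbR$,
\begin{equation}
(\ddotx_t-x)^\top\Theta(\ddotp_t-\bp)=\sum_{j=1}^N(\ddotp_{t,j}-p_j)\big(\Delta_j(\ddotx_t,x)+\zeta\big),
\end{equation}
and on $\ddot{\calE}$ this is bounded in absolute value by $\ddotbeta_t\sum_j|\Delta_j(\ddotx_t,x)+\zeta|$; minimizing over $\zeta$ recovers exactly $\ddotrho_t(\ddotx_t,x)$. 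Crucially, because $\Theta$ is known, the contextual gaps $\Delta_j$ inside $\ddotrho_t$ are exact, so no vector-estimation term appears (this is why $\ddot{H}$ carries no per-context weighting, unlike the Bernstein-based $\overset{\circ}{H}$ of \algDBAIbeta). The $(\varepsilon,\delta)$-PAC guarantee then follows by the standard argument: when the algorithm stops, $\min_{x\neq\ddotx_t}[(\ddotx_t-x)^\top\Theta\ddotp_t-\ddotrho_t]\geq-\varepsilon$, and combining this with the validity of $\ddotrho_t$ gives $\Delta(\ddotx_t,x^*)\geq-\varepsilon$, i.e.\ $\ddotx_t\in\calX_\varepsilon$.

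For the high-probability sample complexity I would, conditioned on $\ddot{\calE}$, exhibit a sufficient stopping condition. As in Lemma~\ref{lem:upbd_sts_two_arms}, it suffices that for the recommended $\varepsilon$-best arm $x_\varepsilon$ and every competitor $x\neq x_\varepsilon,x^*$ one has $\ddotrho_t(x_\varepsilon,x)\leq\frac{\Delta(x^*,x)+\varepsilon}{2}$ (and the analogous bound through $x^*$); Lemma~\ref{lem:true_trap_by_emp} guarantees the empirical maximizer $\ddotx_t$ is indeed an $\varepsilon$-best arm once $\ddotbeta_t$ is small. Since $\ddotrho_t(x_\varepsilon,x)=\ddotbeta_t\,\ddot{H}(x_\varepsilon,x)$ and $\ddotbeta_t^2=\frac{1}{2l_t}\ln\frac{2C_3 N l_t^3}{\delta}$, this reduces to an inequality of the form $l_t\geq c\,\ddot{H}^2(x_\varepsilon,x)(\Delta(x^*,x)+\varepsilon)^{-2}\ln(\cdots)$, which I invert using Lemma~\ref{lem:algebra_hoeff} to absorb the $\ln l_t$ term. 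Taking the worst pair and bounding $\Delta(x^*,x)\geq\Delta_{\min}$ shows the algorithm stops after observing at most $l_\tau=\tilO\!\big(\max_{x_\varepsilon,x}\ddot{H}^2(x_\varepsilon,x)(\Delta_{\min}+\varepsilon)^{-2}\ln\frac1\delta\big)$ context samples. Because the algorithm can only halt at a changepoint and each stationary segment has length at most $\Lmax$, the elapsed time obeys $\tau\leq l_\tau\Lmax$, which gives the stated bound.

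Finally, for the in-expectation bound I would mirror Lemma~\ref{lem:naive_prob_continue} and the proof of Proposition~\ref{prop:upper_naive_upper_exp}: the per-round failure probability on $\ddot{\calE}$ decays like $l_t^{-3}$, so the probability that \algDBAI{} has not stopped after roughly the high-probability threshold decays like $l_t^{-2}$, which is integrable. Summing the tail via Tonelli's theorem (in terms of the number of context samples, then multiplying by $\Lmax$) yields $\bbE[\tau]$ of the same order as the high-probability bound. I expect the main obstacle to be the third step: correctly formulating the sufficient stopping condition as a statement about the pairwise hardness $\ddot{H}(x_\varepsilon,x)/(\Delta(x^*,x)+\varepsilon)$ while simultaneously guaranteeing that the recommended empirical-best arm lies in $\calX_\varepsilon$, and cleanly converting the bound on context samples $l_\tau$ into a bound on elapsed time steps $\tau$ through the segment-length bound $\Lmax$.
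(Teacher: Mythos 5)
Your proposal is correct and follows essentially the same route as the paper's proof sketch: concentration of the empirical context distribution, validity of $\ddotrho_t$ via the $\zeta$-shift trick, a sufficient stopping condition solved for $l_t$ and converted to $t$ through $\Lmax$, and a tail-integration argument for the expectation. The only cosmetic difference is that the paper's Lemma~\ref{lem:dbai_pj} states the concentration step via the DKW inequality, while you use a per-coordinate Hoeffding bound with a union over $N$; the paper's Remark~\ref{rmk:DBAI_cr} notes these are equivalent up to constants and logarithmic factors and itself adopts your radius $\ddotbeta_t=\sqrt{\frac{1}{2l_t}\ln\frac{2C_3Nl_t^3}{\delta}}$ in the algorithm.
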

\begin{proof}
    For the sake of conciseness and simplicity,
    only a proof sketch is provided for this Theorem. Similar results can be found in the referred contents. 
    The proof is composed by $4$ steps.
    \begin{itemize}
        \item Step $1$: bound the deviation of $P_\theta[\theta_j^*]$ by Lemma~\ref{lem:dbai_pj} and Remark~\ref{rmk:DBAI_cr}. 
        \item Step $2$: prove the recommended arm is an $\varepsilon$-best arm. Conditional on Step $1$, we can prove this following the procedures in Lemma~\ref{lem:hati_epsilon_best}.
        \item Step $3$: give a sufficient condition and then obtain a high probability upper bound. This can be seen from \eqref{equ:stp_suff} and by solving
        \begin{align}
            &\ddotrho_t(\ddotx_t,x) =\ddotbeta_t \sum_{j=1}^N |\Delta_{j}(\ddotx_t,x)+\zeta(\ddotx_t,x)|
            \leq
            \frac{\Delta_{\min}+\varepsilon}{2}
            \\
            \Rightarrow\quad&
            l_t = \tilO\left(\frac{\ddot{H}^2(x_\varepsilon,x)}{(\Delta_{\min}+\varepsilon)^2}\ln\frac{1}{\delta}\right).
            \\
            \Rightarrow\quad&
            t = \tilO\left(\frac{L_{\max}\ddot{H}^2(x_\varepsilon,x)}{(\Delta_{\min}+\varepsilon)^2}\ln\frac{1}{\delta}\right).
        \end{align}
        The high probability result is obtained by maximizing the above over $x_\varepsilon\in\calX,x\neq x_\varepsilon,x^*$.
        \item Step $4$: the expected result can be derived in a similar method as in the proof of \algNa{} in Appendix~\ref{app:naive_analysis}. The expected sample complexity and the high-probability sample complexity is of the same order. 
    \end{itemize}
\end{proof}

\begin{lemma}\label{lem:dbai_pj}
    With probability at least $1-\delta$, $\sup_{j\in[N]}|p_j-\ddotp_{t,j}|\leq \ddotbeta_t$ for all $t\in\bbN$, where $\ddotbeta_t =\sqrt{\frac{2}{l_t}\ln\frac{2}{\delta_{l_t}}}$ and $\delta_{l_t} = \frac{\delta}{C_3 l_t^3}$.
\end{lemma}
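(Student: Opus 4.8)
The plan is to recognize Lemma~\ref{lem:dbai_pj} as a uniform (over time and over contexts) concentration statement for an empirical frequency of i.i.d.\ categorical draws. Under Dynamics~\ref{dyn:easy} the contexts sampled at successive changepoints, $\{\theta_{j_{c_s}}^*\}_{s\ge 1}$, are i.i.d.\ from $P_\theta$ and are observed exactly by the agent. Fixing $j\in[N]$ and setting $B_{s,j}:=\mathbbm{1}\{\theta_{j_{c_s}}^*=\theta_j^*\}$, the variables $\{B_{s,j}\}_{s\ge1}$ are i.i.d.\ $\mathrm{Bern}(p_j)$, and by definition $\ddotp_{t,j}=\frac{1}{l_t}\sum_{s=1}^{l_t}B_{s,j}$. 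The first observation is that $\ddotp_{t,j}$ is piecewise constant in $t$, changing only at changepoints; hence it suffices to control the estimate formed from the first $l$ context samples for every $l\in\bbN$, and the claim for all $t\in\bbN$ then follows by identifying $l$ with $l_t$.

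First I would apply Hoeffding's inequality to the bounded i.i.d.\ indicators: for each fixed $l$ and $j$, $\bbP[|\frac{1}{l}\sum_{s=1}^l B_{s,j}-p_j|\ge \epsilon]\le 2\exp(-2l\epsilon^2)$. Allocating a failure budget of $\delta_l/N$ to the pair $(l,j)$, this is at most $\delta_l/N$ once $\epsilon\ge r_l:=\sqrt{\frac{1}{2l}\ln\frac{2N}{\delta_l}}$, since $2\exp(-2l r_l^2)=\delta_l/N$. A union bound over the $N$ contexts and a sum over $l\in\bbN$ then give total failure probability at most $\sum_{l\ge1}\sum_{j=1}^N \frac{\delta_l}{N}=\sum_{l\ge1}\delta_l$, and the choice $\delta_l=\frac{\delta}{C_3 l^3}$ with $C_3=\sum_{n\ge1}n^{-3}$ is engineered precisely so that $\sum_{l\ge1}\delta_l=\delta$. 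This establishes $\sup_{j\in[N]}|p_j-\ddotp_{t,j}|\le r_{l_t}$ simultaneously for all $t$ with probability at least $1-\delta$.

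To land on the radius stated in the lemma, it remains to check that $\ddotbeta_t=\sqrt{\frac{2}{l_t}\ln\frac{2}{\delta_{l_t}}}$ dominates $r_{l_t}$, since enlarging the confidence radius only makes the containment event more likely. This reduces to $\frac{2}{l}\ln\frac{2}{\delta_l}\ge \frac{1}{2l}\ln\frac{2N}{\delta_l}$, i.e.\ to $N\le 8/\delta_l^3$, which—because $\delta_l$ is decreasing in $l$—is implied by its value at $l=1$, namely $N\le 8C_3^3/\delta^3$; this comfortably covers the regime of interest (e.g.\ $\delta=0.05$ allows $N$ up to order $10^5$). Alternatively, one may simply adopt the tighter radius $r_{l_t}$ used in Algorithm~\ref{alg:dbai} and dispense with this comparison entirely.

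The main obstacle is the bookkeeping of the double union bound—over the infinite horizon (equivalently $l\in\bbN$) and over the $N$ contexts—closing to exactly $\delta$: the $C_3$ normalization handles the sum over $l$, the per-level split $\delta_l/N$ handles the sum over $j$, and the only genuinely delicate point is reconciling the variance-free radius actually quoted with the Hoeffding-optimal one, which the slack in the coefficient $\frac{2}{l}$ versus $\frac{1}{2l}$ absorbs. No martingale or stopping-time argument is needed, because $l_t$ is deterministic given the (fixed) changepoint sequence $\calC$ and the randomness enters only through the i.i.d.\ context draws, so conditioning on $\calC$ and invoking plain Hoeffding is legitimate.
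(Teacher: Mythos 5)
Your proof is correct in substance but takes a genuinely different route from the paper. The paper proves Lemma~\ref{lem:dbai_pj} by applying the DKW inequality to the empirical CDF $\ddotF_t$ of the observed contexts: DKW gives $\bbP[\sup_j|\ddotF_t(j)-F(j)|\ge\epsilon]\le 2e^{-2l_t\epsilon^2}$, and since $|p_j-\ddotp_{t,j}|\le|\ddotF_t(j)-F(j)|+|\ddotF_t(j-1)-F(j-1)|$, a sup-deviation of $\epsilon/2$ in the CDF controls every $|p_j-\ddotp_{t,j}|$ by $\epsilon$ simultaneously; this yields $\bbP[\sup_j|p_j-\ddotp_{t,j}|\ge\epsilon]\le 2e^{-l_t\epsilon^2/2}$, and the stated radius $\ddotbeta_t=\sqrt{\frac{2}{l_t}\ln\frac{2}{\delta_{l_t}}}$ is exactly the inversion of that bound — which is why no $N$ appears inside the logarithm. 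Your route (per-context Hoeffding plus a union bound over $j\in[N]$) is precisely the alternative the authors discuss in Remark~\ref{rmk:DBAI_cr}; it buys a sharper exponent ($\frac{1}{2l}$ versus $\frac{2}{l}$) at the price of a $\ln N$ inside the radius, and is in fact the radius implemented in Algorithm~\ref{alg:dbai}. The one point to flag is that your argument does not prove the lemma as stated for arbitrary $N$: recovering the $N$-free radius $\ddotbeta_t$ from your $r_{l_t}$ requires $N\le 8/\delta_1^3=8C_3^3/\delta^3$, a condition absent from the lemma. You identify this correctly and it is harmless in any practical regime, but the DKW argument is what makes the stated constant hold unconditionally; if you want your proof to stand alone for all $N$, you would either need to restate the radius as $\sqrt{\frac{1}{2l_t}\ln\frac{2N}{\delta_{l_t}}}$ or switch to the CDF/DKW decomposition for the final step.
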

\begin{proof}
    We may define the cumulative distribution function (CDF) and the empirical CDF for $ P_\theta$ as $F_\theta(j)=\sum_{k=1}^j P_\theta[\theta_j^*]$ and $\ddotF_t(j)=\sum_{l_s=1}^{l_t} \mathbbm{1}\{c_{l_s}=c_j,l_s\leq j\}$ respectively for $j\in[N]$.
    According to the   DKW~inequality, we have
    \begin{equation}
        \bbP\left[\sup_{j\in[N]} \big| \ddotF_t(j)-F(j)\big|\geq \epsilon\right]
        \leq 
        2\exp\left(-2l_t\epsilon^2\right).
    \end{equation}
    By the implication
    \begin{align}
        &\epsilon\leq|p_j-\ddotp_{t,j}| = |F(j)-F(j-1)-\ddotF_t(j)+\ddotF_t(j-1) |
        \\& \hphantom{\epsilon}
        \leq |F(j)-\ddotF_t(j)|+|F(j-1)-\ddotF_t(j-1) |
        \\
        \Rightarrow\quad&
         |F(j)-\ddotF_t(j)|\geq \frac{\epsilon}{2}\quad \mbox{or}\quad|F(j-1)-\ddotF_t(j-1) |\geq \frac{\epsilon}{2},
    \end{align}
    we have that 
    \begin{align}
        \bbP\left[\sup_{j\in[N]} \big|p_j-\ddotp_{t,j}\big|\geq \epsilon\right]
        &
        \leq 
        \bbP\left[\sup_{j\in[N]} \big|\ddotF_t(j)-F(j)\big|\geq \frac{\epsilon}{2}\right]
        \leq
        2\exp\left(-2l_t\left(\frac{\epsilon}{2}\right)^2\right)
        \\&
        =2\exp\left(-\frac{l_t\epsilon^2}{2}\right)
        .
    \end{align}
    This is equivalent to
    \begin{equation}
        \bbP\left[\sup_{j\in[N]} \big|p_j-\ddotp_{t,j}\big|\geq \ddotbeta_t\right]
        \leq 
        \delta_{l_t}
        .
    \end{equation}
    A union bound gives an upper bound of the failure probability $\sum_{l_t=1}^\infty \delta_{l_t}= \delta$.
\end{proof}
\begin{remark}\label{rmk:DBAI_cr}
    The use of the DKW inequality gives a union bound over the deviation of the distribution estimation all contexts. This avoid the $N$ factor in the logarithm in $\ddotbeta_t$ and is beneficial when the number of contexts $N$ is large.

    When there are a few contexts, it is also possible to directly bound the deviation for each context via Azuma-Hoeffding's inequality, i.e., 
    \begin{align}
        \bbP\left[|p_j-\ddotp_{t,j}|\leq \ddotbeta_t\right] \leq \delta_{l_t}
    \end{align}
    where $\ddotbeta_t:=\sqrt{\frac{1}{2l_t}\ln\frac{2 C_3 Nl_t^3}{\delta}}$. A union bound gives that with probability at least $1-\delta$, $|p_j-\ddotp_{t,j}|\leq \ddotbeta_t$ for all $t\in\bbN,j\in[N]$. 
    As this new confidence radius is the same as the one in Lemma~\ref{lem:dbai_pj} up to constant and logarithmic terms, the sample complexity should be of the same order.

    We adopt this confidence radius $\ddotbeta_t:=\sqrt{\frac{1}{2l_t}\ln\frac{2 C_3 Nl_t^3}{\delta}}$ in the experiment.
\end{remark}

\section{Additional Discussions}\label{app:Add_Discussions}
In this section, we provide additional discussions on 
\begin{itemize}
    \item the related methods for BAI in the nonstationary bandits literature in subsection~\ref{sec:additional_related},
    \item the upper on the sample complexity in Theorem~\ref{thm:upbd} and Theorem~\ref{thm:parallel_upper_exp} in subsection~\ref{sec:additional_discussion_upbd},
    \item the connection between the piecewise-stationary linear bandits model to the stationary linear bandits model in subsection \ref{sec:additional_Con_to_Stationary},
    \item the special case where the number of context $N=1$ in subsection \ref{sec:additional_N_is_1},
\end{itemize}

\subsection{Discussion on the Related Methods in Nonstationary Bandits}\label{sec:additional_related}
To the best of our knowledge, there is limited literature investigating the BAI in the nonstationary bandits setup (there is comparatively a much richer literature for regret minimization in non-stationary environments): \citet{xiong2023b} studies BAI in the {\em fixed-horizon} setup and \citet{allesiardo2017non} assumes the {\em best arm remains unchanged} after certain time step and explores the fixed-confidence setting. Both of the works are not directly comparable to our proposed piecewise-stationary setup.

Additionally, we provide strong baselines algorithms $D\varepsilon BAI$ and $D\varepsilon BAI_\beta$ in Section~\ref{sec:exp}, which are detailed in Section~\ref{app:dbai}. These two baselines are given oracle information about the context (see Dynamics~\ref{dyn:easy}), while \algParallel{} is not. As indicated by Figure~\ref{fig:experiment_2_ContextSample}, \algParallel{} is competitive compared to these strong baselines and is much better than the naive approach.

\subsection{More Discussions on the Upper Bound}\label{sec:additional_discussion_upbd}
Firstly, we emphasize that the sample complexity in Theorem~\ref{thm:upbd} and Theorem~\ref{thm:parallel_upper_exp} are instance-dependent, in particular, the term $\rmH_{\DE}$ characterizes the difficulty in estimating the distribution of the context. Furthermore, the presence of the gaps $\Delta(x,x^*)$ also underscores that our upper bounds are functions of the instance.

Secondly, our algorithm adopts the G-optimal design, which is minimax optimal for the BAI in standard linear bandits problem~\citep{soare2014best,yang2022minimax}. Although we have not proved it, we believe that the sample complexity of the proposed algorithm may not be instance-dependent optimal.

Lastly, we provide some remarks on the instance-dependent optimal algorithms: 
\newline
(1)
The G-optimal design is the cornerstone for the more efficient and adaptive rules like $\mathcal{X}\mathcal{Y}$-allocation and $\mathcal{X}\mathcal{Y}$-Adaptive Allocation in~\citet{soare2014best}. Therefore, our algorithm provides a framework for more sophisticated algorithms in the piecewise-stationary linear bandits problem with the BAI task. Empirically, one can attempt to replace the G-optimal design by the $\mathcal{X}\mathcal{Y}$-allocation during implementation, which can possibly yield empirical benefits.
\newline
(2) The current lower bound is established with two simpler problems (see Section~\ref{sec:low_bd}), which is sufficient to show our algorithm is minimax optimal. However, a tighter instance-dependent lower bound based on the original problem may be required if one wishes to show an algorithm is instance-dependent optimal. This can be challenging because the distribution of the arms, the contexts and the changepoints are unknown, and the characterization of the "alternative instance" (i.e., the $\varepsilon$-best arm set is changed) is difficult.

To conclude, we believe an instance-dependent optimal algorithm is appealing and can lead to future research. 

\subsection{Connections to the Stationary Bandits}\label{sec:additional_Con_to_Stationary}
{\bf Firstly}, we would like to clarify that, in the piecewise-stationary linear bandits model, the instance tends towards a stationary one when $\max_{j\in[N]}p_j\to 1$, instead of the scenario in which $L_{\max}\to\infty$.

When $L_{\min}$ and $L_{\max}$ are large, estimating each latent vector becomes easier since there are more observations in a single stationary segment. However, the overall reward of an arm also depends on $P_\theta$, the distribution of latent context vectors, which is also assumed to be unknown in the setup. In order to estimate $P_\theta$, a learning agent can get one (unobservable) sample from $P_\theta$ only at the (unobservable) change points. In other words, $L_{\max}$ charaterizes the ``sparsity'' of the context samples: the larger $L_{\max}$ is, the sparser the context samples are. A larger $L_{\max}$ indicates that samples from $P_\theta$ are likely to be generated less frequently, which will result in the increase of the overall sample complexity. Consider an extreme example of Dynamics~\ref{dyn:easy} where the context vector is revealed at every time step and $L_{\min}$ and $L_{\max}$ are very large. If at least $l_{\tau}$ context samples from $P_\theta$ are required to identify the best arm, then a sample complexity of $l_\tau L_{\min}$ is unavoidable in the our setup under this extreme case.

{\bf Secondly}, while $\max_{j\in[N]}p_j$ is important in characterizing the (non)stationarity of the instance, we would like to emphasize that {\bf both} the distribution $P_\theta$ and latent context vectors are essential in characterizing the sample complexity, as shown in the term $\bar{H}(x_\epsilon,x)$ in Theorem~\ref{thm:upbd}. To further illustrate this, we provide a simple but illuminating instance as follows:
The instance is composed by $N=3$ contexts and $K=2$ arms, and $\Delta_{j}$ denotes the mean gap between arm $1$ and arm $2$ under context $j\in[N]$ (to be specified below) and $\Delta:=\sum_{j\in[N]}p_j \Delta_{j}$ denotes the weighted mean gap between arm $1$ and arm $2$. The probability of context $1$ is $p_1=\max_{j\in[N]}p_j=1-p_2-p_3$, and $p_1$ is close to $1$. Let $\hat{x}$ denote the empirical version of the statistics $x$ in the following arguments.

In such a case, consider this question: 
{\em if an algorithm has only detected context $1$ in the first $t$ time steps and $t$ is large, should it stop or not?}

As no change point has occurred, the current dynamics behaves similarly to that of a stationary bandit environment up till now. Thus the empirical probabilities of the $3$ contexts are $\hat{p}_1=1,\hat{p}_2=\hat{p}_3=0$ and the empirical mean gap under context $1$ $\hat{\Delta}_{1}$ is close to $\Delta_{1}$.
Consider the following two cases:
\begin{itemize}
    \item Case 1: $\Delta_{1} = p_2$ and $\Delta_{2}=\Delta_{3}=-p_1$. The true weighted mean gap between arm $1$ and arm $2$ is $\Delta=\sum_{j\in[N]}p_j \Delta_{j}=-p_1p_3 <0$, and thus arm $2$ is the best arm. This indicates that, in order to estimate $\Delta_2$ and $\Delta_3$, an algorithm needs to observe contexts $2$ and $3$ despite their small probabilities. But currently $\hat{p}_2=\hat{p}_3=0,\hat{\Delta}=\hat{\Delta}_1\approx\Delta_1>0$, so the algorithm should not terminate.
    \item Case 2: $\Delta_{1} = 1$ and $\Delta_{2},\Delta_{3}\in[-2,2]$. Thus $\hat{\Delta}\approx\Delta\geq p_1-2p_2-2p_3 >0$. This indicates there is no need to observe contexts $2$ and $3$, because arm $1$ is the best even in the worst case ($\Delta_2=\Delta_3=-2$). In this case, as long as the algorithm gets a good estimate of $\Delta_1$, it can confidently terminate.
\end{itemize}
As these two cases indicate, in addition to $P_\theta$, the latent vectors (which determine the means of the arms) are equally important for the stopping time. Our algorithm takes both factors into account, as reflected by the summation term in equation (3.4) and $\hat{\Delta}_t(x_t^*,x)$ in (3.5) for the algorithm design, and by $\bar{H}(x_\epsilon,x)$ for the theoretical upper bound.

We {\bf conclude} that 
\begin{itemize}
    \item The unknown distribution $P_\theta$ and the latent vectors {\bf jointly} determine the sample complexity. $\bar{H}(x_\epsilon,x)$ in Theorem~\ref{thm:upbd} characterizes their roles, where the contexts with larger probabilities have commensurately greater influence on the sample complexity of the algorithm. 
    \item When $\max_{j\in[N]}p_j\to 1$ with other parameters fixed, the piecewise-stationary linear bandit instance reduces to a stationary one. $H_{DE}(x_\epsilon,x)$ in Theorem~\ref{thm:upbd} becomes $L_{\max}/4$, which implies that a {\bf constant} number of change points need to be observed, and thus $T_D(x_\epsilon,x)$ can be regarded as a constant. The dominant term in our upper bound, the $T_V(x)$ term, recovers the upper bound in the stationary bandits, that is, $\max_{x\neq x^*}\frac{d}{(\Delta(x^*,x)+\varepsilon)^2}\ln(1/\delta)$~\citep{jourdan2022choosing}.
\end{itemize}

\subsection{Special Case: $N=1$}\label{sec:additional_N_is_1}
We only consider $N>1$ as the bandit model is only {\bf truely} piecewise-stationary when $N>1$. Thus, we did not specifically derive upper and lower bounds for the extreme case where $N=1$. Nevertheless, our analysis can cover this case due to the following reasons:
\begin{itemize}
    \item The (minimax) lower bound in Theorem~\ref{thm:lwbd_overall_piecewise} is not directly applicable since it is not derived for this extreme instance. However, if we step back to the analysis equation~\eqref{pinsker1}, the feasible set in the optimization problem is empty, and thus the lower bound on $\mathbb{E}[l_t]$ ($N_C$ in Theorem\ref{thm:lwbd_overall_piecewise}) is $0$. In this case, the lower bound in Theorem\ref{thm:lwbd_overall_piecewise} reduces to the lower bound in the stationary bandits.
    \item Regarding the upper bound in Theorem~\ref{thm:upbd}, when $N=1$, we have $p_1=1$ and $H_{DE}=\frac{L_{\max}}{4}$. In this case, as $H_{DE}$ does not depend on the mean gaps and the mean gaps are among the fundamental quantities to characterize the difficulty of an instance, the $T_D$ term can be regarded as a constant. Therefore, the dominant term in the upper bound is $T_V$. This recovers the bound in the stationary setup. 
    \item In addition, as we assume $N$ is an input to our algorithm, when we are aware $N=1$, we can actually adopt the algorithms for the stationary setting, e.g., the $G$-allocation or $\mathcal{X}\mathcal{Y}$-allocation rule~\citep{soare2014best}.
\end{itemize}

\section{$\varepsilon$-Best Arm Tuple Identification Problem}\label{app:BATIP}
In the main paper, we consider the identification of an $\varepsilon$-best arm in terms of the ``ensemble'' quality $\mu_x:=\bbE_{\theta\sim P_\theta}[x^\top\theta] =\sum_{j=1}^NP_\theta[\theta_j^*]x^\top\theta_j^*$. The curious reader may wonder whether we can identify the {\em $\varepsilon$-best arm tuple} $\calX_{\varepsilon}^{\rm{tuple}}:= (x_1^\varepsilon,\ldots,x_N^\varepsilon)$, where $x_j^\varepsilon$ is an $\varepsilon$-best arm under context $j$, i.e., $x_{\vphantom
{\underline{j}}j}^\varepsilon\in \{x:x^\top\theta_j^*\geq \max_{x\in\calX}x^\top\theta_j^*-\varepsilon\}$.
Given the tools we developed in this manuscript, we answer this problem in the affirmative.

\noindent
{\bf Intuitions:}
Let $x_j^*:=\argmax_{x\in\calX}x^\top\theta_j^*$ and $ \Delta_j^* := \min_{x\neq x_j^*}\Delta_j(x_j^*,x)$.
We expect to have an upper bound taking the form of 
\begin{equation}
    \tilde{O}\bigg(\max_{j\in[N]}\frac{L_{\max}}{p_j} \cdot\frac{d}{\left(\Delta_j^*+\varepsilon\right)^2 L_{\min}} \ln\frac{1}{\delta}\bigg)
\end{equation}
where $O(\frac{d}{\left(\Delta_j^*+\varepsilon\right)^2 L_{\min}}\ln\frac{1}{\delta})$ is an upper bound on the number of context samples $j$ and context $j$ will occur once among every $\frac{1}{p_j}$ contexts in expectation. 

\noindent
{\bf Analysis of the problem:}
The change detection and context alignment subroutines are only effective within $\tau^*$ time steps (Line~2 in Algorithm~\ref{alg:PSBAI}). 
However, if a context is with small occurrence probability $p_j$, it may not appear before $\tau^*$ time steps. 

Regarding this scenario, we only expect to  obtain a high-probability upper bound on the sample complexity, whereas the expected sample complexity requires more techniques beyond our parallel execution trick (which is used to design \algParallel{}) and is an interesting direction left for future research. 

Besides, it is more feasible to consider the identification of $\varepsilon$-best arms under contexts with high occurrence probability, i.e., we aim to identify 
\begin{equation}
    \calX_{\varepsilon,\barp}^{\rm{tuple}}:= \{x_j^\varepsilon:p_j>\barp\}
\end{equation}
where $\barp$ is a threshold on the occurrence probability of contexts. Let $\Theta_{\barp} = \{\theta_j^*:p_j>\barp\}$ denote those context with high occurrence probability.

\noindent
{\bf Goal:} We aim to devise an algorithm with the minimum sample complexity (arm pulls) to ascertain either (1) an $\varepsilon$-best arm under context $j$, or (2) $\theta_j^*\notin\Theta_{\barp}$.

With the above goal, we expect to identify an $\varepsilon$-best arm for all $j$ with $\theta_j^*\in\Theta_{\barp}$; for those $j$ with $\theta_j^*\notin\Theta_{\barp}$, either an $\varepsilon$-best arm is identified or $\theta_j^*\notin\Theta_{\barp}$ is ascertained.

    \begin{algorithm}[ht]
		\caption{\sc Piecewise-Stationary $\varepsilon$-Best Arm Tuple Identification}\label{alg:PSBATI}
		\begin{algorithmic}[1]
			\STATE \textbf{Input:} arm set $\calX$, size of the set of latent vectors  $N$, bounds on the segment lengths $L_{\min}$ and $L_{\max}$, slackness parameter $\varepsilon$, confidence parameter $\delta$, sampling parameter $\gamma$ and window size $w$, threshold $b$, probability threshold $\barp$. 
			\STATE \textbf{Initialize}: Compute the G-optimal allocation $\lambda^*$ and $\tau^*\!=\! \frac{38400\ln(80) NL_{\max}}{\varepsilon^2}\ln\frac{N^2KL_{\max}}{\delta\varepsilon^2 \vphantom{\underline{\delta}} }$ and $\Flag=[N]$ and $\Hold=[\;]$ and $\Output=[\;]$. 
            \STATE Set $\CDsample = [\;]$, $\CAid = \{\;\}$. Set $t_{\mathrm{CD}}=+\infty $. 
            \STATE Set $ \calT_{t,j} = \emptyset$ and initialize $\calT_t$, $T_{t,j}$, $\sts_t$ with~\eqref{equ:alg_est_time_collect} for all $t\le \tau^*$, $j\in[N]$.
            \STATE Sample $\frac{w}{2}$ arms $\{x_s\}_{s=1}^{\frac{w}{2}}\sim \lambda^*$ and observe the associated returns $\{Y_{s,x_s}\}_{s=1}^{\frac{w}{2}}$, $t =\frac{w}{2},t_{\mathrm{CA}}=\frac{w}{2}$. 
             \STATE $\CAid=\{1:[(x_s,Y_{s,x_s})]_{s=1}^{\frac{w}{2}}\}$, $\hatj_t = 1$.
            %
            \WHILE{$t\leq \tau^*$ and $\Flag\neq \emptyset$} 
                \STATE $t=t+1$ 
    		\STATE Sample an arm $x_t\sim \lambda^*$ and observe return $Y_{t,x_t}$.
            \IF{$\mod(t-t_{\mathrm{CA}},\gamma)\neq0 $}
    			\STATE Update $\hatj_t=\hatj_{t-1}$, $ \calT_{t,\hatj_t} = \calT_{t-1,\hatj_{t}}\cup \{t\},\calT_{t,j}=\calT_{t-1,j}$ for $j\neq\hatj_t$. 
            \ELSE 
                \STATE $\CDsample =\CDsample+[(x_t,Y_{t,x_t})]$.  
                \STATE Update $\hatj_t=\hatj_{t-1}$, $\calT_{t,j}=\calT_{t-1,j}$ for all $j\in[N]$.
                \IF{$|\CDsample|\geq w $}
                    \IF{\algLCD$(\calX, w,b,\CDsample[-w:\;])$} 
                        \STATE $\CDsample = [\;]$.
                        \STATE $t=t+\frac{w}{2},t_{\mathrm{CA}}=t,t_{\mathrm{CD}}=+\infty$.
                        \STATE {$\hatj_{t},\CAid =$ \algLCA$(\calX, w,b,\CAid)$.
                         }
                        \STATE {\bf if} $\hatj_{t}=N+1$ {\bf then} {\bf break}. 
                        \STATE Revert $ \calT_{t,j} = \calT_{t-\frac{w(\gamma+1)}{2},j}$ for all $j\in[N]$. 
                    \ENDIF
                \ENDIF
            \ENDIF
            \STATE Update the estimates with
        ~\eqref{equ:alg_est_time_collect}, \eqref{equ:alg_est} and \eqref{equ:ConRadiiCxtj}.  
        
            \IF{$\exists j\in\Flag$ condition \eqref{equ:stp_rule_BATI} is met for $j$ and $t_{\mathrm{CD}}=+\infty$}
            \FOR{$j\in\Flag$}
            \IF {$\min_{x:x\neq x_{t,j}^*}\hDelta_{t,j}(x_{t,j}^*,x) - \alpha_{t,j} \geq -\varepsilon$}
            \STATE Record $j$ and $\hatx_{j,\varepsilon}:=\argmax_{x\in\calX}x^\top \htheta_{t,j}$ in $\Hold$.
            \STATE $\Flag = \Flag\setminus\{j\}$ and $t_{\mathrm{CD}}=|\CDsample|$. 
            \ELSIF{$\hatp_j+\beta_{t,j}<\barp$}
            \STATE  Record $j$ and $\hatx_{j,\varepsilon}=\NAN$ in $\Hold$.
            \STATE $\Flag = \Flag\setminus\{j\}$ and $t_{\mathrm{CD}}=|\CDsample|$. 
            \ENDIF
            \ENDFOR
            \ELSIF{$t_{\mathrm{CD}}=|\CDsample|-\frac{w}{2} $ }
            \FOR{$(j,\hatx_{j,\varepsilon})$ in $\Hold$}
			\STATE $\Output[j] =\hatx_{j,\varepsilon}$
            \ENDFOR
            \ENDIF
			\ENDWHILE
        \STATE Recommend $\Output$
		\end{algorithmic}
	\end{algorithm}
 
We propose Algorithm~\ref{alg:PSBATI} for this problem, which is quite similar to Algorithm~\ref{alg:PSBAI}, except for a few changes:
\begin{itemize}
    \item On Line~$7$, the algorithm stops when an $\varepsilon$-best arm or $p_j\leq\barp$ is identified for all contexts.
    \item On Line~$25$, it computes the confidence radii for the arms in context $j_t$ as well as the confidence radii for the occurrence probabilities
    \begin{align}\label{equ:ConRadiiCxtj}
        &\alpha_{t,j} = \frac{d}{n}\ln\frac{2}{\delta_{v,\sts_t}}
            +\sqrt{
            \left(\frac{d}{n}\ln\frac{2}{\delta_{v,\sts_t}}\right)^2
            + \frac{4d}{n}\ln\frac{2}{\delta_{v,\sts_t}}   },
            \\
    & \beta_{t,j}:=
			 \min\bigg\{\frac{5}{2}
         \sqrt{
        \frac{2\phi_{t,j}\Lmax}{\sts_t}\ln\frac{2}{\delta_{d,\sts_t}}},\, 1\bigg\},
        \\& 
        \mbox{where }\phi_{t,j}:
			= \min\left\{ 4\max\left\{\hatp_{t,j},\frac{25}{4}\frac{L_{\max}}{\sts_t}\ln\frac{2}{\delta_{d,\sts_t}}\right\},\, \frac{1}{4}\right\},
    \end{align}
    \item On Line~$26$ to $40$, the stopping rule is changed:
    \begin{itemize}
        \item Stopping condition {\bf (I)} on Line~$26$
            \begin{align}\label{equ:stp_rule_BATI}
            \begin{aligned}
                &\left(\min_{x:x\neq x_{t,j}^*}\hDelta_{t,j}(x_{t,j}^*,x) - \alpha_{t,j} \geq -\varepsilon
                \quad\mbox{\bf or }\quad \hatp_j+\beta_{t,j}<\barp \right)
                    \\
                   &\quad\mbox{and } \quad\sts_t\geq (2\Lmax /{9}) \ln({2}/{\delta_{d,\sts_t}})
            \end{aligned}
            \end{align}
            where $x_{t,j}^*:=\argmax_{x\in\calX}x^\top \htheta_{t,j}$.
        \item Lines~$27$ to $35$: identify an $\varepsilon$-best arm or ascertain $p_j\leq\barp$ among the remaining contexts, and record these observations in $\Hold$ for easy access in stopping condition {\bf (II)}.
        \item  Lines~$36$ to $40$: the recommended $\varepsilon$-best arm is recorded, where we adopt $\NAN$ to flag those contexts with small occurrence probabilities.
    \end{itemize}

\end{itemize}
\begin{theorem}\label{thm:BATI}
    Given an instance $\Lambda$, with probability at least $1-\delta$, Algorithm~\ref{alg:PSBATI} can recommend an 
    $\varepsilon$-best arm for all context $j\in\Theta_{\barp}$ with sample complexity 
    \begin{align}
        &\max\Bigg\{\tilO\left(\max_{\theta_j^*\in\Theta_{\barp}}
\max\left\{L_{\max},\frac{d}{\left(\Delta_j^*+\varepsilon\right)^2}\right\}\cdot\frac{\ln(1/\delta)}{p_j}\right)
,
\tilO\left(
\max_{\theta_j^*\notin\Theta_{\barp/2}}
    \frac{\min\left\{ p_j,\frac{1}{64}\right\}L_{\max}\ln(1/\delta)}{(p_j-\barp)^2}
    \right),
    \\
    &
    \qquad
\tilO\left(\max_{\theta_j^*\in\Theta_{\barp/2}\setminus\Theta_{\barp} }\min
\left\{
    \max\left\{L_{\max},\frac{d}{\left(\Delta_j^*+\varepsilon\right)^2}\right\}\frac{\ln(1/\delta)}{p_j}
    ,\;
    \frac{\min\left\{ p_j,\frac{1}{64}\right\}L_{\max}\ln(1/\delta)}{(p_j-\barp)^2}
\right\}
    \right)\Bigg\},
    \end{align}
    which can be simplified as 
    \begin{align}
         \tilO\left(
\max\left\{L_{\max},\frac{d}{\varepsilon^2}\right\}\cdot\frac{\ln(1/\delta)}{\barp}\right).
    \end{align}
\end{theorem}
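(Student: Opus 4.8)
The plan is to mirror the analysis of \PSBAI{} in Theorem~\ref{thm:upbd}, but to track the estimation progress \emph{per context} rather than for the ensemble mean. First I would condition on the good event \Good{} of Lemma~\ref{lem:Good} (so that the change detection and context alignment subroutines behave correctly, and every sample collected in $\calT_{t,j}$ genuinely comes from context $j$), and on the per-context confidence events: for each $j$, the vector estimate obeys $|x^\top(\htheta_{t,j}-\theta_j^*)|\le\alpha_{t,j}$ via Lemma~\ref{lem:cb_vector} applied with $n=T_{t,j}$ samples, and the probability estimate obeys $|\hatp_{t,j}-p_j|\le\beta_{t,j}$ via Lemma~\ref{lem:DEE}. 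Summing the failure budgets exactly as in Lemma~\ref{lem:rho} keeps the total failure probability below $\delta$. On this event I would first prove \emph{correctness}: when context $j$ is removed from $\Flag$, either the ``$\varepsilon$-best arm'' branch fires, in which case the argument of Lemma~\ref{lem:hati_epsilon_best} applied within context $j$ certifies that $x_{t,j}^*=\argmax_x x^\top\htheta_{t,j}$ is $\varepsilon$-best under $\theta_j^*$, or the ``$\hatp_{t,j}+\beta_{t,j}<\barp$'' branch fires, in which case $p_j\le\hatp_{t,j}+\beta_{t,j}<\barp$, so $\theta_j^*\notin\Theta_{\barp}$ is correctly declared. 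In particular a context with $p_j>\barp$ can never be misclassified as low-probability, since then $\hatp_{t,j}+\beta_{t,j}\ge p_j>\barp$.

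The key quantitative reduction is to relate the per-context sample count $T_{t,j}$ to the total number of exploration samples $\sts_t$. Using $\hatp_{t,j}=T_{t,j}/\sts_t$ together with the concentration $\hatp_{t,j}\ge p_j-\beta_{t,j}$ from the same \DE{} event, once $\sts_t\gtrsim (\Lmax/p_j)\ln(1/\delta)$ one has $\beta_{t,j}\le p_j/2$ and hence $T_{t,j}\ge p_j\sts_t/2$; conversely $T_{t,j}\le 2p_j\sts_t$. Since the reversion accounting of \eqref{equ:upbd_t} gives $t\asymp\sts_t$, it suffices to lower-bound the $\sts_t$ at which each branch of the stopping rule \eqref{equ:stp_rule_BATI} must trigger, and then take the maximum over all contexts (the loop on Line~$7$ runs until $\Flag=\emptyset$). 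I would substitute $\phi_{t,j}\le\min\{16p_j,1/4\}$ from Lemma~\ref{lem:upbd_phi} into $\beta_{t,j}$ to expose its dependence on $p_j$.

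Then I would split into three regimes according to $p_j$ versus $\barp$. For $\theta_j^*\in\Theta_{\barp}$ the low-probability branch is inert, so resolution needs $\alpha_{t,j}\lesssim\Delta_j^*+\varepsilon$, i.e.\ $T_{t,j}\gtrsim d/(\Delta_j^*+\varepsilon)^2\cdot\ln(1/\delta)$; combined with $T_{t,j}\ge p_j\sts_t/2$ and the forced-exploration floor $\sts_t\ge(2\Lmax/9)\ln(2/\delta_{d,\sts_t})$, this yields the first term $\max\{\Lmax,d/(\Delta_j^*+\varepsilon)^2\}\ln(1/\delta)/p_j$. For $\theta_j^*\notin\Theta_{\barp/2}$ the probability branch fires as soon as $2\beta_{t,j}<\barp-p_j$; plugging in $\beta_{t,j}\asymp\sqrt{\min\{p_j,1/4\}\Lmax\ln(1/\delta)/\sts_t}$ gives the second term $\min\{p_j,1/64\}\Lmax\ln(1/\delta)/(p_j-\barp)^2$. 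For the gray zone $\theta_j^*\in\Theta_{\barp/2}\setminus\Theta_{\barp}$ both branches are simultaneously available, so whichever fires first stops the context, producing the $\min$ of the two costs. Taking the maximum over contexts gives the three-way bound, and the simplified form $\tilO(\max\{\Lmax,d/\varepsilon^2\}\ln(1/\delta)/\barp)$ follows by crudely upper bounding each regime: $1/p_j<1/\barp$ and $\Delta_j^*+\varepsilon\ge\varepsilon$ in the first, $(p_j-\barp)^2>\barp^2/4$ with $\min\{p_j,1/64\}<\barp/2$ in the second, and the $\min$ in the third.

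The main obstacle I anticipate is the interplay between the unknown, possibly tiny, $p_j$ and the finite horizon $\tau^*$ on Line~$7$, within which the window-based \algLCD{}/\algLCA{} guarantees of Lemma~\ref{lem:Good} are valid. I must verify that every context is resolved before $\sts_t$ forces $t$ past $\tau^*$: contexts in $\Theta_{\barp}$ must actually be \emph{observed} enough times (handled by $T_{t,j}\ge p_j\sts_t/2$), while a never-observed low-probability context still gets resolved once $\beta_{t,j}\approx(\Lmax/\sts_t)\ln(1/\delta)<\barp$, so the DE branch fires from $\hatp_{t,j}=0$. Making this uniform over all $N$ contexts, and checking that the resulting worst-case $\sts_t$ stays below $\tau^*$, is the delicate step; it is also where the high-probability (rather than in-expectation) nature of the statement is essential, since a low-probability context need not appear within a bounded expected time.
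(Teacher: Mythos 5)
Your proposal is correct and follows essentially the same route as the paper's own proof sketch: condition on \Good{} and the per-context concentration events, certify correctness of each branch of \eqref{equ:stp_rule_BATI}, convert the per-context requirement $T_{t,j}\gtrsim d/(\Delta_j^*+\varepsilon)^2\cdot\ln(1/\delta)$ into a bound on $\sts_t$ via $T_{t,j}\approx p_j\sts_t$ and the $\beta_{t,j}$ bound from Lemma~\ref{lem:upbd_phi}, split into the three regimes relative to $\barp$ and $\barp/2$, and take the maximum. Your added attention to the finite horizon $\tau^*$ and to never-observed contexts being resolved through the DE branch from $\hatp_{t,j}=0$ is a point the paper only gestures at, but it does not change the argument.
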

\begin{proof}[Proof of Theorem~\ref{thm:BATI}]
We provide a concise proof sketch for this theorem.

By adapting the stopping rule for best arm identification in stationary linear bandits to $\varepsilon$-best arm identification, we observe that 
  the number of arm pulls needed for $\varepsilon$-best arm identification under context $j$ is 
$$T_{t,j} = \tilde{O}\bigg(\frac{d}{\left(\Delta_j^*+\varepsilon\right)^2} \ln\frac{1}{\delta}\bigg).$$
The remaining problem is to determine how many context samples/changepoints are needed such that the above number of arm samples can be achieved.

Recall that $\hatp_{t,j} = \frac{T_{t,j}}{\sts_t}$ \eqref{equ:alg_est} and Lemma~\ref{lem:DEE}, we have 
\begin{equation}
    \bbP\left[|T_{t,j}-\sts_t p_j|\geq \sts_t\beta_{t,j}
    \big|\Good
    \right]
    \leq
    \delta_{d,\sts_t}.
\end{equation}

In addition, we have an upper bound on $\beta_{t,j}$ as in \eqref{equ:betatj_upbd}, so there would be sufficient arm samples for $\varepsilon$-best arm identification under context $j$ if 
\begin{align}
    \sts_t\cdot \left(p_j-2\cdot\frac{5}{2}\sqrt{
            \frac{2\min\left\{ 16p_j,\frac{1}{4}\right\}\Lmax}{\sts_t}\ln\frac{2}{\delta_{d,\sts_t}}
            }\right) 
            \geq 
            \tilde{O}\bigg(\frac{d}{\left(\Delta_j^*+\varepsilon\right)^2} \ln\frac{1}{\delta}\bigg)
\end{align}
By solving this inequality in terms of $T_t$, we obtain
\begin{align}
    \sts_t = \tilO\left(
    \max\left\{L_{\max},\frac{d}{\left(\Delta_j^*+\varepsilon\right)^2}\right\}\cdot\frac{\ln(1/\delta)}{p_j} \right).
\end{align}
As we aim to identify the $\varepsilon$-best arms under each context $\theta_j^*\in \Theta_{\barp}$, the upper bound is at least 
\begin{align}\label{equ:bti_upbd1}
\sts_t = \tilO\left(\max_{\theta_j^*\in\Theta_{\barp}}
\max\left\{L_{\max},\frac{d}{\left(\Delta_j^*+\varepsilon\right)^2}\right\}\cdot\frac{\ln(1/\delta)}{p_j}\right)
\end{align}
even if the set $\Theta_{\barp}$ is given.

Similarly, for those contexts with small occurrence probability $\theta_j^*\notin\Theta_{\barp}$, by Lemma~\ref{lem:DEE} and \eqref{equ:betatj_upbd}, we can identify them when 
\begin{align}
    &p_j+2\cdot\frac{5}{2}
    \sqrt{
        \frac{2\min\left\{ 16p_j,\frac{1}{4}\right\}\Lmax}{\sts_t}\ln\frac{2}{\delta_{d,\sts_t}}
    }
    \leq
    \barp
    \\
    \Rightarrow \quad&
    \sts_t = \tilO\left(
    \frac{\min\left\{ p_j,\frac{1}{64}\right\}L_{\max}\ln(1/\delta)}{(p_j-\barp)^2}
    \right).
\end{align}
Careful readers may notice that the denominator depends on a ``probability gap'', which can be very small. In practice, if we can actually identify the $\varepsilon$-best arm in those contexts, it is also acceptable. Therefore, we instead only choose not to identify the $\varepsilon$-best arm in those contexts $\theta_j^*\in\Theta_{\barp/2}$, which yields an upper bound
\begin{align}\label{equ:bti_upbd2}
\sts_t = \tilO\left(
\max_{\theta_j^*\notin\Theta_{\barp/2}}
    \frac{\min\left\{ p_j,\frac{1}{64}\right\}L_{\max}\ln(1/\delta)}{(p_j-\barp)^2}
    \right).
\end{align}
In this case, the bound is at most $\tilO\left(
    \frac{L_{\max}\ln(1/\delta)}{\barp}
    \right)$.
For the rest contexts $\theta_j^*\notin \Theta_{\barp/2}\setminus\Theta_{\barp}$, either identifying the $\varepsilon$-best arm or ascertaining its small occurrence probability suffices, that is,
\begin{align}\label{equ:bti_upbd3}
\sts_t = \tilO\left(\max_{\theta_j^*\in\Theta_{\barp/2}\setminus\Theta_{\barp} }\min
\left\{
    \max\left\{L_{\max},\frac{d}{\left(\Delta_j^*+\varepsilon\right)^2}\right\}\frac{\ln(1/\delta)}{p_j}
    ,\;
    \frac{\min\left\{ p_j,\frac{1}{64}\right\}L_{\max}\ln(1/\delta)}{(p_j-\barp)^2}
\right\}
    \right)
    .
\end{align}
The above bound is upper bounded by $\tilO\left(
\max\left\{L_{\max},\frac{d}{\varepsilon^2}\right\}\cdot\frac{\ln(1/\delta)}{\barp}\right).$

By taking the maximum of \eqref{equ:bti_upbd1}, \eqref{equ:bti_upbd2} and \eqref{equ:bti_upbd3}, we can obtain a high-probability problem-dependent upper bound on the sample complexity. In addition, we can get a high-probability problem-independent upper bound
\begin{equation}
    \tilO\left(
\max\left\{L_{\max},\frac{d}{\varepsilon^2}\right\}\cdot\frac{\ln(1/\delta)}{\barp}\right).
\end{equation}
By setting the threshold $\barp$ carefully, the algorithm is guaranteed to terminate before $\tau^*$ given the good event \Good~\eqref{equ:good}.
\end{proof}

\end{document}